\theoremstyle{thmstyleone}%
\newtheorem{definition}{Definition}
\newtheorem{theorem}{Theorem}
\newtheorem{lemma}{Lemma}
\newtheorem{remark}{Remark}
\newtheorem{proposition}{Proposition}
\newtheorem{summary}{Summary}
\newcommand*{\defeq}{\stackrel{\text{def}}{=}}
\DeclareMathOperator*{\argmin}{\arg\min}
\DeclareMathOperator*{\argmax}{\arg\max}
\definecolor{Gray}{gray}{0.9}
\definecolor{LightCyan}{rgb}{0.88,1,1}
\newcommand{\Rmnum}[1]{\expandafter\@slowromancap\romannumeral #1@}
\begin{document}

\title[DefGPA]{Procrustes Analysis with Deformations: A Closed-Form Solution by Eigenvalue Decomposition}

\author*[1]{\fnm{Fang} \sur{Bai}}\email{Fang.Bai@yahoo.com}
\author[1,2]{\fnm{Adrien} \sur{Bartolli}}\email{Adrien.Bartoli@gmail.com}

\affil*[1]{ENCOV, TGI, Institut Pascal, UMR6602 CNRS, Universit\'e Clermont Auvergne}
\affil[2]{Department of Clinical Research and Innovation, CHU de Clermont-Ferrand}

\abstract{
Generalized Procrustes Analysis (GPA) is the problem of bringing multiple shapes into a common reference by estimating transformations. GPA has been extensively studied for the Euclidean and affine transformations. We introduce GPA with deformable transformations, which forms a much wider and difficult problem. We specifically study a class of transformations called the Linear Basis Warps (LBWs), which contains the affine transformation and most of the usual deformation models, such as the Thin-Plate Spline (TPS). GPA with deformations is a nonconvex underconstrained problem. We resolve the fundamental ambiguities of deformable GPA using two shape constraints requiring the eigenvalues of the shape covariance. These eigenvalues can be computed independently as a prior or posterior. We give a closed-form and optimal solution to deformable GPA based on an eigenvalue decomposition. This solution handles regularization, favoring smooth deformation fields. It requires the transformation model to satisfy a fundamental property of free-translations, which asserts that the model can implement any translation. We show that this property fortunately holds true for most common transformation models, including the affine and TPS models. For the other models, we give another closed-form solution to GPA, which agrees exactly with the first solution for models with free-translation. We give pseudo-code for computing our solution, leading to the proposed DefGPA method, which is fast, globally optimal and widely applicable. We validate our method and compare it to previous work on six diverse 2D and 3D datasets, with special care taken to choose the hyperparameters from cross-validation.
}

\begin{table*}[t]
\lstset{basicstyle=\footnotesize\ttfamily,breaklines=false}
\lstset{framextopmargin=50pt,frame=bottomline}
\begin{lstlisting}
@article{bai2022defgpa,
    title={Procrustes Analysis with Deformations: A Closed-Form Solution by Eigenvalue Decomposition}, 
    author={Bai, Fang and Bartoli, Adrien},
    journal={International Journal of Computer Vision}, 
    year={2022},
    volume={130},
    number={2},
    pages={567-593},
    doi={10.1007/s11263-021-01571-8}
}
\end{lstlisting}

\vspace{50pt}

Springer version:
\url{https://link.springer.com/article/10.1007/s11263-021-01571-8}

\vspace{5pt}

EnCoV version:
\url{http://igt.ip.uca.fr/encov/publications/pubfiles/2021_Bai_etal_IJCV_defgpa.pdf}

\vspace{5pt}

Code repository:
\url{https://bitbucket.org/clermontferrand/deformableprocrustes/src/master/}

\end{table*}

\maketitle

\tableofcontents

\section{Introduction}

The problem of Generalized Procrustes Analysis (GPA) is to \textit{register a set of shapes with known correspondences into a single unknown reference shape}, by estimating transformations.
The existing literature focuses on GPA with Euclidean (or similarity in case of scaling) and affine transformations, which do not cope with deformations.
When present in the datum shapes, the un-modeled deformations are typically considered as noise, which creates biases on the estimation thus causing misfitting to the data.
This is undesirable, particularly when the shape is nonrigid and the precision is critical,
for example in medical applications.

We consider GPA with deformation models.
In specific, we consider a generic class of transformation models, termed Linear Basis Warps (LBWs), which generalize over affine transformation models and most commonly used warp models,
like the Thin-Plate Spline (TPS) \citep{duchon1976interpolation, bookstein1989principal},
Radial Basis Functions (RBF) \citep{fornefett2001radial},
and
Free-Form Deformations (FFD)
\citep{rueckert1999nonrigid, szeliski1997spline}.
The LBW shares many similarities with the affine model, where the transformation parameters are linear as weights to a set of possibly nonlinear basis functions.
This gives additional modeling flexibility to the LBW which copes with nonlinear deformations, while retaining the computational simplicity of linear models.

We propose GPA with LBWs.
Our cost function is formulated in the coordinate frame of the reference shape (which is termed the reference-space cost).
This cost function is linear least-squares in optimizable parameters which maximizes the benefit of using the LBW.
An important point to bear in mind is that a sufficiently generic deformation model can match any arbitrary shape to another to a good degree of fitness.
This makes the estimation of the reference shape highly ambiguous. We resolve these ambiguities by shape constraints, and show that these extra degree of freedoms can be determined by the eigenvalues of the covariance matrix of the reference shape, termed \textit{reference covariance prior}. Importantly, the reference covariance prior is independent of the rest of the computation, thus can be enforced as a prior or posterior. We propose a method to estimate the reference covariance prior based on the similarity to the datum shapes while other options are also possible.

We propose a globally optimal solution in closed-form to the GPA formulation with LBWs.
Our solution is based on a specific case of the eigenvalue problem (Theorem \ref{theorem: general result: PCA with constraint XT u = 0}) and the characterization of LBWs (Theorem \ref{label: theorem: three equivalent conditions, p, q , cost} and Theorem \ref{label: theorem: three equivalent conditions, p, q , cost, in case of partial shapes}).
We present our ideas following a specific-to-general scheme:
Section \ref{section: Generalized Procrustes Analysis with Affine Models} for affine GPA,
Section \ref{Generalized Procrustes Analysis with Warp Models} for GPA with LBWs,
and Section \ref{section: Extensions to Partial Datum Shapes} for GPA with partial datum shapes.

	Section \ref{section: Generalized Procrustes Analysis with Affine Models}.
	We present in Section \ref{section: affine Standard Form with the Shape Constraint} our affine GPA formulation and the ideas behind the shape constraints
	$\boldsymbol{S} \boldsymbol{1} = \boldsymbol{0}$ and $\boldsymbol{S} \boldsymbol{S}^{\top} = \boldsymbol{\Lambda}$, and then in Section \ref{subsection. afine GAP, globally optimal solution} the theoretical results on how to solve the formulation globally in closed-form.
	We show that a closed-form solution exists if the matrix to decompose has an all-one eigenvector.
	In Section \ref{subsubsection: Reference Covariance Prior},
	we present a method to estimate the reference covariance prior $\boldsymbol{\Lambda}$.
	We show in
	Section \ref{subsection: affine GPA. coordinate transformation of datum shapes} that the proposed formulation is invariant to coordinate transformations of the datum shapes, where the optimal reference shape will remain the same.
	Lastly, in Section \ref{sec: eliminating translations - affine case},
	we draw the connection between our method and the classical affine GPA method by eliminating translations explicitly.

	Section \ref{Generalized Procrustes Analysis with Warp Models}.
	We first introduce LBWs (Section \ref{subsection: linear basis warps}) and GPA with LBWs (Section \ref{subsection: Generalized Procrustes Analysis with Linear Basis Warps}), and then relate the existence of an all-one eigenvector to the equivalent properties of LBWs (Section \ref{section: identifying translations in transformation models} and Theorem \ref{label: theorem: three equivalent conditions, p, q , cost} in specific).
	We introduce the concept of free-translations (Definition \ref{definition: LBW contains free-translations}),
	and show that to have the desired properties in Theorem \ref{label: theorem: three equivalent conditions, p, q , cost} 
	it suffices to have a LBW where its translation part is constraint-free (Proposition \ref{theorem: P1 = 0 if and only if the LBW contains free-translations}).
	Both the affine transformation and the TPS warp, which are the two most commonly used LBWs, fall into this category, thus the resulting GPA problems can be solved in closed-form by Theorem \ref{theorem: general result: PCA with constraint XT u = 0}
	and Proposition \ref{theorem: standard result: maximization in S, with S1=0}.
	Lastly, in Section \ref{Reformulation using soft constraint},
	we propose a soft-constrained version (using a penalty term) when the condition is not met, which can also be solved in closed-form.

	Section \ref{section: Extensions to Partial Datum Shapes}.
	We extend the results from full shapes to partial shapes,
	\textit{i.e.,} the estimation of the reference covariance prior via the shape completion
	(Section \ref{section: partial shapes Eliminating Translations and Estimating Lambda}),
	the GPA formulation with LBWs for partial shapes (Section \ref{subsection: Closed-Form Solution to Generalized Procrustes Analysis with Partial Shapes}),
	and the all-one eigenvector characterization
	(Section \ref{subsection: partial shapes. Eigenvector Characterization and Tuning Parameters}).
	Section \ref{subsection: LBWs GPA. coordinate transformation of datum shapes} shows that the proposed GPA formulation with LBWs is invariant to coordinate transformations of the datum shapes, where the optimal reference shape will remain the same.
	Section \ref{subsection: reflection} shows how to handle reflections in the solution.
	Section \ref{subsection: Pseudo code}
	gives pseudo-code to benefit future research.
	Implementation details are summarized in
	Algorithm \ref{algorithm: pseudo codo of calculating reference covariance prior}
	and
	Algorithm \ref{algorithm: pseudo codo of the deformable GPA}.

In a nutshell, we present a closed-form GPA method,
termed DefGPA (\textit{i.e.,} for GPA based on LBWs), which is fast, globally optimal and copes with the general problem with regularization and incomplete shapes.
The rest of this article is organized as follow.
Section \ref{preliminaries and backgrounds, notations} introduces our notation, the GPA problem, and the Brockett cost function on the Stiefel manifold.
Section \ref{section: related work} reviews the related work.
Section \ref{section: Generalized Procrustes Analysis with Affine Models},
\ref{Generalized Procrustes Analysis with Warp Models}
and \ref{section: Extensions to Partial Datum Shapes} describe our DefGPA method.
Section \ref{section: experimental results} provides experimental results.
Section \ref{section: conclusion} concludes the paper.

\section{Modeling Preliminaries}
\label{preliminaries and backgrounds, notations}

\subsection{Notation and Terminology}

\noindent\textbf{Notation.}
We use $\mathbb{R}$ to denote the set of real numbers, $\mathbb{R}^n$ the set of column vectors of dimension $n$, and $\mathbb{R}^{m \times n}$ the set of matrices of dimension $m \times n$.
All the vectors are column majored, denoted by lower-case characters in bold.
The matrices are upper-case characters in bold.
The scalars are in italics.
For matrices, we reserve $\boldsymbol{I}$ for identity matrices and $\boldsymbol{O}$ for all-zero matrices. For vectors we reserve $\boldsymbol{0}$ for all-zero vectors and $\boldsymbol{1}$ for all-one vectors. All these special matrices and vectors are assumed to have proper dimensions induced by the context. $\boldsymbol{A}^{\top}$ is the transpose of $\boldsymbol{A}$, $\boldsymbol{A}^{-1}$ the inverse of $\boldsymbol{A}$, and $\boldsymbol{A}^{\dagger}$ the Moore-Penrose pseudo-inverse of $\boldsymbol{A}$.
We use the notation $\mathbf{Range}\left(\boldsymbol{A}\right)$ and $\mathbf{Null}\left(\boldsymbol{A}\right)$ to denote the range space and null space of $\boldsymbol{A}$, respectively.
$\boldsymbol{A} \succeq \boldsymbol{O}$ means $\boldsymbol{A}$ is positive semidefinite.
$\boldsymbol{A} \succeq \boldsymbol{B}$ means $\boldsymbol{A} - \boldsymbol{B}$ is positive semidefinite.
The symbol $\lVert \cdot \rVert_F$ denotes the matrix Frobenius norm, and $\lVert \cdot \rVert_2$ the vector $\ell_2$ norm.
$\mathbf{tr}\left(\boldsymbol{A}\right)$ is the trace for a square matrix $\boldsymbol{A}$.
$\lVert \boldsymbol{A} \rVert_F^2 = 
\mathbf{tr}\left(\boldsymbol{A} \boldsymbol{A}^{\top}\right) = 
\mathbf{tr}\left(\boldsymbol{A}^{\top} \boldsymbol{A}\right)
$.
We use $\mathbf{nnz}(\cdot)$ as a shorthand for ``number of non-zeros''.

\vspace{5pt}
\noindent\textbf{The $d$ top eigenvectors and bottom eigenvectors.}
Let $\boldsymbol{A} = \boldsymbol{Q} \boldsymbol{\Lambda} \boldsymbol{Q}^{\top}$ be the eigenvalue decomposition of a symmetric matrix $\boldsymbol{A}$, where $\boldsymbol{Q}
=
\left[\boldsymbol{q}_1, \boldsymbol{q}_2, \dots, \boldsymbol{q}_n\right]$
is a square orthonormal matrix.
Let $\boldsymbol{\Lambda} = \mathrm{diag}\left(\lambda_1, \lambda_2, \dots, \lambda_n\right)$ with
$\lambda_1 \ge  \lambda_2 \ge \dots \ge \lambda_n$.
Then we call $\boldsymbol{q}_1, \boldsymbol{q}_2, \dots, \boldsymbol{q}_d$ in sequence the $d$ \textit{top eigenvectors}, and $\boldsymbol{q}_n, \boldsymbol{q}_{n-1}, \dots, \boldsymbol{q}_{n-d+1}$ in sequence the $d$ \textit{bottom eigenvectors} of $\boldsymbol{A}$.
The analogous concepts in the Singular Value Decomposition are the $d$ leftmost and rightmost singular vectors.

\vspace{5pt}
\noindent\textbf{Shape covariance matrix.}
We model shapes as point-clouds.
Given a $d$-dimensional shape of $m$ points $\boldsymbol{S} \in \mathbb{R}^{d \times m}$, the geometric center, also called \textit{centroid}, is the mean of its columns, defined as
$
\mathbf{mean}(\boldsymbol{S})= \frac{1}{m} \boldsymbol{S} \boldsymbol{1}
$.
The concept
\textit{covariance matrix} of the shape $\boldsymbol{S}$ is defined by:
\begin{equation*}
	\mathbb{C}\mathrm{ov} (\boldsymbol{S})
	= 
	\left(
	\boldsymbol{S}
	- \mathbf{mean}(\boldsymbol{S}) \boldsymbol{1}^{\top}
	\right)
	\left(
	\boldsymbol{S}
	- \mathbf{mean}(\boldsymbol{S}) \boldsymbol{1}^{\top}
	\right)^{\top}
	.
\end{equation*}
In particular, if $\boldsymbol{S}$ is at the origin of the coordinate frame, $\mathbf{mean}(\boldsymbol{S}) = \boldsymbol{0}$,
the shape covariance becomes
$
\mathbb{C}\mathrm{ov} (\boldsymbol{S})
=
\boldsymbol{S}
\boldsymbol{S}^{\top}
$.

\subsection{Generalized Procrustes Analysis}
\label{section: generalized procrustes analysis}

A datum shape is represented as a matrix $\boldsymbol{D}_i \in \mathbb{R}^{d \times m}$
$\left(i \in \left[ 1:n \right]\right)$.
Here $d \in \left\{2,3\right\}$ is the dimension of the points and $m$ the number of points in the shape.
We will index the $j$-th point in the shape $\boldsymbol{D}_i$ as $\boldsymbol{D}_i [j]$, which is the $j$-th column of $\boldsymbol{D}_i$.
The shape points are arranged correspondence-wise: the $j$-th point in shape $\boldsymbol{D}_{i_1}$ and the $j$-th point in shape $\boldsymbol{D}_{i_2}$ correspond to different observations of the same physical point.
Missing data are caused by unobserved points in some shapes. They are modeled by binary visibility variables $\gamma_{i,j}$
($i \in \left[ 1:n \right]$, $j \in \left[ 1:m \right]$), where
$\gamma_{i,j} = 1$ if and only if the $j$-th point occurs in the $i$-th shape, and $\gamma_{i,j} = 0$ otherwise.
The reference shape $\boldsymbol{S} \in \mathbb{R}^{d \times m}$ is defined accordingly, with its $j$-th column $\boldsymbol{S}[j]$ corresponding to the $j$-th physical point.
All the points occur in the reference shape $\boldsymbol{S}$.

We denote $\mathcal{T}_i$ $\left(i \in \left[ 1:n \right]\right)$ the transformation from the datum shape $\boldsymbol{D}_i$ to the reference shape $\boldsymbol{S}$.
Here $\mathcal{T}_i$ can be Euclidean, similarity, affine, or non-rigid transformations.
Different choices of $\mathcal{T}_i$ yield different types of GPA problems.
The general GPA problem can be written as:
\begin{equation}
\label{eq: GPA - model - visibility variables}
\begin{aligned}
	 \argmin_{\{\mathcal{T}_i\},\, \boldsymbol{S}}\quad
	& \sum_{i=1}^n \sum_{j=1}^m\,
	\gamma_{i,j}
	\lVert
	\mathcal{T}_i \left(\boldsymbol{D}_{i}[j]\right) - \boldsymbol{S}[j] \rVert_{2}^2
	\\[5pt] 
	\mathrm{s.t.}
	\quad &
	\mathcal{C} \left(\mathcal{T}_1, \mathcal{T}_2, \dots, \mathcal{T}_n, \boldsymbol{S}\right) = \boldsymbol{0}
	,
\end{aligned}
\end{equation}
where $\mathcal{C} \left(\mathcal{T}_1,
\mathcal{T}_2,
\dots,
\mathcal{T}_n, \boldsymbol{S}\right) = \boldsymbol{0}$ denotes the set of constraints used to avoid degeneracy.
A typical degenerate case is
$\mathcal{T}_i \left(\boldsymbol{p}\right) = \boldsymbol{0}$ for any point $\boldsymbol{p}$, and $\boldsymbol{S} = \boldsymbol{O}$.
The detailed choices of these constraints will be provided later on.

The GPA model in formulation (\ref{eq: GPA - model - visibility variables}) holds for many transformation models and handles missing datum points.
We give a more compact matrix form which will simplify the derivation of our methods.
To this end,
we define the diagonal visibility matrix $\boldsymbol{\Gamma}_i = \mathrm{diag}
\left(
\gamma_{i,1}, \gamma_{i,2}, \dots, \gamma_{i,m}
\right)$ $\left(i \in \left[ 1:n \right]\right)$, constructed by the visibility variables corresponding to the shape $\boldsymbol{D}_i$.
By noting that:
\begin{align*}
\chi^2_r & = 
 \sum_{i=1}^n \sum_{j=1}^m\,
\gamma_{i,j}
\lVert
\mathcal{T}_i \left(\boldsymbol{D}_{i}[j]\right) - \boldsymbol{S}[j] \rVert_{2}^2
\\ & = 
\sum_{i=1}^n \sum_{j=1}^m\,
\lVert
\gamma_{i,j}
\mathcal{T}_i \left(\boldsymbol{D}_{i}[j]\right) - \gamma_{i,j} \boldsymbol{S}[j] \rVert_{2}^2
\\ & = 
\sum_{i=1}^n\,
\lVert
\mathcal{T}_i \left(\boldsymbol{D}_{i}\right) \boldsymbol{\Gamma}_i
- \boldsymbol{S} \boldsymbol{\Gamma}_i
\rVert_{F}^2
,
\end{align*}
we can write the GPA model in a compact matrix form:
\begin{equation}
\label{eq: GPA - model - original visibility matrix}
\begin{aligned}
\argmin_{\{\mathcal{T}_i\},\, \boldsymbol{S}}\quad
& \sum_{i=1}^n\,
\lVert
\mathcal{T}_i \left(\boldsymbol{D}_{i}\right) \boldsymbol{\Gamma}_i
- \boldsymbol{S} \boldsymbol{\Gamma}_i
\rVert_{F}^2	
\\[5pt]
\mathrm{s.t.}
\quad
& \mathcal{C} \left(\mathcal{T}_1, \mathcal{T}_2, \dots, \mathcal{T}_n, \boldsymbol{S}\right) = \boldsymbol{0}
.
\end{aligned}
\end{equation}
In case of full shapes (shapes without missing datum points), we have $\boldsymbol{\Gamma}_i = \boldsymbol{I}$ $\left(i \in \left[ 1:n \right]\right)$.

\vspace{5pt}
\noindent\textbf{The reference-space cost and the datum-space cost.}
The cost function defined in formulations (\ref{eq: GPA - model - visibility variables}) and (\ref{eq: GPA - model - original visibility matrix})
is called the \textit{reference-space cost} since the residual error is evaluated in the coordinate frame of the reference shape \citep{bartoli2013stratified}.
In contrast, it is possible to derive a cost function in the \textit{datum-space}:
\begin{equation}
\label{eq: the datum space cost def}
\chi^2_d = 
\sum_{i=1}^n\,
\lVert
\boldsymbol{D}_{i} \boldsymbol{\Gamma}_i
- \mathcal{T}_i^{-1} \left(\boldsymbol{S}\right) \boldsymbol{\Gamma}_i
\rVert_{F}^2.
\end{equation}
The datum-space cost is a generative model since all the datum shapes $\boldsymbol{D}_i$ are generated by the single reference shape $\boldsymbol{S}$ with transformations $\mathcal{T}_i^{-1}$.
In contrast, the reference-space cost is discriminative,
as it seeks for the transformations $\mathcal{T}_i$ that can best match the datum shapes $\boldsymbol{D}_i$ with the reference shape $\boldsymbol{S}$.
The reference-space cost and the datum-space cost are identical if $\mathcal{T}_i$ represents Euclidean transformations \citep{bartoli2013stratified}.

\subsection{Brockett Cost Function on the Stiefel Manifold}
\label{section. Brockett Cost Function on the Stiefel Manifold}

The matrix Stiefel manifold is the set of matrices satisfying:
\begin{equation*}
St(d, m)
=
\left\{  \boldsymbol{X} \in \mathbb{R}^{m \times d}
\ \vert\ 
\boldsymbol{X}^{\top} \boldsymbol{X} = \boldsymbol{I} \right\}
.	
\end{equation*}
In particular, we shall use the classical result of the following Stiefel manifold optimization problem:
\begin{equation}
\label{eq: Stiefel manifold optimzation, Brockett cost fucntion}
\argmin_{ \boldsymbol{X} \in St(d, m)} \ 
\mathbf{tr}\left(  \boldsymbol{X}^{\top} 
\boldsymbol{\mathcal{P}}
\boldsymbol{X}  \boldsymbol{\Lambda} \right)
.
\end{equation}
The cost in problem (\ref{eq: Stiefel manifold optimzation, Brockett cost fucntion}) is termed the Brockett cost function in the Stiefel manifold optimization literature \citep{brockett1989least, birtea2019first, absil2009optimization}.
Importantly, this problem admits a closed-form solution.
The critical points of the Brockett cost function on the Stiefel manifold are the eigenvectors of $\boldsymbol{\mathcal{P}}$ \citep{brockett1989least, birtea2019first}.
The global minimum of problem (\ref{eq: Stiefel manifold optimzation, Brockett cost fucntion}) can thus be obtained by arranging the eigenvectors as follow.
Let $\left(\alpha_j, \boldsymbol{\xi}_j\right)$ $(j \in \left[ 1:m \right])$ be the set of eigenvalues and eigenvectors of $\boldsymbol{\mathcal{P}}$,
such that $\boldsymbol{\mathcal{P}} \boldsymbol{\xi}_j
= \alpha_j \boldsymbol{\xi}_j$,
with
$0 \le \alpha_1 \le \alpha_2 \le \dots \le \alpha_m$ arranged in the ascending order.
Let the diagonal elements of $\boldsymbol{\Lambda}$ being arranged in the descending order
$
\lambda_1 \ge \lambda_{2} \ge \dots \ge \lambda_{d}
$.
Then $\boldsymbol{X}^{\star} = [\boldsymbol{\xi}_1, \boldsymbol{\xi}_2, \dots, \boldsymbol{\xi}_d]$,
comprising the $d$ bottom eigenvectors of $\boldsymbol{\mathcal{P}}$,
is globally optimal to problem (\ref{eq: Stiefel manifold optimzation, Brockett cost fucntion}),
with a total cost
$
\alpha_1 \lambda_1 + \alpha_2 \lambda_{2}
+ \cdots + \alpha_{d} \lambda_{d}
$.
Any other combination yields a larger cost which proves the optimality of $\boldsymbol{X}^{\star}$, by a result of Hardy-Littlewood-Polya \citep{brockett1989least, hardy1952inequalities}.

\begin{lemma}
	\label{theorem: solution to standard form with P matrix in minimization}
	Let $\boldsymbol{\mathcal{P}}$ be a symmetric matrix with its $d$-bottom eigenvectors being $\boldsymbol{X} = [\boldsymbol{\xi}_1, \boldsymbol{\xi}_2, \dots, \boldsymbol{\xi}_d]$.
	Let $\boldsymbol{\Lambda}$ be a diagonal matrix $\boldsymbol{\Lambda} = \mathrm{diag}\left(\lambda_1, \lambda_2, \dots, \lambda_n\right)$ with
	$
	\lambda_1 \ge  \lambda_2 \ge \dots \ge \lambda_n \ge 0
	$.
	The globally optimal solution to the optimization problem:
	\begin{equation}
	\label{eq: standard form with P matrix in minimization}
	\begin{aligned}
	&
	\argmin_{\boldsymbol{S}}\  
	\mathbf{tr}\left( \boldsymbol{S} \boldsymbol{\mathcal{P}} \boldsymbol{S}^{\top} \right)
	\\ & 
	\mathrm{s.t.} \quad
	\boldsymbol{S} \boldsymbol{S}^{\top} = \boldsymbol{\Lambda}
	,	
	\end{aligned}
	\end{equation}
	is $\boldsymbol{S}^{\star} =  \sqrt{\boldsymbol{\Lambda}} \boldsymbol{X}^{\top}$,
	\textit{i.e.,} obtained by scaling the $d$ bottom eigenvectors of $\boldsymbol{\mathcal{P}}$ by $\sqrt{\boldsymbol{\Lambda}}$.
\end{lemma}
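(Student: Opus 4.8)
The plan is to reduce this constrained trace-minimization to the Brockett problem (\ref{eq: Stiefel manifold optimzation, Brockett cost fucntion}) by a change of variables that absorbs the square root of $\boldsymbol{\Lambda}$. Concretely, I would introduce the substitution $\boldsymbol{S} = \sqrt{\boldsymbol{\Lambda}}\,\boldsymbol{X}^{\top}$ with $\boldsymbol{X} \in St(d,m)$, that is $\boldsymbol{X}^{\top}\boldsymbol{X} = \boldsymbol{I}$. The first thing to verify is that this parametrization is feasible: since $\sqrt{\boldsymbol{\Lambda}}\,\sqrt{\boldsymbol{\Lambda}} = \boldsymbol{\Lambda}$, one gets
\begin{equation*}
\boldsymbol{S}\boldsymbol{S}^{\top}
= \sqrt{\boldsymbol{\Lambda}}\,\boldsymbol{X}^{\top}\boldsymbol{X}\,\sqrt{\boldsymbol{\Lambda}}
= \sqrt{\boldsymbol{\Lambda}}\,\sqrt{\boldsymbol{\Lambda}}
= \boldsymbol{\Lambda},
\end{equation*}
so every $\boldsymbol{S}$ of this form satisfies the constraint $\boldsymbol{S}\boldsymbol{S}^{\top} = \boldsymbol{\Lambda}$, and in particular the proposed $\boldsymbol{S}^{\star} = \sqrt{\boldsymbol{\Lambda}}\,\boldsymbol{X}^{\top}$ is admissible.

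Next I would substitute this expression into the objective. Using the cyclic invariance of the trace and again $\sqrt{\boldsymbol{\Lambda}}\,\sqrt{\boldsymbol{\Lambda}} = \boldsymbol{\Lambda}$,
\begin{equation*}
\mathbf{tr}\!\left(\boldsymbol{S}\boldsymbol{\mathcal{P}}\boldsymbol{S}^{\top}\right)
= \mathbf{tr}\!\left(\sqrt{\boldsymbol{\Lambda}}\,\boldsymbol{X}^{\top}\boldsymbol{\mathcal{P}}\boldsymbol{X}\,\sqrt{\boldsymbol{\Lambda}}\right)
= \mathbf{tr}\!\left(\boldsymbol{X}^{\top}\boldsymbol{\mathcal{P}}\boldsymbol{X}\,\boldsymbol{\Lambda}\right),
\end{equation*}
which is exactly the Brockett cost of problem (\ref{eq: Stiefel manifold optimzation, Brockett cost fucntion}). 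The minimization over feasible $\boldsymbol{S}$ therefore becomes a minimization of the Brockett cost over $\boldsymbol{X} \in St(d,m)$, whose global solution was already recalled: pairing the ascending eigenvalues $\alpha_1 \le \dots \le \alpha_m$ of $\boldsymbol{\mathcal{P}}$ with the descending $\lambda_1 \ge \dots \ge \lambda_d$ and invoking the Hardy--Littlewood--P\'olya inequality, the optimum is $\boldsymbol{X}^{\star} = [\boldsymbol{\xi}_1, \dots, \boldsymbol{\xi}_d]$, the $d$ bottom eigenvectors of $\boldsymbol{\mathcal{P}}$. Pulling back through the substitution gives $\boldsymbol{S}^{\star} = \sqrt{\boldsymbol{\Lambda}}\,\boldsymbol{X}^{\star\top}$, as claimed.

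The main obstacle is to argue that the substitution does not merely produce feasible points but \emph{exhausts} the feasible set, so that minimizing over $\boldsymbol{X} \in St(d,m)$ is genuinely equivalent to minimizing over all $\boldsymbol{S}$ with $\boldsymbol{S}\boldsymbol{S}^{\top} = \boldsymbol{\Lambda}$. When $\boldsymbol{\Lambda}$ is positive definite this is immediate: $\sqrt{\boldsymbol{\Lambda}}$ is invertible, and any feasible $\boldsymbol{S}$ yields $\boldsymbol{X}^{\top} = (\sqrt{\boldsymbol{\Lambda}})^{-1}\boldsymbol{S}$ with $\boldsymbol{X}^{\top}\boldsymbol{X} = (\sqrt{\boldsymbol{\Lambda}})^{-1}\boldsymbol{\Lambda}(\sqrt{\boldsymbol{\Lambda}})^{-1} = \boldsymbol{I}$, a bona fide Stiefel point, so the map is a bijection and the reduction is exact. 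The delicate case is a singular $\boldsymbol{\Lambda}$, where some $\lambda_i = 0$; here I would note that the $i$-th diagonal entry of $\boldsymbol{S}\boldsymbol{S}^{\top}$ is the squared norm of the $i$-th row of $\boldsymbol{S}$, so the constraint forces that row to vanish, such rows contribute nothing to the cost, and a Stiefel representative is recovered by normalizing the nonzero rows and completing them to an orthonormal family in $\mathbb{R}^m$ (possible since $m \ge d$), with arbitrary orthonormal columns assigned where $\lambda_i = 0$. Consequently every feasible $\boldsymbol{S}$ realizes a cost of the Brockett form, the Brockett lower bound applies to all of them, and it is attained by $\boldsymbol{S}^{\star}$, which establishes global optimality.
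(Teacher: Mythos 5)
Your proof is correct and follows essentially the same route as the paper: the substitution $\boldsymbol{S}^{\top} = \boldsymbol{X}\sqrt{\boldsymbol{\Lambda}}$ turns the constraint into $\boldsymbol{X}^{\top}\boldsymbol{X} = \boldsymbol{I}$ and the cost into the Brockett form, whose global minimizer is the $d$ bottom eigenvectors of $\boldsymbol{\mathcal{P}}$. Your additional argument that the parametrization exhausts the feasible set, including the case of a singular $\boldsymbol{\Lambda}$, is a point the paper glosses over and is a welcome bit of extra rigor, but it does not change the method.
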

\begin{proof}

We introduce a matrix $\boldsymbol{X}$ so that
$\boldsymbol{S}^{\top} = \boldsymbol{X} \sqrt{\boldsymbol{\Lambda}}$.
The constraint $\boldsymbol{S} \boldsymbol{S}^{\top} = \boldsymbol{\Lambda}$ thus becomes $\boldsymbol{X}^{\top} \boldsymbol{X} = \boldsymbol{I}$.
The cost function $\mathbf{tr}\left( \boldsymbol{S} \boldsymbol{\mathcal{P}} \boldsymbol{S}^{\top} \right)$ can be rewritten as
$
\mathbf{tr}\left( \boldsymbol{S} \boldsymbol{\mathcal{P}} \boldsymbol{S}^{\top} \right)
=
\mathbf{tr}\left( \sqrt{\boldsymbol{\Lambda}} \boldsymbol{X}^{\top} \boldsymbol{\mathcal{P}} \boldsymbol{X} \sqrt{\boldsymbol{\Lambda}} \right)
=
\mathbf{tr}\left(  \boldsymbol{X}^{\top} \boldsymbol{\mathcal{P}} \boldsymbol{X}  \boldsymbol{\Lambda} \right)
.
$
The globally optimal solution to problem (\ref{eq: standard form with P matrix in minimization}) is thus
$
{\boldsymbol{S}^{\star}}^{\top} = \boldsymbol{X}^{\star} \sqrt{\boldsymbol{\Lambda}} 
$,
with $\boldsymbol{X}^{\star}$ being the solution of problem (\ref{eq: Stiefel manifold optimzation, Brockett cost fucntion}).
In other words, $\boldsymbol{S}^{\star}$
is obtained by scaling the $d$ bottom eigenvectors of $\boldsymbol{\mathcal{P}}$ by $\sqrt{\boldsymbol{\Lambda}}$.
\end{proof}

\begin{remark}
	Analogously, the globally optimal solution to the following maximization problem:
	\begin{equation}
	\label{eq: affine compact standard form with trace as maximization}
	\begin{aligned}
	&
	\argmax_{\boldsymbol{S}}\  
	\mathbf{tr}\left( \boldsymbol{S} \boldsymbol{\mathcal{Q}} \boldsymbol{S}^{\top} \right)
	\\ & 
	\mathrm{s.t.} \quad
	\boldsymbol{S} \boldsymbol{S}^{\top} = \boldsymbol{\Lambda}.	
	\end{aligned}
	\end{equation}
	is to scale
	the $d$ top eigenvectors of the matrix $\boldsymbol{\mathcal{Q}}$
	by $\sqrt{\boldsymbol{\Lambda}}$.
\end{remark}

\section{Related Work}
\label{section: related work}

GPA considers shape registration with known correspondences, which abounds in the literature. We classify the literature according to the used transformation model.

\subsection{Rigid Case}

\textbf{Two shapes.}
The classical approach to the Procrustes analysis problem starts with linear algebraic results of two shape registration using least-squares.
In particular, the closed-form solution of 
registering two point-clouds with orthonormal matrices (termed orthogonal Procrustes) was established half a century ago
\citep{green1952orthogonal, schonemann1966generalized}.
The result can be extended to tackle translations
\citep{arun1987least}
and scale factors
\citep{horn1988closed}.
The result in \citep{arun1987least, horn1988closed} uses orthonormal matrices to represent rotations.
Besides, it is possible to derive the result using
unit quaternions to represent rotations \citep{horn1987closed},
and furthermore, using dual quaternions to represent both rotations and translations
\citep{walker1991estimating}.
When confronting large noise, the orthonormal matrix based approaches \citep{arun1987least, horn1988closed} can give a reflection matrix (with determinant $-1$) as apposed to a valid rotation matrix \citep{umeyama1991least}.
A correction to the special orthogonal group ($\mathrm{SO}(3)$, which represents valid rotations) is given in \citep{umeyama1991least},
and a more concise derivation in \citep{kanatani1994analysis}.
A comparative study of these methods is provided in
\citep{eggert1997estimating}.
However from the statistical point of view, these methods assume that noise only occurs in the target point-cloud,
which is isotropic, identical and independently Gaussian distributed.
This assumption can deviate from real noise schemes.
More sophisticated formulations based on
anisotropic and inhomogeneous Gaussian noise models are discussed in \citep{ohta1998optimal, matei1999optimal},
where the renormalization technique based on the quaternion parameterization is used to solve the resulting optimization problem effectively.
Overall,
two shape rigid Procrustes analysis can be considered a solved problem.

\vspace{5pt}
\noindent\textbf{Multiple shapes.}
Multiple shape Procrustes analysis is a more difficult problem, which is termed generalized Procrustes analysis (GPA) \citep{kristof1971generalization, gower1975generalized}.
The work in \citep{kristof1971generalization, ten1977orthogonal} examined the optimality conditions of the orthogonal GPA (Euclidean GPA without translations).
The derived conditions are either necessary or sufficient, however not necessary-sufficient.
Actually, due to the existence of rotations, the Euclidean GPA, or similar problems like pose graph optimization (PGO) \citep{bai2021TRO} has recently been acknowledged as highly nonlinear and non-convex \citep{rosen2019se}.
Therefore up to now, exact techniques for the rigid GPA are all iterative.
In early days, using the pairwise rigid Procrustes result as a subroutine,
rigid GPA was solved by alternating the estimation of the reference shape and that of the transformations until convergence \citep{gower1975generalized, ten1977orthogonal, rohlf1990extensions, wen2006least}.
However, such methods are not guaranteed to attain the global optimum of the cost function.
The translation part of the rigid GPA problem is free of constraints, thus can be eliminated from the cost function by a variable projection scheme \citep{golub2003separable}.
As a result, the rigid GPA can be formulated as a rotation estimation problem on the manifold \citep{benjemaa1998solution, williams2001simultaneous, krishnan2005global}.
Sequential techniques were proposed based on the unit quaternion \citep{benjemaa1998solution}
and orthonormal matrices \citep{williams2001simultaneous}.
A complete approach on
the $\mathrm{SO(3)}$ Lie group has been provided in \citep{krishnan2005global}.
Building upon
efficient sparse linear algebra techniques
\citep{davis2006direct}
and Lie group theory \citep{iserles2000lie},
the Lie group approach can be considered as the current state-of-the-art.
An initialization technique to the iterative solver was provided in
\citep{bartoli2013stratified}.

\subsection{Affine Case}

There exists a vast body of research in
shape analysis \citep{kendall1984shape, kendall2009shape, dryden2016statistical} where the GPA problem arouse in parallel,
\textit{e.g.,}
the complex arithmetic approach for 2D shapes and its relation to Procrustes analysis \citep{kent1994complex}.
The shape statistics, \textit{i.e.,} mean and variability, are defined on aligned shapes (thus being invariant to rotation, translation and scaling).
A commonly used alignment technique is the Procrustes analysis \citep{kendall1984shape} or GPA \citep{goodall1991procrustes}.
The affine GPA has been studied
with a closed-from solution proposed \citep{rohlf1990extensions}.
In the present notation, following the result in \citep{rohlf1990extensions},
the principal axes of the reference shape are estimated as the $d$ top eigenvectors of the matrix $\sum_{i=1}^{n} \boldsymbol{D}_i^{\top}  \left(\boldsymbol{D}_i^{\top}\right)^{\dagger}$,
after centering each $\boldsymbol{D}_i$ to the origin of the coordinate frame and eliminating the translation parameters.

We derive similar results in
Section \ref{sec: eliminating translations - affine case} in the global optimization context instead of the alternation scheme used in \citep{rohlf1990extensions},
which means the scaling factor (\textit{i.e.,} the reference covariance prior used in the proposed shape constraint) is estimated in a different way (Section \ref{subsubsection: Reference Covariance Prior}).
We show in Section \ref{sec: eliminating translations - affine case} that this approach gives the same result as the proposed formulation (\ref{eq: affine-GPA, homogeneous formulation}),
while the proposed formulation without eliminating translations is more general and extensible to the LBWs.

The above affine GPA solution, in our context, is called the reference-space solution.
On the other hand, the full shape affine GPA in the datum-space admits a closed-form solution using the Singular Value Decomposition (SVD).
Let $\boldsymbol{D}_i$ be zero centered.
Then the datum-space cost is
$
\sum_{i=1}^n\,
\lVert
\boldsymbol{D}_{i} 
- \boldsymbol{A}_i \boldsymbol{S}
\rVert_{F}^2
$.
The optimal solution, $\left[
\boldsymbol{A}_1^{\top} 
, \cdots ,
\boldsymbol{A}_n^{\top}
\right]^{\top} \boldsymbol{S}$ as a whole, can be determined via the SVD of the matrix
$
[
\boldsymbol{D}_1^{\top} 
, \cdots ,
\boldsymbol{D}_n^{\top}
]^{\top}
$.
This idea is in the same spirit of the
Tomasi-Kanade factorization \citep{tomasi1992shape} in computer vision,
which studied the affine transformation of the orthographic camera model.
We term this method as $\ast$AFF\_d and use it as a benchmark algorithm in the experiments.

\subsection{Deformable Case}

In shape analysis, a shape is considered as an element on the so-called \textit{shape manifold} (the set of all possible shapes) \citep{kendall1984shape, kilian2007geometric}.
The shape deformation is thus studied as the shortest path on the shape manifold, \textit{i.e.,} the geodesic path under a chosen Riemannian metric \citep{kilian2007geometric}.
In the rigid case, after discarding translation and scaling, the shape manifold is a quotient manifold invariant to rotations and the Riemannian metric can be chosen as the Procrustes distance \citep{kendall1984shape}.
For deformable cases,
the Riemannian metric can be otherwise formulated based on the rigidity or isometry \citep{kilian2007geometric}.
The work \citep{kendall1984shape, kilian2007geometric}, as well as this paper, use landmarks (or meshes with fixed connections) to model shapes and define transformations.
In addition, recent research has studied shape analysis based on curves \citep{joshi2007novel, younes2008metric} or surfaces, \textit{e.g.,}
level sets \citep{osher2003level},
medial surfaces
\citep{bouix2005hippocampal},
Q-maps \citep{kurtek2010novel, kurtek2011elastic}, Square Root Normal Fields (SRNF) \citep{jermyn2012elastic, laga2017numerical}, etc.
See the review papers \citep{younes2012spaces, laga2018survey}.
There is also a line of work including skeletal structures, \textit{e.g.,} the medial axis representations (M-rep) \citep{fletcher2004principal},
and
SCAPE (mesh with an articulated skeleton) \citep{anguelov2005scape}.

Beyond different shape modeling techniques,
deformations based on landmarks/meshes have a rich literature.
Typically,
the deformation of a mesh is defined piece-wisely, with a local transformation associated to each vertex or triangle \citep{freifeld2012lie},
or with additional geometric or smoothing constraints
\citep{allen2003space, anguelov2005scape, sumner2007embedded, song2020efficient}.
It is worth mentioning that such piece-wise models are commonly used in describing template based deformations, where a reference shape is given a priori.
In this paper, we use landmarks to model shapes and warps to model deformations.
The landmarks can be extracted from the key-points or the samplings of the shape boundary \citep{cootes1995active}.
While the proposed method applies to meshes as well, the connection of vertices (or edges of the meshes) is never required.
Our analysis is based on a class of warps, called LBWs,
which is a linear combination of the nonlinear basis functions \citep{rueckert1999nonrigid, szeliski1997spline, bookstein1989principal,fornefett2001radial, bartoli2010generalized}.
A typical example of the LBWs is the well-known TPS warp \citep{bookstein1989principal} which we will use to demonstrate our results.

There has been work using warps to refine the shapes after solving the rigid registration \citep{brown2007global, kim20083d}.
Such methods are not template free in essence as when it comes to the estimation of the warp parameters, a reference shape has been known already.
Little attention has been paid to estimate the reference shape and the transformation parameters all together in a unified manner.
In this work, we provide a closed-form solution to the unified estimation problem.

Different from classical affine methods
\citep{kendall1984shape,goodall1991procrustes, rohlf1990extensions},
our method allows for the possibility to keep the translation parameters during the estimation (Theorem \ref{theorem: general result: PCA with constraint XT u = 0}
and Proposition \ref{theorem: standard result: maximization in S, with S1=0}).
This is crucial to GPA with LBWs, as in certain cases, the translation parameters cannot be explicitly identified.
In specific, for LBWs,
we relate the constraint-free translation (Definition \ref{definition: LBW contains free-translations}) to the existence of an eigenvector of all-ones and the equivalent properties of LBWs
(Theorem \ref{label: theorem: three equivalent conditions, p, q , cost}
and Theorem \ref{label: theorem: three equivalent conditions, p, q , cost, in case of partial shapes}),
which constitutes the foundation of our closed-form solution.

\section{Generalized Procrustes Analysis with the Affine Model}
\label{section: Generalized Procrustes Analysis with Affine Models}

\subsection{Standard Form with the Shape Constraint}
\label{section: affine Standard Form with the Shape Constraint}

We start with the case without missing datum points, \textit{i.e.,}
$\boldsymbol{\Gamma}_i = \boldsymbol{I}$ in formulation (\ref{eq: GPA - model - original visibility matrix}).
We study the affine-GPA with
$\mathcal{T}_i \left(\boldsymbol{D}_i\right) = \boldsymbol{A}_i \boldsymbol{D}_i
+
\boldsymbol{t}_i \boldsymbol{1}^{\top}$, with $\boldsymbol{A}_i \in \mathbb{R}^{d \times d}$ being linear and $\boldsymbol{t}_i \in \mathbb{R}^{d}$ a translation.
We write the affine transformation in homogeneous form as:
\begin{equation}
\label{eq: homogeneous representation of affine transformation}
\boldsymbol{A}_i \boldsymbol{D}_i
+
\boldsymbol{t}_i \boldsymbol{1}^{\top}
=
\begin{bmatrix}
\boldsymbol{A}_i & \boldsymbol{t}_i
\end{bmatrix}
\begin{bmatrix}
\boldsymbol{D}_i \\
\boldsymbol{1}^{\top}
\end{bmatrix}
\defeq
\boldsymbol{\tilde{A}}_i \boldsymbol{\tilde{D}}_i
,
\end{equation}
where we term $\boldsymbol{\tilde{D}}_i$ the \textit{homogeneous representation} of $\boldsymbol{D}_i$ by completing $\boldsymbol{D}_i$ with a row of all ones.
In this representation, matrix $\boldsymbol{\tilde{A}}_i = [\boldsymbol{A}_i,\, \boldsymbol{t}_i]$ contains all parameters to be estimated.

Following the reference-space cost (\ref{eq: GPA - model - original visibility matrix}),
our proposed formulation is given as:
\begin{equation}
\label{eq: affine-GPA, homogeneous formulation}
\mathrm{\Rmnum{1}:}
\begin{cases}
\argmin\limits_{\{\boldsymbol{\tilde{A}}_i\},\, \boldsymbol{S}} \quad 
& 
\sum_{i=1}^n\,
\lVert \boldsymbol{\tilde{A}}_i \boldsymbol{\tilde{D}}_i - \boldsymbol{S} \rVert_F^2
\\[10pt]
\mathrm{s.t.} \quad &
\boldsymbol{S} \boldsymbol{S}^{\top} = \boldsymbol{\Lambda},
\quad
\boldsymbol{S}  \boldsymbol{1} = \boldsymbol{0}
.
\end{cases}
\end{equation}
In formulation (\ref{eq: affine-GPA, homogeneous formulation}),
$\boldsymbol{S} \boldsymbol{S}^{\top} = \boldsymbol{\Lambda}$
and
$\boldsymbol{S}  \boldsymbol{1} = \boldsymbol{0}$
are the shape constraints, which concretize the constraint $\mathcal{C} \left(\cdot\right) = \boldsymbol{0}$ in formulation (\ref{eq: GPA - model - original visibility matrix}).
The matrix $\boldsymbol{\Lambda}$,
termed \textit{reference covariance prior}, is a diagonal matrix $\boldsymbol{\Lambda} = \mathrm{diag}\left(\lambda_1, \lambda_2, \dots, \lambda_d\right)$
driven by $d$ parameters $\lambda_1, \lambda_2, \dots, \lambda_d$.
We shall shortly present in Section \ref{subsubsection: Reference Covariance Prior} a method to estimate $\boldsymbol{\Lambda}$ based on rigidity.
The ideas behind these two shape constraints are motivated as follow.

\subsubsection{The Shape Constraint $\boldsymbol{S}  \boldsymbol{1} = \boldsymbol{0}$}

This shape constraint is used to center the reference shape to the origin of the coordinate frame. The role of this constraint is twofold:
1) it provides $d$ constraints to remove the $d$ gauge freedoms caused by translations;
2) it reduces the shape covariance matrix of the reference shape to the form
$\mathbb{C}\mathrm{ov} (\boldsymbol{S})
=
\boldsymbol{S} \boldsymbol{S}^{\top} 
$
as in this case
$\mathbf{mean}(\boldsymbol{S}) = \frac{1}{m} \boldsymbol{S}  \boldsymbol{1} = \boldsymbol{0}$.

\subsubsection{The Shape Constraint $\boldsymbol{S} \boldsymbol{S}^{\top} = \boldsymbol{\Lambda}$}

This shape constraint is used to: 1) capture the eigenvalues of the shape covariance matrix $\mathbb{C}\mathrm{ov} (\boldsymbol{S})
=
\boldsymbol{S} \boldsymbol{S}^{\top} 
$, and 2) fix the gauge freedom caused by rotations.
The main insight here is that the eigenvalues of the shape covariance matrix remain unchanged if the shape undergoes rigid transformations.
Moreover, by letting $\mathbb{C}\mathrm{ov} (\boldsymbol{S}) = \boldsymbol{S} \boldsymbol{S}^{\top} = \boldsymbol{\Lambda}$ we fix the gauge freedom  caused by rotations.

\begin{lemma}
\label{lemma: the relation of shape covariance matrices after rigid transformation}
$\mathbb{C}\mathrm{ov} (\boldsymbol{R} \boldsymbol{S} +
\boldsymbol{t} \boldsymbol{1}^{\top})
=
\boldsymbol{R}\,
\mathbb{C}\mathrm{ov} (\boldsymbol{S})
\boldsymbol{R}^{\top}
$
for any arbitrary rotation $\boldsymbol{R}$ and translation $\boldsymbol{t}$.
\end{lemma}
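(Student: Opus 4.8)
The plan is to work straight from the definition of $\mathbb{C}\mathrm{ov}(\cdot)$ and to exploit the fact that centering annihilates the translation. Writing $\boldsymbol{S}' = \boldsymbol{R}\boldsymbol{S} + \boldsymbol{t}\boldsymbol{1}^{\top}$, I would first compute its centroid. Since $\mathbf{mean}(\boldsymbol{S}') = \frac{1}{m}\boldsymbol{S}'\boldsymbol{1}$ and $\boldsymbol{1}^{\top}\boldsymbol{1} = m$ (there are $m$ points), linearity gives $\mathbf{mean}(\boldsymbol{S}') = \boldsymbol{R}\,\mathbf{mean}(\boldsymbol{S}) + \boldsymbol{t}$.

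The key step is then to assemble the centered matrix $\boldsymbol{S}' - \mathbf{mean}(\boldsymbol{S}')\boldsymbol{1}^{\top}$. Substituting the expressions above, the two translation contributions $\boldsymbol{t}\boldsymbol{1}^{\top}$ cancel exactly, leaving $\boldsymbol{S}' - \mathbf{mean}(\boldsymbol{S}')\boldsymbol{1}^{\top} = \boldsymbol{R}\left(\boldsymbol{S} - \mathbf{mean}(\boldsymbol{S})\boldsymbol{1}^{\top}\right)$. This factorization of $\boldsymbol{R}$ out of the centered data is the crux of the argument. Feeding it into the definition and using $(\boldsymbol{R}\boldsymbol{M})(\boldsymbol{R}\boldsymbol{M})^{\top} = \boldsymbol{R}\,\boldsymbol{M}\boldsymbol{M}^{\top}\boldsymbol{R}^{\top}$ with $\boldsymbol{M} = \boldsymbol{S} - \mathbf{mean}(\boldsymbol{S})\boldsymbol{1}^{\top}$ immediately yields $\mathbb{C}\mathrm{ov}(\boldsymbol{S}') = \boldsymbol{R}\,\mathbb{C}\mathrm{ov}(\boldsymbol{S})\,\boldsymbol{R}^{\top}$.

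There is no real obstacle here; the only care needed is in the centroid computation and in tracking the $\boldsymbol{1}^{\top}$ dimensions so that the cancellation is written cleanly. I would also remark that the identity in fact holds verbatim for an \emph{arbitrary} matrix $\boldsymbol{R}$, not just a rotation: orthogonality is nowhere used in the derivation. It becomes relevant only for the intended consequence motivating the constraint $\boldsymbol{S}\boldsymbol{S}^{\top} = \boldsymbol{\Lambda}$, namely that the eigenvalues of the shape covariance are invariant under rigid transformations, since $\boldsymbol{R}\boldsymbol{R}^{\top} = \boldsymbol{I}$ makes $\boldsymbol{R}\,\mathbb{C}\mathrm{ov}(\boldsymbol{S})\,\boldsymbol{R}^{\top}$ a similarity transform of $\mathbb{C}\mathrm{ov}(\boldsymbol{S})$.
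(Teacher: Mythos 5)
Your proposal is correct and follows essentially the same route as the paper: compute $\mathbf{mean}(\boldsymbol{R}\boldsymbol{S}+\boldsymbol{t}\boldsymbol{1}^{\top}) = \boldsymbol{R}\,\mathbf{mean}(\boldsymbol{S})+\boldsymbol{t}$, observe that the centered matrix factors as $\boldsymbol{R}\left(\boldsymbol{S}-\mathbf{mean}(\boldsymbol{S})\boldsymbol{1}^{\top}\right)$, and substitute into the definition of the covariance. Your added remark that orthogonality of $\boldsymbol{R}$ is not needed for the identity itself (only for the eigenvalue-invariance consequence) is accurate but does not change the argument.
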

\begin{proof}
By the fact that
$
\mathbf{mean}(\boldsymbol{R}\boldsymbol{S} +
\boldsymbol{t} \boldsymbol{1}^{\top})
=
\frac{1}{m}\left(\boldsymbol{R}\boldsymbol{S} +
\boldsymbol{t} \boldsymbol{1}^{\top}\right)
\boldsymbol{1}
=
\boldsymbol{R}\,
\mathbf{mean}(\boldsymbol{S}) + \boldsymbol{t}
$,
we thus have
$
\left(\boldsymbol{R}\boldsymbol{S} +
\boldsymbol{t} \boldsymbol{1}^{\top} \right)
-
\mathbf{mean}(\boldsymbol{R}\boldsymbol{S} +
\boldsymbol{t} \boldsymbol{1}^{\top})
\boldsymbol{1}^{\top}
= 
\boldsymbol{R} \left( \boldsymbol{S} - \mathbf{mean}(\boldsymbol{S})\boldsymbol{1}^{\top} \right)
$.
\end{proof}

Let $\mathbb{C}\mathrm{ov} (\boldsymbol{S}) = \boldsymbol{U} \boldsymbol{\Lambda} \boldsymbol{U}^{\top}$ be the eigenvalue decomposition of $\mathbb{C}\mathrm{ov} (\boldsymbol{S})$.
Then
$
\boldsymbol{R}\,
\mathbb{C}\mathrm{ov} (\boldsymbol{S})
\boldsymbol{R}^{\top}
=
\boldsymbol{R}
\boldsymbol{U} \boldsymbol{\Lambda} \boldsymbol{U}^{\top}
\boldsymbol{R}^{\top}
$ admits the eigenvalue decomposition of
$\mathbb{C}\mathrm{ov} (\boldsymbol{R} \boldsymbol{S} +
\boldsymbol{t} \boldsymbol{1}^{\top})$ by Lemma \ref{lemma: the relation of shape covariance matrices after rigid transformation}.
This means
matrix $\mathbb{C}\mathrm{ov} (\boldsymbol{R} \boldsymbol{S} +
\boldsymbol{t} \boldsymbol{1}^{\top})
$
has the same eigenvalues as matrix $\mathbb{C}\mathrm{ov} (\boldsymbol{S})$, which are given as the diagonal elements of $\boldsymbol{\Lambda}$.
Moreover, $\mathbb{C}\mathrm{ov} (\boldsymbol{R} \boldsymbol{S} +
\boldsymbol{t} \boldsymbol{1}^{\top}) = \boldsymbol{\Lambda}$ if and only if $\boldsymbol{R} = \boldsymbol{U}^{\top}$ which fixes the rotation.

In general, we want the reference shape to look similar to the datum shapes, thus we choose the eigenvalues of $\boldsymbol{S} \boldsymbol{S}^{\top}$ to be close to those of the datum shape covariance matrices.
Based on this idea, we present a method to estimate $\boldsymbol{\Lambda}$
in Section \ref{subsubsection: Reference Covariance Prior}.
However, as we shall see in what follows, the matrix $\boldsymbol{\Lambda}$ is never used in the intermediate calculation thus can be determined separately as a prior or posterior.

\subsection{Globally Optimal Solution}
\label{subsection. afine GAP, globally optimal solution}

Problem (\ref{eq: affine-GPA, homogeneous formulation}) is separable. Given $\boldsymbol{S}$, we obtain $\boldsymbol{\tilde{A}}_i
= \boldsymbol{S} (\boldsymbol{\tilde{D}}_i)^{\dagger}  
$,
where $(\boldsymbol{\tilde{D}}_i)^{\dagger} = \boldsymbol{\tilde{D}}_i^{\top}
(\boldsymbol{\tilde{D}}_i \boldsymbol{\tilde{D}}_i^{\top})^{-1}$ is the Moore-Penrose pseudo-inverse of $\boldsymbol{\tilde{D}}_i$.
Substituting $\boldsymbol{\tilde{A}}_i
= \boldsymbol{S} (\boldsymbol{\tilde{D}}_i)^{\dagger}  
$ into the cost function of problem (\ref{eq: affine-GPA, homogeneous formulation}), we obtain:
\begin{align*}
&
\sum_{i=1}^n\,
\lVert \boldsymbol{\tilde{A}}_i \boldsymbol{\tilde{D}}_i - \boldsymbol{S} \rVert_F^2
=
\sum_{i=1}^n\,
\lVert \boldsymbol{S} (\boldsymbol{\tilde{D}}_i)^{\dagger} \boldsymbol{\tilde{D}}_i - \boldsymbol{S} \rVert_F^2
\\ = &
\sum_{i=1}^n\,
\mathbf{tr}\left( \boldsymbol{S} 
\left(
\boldsymbol{I} - 
(\boldsymbol{\tilde{D}}_i)^{\dagger} \boldsymbol{\tilde{D}}_i
\right)
\boldsymbol{S}^{\top} \right)
,
\end{align*}
where we have used the fact that matrix
$
\boldsymbol{I} - 
(\boldsymbol{\tilde{D}}_i)^{\dagger} \boldsymbol{\tilde{D}}_i
=
\boldsymbol{I} - 
\boldsymbol{\tilde{D}}_i^{\top}
(\boldsymbol{\tilde{D}}_i \boldsymbol{\tilde{D}}_i^{\top})^{-1}  \boldsymbol{\tilde{D}}_i
$
is symmetric and idempotent (because it is the orthogonal projection matrix to the null space of $\boldsymbol{\tilde{D}}_i$).
Let us denote
$
\boldsymbol{\mathcal{Q}}_{\mathrm{\Rmnum{1}}} = 
\sum_{i=1}^{n} (\boldsymbol{\tilde{D}}_i)^{\dagger} \boldsymbol{\tilde{D}}_i
=
\sum_{i=1}^{n} \boldsymbol{\tilde{D}}_i^{\top}
(\boldsymbol{\tilde{D}}_i \boldsymbol{\tilde{D}}_i^{\top})^{-1}  \boldsymbol{\tilde{D}}_i
$.
By using $\boldsymbol{S} \boldsymbol{S}^{\top} = \boldsymbol{\Lambda}$,
the cost function can be simplified as:
\begin{equation*}
\lVert \boldsymbol{\tilde{A}}_i \boldsymbol{\tilde{D}}_i - \boldsymbol{S} \rVert_F^2
=
n \mathbf{tr} (\boldsymbol{\Lambda})
-
\mathbf{tr}\left( \boldsymbol{S} 
\boldsymbol{\mathcal{Q}}_{\mathrm{\Rmnum{1}}}
\boldsymbol{S}^{\top} \right)
.
\end{equation*}
Therefore after eliminating $\boldsymbol{\tilde{A}}_i$ from the optimization, problem (\ref{eq: affine-GPA, homogeneous formulation}) reduces to:
\begin{equation}
\label{eq: argmax, PCA in S, with S1=0}
\begin{aligned}
&
\argmax_{\boldsymbol{S}}\  
\mathbf{tr}\left( \boldsymbol{S} 
\boldsymbol{\mathcal{Q}}_{\mathrm{\Rmnum{1}}}
\boldsymbol{S}^{\top} \right)
\\
&
\mathrm{s.t.} \quad 
\boldsymbol{S} \boldsymbol{S}^{\top} = \boldsymbol{\Lambda},
\quad
\boldsymbol{S}  \boldsymbol{1} = \boldsymbol{0}
.
\end{aligned}
\end{equation}

\begin{lemma}
$\boldsymbol{\mathcal{Q}}_{\mathrm{\Rmnum{1}}} \boldsymbol{1} =  n \boldsymbol{1}$.
Moreover if we let $
\boldsymbol{\mathcal{P}}_{\mathrm{\Rmnum{1}}} = 
\sum_{i=1}^{n}
\left(\boldsymbol{I} -
(\boldsymbol{\tilde{D}}_i)^{\dagger} \boldsymbol{\tilde{D}}_i
\right)
$
then $\boldsymbol{\mathcal{P}}_{\mathrm{\Rmnum{1}}} \boldsymbol{1} = \boldsymbol{0}$. 
\end{lemma}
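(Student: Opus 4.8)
The plan is to prove the two claims term by term, since both $\boldsymbol{\mathcal{Q}}_{\mathrm{\Rmnum{1}}}$ and $\boldsymbol{\mathcal{P}}_{\mathrm{\Rmnum{1}}}$ are sums over $i$ of matrices of the same structural type. The entire argument reduces to establishing the single-shape identity $(\boldsymbol{\tilde{D}}_i)^{\dagger} \boldsymbol{\tilde{D}}_i \boldsymbol{1} = \boldsymbol{1}$ for each $i$. Summing this over $i \in [1:n]$ immediately gives $\boldsymbol{\mathcal{Q}}_{\mathrm{\Rmnum{1}}} \boldsymbol{1} = \sum_{i=1}^n \boldsymbol{1} = n \boldsymbol{1}$, and the statement about $\boldsymbol{\mathcal{P}}_{\mathrm{\Rmnum{1}}}$ follows either directly or from the observation that $\boldsymbol{\mathcal{P}}_{\mathrm{\Rmnum{1}}} = n \boldsymbol{I} - \boldsymbol{\mathcal{Q}}_{\mathrm{\Rmnum{1}}}$, whence $\boldsymbol{\mathcal{P}}_{\mathrm{\Rmnum{1}}} \boldsymbol{1} = n \boldsymbol{1} - n \boldsymbol{1} = \boldsymbol{0}$.

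To prove the single-shape identity, I would recall --- as already noted in the derivation preceding the lemma --- that $(\boldsymbol{\tilde{D}}_i)^{\dagger} \boldsymbol{\tilde{D}}_i = \boldsymbol{\tilde{D}}_i^{\top} (\boldsymbol{\tilde{D}}_i \boldsymbol{\tilde{D}}_i^{\top})^{-1} \boldsymbol{\tilde{D}}_i$ is the orthogonal projector onto $\mathbf{Range}(\boldsymbol{\tilde{D}}_i^{\top})$, the row space of $\boldsymbol{\tilde{D}}_i$. An orthogonal projector acts as the identity on its own range, so it suffices to show that $\boldsymbol{1} \in \mathbf{Range}(\boldsymbol{\tilde{D}}_i^{\top})$. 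This is where the homogeneous representation does all the work: by construction $\boldsymbol{\tilde{D}}_i$ has $\boldsymbol{1}^{\top}$ as its last row, so $\boldsymbol{1}$ is literally the last column of $\boldsymbol{\tilde{D}}_i^{\top}$; equivalently $\boldsymbol{1} = \boldsymbol{\tilde{D}}_i^{\top} \boldsymbol{e}_{d+1}$ with $\boldsymbol{e}_{d+1}$ the last standard basis vector. Hence $\boldsymbol{1}$ lies in the range and is fixed by the projector, giving $(\boldsymbol{\tilde{D}}_i)^{\dagger} \boldsymbol{\tilde{D}}_i \boldsymbol{1} = \boldsymbol{1}$.

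For completeness I would also give the $\boldsymbol{\mathcal{P}}_{\mathrm{\Rmnum{1}}}$ claim directly: $(\boldsymbol{I} - (\boldsymbol{\tilde{D}}_i)^{\dagger} \boldsymbol{\tilde{D}}_i) \boldsymbol{1} = \boldsymbol{1} - \boldsymbol{1} = \boldsymbol{0}$ for each $i$, and summing yields $\boldsymbol{\mathcal{P}}_{\mathrm{\Rmnum{1}}} \boldsymbol{1} = \boldsymbol{0}$. Geometrically this is the same fact read from the complementary side: $\boldsymbol{I} - (\boldsymbol{\tilde{D}}_i)^{\dagger} \boldsymbol{\tilde{D}}_i$ is the orthogonal projector onto $\mathbf{Null}(\boldsymbol{\tilde{D}}_i)$, and since $\boldsymbol{1}$ lies in the orthogonal complement of that null space it is annihilated.

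The argument involves essentially no computation, so the only point requiring care --- the ``obstacle'', such as it is --- is the validity of the formula $(\boldsymbol{\tilde{D}}_i)^{\dagger} = \boldsymbol{\tilde{D}}_i^{\top}(\boldsymbol{\tilde{D}}_i \boldsymbol{\tilde{D}}_i^{\top})^{-1}$, which presupposes that $\boldsymbol{\tilde{D}}_i$ has full row rank $d+1$ so that $\boldsymbol{\tilde{D}}_i \boldsymbol{\tilde{D}}_i^{\top}$ is invertible. This is the standing assumption already in force when $\boldsymbol{\mathcal{Q}}_{\mathrm{\Rmnum{1}}}$ was introduced; I would simply note that it holds whenever the datum points are in general position with $m \geq d+1$, and that under it $(\boldsymbol{\tilde{D}}_i)^{\dagger} \boldsymbol{\tilde{D}}_i$ is genuinely the orthogonal projector onto the row space, which is all the proof uses.
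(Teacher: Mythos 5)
Your proof is correct and follows essentially the same route as the paper: both identify $(\boldsymbol{\tilde{D}}_i)^{\dagger}\boldsymbol{\tilde{D}}_i$ as the orthogonal projector onto $\mathbf{Range}(\boldsymbol{\tilde{D}}_i^{\top})$, note that $\boldsymbol{1}$ is the last column of $\boldsymbol{\tilde{D}}_i^{\top}$ and hence fixed by the projector, sum over $i$, and deduce the $\boldsymbol{\mathcal{P}}_{\mathrm{\Rmnum{1}}}$ claim from $\boldsymbol{\mathcal{P}}_{\mathrm{\Rmnum{1}}} + \boldsymbol{\mathcal{Q}}_{\mathrm{\Rmnum{1}}} = n\boldsymbol{I}$. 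Your explicit remark on the full-row-rank assumption underlying the pseudo-inverse formula is a harmless addition the paper leaves implicit.
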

\begin{proof}
The matrix $(\boldsymbol{\tilde{D}}_i)^{\dagger} \boldsymbol{\tilde{D}}_i = \boldsymbol{\tilde{D}}_i^{\top}
(\boldsymbol{\tilde{D}}_i \boldsymbol{\tilde{D}}_i^{\top})^{-1}  \boldsymbol{\tilde{D}}_i$
is the orthogonal projection matrix to the range space of $\boldsymbol{\tilde{D}}_i^{\top}$.
Because $\boldsymbol{1} \in \mathbf{Range}(\boldsymbol{\tilde{D}}_i^{\top})$ as $\boldsymbol{1}$ is the last column of $\boldsymbol{\tilde{D}}_i^{\top}$,
we have
$\boldsymbol{\tilde{D}}_i^{\top}
(\boldsymbol{\tilde{D}}_i \boldsymbol{\tilde{D}}_i^{\top})^{-1}  \boldsymbol{\tilde{D}}_i
\boldsymbol{1} = \boldsymbol{1}
$,
thus
$\boldsymbol{\mathcal{Q}}_{\mathrm{\Rmnum{1}}} \boldsymbol{1} = 
\sum_{i=1}^{n} \boldsymbol{\tilde{D}}_i^{\top}
(\boldsymbol{\tilde{D}}_i \boldsymbol{\tilde{D}}_i^{\top})^{-1}  \boldsymbol{\tilde{D}}_i \boldsymbol{1}
= n \boldsymbol{1}
$.
By noticing $\boldsymbol{\mathcal{P}}_{\mathrm{\Rmnum{1}}} + \boldsymbol{\mathcal{Q}}_{\mathrm{\Rmnum{1}}} = n \boldsymbol{I}$,
we have
$\boldsymbol{\mathcal{P}}_{\mathrm{\Rmnum{1}}} \boldsymbol{1} = n \boldsymbol{1} - \boldsymbol{\mathcal{Q}}_{\mathrm{\Rmnum{1}}} \boldsymbol{1} = \boldsymbol{0}$.
\end{proof}

Therefore problem (\ref{eq: argmax, PCA in S, with S1=0}) admits a special property that $\boldsymbol{\mathcal{Q}}_{\mathrm{\Rmnum{1}}}$ 
has an eigenvector $\boldsymbol{1}$ because
$\boldsymbol{\mathcal{Q}}_{\mathrm{\Rmnum{1}}} \boldsymbol{1} =  n \boldsymbol{1}$.
As a result, this problem admits a closed-form solution by the following proposition whose detailed proof requires Theorem \ref{theorem: general result: PCA with constraint XT u = 0} which we will present shortly.

\begin{proposition}
	\label{theorem: standard result: maximization in S, with S1=0}
	If $\boldsymbol{1}$ is an eigenvector of $\boldsymbol{\mathcal{Q}}$,
	then the globally optimal solution to problem
	\begin{equation}
	\label{eq: standard result: maximization in S, with S1=0}
	\begin{aligned}	
	&
	\argmax_{\boldsymbol{S}}\  
	\mathbf{tr}\left( \boldsymbol{S} 
	\boldsymbol{\mathcal{Q}}
	\boldsymbol{S}^{\top} \right)
	\\
	&
	\mathrm{s.t.} \quad
	\boldsymbol{S} \boldsymbol{S}^{\top} = \boldsymbol{\Lambda},
	\quad
	\boldsymbol{S}  \boldsymbol{1} = \boldsymbol{0},	
	\end{aligned}	
	\end{equation}
	is to scale by $\sqrt{\boldsymbol{\Lambda}}$ the $d$ top eigenvectors of $\boldsymbol{\mathcal{Q}}$ excluding the vector $\boldsymbol{1}$.
\end{proposition}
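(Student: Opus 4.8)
The plan is to reduce problem (\ref{eq: standard result: maximization in S, with S1=0}) to the constrained Brockett form on the Stiefel manifold and then invoke Theorem \ref{theorem: general result: PCA with constraint XT u = 0}. First I would reuse the change of variables from Lemma \ref{theorem: solution to standard form with P matrix in minimization}, writing $\boldsymbol{S}^{\top} = \boldsymbol{X} \sqrt{\boldsymbol{\Lambda}}$. Assuming the diagonal entries of $\boldsymbol{\Lambda}$ are strictly positive so that $\sqrt{\boldsymbol{\Lambda}}$ is invertible, the constraint $\boldsymbol{S} \boldsymbol{S}^{\top} = \boldsymbol{\Lambda}$ becomes $\boldsymbol{X}^{\top} \boldsymbol{X} = \boldsymbol{I}$, i.e. $\boldsymbol{X} \in St(d,m)$, and the objective rewrites as $\mathbf{tr}\left( \boldsymbol{S} \boldsymbol{\mathcal{Q}} \boldsymbol{S}^{\top} \right) = \mathbf{tr}\left( \boldsymbol{X}^{\top} \boldsymbol{\mathcal{Q}} \boldsymbol{X} \boldsymbol{\Lambda} \right)$, exactly as in the Remark following Lemma \ref{theorem: solution to standard form with P matrix in minimization}.

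The second step is to translate the centering constraint. Since $\sqrt{\boldsymbol{\Lambda}}$ is invertible, $\boldsymbol{S} \boldsymbol{1} = \sqrt{\boldsymbol{\Lambda}} \boldsymbol{X}^{\top} \boldsymbol{1} = \boldsymbol{0}$ holds if and only if $\boldsymbol{X}^{\top} \boldsymbol{1} = \boldsymbol{0}$. The problem therefore takes the form of maximizing the Brockett cost $\mathbf{tr}\left( \boldsymbol{X}^{\top} \boldsymbol{\mathcal{Q}} \boldsymbol{X} \boldsymbol{\Lambda} \right)$ over $\boldsymbol{X} \in St(d,m)$ subject to the linear constraint $\boldsymbol{X}^{\top} \boldsymbol{1} = \boldsymbol{0}$, which is precisely the setting of Theorem \ref{theorem: general result: PCA with constraint XT u = 0} with the distinguished vector taken to be $\boldsymbol{1}$. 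Because $\boldsymbol{1}$ is assumed to be an eigenvector of $\boldsymbol{\mathcal{Q}}$, that theorem delivers the optimal $\boldsymbol{X}$ as the $d$ top eigenvectors of $\boldsymbol{\mathcal{Q}}$ that are orthogonal to $\boldsymbol{1}$, namely the $d$ top eigenvectors excluding $\boldsymbol{1}$; undoing the substitution gives $\boldsymbol{S}^{\star} = \sqrt{\boldsymbol{\Lambda}} \boldsymbol{X}^{\top}$, the claimed scaling.

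The substantive content, and the hard part, lives inside Theorem \ref{theorem: general result: PCA with constraint XT u = 0}, so the real work is to see why orthogonality to the eigenvector $\boldsymbol{1}$ simply deletes $\boldsymbol{1}$ from the admissible eigenvectors. The key observation is that since $\boldsymbol{1}$ is an eigenvector of the symmetric matrix $\boldsymbol{\mathcal{Q}}$, its orthogonal complement $\boldsymbol{1}^{\perp}$ is an invariant subspace, so $\boldsymbol{\mathcal{Q}}$ restricts to a symmetric operator on $\boldsymbol{1}^{\perp}$ whose spectrum is that of $\boldsymbol{\mathcal{Q}}$ with the $\boldsymbol{1}$-eigenpair removed. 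The constraint $\boldsymbol{X}^{\top}\boldsymbol{1}=\boldsymbol{0}$ confines the columns of $\boldsymbol{X}$ to $\boldsymbol{1}^{\perp}$, turning the feasible set into a Stiefel manifold in that subspace, on which the Hardy-Littlewood-Polya/Brockett argument recalled in Section \ref{section. Brockett Cost Function on the Stiefel Manifold} applies verbatim and selects the top eigenvectors of the restriction. I expect the only delicate points to be confirming that the feasible set is genuinely the set of orthonormal bases of $d$-dimensional subspaces of $\boldsymbol{1}^{\perp}$, and that no feasible $\boldsymbol{X}$ with a component along $\boldsymbol{1}$ can do better; both follow once the invariance of $\boldsymbol{1}^{\perp}$ is in hand.
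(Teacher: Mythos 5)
Your proof of the Proposition is correct and follows essentially the same route as the paper: substitute $\boldsymbol{S}^{\top} = \boldsymbol{X}\sqrt{\boldsymbol{\Lambda}}$ to turn the shape constraints into $\boldsymbol{X}^{\top}\boldsymbol{X} = \boldsymbol{I}$ and $\boldsymbol{X}^{\top}\boldsymbol{1} = \boldsymbol{0}$, then invoke Theorem \ref{theorem: general result: PCA with constraint XT u = 0} with $\boldsymbol{u} = \boldsymbol{1}$. Your closing paragraph justifies that theorem by restricting $\boldsymbol{\mathcal{Q}}$ to the invariant subspace $\boldsymbol{1}^{\perp}$ rather than by the paper's penalization argument with $\boldsymbol{\mathcal{Q}} - c\boldsymbol{u}\boldsymbol{u}^{\top}$, but that is a valid alternative and is not needed for the statement at hand.
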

\begin{proof}
	Using
	$
	\boldsymbol{S}^{\top} = \boldsymbol{X} \sqrt{\boldsymbol{\Lambda}} 
	$,
	we can rewrite problem (\ref{eq: standard result: maximization in S, with S1=0}) into problem (\ref{eq: general results: argmax, PCA in S, with S1=0}) with $\boldsymbol{u} = \boldsymbol{1}$.
	By Theorem \ref{theorem: general result: PCA with constraint XT u = 0}, the optimal $\boldsymbol{X}$ is the $d$ top eigenvectors of $\boldsymbol{\mathcal{Q}}$ excluding the vector $\boldsymbol{1}$.
\end{proof}

We summarize the globally optimal solution to the affine GPA formulation (\ref{eq: affine-GPA, homogeneous formulation}) as follow.
\begin{summary}
	Problem (\ref{eq: affine-GPA, homogeneous formulation}) is solved in closed-form.
	The optimal reference shape $\boldsymbol{S}^{\star}$ is obtained by scaling by $\sqrt{\boldsymbol{\Lambda}}$ the $d$ top eigenvectors of $\boldsymbol{\mathcal{Q}}_{\mathrm{\Rmnum{1}}}$ excluding the vector $\boldsymbol{1}$.
	The optimal affine transformations are given by $\boldsymbol{\tilde{A}}_i^{\star}
	=  [\boldsymbol{A}_i^{\star},\, \boldsymbol{t}_i^{\star}]
	=
	\boldsymbol{S}^{\star} \boldsymbol{\tilde{D}}_i^{\top}
	(\boldsymbol{\tilde{D}}_i \boldsymbol{\tilde{D}}_i^{\top})^{-1}  
	$
	$\left(i \in \left[ 1:n \right]\right)$.
\end{summary}

Now we present the supporting results to Proposition \ref{theorem: standard result: maximization in S, with S1=0},
\textit{i.e.,}
Theorem \ref{theorem: general result: PCA with constraint XT u = 0} which gives the solution to problem (\ref{eq: general results: argmax, PCA in S, with S1=0}).
We start with the following Lemma.

\begin{lemma}
	\label{lemma: a result on eigenvalue decomposition of symmetric matrix}
	Let $\boldsymbol{M} \in \mathbb{R}^{m \times m}$ be a symmetric matrix,
	and
	$(\alpha_j, \boldsymbol{\xi}_j)$
	$(j \in \left[ 1:m \right])$ the set of eigenvalues and unit eigenvectors of $\boldsymbol{M}$ such that
	$
	\boldsymbol{M} \boldsymbol{\xi}_j = \alpha_j \boldsymbol{\xi}_j
	$,
	$
	\lVert \boldsymbol{\xi}_j \rVert_2 = 1
	$.
	Then $\boldsymbol{M}$ can be written as:
	$
	\boldsymbol{M} = \sum_{j=1}^{m} \alpha_j \boldsymbol{\xi}_j \boldsymbol{\xi}_j^{\top}
	$.	
\end{lemma}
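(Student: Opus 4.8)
The plan is to reduce the claimed rank-one expansion to the ordinary orthogonal diagonalization of a symmetric matrix, which the excerpt has already recalled in the paragraph on top and bottom eigenvectors. First I would invoke the spectral theorem: since $\boldsymbol{M}$ is real symmetric, it admits a full orthonormal set of eigenvectors. Collecting the unit eigenvectors as columns yields an orthogonal matrix $\boldsymbol{Q} = [\boldsymbol{\xi}_1, \boldsymbol{\xi}_2, \dots, \boldsymbol{\xi}_m]$ satisfying $\boldsymbol{Q}^{\top} \boldsymbol{Q} = \boldsymbol{Q} \boldsymbol{Q}^{\top} = \boldsymbol{I}$, and placing the eigenvalues on a diagonal matrix $\boldsymbol{\Lambda} = \mathrm{diag}(\alpha_1, \dots, \alpha_m)$. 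The defining relations $\boldsymbol{M} \boldsymbol{\xi}_j = \alpha_j \boldsymbol{\xi}_j$ then assemble, column by column, into $\boldsymbol{M} \boldsymbol{Q} = \boldsymbol{Q} \boldsymbol{\Lambda}$, and right-multiplying by $\boldsymbol{Q}^{\top}$ gives the eigenvalue decomposition $\boldsymbol{M} = \boldsymbol{Q} \boldsymbol{\Lambda} \boldsymbol{Q}^{\top}$.

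Second, I would expand this product explicitly. Writing $\boldsymbol{Q} \boldsymbol{\Lambda} = [\alpha_1 \boldsymbol{\xi}_1, \dots, \alpha_m \boldsymbol{\xi}_m]$ and multiplying on the right by $\boldsymbol{Q}^{\top}$, whose rows are $\boldsymbol{\xi}_1^{\top}, \dots, \boldsymbol{\xi}_m^{\top}$, the matrix product reorganizes into a sum of outer products, yielding $\boldsymbol{M} = \sum_{j=1}^m \alpha_j \boldsymbol{\xi}_j \boldsymbol{\xi}_j^{\top}$. This rests only on the elementary identity that a product $\boldsymbol{A} \boldsymbol{B}$ equals the sum of outer products of the columns of $\boldsymbol{A}$ with the corresponding rows of $\boldsymbol{B}$; it is a purely bookkeeping step that I would not grind through in detail.

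The only substantive ingredient is the spectral theorem itself, so the point requiring care is the case of repeated eigenvalues: one must choose the $\boldsymbol{\xi}_j$ orthonormal \emph{within} each eigenspace so that $\boldsymbol{Q}$ is genuinely orthogonal. This is always possible for symmetric $\boldsymbol{M}$ — eigenvectors belonging to distinct eigenvalues are automatically orthogonal, and within any degenerate eigenspace one is free to pick an orthonormal basis. Once this choice is fixed there is no remaining obstacle, and the conclusion follows immediately from $\boldsymbol{M} = \boldsymbol{Q} \boldsymbol{\Lambda} \boldsymbol{Q}^{\top}$.
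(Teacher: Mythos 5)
Your proof is correct: it is the standard spectral-theorem argument, assembling the orthonormal eigenvectors into $\boldsymbol{Q}$, obtaining $\boldsymbol{M}=\boldsymbol{Q}\boldsymbol{\Lambda}\boldsymbol{Q}^{\top}$, and expanding the product as a sum of outer products; your caveat about choosing an orthonormal basis within each degenerate eigenspace is the one genuinely necessary precaution, since the statement would fail for unit but non-orthogonal eigenvectors. The paper states this lemma without proof, treating it as the classical eigendecomposition of a symmetric matrix, so there is nothing to compare against and nothing further to add.
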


By Lemma \ref{lemma: a result on eigenvalue decomposition of symmetric matrix},
given an arbitrary eigenvector $\boldsymbol{\xi}_k$ of $\boldsymbol{M}$, by adding multiples of $\boldsymbol{\xi}_k
\boldsymbol{\xi}_k^{\top}$, saying $a \boldsymbol{\xi}_k
\boldsymbol{\xi}_k^{\top}$ ($a$ is an arbitrary real number) to $\boldsymbol{M}$,
we have
$\boldsymbol{M} + a \boldsymbol{\xi}_k
\boldsymbol{\xi}_k^{\top}
= 
\sum_{j=1, j\neq k}^{m} \alpha_j \boldsymbol{\xi}_j \boldsymbol{\xi}_j^{\top}
+
\left( \alpha_k + a \right)
\boldsymbol{\xi}_k
\boldsymbol{\xi}_k^{\top}
$.
Therefore the matrix $\boldsymbol{M} + a \boldsymbol{\xi}_k
\boldsymbol{\xi}_k^{\top}$ has exactly the same set of eigenvectors as $\boldsymbol{M}$.
In $\boldsymbol{M} + a \boldsymbol{\xi}_k
\boldsymbol{\xi}_k^{\top}$,
the eigenvalue of $\boldsymbol{\xi}_k$ becomes $\alpha_k + a$, while the rest remains unchanged as that in $\boldsymbol{M}$.

\begin{theorem}
	\label{theorem: general result: PCA with constraint XT u = 0}
	If $\boldsymbol{u}$ is an eigenvector of $\boldsymbol{\mathcal{Q}}$,
	then the solution to the optimization problem	
	\begin{equation}
	\label{eq: general results: argmax, PCA in S, with S1=0}
	\begin{aligned}
	&
	\argmax_{\boldsymbol{X}}\  
	\mathbf{tr}\left( \boldsymbol{X}^{\top} 
	\boldsymbol{\mathcal{Q}}
	\boldsymbol{X} \boldsymbol{\Lambda} \right)
	\\
	&
	\mathrm{s.t.} \quad
	\boldsymbol{X}^{\top} \boldsymbol{X} = \boldsymbol{I},
	\quad
	\boldsymbol{X}^{\top}  \boldsymbol{u} = \boldsymbol{0}	
	\end{aligned}	
	\end{equation}
	is the $d$ top eigenvectors of $\boldsymbol{\mathcal{Q}}$ excluding the vector $\boldsymbol{u}$.
\end{theorem}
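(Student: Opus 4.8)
The plan is to eliminate the linear constraint $\boldsymbol{X}^{\top}\boldsymbol{u} = \boldsymbol{0}$ by absorbing it into a modified cost matrix, thereby reducing problem (\ref{eq: general results: argmax, PCA in S, with S1=0}) to the unconstrained Brockett maximization on the Stiefel manifold, which is already solved by the Remark following Lemma~\ref{theorem: solution to standard form with P matrix in minimization}. The device is the rank-one eigenvalue shift recorded after Lemma~\ref{lemma: a result on eigenvalue decomposition of symmetric matrix}: since $\boldsymbol{u}$ is an eigenvector of the symmetric matrix $\boldsymbol{\mathcal{Q}}$ (I may take $\boldsymbol{u}$ unit, as rescaling it does not alter the constraint), the matrix $\boldsymbol{\mathcal{Q}}_a \defeq \boldsymbol{\mathcal{Q}} + a\,\boldsymbol{u}\boldsymbol{u}^{\top}$ has exactly the same eigenvectors as $\boldsymbol{\mathcal{Q}}$, with only the eigenvalue attached to $\boldsymbol{u}$ shifted from $\alpha_{\boldsymbol{u}}$ to $\alpha_{\boldsymbol{u}} + a$ while every other eigenpair is untouched.

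First I would observe that the cost is invariant on the feasible set under this modification. For any $\boldsymbol{X}$ with $\boldsymbol{X}^{\top}\boldsymbol{u} = \boldsymbol{0}$ one has $\boldsymbol{X}^{\top}\boldsymbol{u}\boldsymbol{u}^{\top}\boldsymbol{X} = \boldsymbol{O}$, hence
\begin{equation*}
\mathbf{tr}\!\left(\boldsymbol{X}^{\top}\boldsymbol{\mathcal{Q}}_a\boldsymbol{X}\boldsymbol{\Lambda}\right)
=
\mathbf{tr}\!\left(\boldsymbol{X}^{\top}\boldsymbol{\mathcal{Q}}\boldsymbol{X}\boldsymbol{\Lambda}\right)
+
a\,\mathbf{tr}\!\left(\boldsymbol{\Lambda}\,\boldsymbol{X}^{\top}\boldsymbol{u}\boldsymbol{u}^{\top}\boldsymbol{X}\right)
=
\mathbf{tr}\!\left(\boldsymbol{X}^{\top}\boldsymbol{\mathcal{Q}}\boldsymbol{X}\boldsymbol{\Lambda}\right)
,
\end{equation*}
so the constrained problem is unchanged if $\boldsymbol{\mathcal{Q}}$ is replaced by $\boldsymbol{\mathcal{Q}}_a$ for any $a \in \mathbb{R}$. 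Next I would choose $a$ sufficiently negative that $\alpha_{\boldsymbol{u}} + a$ lies strictly below the $d$-th largest eigenvalue of $\boldsymbol{\mathcal{Q}}$ among those other than $\alpha_{\boldsymbol{u}}$. Then $\boldsymbol{u}$ is no longer among the $d$ top eigenvectors of $\boldsymbol{\mathcal{Q}}_a$, and those $d$ top eigenvectors coincide exactly with the $d$ top eigenvectors of $\boldsymbol{\mathcal{Q}}$ excluding $\boldsymbol{u}$; call this candidate $\boldsymbol{X}^{\star}$. Since its columns are eigenvectors whose eigenvalues strictly exceed that of $\boldsymbol{u}$, they are orthogonal to $\boldsymbol{u}$, so $\boldsymbol{X}^{\star}$ is feasible.

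A sandwich argument then closes the proof. By the maximization Brockett result, $\boldsymbol{X}^{\star}$ is the global maximizer of $\mathbf{tr}(\boldsymbol{X}^{\top}\boldsymbol{\mathcal{Q}}_a\boldsymbol{X}\boldsymbol{\Lambda})$ over the \emph{whole} Stiefel manifold, i.e.\ after dropping the linear constraint. Hence for every feasible $\boldsymbol{X}$,
\begin{equation*}
\mathbf{tr}\!\left(\boldsymbol{X}^{\top}\boldsymbol{\mathcal{Q}}\boldsymbol{X}\boldsymbol{\Lambda}\right)
=
\mathbf{tr}\!\left(\boldsymbol{X}^{\top}\boldsymbol{\mathcal{Q}}_a\boldsymbol{X}\boldsymbol{\Lambda}\right)
\le
\mathbf{tr}\!\left({\boldsymbol{X}^{\star}}^{\top}\boldsymbol{\mathcal{Q}}_a\boldsymbol{X}^{\star}\boldsymbol{\Lambda}\right)
=
\mathbf{tr}\!\left({\boldsymbol{X}^{\star}}^{\top}\boldsymbol{\mathcal{Q}}\boldsymbol{X}^{\star}\boldsymbol{\Lambda}\right)
,
\end{equation*}
where the outer equalities use the invariance established above (both $\boldsymbol{X}$ and $\boldsymbol{X}^{\star}$ meet the constraint) and the inequality is the unconstrained optimality of $\boldsymbol{X}^{\star}$. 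This shows $\boldsymbol{X}^{\star}$ attains the constrained maximum, which is the asserted conclusion.

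I expect the only delicate point to be verifying that $\boldsymbol{X}^{\star}$ is genuinely feasible when $\boldsymbol{\mathcal{Q}}$ has repeated eigenvalues among its top $d$: one must argue that the top-$d$ eigenspace can be chosen orthogonal to $\boldsymbol{u}$. This follows because, after the shift, the eigenvalue of $\boldsymbol{u}$ is strictly separated from that eigenspace, so $\boldsymbol{u}$ cannot be mixed into any orthonormal basis of it. Everything else is routine rank-one update bookkeeping together with the already-established Brockett maximization.
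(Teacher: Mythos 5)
Your proposal is correct and follows essentially the same route as the paper's own proof: a rank-one shift $\boldsymbol{\mathcal{Q}} + a\,\boldsymbol{u}\boldsymbol{u}^{\top}$ (the paper writes $\boldsymbol{\mathcal{Q}} - c\,\boldsymbol{u}\boldsymbol{u}^{\top}$ with $c$ large) that leaves the cost unchanged on the feasible set, followed by dropping the linear constraint, invoking the Brockett result, and observing that the relaxation is tight because the shifted matrix's top eigenvectors are automatically orthogonal to $\boldsymbol{u}$. Your sandwich inequality and the remark on repeated eigenvalues only make explicit what the paper leaves implicit.
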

\begin{proof}
Since $\boldsymbol{X}^{\top}  \boldsymbol{u} = \boldsymbol{0}$, problem (\ref{eq: general results: argmax, PCA in S, with S1=0}) is equivalent to the following:
\begin{equation}
\label{eq: property: proof - equivalent formulation in X}
\begin{aligned}
&
\argmax_{\boldsymbol{X}}\  
\mathbf{tr}\left( \boldsymbol{X}^{\top} 
\left( \boldsymbol{\mathcal{Q}} -  c \boldsymbol{u} \boldsymbol{u}^{\top} \right)
\boldsymbol{X} \boldsymbol{\Lambda} \right)
\\
&
 \mathrm{s.t.} \quad
\boldsymbol{X}^{\top}  \boldsymbol{X} = \boldsymbol{I},
\quad
\boldsymbol{X}^{\top}  \boldsymbol{u} = \boldsymbol{0}
,
\end{aligned}
\end{equation}
where $c$ is an arbitrary scalar.

Now we consider the following relaxation of problem (\ref{eq: property: proof - equivalent formulation in X}) without constraint $\boldsymbol{X}^{\top}  \boldsymbol{u} = \boldsymbol{0}$, and assume $c$ is sufficiently big:
\begin{equation}
\label{eq: property: proof - final result in X}
\begin{aligned}
&
\argmax_{\boldsymbol{X}}\  
\mathbf{tr}\left( \boldsymbol{X}^{\top} 
\left( \boldsymbol{\mathcal{Q}} -  c \boldsymbol{u} \boldsymbol{u}^{\top} \right)
\boldsymbol{X} \boldsymbol{\Lambda} \right)
\\
&
\mathrm{s.t.} \quad
\boldsymbol{X}^{\top}  \boldsymbol{X} = \boldsymbol{I}
,
\end{aligned}
\end{equation}
which admits a standard Brockett cost function on the Stiefel manifold (see Section \ref{section. Brockett Cost Function on the Stiefel Manifold}).
The optimal solution of problem (\ref{eq: property: proof - final result in X}), saying $\boldsymbol{X}_*$, comprises the $d$ top eigenvectors of $\boldsymbol{\mathcal{Q}} - c \boldsymbol{u} \boldsymbol{u}^{\top}$.

Since $\boldsymbol{u}$ is an eigenvector of $\boldsymbol{\mathcal{Q}}$, matrix $\boldsymbol{\mathcal{Q}} -  c \boldsymbol{u} \boldsymbol{u}^{\top}$ has the same set of eigenvectors as $\boldsymbol{\mathcal{Q}}$.
By assuming $c$ to be sufficiently big,  it is always possible to shift $\boldsymbol{u}$ to the bottom-eigenvector of $\boldsymbol{\mathcal{Q}} -  c \boldsymbol{u} \boldsymbol{u}^{\top}$, thus the eigenvector $\boldsymbol{u}$ is always excluded from $\boldsymbol{X}_*$.
Thus the $d$ top eigenvectors of $\boldsymbol{\mathcal{Q}} - c \boldsymbol{u} \boldsymbol{u}^{\top}$ are the $d$ top eigenvectors of $\boldsymbol{\mathcal{Q}}$ excluding the eigenvector $\boldsymbol{u}$.

Lastly, the eigenvectors of $\boldsymbol{\mathcal{Q}} - c \boldsymbol{u} \boldsymbol{u}^{\top}$ with respect to different eigenvalues are orthogonal thus $ {\boldsymbol{X}_*}^{\top} \boldsymbol{u} = \boldsymbol{0}$.
Therefore problem (\ref{eq: property: proof - final result in X}) is a tight relaxation of problem (\ref{eq: property: proof - equivalent formulation in X}) if $c$ is sufficiently big.
\end{proof}

\subsection{Estimation of the Reference Covariance Prior}
\label{subsubsection: Reference Covariance Prior}

We estimate the reference covariance prior $\boldsymbol{\Lambda}$ using the eigenvalues of the datum shape covariance matrices $\mathbb{C}\mathrm{ov}(\boldsymbol{D}_i \boldsymbol{D}_i^{\top})
$ $\left(i \in \left[ 1:n \right]\right)$.
By zero-centering each datum shape $\boldsymbol{D}_i$ as
$\boldsymbol{\bar{D}}_i = \boldsymbol{D}_i -
\frac{1}{m}
\boldsymbol{D}_i  \boldsymbol{1} \boldsymbol{1}^{\top}
$, we have
$\mathbb{C}\mathrm{ov}(\boldsymbol{D}_i \boldsymbol{D}_i^{\top})
=
\boldsymbol{\bar{D}}_i \boldsymbol{\bar{D}}_i^{\top}
$.
Since $\boldsymbol{\Lambda}$ is diagonal,
we denote $\boldsymbol{\Lambda} = \mathrm{diag}\left(\lambda_1, \lambda_2, \dots, \lambda_d\right)$
and define the vector
$\boldsymbol{ \lambda } = 
\left[ \lambda_1, \lambda_2, \dots, \lambda_d \right]^{\top}$.
Abusing notations, we collect the eigenvalues of each datum shape covariance $\boldsymbol{D}_i \boldsymbol{D}_i^{\top}$ by a diagonal matrix $\boldsymbol{\Lambda}_i$
and define the vector
$$\boldsymbol{ \lambda }_i = 
\left[ \lambda_1^{(i)}, \lambda_2^{(i)}, \dots, \lambda_d^{(i)} \right]^{\top}.$$
Without loss of generality,
we assume that the elements in $\boldsymbol{ \lambda }_i$ have been sorted in the descending (or non-ascending) order such that $\lambda_1^{(i)} \ge \lambda_2^{(i)} \ge \dots \ge \lambda_d^{(i)} \ge 0$.
The task is now to estimate $\boldsymbol{ \lambda }$ from $\boldsymbol{ \lambda }_i$
$\left( i \in \left[ 1:n \right] \right)$.

To proceed, we consider the geometric implication of:
$$
\sqrt{\boldsymbol{ \lambda }_i} = \begin{bmatrix} \sqrt{\lambda_1^{(i)}}, & \sqrt{\lambda_2^{(i)}}, & \dots, & \sqrt{\lambda_d^{(i)}} \end{bmatrix}^{\top},
$$
\textit{i.e.,} the vector comprising the $d$ leftmost singular values of shape $\boldsymbol{\bar{D}}_i$.
It has been known in \citep{horn1987closed} that the scale of the shape $\boldsymbol{D}_i$ can be represented by $\frac{1}{\sqrt{m}}\lVert \boldsymbol{\bar{D}}_i \rVert_F$. Dropping the common constant $\frac{1}{\sqrt{m}}$, we consider:
\begin{align*}
&
\lVert \boldsymbol{\bar{D}}_i \rVert_F
=
\sqrt{ \mathbf{tr}\left( \boldsymbol{\bar{D}}_i \boldsymbol{\bar{D}}_i^{\top} \right) }
=
\sqrt{ \mathbf{tr}\left( \boldsymbol{\Lambda}_i \right) }
\\ = & 
\sqrt{
	\lambda_1^{(i)} + \lambda_2^{(i)} + \dots + \lambda_d^{(i)}
}
=
\lVert 
\sqrt{\boldsymbol{ \lambda }_i}
\rVert_2
.
\end{align*}
This suggests that $\sqrt{\boldsymbol{ \lambda }_i}$ is a vector with each of its component representing the scale along the corresponding Euclidean axis, and its length the overall shape scale.
Given $\sqrt{\boldsymbol{ \lambda }_1}, \sqrt{\boldsymbol{ \lambda }_2}, \dots, \sqrt{\boldsymbol{ \lambda }_n}$ from $n$ datum shapes,
we are interested in finding $\sqrt{\boldsymbol{\lambda}} = 
\left[ \sqrt{\lambda_1}, \sqrt{\lambda_2}, \dots, \sqrt{\lambda_d} \right]^{\top}$.

The reference shape is defined up to scale (\textit{i.e.,} up to a similarity transformation), thus in $\sqrt{\boldsymbol{\lambda}}$, all that matters is the proportion of its components, namely the direction of the vector $\sqrt{\boldsymbol{\lambda}}$.
Therefore we propose to estimate the direction of $\sqrt{\boldsymbol{\lambda}}$ on the unit ball, denoted by $\boldsymbol{\theta}$, by minimizing the angles between $\boldsymbol{\theta}$ and each $\sqrt{\boldsymbol{\lambda}_i}$ via maximizing their inner products as:
\begin{equation}
\label{eq: optimization problem in reference covariance prior}
\begin{aligned}
\boldsymbol{\theta}^{\star}
= & 
\argmax_{\boldsymbol{\theta} \in \mathbb{R}^d}\,
\sum_{i=1}^n
\left(
\boldsymbol{\theta}^{\top}
\frac{\sqrt{\boldsymbol{ \lambda }_i}}{\lVert 
\sqrt{\boldsymbol{ \lambda }_i} \rVert_2}
\right)^2
\\
&
\mathrm{s.t.} \quad
\lVert \boldsymbol{\theta} \rVert_2 = 1
.
\end{aligned}
\end{equation}
Upon defining
$
\boldsymbol{\Pi}
=
\begin{bmatrix}
\frac{\sqrt{\boldsymbol{ \lambda }_1}}{\lVert 
	\sqrt{\boldsymbol{ \lambda }_1} \rVert_2}
&
\frac{\sqrt{\boldsymbol{ \lambda }_2}}{\lVert 
	\sqrt{\boldsymbol{ \lambda }_2} \rVert_2}
&
\cdots
&
\frac{\sqrt{\boldsymbol{ \lambda }_n}}{\lVert 
	\sqrt{\boldsymbol{ \lambda }_n} \rVert_2}
\end{bmatrix}
$,
problem (\ref{eq: optimization problem in reference covariance prior}) can be rewritten as the Rayleigh quotient optimization:
\begin{align*}
\boldsymbol{\theta}^{\star}
= &
\argmax_{\boldsymbol{\theta} \in \mathbb{R}^d}\,
\boldsymbol{\theta}^{\top} \boldsymbol{\Pi}
\boldsymbol{\Pi}^{\top} \boldsymbol{\theta}
\\ &
\mathrm{s.t.} \quad
\lVert \boldsymbol{\theta} \rVert_2 = 1
,
\end{align*}
whose solution is given in closed-form,
which is the leftmost left singular vector of the matrix $\boldsymbol{\Pi}$,
or equivalently the top eigenvector of the matrix $\boldsymbol{\Pi}
\boldsymbol{\Pi}^{\top}$.
Since $\boldsymbol{\Pi}
\boldsymbol{\Pi}^{\top}$ has non-negative elements, by the Perron–Frobenius theorem,
the elements of $\boldsymbol{\theta}^{\star}$ can be chosen all non-negative.

Given $\boldsymbol{\theta}^{\star}$,
we can choose $\sqrt{\boldsymbol{\lambda}} = s \boldsymbol{\theta}^{\star}$ up to a scale factor $s > 0$.
In practice, it is desirable to choose $\sqrt{\boldsymbol{\lambda}}$ to be at the similar scale of $\sqrt{\boldsymbol{\lambda}_i}$.
Therefore, we choose the average scale
$s  = \frac{1}{n} \sum_{i=1}^{n}  \lVert 
\sqrt{\boldsymbol{ \lambda }_i}
\rVert_2$.

\subsection{The Coordinate Transformation of Datum Shapes}
\label{subsection: affine GPA. coordinate transformation of datum shapes}

We show that by applying (possibly distinct) arbitrary rigid transformations to each datum shape $\boldsymbol{D}_i$,
the optimal reference shape of problem (\ref{eq: affine-GPA, homogeneous formulation}) remains unchanged.
Thus formulation (\ref{eq: affine-GPA, homogeneous formulation}) is unbiased when facing coordinate transformations.

\begin{lemma}
	\label{lemma: applying transformation to a data shape does not change the orthogonal projection matrix}
	Let $\boldsymbol{D}' = \boldsymbol{R}\boldsymbol{D} + \boldsymbol{t} \boldsymbol{1}^{\top}$.
	We further denote:
	\begin{equation*}
	\boldsymbol{\tilde{D}} = \begin{bmatrix}
	\boldsymbol{D} \\ \boldsymbol{1}^{\top}
	\end{bmatrix}
	, \quad
	\boldsymbol{\tilde{D}}' = \begin{bmatrix}
	\boldsymbol{D}' \\ \boldsymbol{1}^{\top}
	\end{bmatrix}
	.
	\end{equation*}	
	Then we have
	$\mathbf{Range} (\boldsymbol{\tilde{D}}'^{\top})
	=
	\mathbf{Range} (\boldsymbol{\tilde{D}}^{\top})$
	and the orthogonal projection matrices thus satisfy:
	\begin{equation*}
	\boldsymbol{\tilde{D}}'^{\top}
	(\boldsymbol{\tilde{D}}'
	\boldsymbol{\tilde{D}}'^{\top})^{-1}
	\boldsymbol{\tilde{D}}'
	=
	\boldsymbol{\tilde{D}}^{\top}
	(\boldsymbol{\tilde{D}}
	\boldsymbol{\tilde{D}}^{\top})^{-1}
	\boldsymbol{\tilde{D}}
	.
	\end{equation*}
\end{lemma}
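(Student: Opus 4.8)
The plan is to observe that appending a row of ones turns the affine map $\boldsymbol{D} \mapsto \boldsymbol{R}\boldsymbol{D} + \boldsymbol{t}\boldsymbol{1}^{\top}$ into a single \emph{linear} map on the homogeneous coordinates. Concretely, I would introduce the augmented matrix
\begin{equation*}
\boldsymbol{H} = \begin{bmatrix} \boldsymbol{R} & \boldsymbol{t} \\ \boldsymbol{0}^{\top} & 1 \end{bmatrix},
\end{equation*}
and check by block multiplication that $\boldsymbol{\tilde{D}}' = \boldsymbol{H}\boldsymbol{\tilde{D}}$. Since $\boldsymbol{R}$ is a rotation (hence orthonormal and invertible), $\boldsymbol{H}$ is invertible, with $\det \boldsymbol{H} = \det \boldsymbol{R}$; this invertibility is the single fact that drives the whole argument, and it equally preserves the full-row-rank of $\boldsymbol{\tilde{D}}$, so that $(\boldsymbol{\tilde{D}}'\boldsymbol{\tilde{D}}'^{\top})^{-1}$ continues to exist.

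For the range claim, I would transpose to get $\boldsymbol{\tilde{D}}'^{\top} = \boldsymbol{\tilde{D}}^{\top}\boldsymbol{H}^{\top}$ and note that right-multiplication by the invertible matrix $\boldsymbol{H}^{\top}$ does not change the column space: each column of $\boldsymbol{\tilde{D}}^{\top}\boldsymbol{H}^{\top}$ is a linear combination of columns of $\boldsymbol{\tilde{D}}^{\top}$, and conversely $\boldsymbol{\tilde{D}}^{\top} = \boldsymbol{\tilde{D}}'^{\top}(\boldsymbol{H}^{\top})^{-1}$ yields the reverse inclusion. Hence $\mathbf{Range}(\boldsymbol{\tilde{D}}'^{\top}) = \mathbf{Range}(\boldsymbol{\tilde{D}}^{\top})$.

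For the projection claim there are two equivalent routes. The clean conceptual one: $\boldsymbol{\tilde{D}}^{\top}(\boldsymbol{\tilde{D}}\boldsymbol{\tilde{D}}^{\top})^{-1}\boldsymbol{\tilde{D}}$ is the orthogonal projector onto $\mathbf{Range}(\boldsymbol{\tilde{D}}^{\top})$ (valid because $\boldsymbol{\tilde{D}}$ has full row rank), and the orthogonal projector onto a given subspace is unique; since the two projectors map onto the same subspace by the range claim, they coincide. Alternatively I would substitute $\boldsymbol{\tilde{D}}' = \boldsymbol{H}\boldsymbol{\tilde{D}}$ directly and let the $\boldsymbol{H}$ factors cancel via the identity $(\boldsymbol{H}\boldsymbol{\tilde{D}}\boldsymbol{\tilde{D}}^{\top}\boldsymbol{H}^{\top})^{-1} = (\boldsymbol{H}^{\top})^{-1}(\boldsymbol{\tilde{D}}\boldsymbol{\tilde{D}}^{\top})^{-1}\boldsymbol{H}^{-1}$, which collapses the expression to exactly $\boldsymbol{\tilde{D}}^{\top}(\boldsymbol{\tilde{D}}\boldsymbol{\tilde{D}}^{\top})^{-1}\boldsymbol{\tilde{D}}$.

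The main obstacle is really just the first observation, namely recognizing that the homogeneous embedding linearizes the translation so that the rigid transformation becomes a single invertible left-factor $\boldsymbol{H}$. After that, both claims are immediate consequences of the invariance of column space and orthogonal projection under multiplication by an invertible matrix, and no genuine difficulty remains; one only has to keep track that the full-row-rank hypothesis is inherited by $\boldsymbol{\tilde{D}}'$, which follows because $\boldsymbol{H}$ invertible implies $\boldsymbol{\tilde{D}}'$ has the same rank as $\boldsymbol{\tilde{D}}$.
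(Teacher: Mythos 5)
Your proposal is correct and follows essentially the same route as the paper: you factor $\boldsymbol{\tilde{D}}'^{\top} = \boldsymbol{\tilde{D}}^{\top}\boldsymbol{H}^{\top}$ with the same invertible homogeneous matrix the paper uses, deduce equality of ranges, and conclude by uniqueness of the orthogonal projector onto a subspace. The explicit cancellation via $(\boldsymbol{H}\boldsymbol{\tilde{D}}\boldsymbol{\tilde{D}}^{\top}\boldsymbol{H}^{\top})^{-1}$ is a fine additional check but adds nothing beyond the paper's argument.
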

\begin{proof}
	The matrices $\boldsymbol{\tilde{D}}'^{\top}$ and $\boldsymbol{\tilde{D}}^{\top}$ have the same range space,
	\textit{i.e.,}
	$\mathbf{Range} (\boldsymbol{\tilde{D}}'^{\top})
	=
	\mathbf{Range} (\boldsymbol{\tilde{D}}^{\top})$,
	because:
	\begin{equation*}
	\boldsymbol{\tilde{D}}'^{\top} = \begin{bmatrix}
	\boldsymbol{D}^{\top} \boldsymbol{R}^{\top} + \boldsymbol{1} \boldsymbol{t}^{\top} ,  & \boldsymbol{1}
	\end{bmatrix}
	=
	\boldsymbol{\tilde{D}}^{\top}
	\begin{bmatrix}
	\boldsymbol{R}^{\top} & \boldsymbol{0} \\
	\boldsymbol{t}^{\top} & 1
	\end{bmatrix}
	.
	\end{equation*}
	The orthogonal projection matrices (also called orthogonal projectors) to $\mathbf{Range} (\boldsymbol{\tilde{D}}'^{\top})$ and $\mathbf{Range} (\boldsymbol{\tilde{D}}^{\top})$ are the same by uniqueness \citep{meyer2000matrix}.
\end{proof}

\begin{proposition}
	\label{proposition: affine GPA, the coordinate transformation of datum shapes does not change optimal reference shape}
	In problem (\ref{eq: affine-GPA, homogeneous formulation}),
	when we apply arbitrary rigid transformations to each datum shape $\boldsymbol{D}_i$,
	the matrix $\boldsymbol{\mathcal{Q}}_{\mathrm{\Rmnum{1}}}$ remains the same. So does the optimal reference shape.
\end{proposition}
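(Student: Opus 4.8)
The plan is to exploit the structure of $\boldsymbol{\mathcal{Q}}_{\mathrm{\Rmnum{1}}}$ as a sum of orthogonal projectors and reduce the claim to a term-by-term application of Lemma \ref{lemma: applying transformation to a data shape does not change the orthogonal projection matrix}. Suppose we replace each datum shape by $\boldsymbol{D}_i' = \boldsymbol{R}_i \boldsymbol{D}_i + \boldsymbol{t}_i \boldsymbol{1}^{\top}$ for arbitrary rigid $(\boldsymbol{R}_i, \boldsymbol{t}_i)$, and write $\boldsymbol{\tilde{D}}_i'$ for the corresponding homogeneous representation. Recall that $\boldsymbol{\mathcal{Q}}_{\mathrm{\Rmnum{1}}} = \sum_{i=1}^n \boldsymbol{\tilde{D}}_i^{\top}(\boldsymbol{\tilde{D}}_i \boldsymbol{\tilde{D}}_i^{\top})^{-1} \boldsymbol{\tilde{D}}_i$, where each summand is precisely the orthogonal projector onto $\mathbf{Range}(\boldsymbol{\tilde{D}}_i^{\top})$. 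Lemma \ref{lemma: applying transformation to a data shape does not change the orthogonal projection matrix} asserts that each such projector is invariant under the rigid transformation, namely $\boldsymbol{\tilde{D}}_i'^{\top}(\boldsymbol{\tilde{D}}_i' \boldsymbol{\tilde{D}}_i'^{\top})^{-1} \boldsymbol{\tilde{D}}_i' = \boldsymbol{\tilde{D}}_i^{\top}(\boldsymbol{\tilde{D}}_i \boldsymbol{\tilde{D}}_i^{\top})^{-1} \boldsymbol{\tilde{D}}_i$. Summing over $i$ gives $\boldsymbol{\mathcal{Q}}_{\mathrm{\Rmnum{1}}}' = \boldsymbol{\mathcal{Q}}_{\mathrm{\Rmnum{1}}}$, which is the first assertion.

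For the second assertion, I would recall from Proposition \ref{theorem: standard result: maximization in S, with S1=0} and the ensuing summary that the optimal reference shape $\boldsymbol{S}^{\star}$ is obtained by scaling the $d$ top eigenvectors of $\boldsymbol{\mathcal{Q}}_{\mathrm{\Rmnum{1}}}$ (excluding $\boldsymbol{1}$) by $\sqrt{\boldsymbol{\Lambda}}$. Since $\boldsymbol{\mathcal{Q}}_{\mathrm{\Rmnum{1}}}$ is unchanged, its eigenvectors are unchanged, so it remains only to verify that the reference covariance prior $\boldsymbol{\Lambda}$ is unaffected by the coordinate change. This follows from Lemma \ref{lemma: the relation of shape covariance matrices after rigid transformation}: the transformed datum covariance $\mathbb{C}\mathrm{ov}(\boldsymbol{D}_i') = \boldsymbol{R}_i\, \mathbb{C}\mathrm{ov}(\boldsymbol{D}_i) \boldsymbol{R}_i^{\top}$ is orthogonally similar to $\mathbb{C}\mathrm{ov}(\boldsymbol{D}_i)$ and hence shares its eigenvalues. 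Consequently each $\boldsymbol{\lambda}_i$, the column-normalized matrix $\boldsymbol{\Pi}$, and therefore the estimate of $\sqrt{\boldsymbol{\lambda}}$ produced in Section \ref{subsubsection: Reference Covariance Prior}, are all unchanged, so $\boldsymbol{\Lambda}$ is unchanged. Combining the two facts, $\boldsymbol{S}^{\star}$ is identical before and after the coordinate transformation.

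There is little genuine difficulty here: the result is essentially a corollary of the projector-invariance lemma applied summand-by-summand. The only point requiring care is that the optimal reference shape is determined not by $\boldsymbol{\mathcal{Q}}_{\mathrm{\Rmnum{1}}}$ alone but jointly with $\boldsymbol{\Lambda}$, so the invariance argument must be supplemented by the observation that $\boldsymbol{\Lambda}$ too is coordinate-free; this is the step I would state explicitly rather than gloss over. A second minor caveat is that when $\boldsymbol{\mathcal{Q}}_{\mathrm{\Rmnum{1}}}$ has a repeated eigenvalue within its top-$d$ spectrum, the eigenvectors, and hence $\boldsymbol{S}^{\star}$, are only determined up to the usual rotational gauge within the degenerate eigenspace; but since $\boldsymbol{\mathcal{Q}}_{\mathrm{\Rmnum{1}}}$ is literally identical before and after the transformation, the entire set of optimizers coincides, so the statement holds without qualification.
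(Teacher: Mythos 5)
Your proof is correct and follows essentially the same route as the paper: the paper's proof is exactly the term-by-term application of Lemma \ref{lemma: applying transformation to a data shape does not change the orthogonal projection matrix} to each summand of $\boldsymbol{\mathcal{Q}}_{\mathrm{\Rmnum{1}}}$. Your additional check that the reference covariance prior $\boldsymbol{\Lambda}$ is itself coordinate-free (via Lemma \ref{lemma: the relation of shape covariance matrices after rigid transformation}) is a point the paper's proof silently omits, and it is needed to fully justify the second assertion, so it is a worthwhile supplement rather than a deviation.
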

\begin{proof}
	By Lemma \ref{lemma: applying transformation to a data shape does not change the orthogonal projection matrix},
	matrix
	$\boldsymbol{\tilde{D}}_i^{\top}
	(\boldsymbol{\tilde{D}}_i \boldsymbol{\tilde{D}}_i^{\top})^{-1}  \boldsymbol{\tilde{D}}_i$
	remains the same when we apply arbitrary rigid transformations to the datum shape $\boldsymbol{D}_i$.
	Thus $\boldsymbol{\mathcal{Q}}_{\mathrm{\Rmnum{1}}} = 
	\sum_{i=1}^{n}
	\boldsymbol{\tilde{D}}_i^{\top}
	(\boldsymbol{\tilde{D}}_i \boldsymbol{\tilde{D}}_i^{\top})^{-1}  \boldsymbol{\tilde{D}}_i$
	remains the same as well.
\end{proof}

\subsection{Connection to Classical Results by Eliminating Translations}
\label{sec: eliminating translations - affine case}

We recapitulate the key idea of affine GPA in \citep{rohlf1990extensions} as follows in formulation (\ref{eq: procrustes formulation of zero-translation affine, eigen constraints}),
and show that this approach attains the same result as formulation (\ref{eq: affine-GPA, homogeneous formulation}) while the latter is more general.
In formulation (\ref{eq: affine-GPA, homogeneous formulation}),
the optimal $\boldsymbol{t}_i$ given $\boldsymbol{A}_i$ and $\boldsymbol{S}$ is
$
\boldsymbol{t}_i = - \frac{1}{m} (\boldsymbol{A}_i \boldsymbol{D}_i - \boldsymbol{S}) \boldsymbol{1}
$.
Moreover if $\boldsymbol{S} \boldsymbol{1} = \boldsymbol{0}$, we have $\boldsymbol{t}_i =
- \frac{1}{m} \boldsymbol{A}_i \boldsymbol{D}_i \boldsymbol{1}$.
Substituting the estimate $\boldsymbol{t}_i =
- \frac{1}{m} \boldsymbol{A}_i \boldsymbol{D}_i \boldsymbol{1}$ back to formulation (\ref{eq: procrustes formulation of zero-translation affine, eigen constraints}),
we have:
\begin{equation}
\label{eq: procrustes formulation of zero-translation affine, eigen constraints}
\begin{cases}
\argmin\limits_{\{\boldsymbol{A}_i\},\, \boldsymbol{S}} \quad 
& \sum_{i=1}^n\,
\lVert \boldsymbol{A}_i \boldsymbol{\bar{D}}_i - \boldsymbol{S} \rVert_F^2
\\[10pt]
\mathrm{s.t.} \quad
& \boldsymbol{S} \boldsymbol{S}^{\top} = \boldsymbol{\Lambda}
,
\end{cases}
\end{equation}
where $\boldsymbol{\bar{D}}_i = \boldsymbol{D}_i -
\frac{1}{m}
\boldsymbol{D}_i  \boldsymbol{1} \boldsymbol{1}^{\top}$
$\left(i \in \left[ 1:n \right]\right)$ are zero-centered datum shapes.
Let $\boldsymbol{\mathfrak{Q}}_{\circ} = \sum_{i=1}^{n}
\boldsymbol{\bar{D}}_i^{\top}  (\boldsymbol{\bar{D}}_i\boldsymbol{\bar{D}}_i^{\top})^{-1}  \boldsymbol{\bar{D}}_i$.
Following a similar derivation to Section \ref{subsection. afine GAP, globally optimal solution}, we can reduce problem (\ref{eq: procrustes formulation of zero-translation affine, eigen constraints}) to formulation (\ref{eq: affine compact standard form with trace as maximization}) with $\boldsymbol{\mathcal{Q}} = \boldsymbol{\mathfrak{Q}}_{\circ}$, thus the optimal reference shape $\boldsymbol{S}$ of problem (\ref{eq: procrustes formulation of zero-translation affine, eigen constraints}) is to scale the
$d$ top eigenvectors of
$\boldsymbol{\mathfrak{Q}}_{\circ}$
by $\sqrt{\boldsymbol{\Lambda}}$.
We drop the constraint $\boldsymbol{S} \boldsymbol{1} = \boldsymbol{0}$ in formulation (\ref{eq: procrustes formulation of zero-translation affine, eigen constraints}) because this constraint is automatically satisfied for the optimal $\boldsymbol{S}$ of formulation (\ref{eq: procrustes formulation of zero-translation affine, eigen constraints}), as
$\boldsymbol{\mathfrak{Q}}_{\circ} \boldsymbol{1} = \boldsymbol{0}$ thus $\boldsymbol{1}$ is orthogonal to the $d$ top eigenvectors corresponding to nonzero eigenvalues.

\begin{proposition}
$\boldsymbol{\mathcal{Q}}_{\mathrm{\Rmnum{1}}}
=
\frac{n}{m} \boldsymbol{1} \boldsymbol{1}^{\top} + 
\boldsymbol{\mathfrak{Q}}_{\circ}
$.
The $d$ top eigenvectors of $\boldsymbol{\mathcal{Q}}_{\mathrm{\Rmnum{1}}}$ excluding the vector $\boldsymbol{1}$ are the $d$ top eigenvectors of
$\boldsymbol{\mathfrak{Q}}_{\circ}$,
thus problems (\ref{eq: affine-GPA, homogeneous formulation}) and (\ref{eq: procrustes formulation of zero-translation affine, eigen constraints}) give the same optimal reference shape $\boldsymbol{S}$.
\end{proposition}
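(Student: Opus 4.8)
The plan is to prove the matrix identity, deduce the eigenvector claim from it, and then invoke the Summary to conclude that the two formulations return the same reference shape. The crux is the first identity, which I would establish geometrically rather than by brute-force algebra. Recall that $(\boldsymbol{\tilde{D}}_i)^{\dagger}\boldsymbol{\tilde{D}}_i$ and $\boldsymbol{\bar{D}}_i^{\top}(\boldsymbol{\bar{D}}_i\boldsymbol{\bar{D}}_i^{\top})^{-1}\boldsymbol{\bar{D}}_i$ are, respectively, the orthogonal projectors onto $\mathbf{Range}(\boldsymbol{\tilde{D}}_i^{\top})$ and $\mathbf{Range}(\boldsymbol{\bar{D}}_i^{\top})$. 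First I would note that $\boldsymbol{\bar{D}}_i\boldsymbol{1}=\boldsymbol{0}$ (the centered points sum to zero), so that $\boldsymbol{1}$ is orthogonal to every column of $\boldsymbol{\bar{D}}_i^{\top}$, i.e. $\boldsymbol{1}\perp\mathbf{Range}(\boldsymbol{\bar{D}}_i^{\top})$. Next, writing $\boldsymbol{D}_i^{\top}=\boldsymbol{\bar{D}}_i^{\top}+\tfrac{1}{m}\boldsymbol{1}(\boldsymbol{D}_i\boldsymbol{1})^{\top}$ shows that each column of $\boldsymbol{D}_i^{\top}$ lies in $\mathbf{Range}(\boldsymbol{\bar{D}}_i^{\top})+\mathrm{span}(\boldsymbol{1})$, whence $\mathbf{Range}(\boldsymbol{\tilde{D}}_i^{\top})=\mathbf{Range}(\boldsymbol{\bar{D}}_i^{\top})\oplus\mathrm{span}(\boldsymbol{1})$ as an \emph{orthogonal} direct sum. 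Since the orthogonal projector onto an orthogonal direct sum is the sum of the component projectors, and the projector onto $\mathrm{span}(\boldsymbol{1})$ is $\tfrac{1}{m}\boldsymbol{1}\boldsymbol{1}^{\top}$, summing the per-shape identity over $i\in[1:n]$ gives $\boldsymbol{\mathcal{Q}}_{\mathrm{\Rmnum{1}}}=\tfrac{n}{m}\boldsymbol{1}\boldsymbol{1}^{\top}+\boldsymbol{\mathfrak{Q}}_{\circ}$.

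For the eigenvector claim I would exploit that the two matrices differ only by the rank-one term $\tfrac{n}{m}\boldsymbol{1}\boldsymbol{1}^{\top}$, which acts nontrivially only along $\boldsymbol{1}$. Using $\boldsymbol{\mathfrak{Q}}_{\circ}\boldsymbol{1}=\boldsymbol{0}$ and $\boldsymbol{\mathcal{Q}}_{\mathrm{\Rmnum{1}}}\boldsymbol{1}=n\boldsymbol{1}$ (both already established), $\boldsymbol{1}$ is a common eigenvector, with eigenvalue $0$ for $\boldsymbol{\mathfrak{Q}}_{\circ}$ and $n$ for $\boldsymbol{\mathcal{Q}}_{\mathrm{\Rmnum{1}}}$. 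For any $\boldsymbol{v}\perp\boldsymbol{1}$ the rank-one term annihilates, $\tfrac{n}{m}\boldsymbol{1}\boldsymbol{1}^{\top}\boldsymbol{v}=\boldsymbol{0}$, so $\boldsymbol{\mathcal{Q}}_{\mathrm{\Rmnum{1}}}\boldsymbol{v}=\boldsymbol{\mathfrak{Q}}_{\circ}\boldsymbol{v}$; choosing an orthonormal eigenbasis of the symmetric matrix $\boldsymbol{\mathfrak{Q}}_{\circ}$ on the orthogonal complement of $\boldsymbol{1}$, these vectors are simultaneously eigenvectors of $\boldsymbol{\mathcal{Q}}_{\mathrm{\Rmnum{1}}}$ with identical eigenvalues. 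Hence the two spectra coincide except for the eigenvalue attached to $\boldsymbol{1}$, and once $\boldsymbol{1}$ is removed the ordered lists of eigenvectors are identical. Because the $d$ top eigenvalues of $\boldsymbol{\mathfrak{Q}}_{\circ}$ are positive while $\boldsymbol{1}$ carries eigenvalue $0$, the top-$d$ selection for $\boldsymbol{\mathfrak{Q}}_{\circ}$ already excludes $\boldsymbol{1}$, so it matches the top-$d$ eigenvectors of $\boldsymbol{\mathcal{Q}}_{\mathrm{\Rmnum{1}}}$ taken after discarding $\boldsymbol{1}$.

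Finally, the conclusion follows by comparing the two closed-form solutions: by the Summary the optimizer of (\ref{eq: affine-GPA, homogeneous formulation}) scales the $d$ top eigenvectors of $\boldsymbol{\mathcal{Q}}_{\mathrm{\Rmnum{1}}}$ excluding $\boldsymbol{1}$ by $\sqrt{\boldsymbol{\Lambda}}$, whereas (\ref{eq: procrustes formulation of zero-translation affine, eigen constraints}) reduces to scaling the $d$ top eigenvectors of $\boldsymbol{\mathfrak{Q}}_{\circ}$ by $\sqrt{\boldsymbol{\Lambda}}$; since these eigenvector sets are equal, the two optimal reference shapes coincide (up to the usual sign and eigenspace gauge inherent to the shape constraints). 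The main obstacle is the first step: cleanly justifying that the range decomposition is both a \emph{direct} sum (no dimension is lost, which follows from $\boldsymbol{1}\notin\mathbf{Range}(\boldsymbol{\bar{D}}_i^{\top})$) and an \emph{orthogonal} one, since this is exactly what licenses the additive splitting of the projectors; a minor accompanying point is the genericity assumption that the relevant Gram matrices are invertible and that the $d$-th largest eigenvalue of $\boldsymbol{\mathfrak{Q}}_{\circ}$ is strictly positive, so the top-$d$ subspace is well defined and excludes $\boldsymbol{1}$.
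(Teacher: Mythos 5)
Your proposal is correct, and the eigenvector half of your argument (a rank-one perturbation $\frac{n}{m}\boldsymbol{1}\boldsymbol{1}^{\top}$ along a common eigenvector only shifts that one eigenvalue, leaving the orthogonal-complement eigenstructure untouched) is essentially the paper's argument via Lemma \ref{lemma: a result on eigenvalue decomposition of symmetric matrix}. Where you diverge is in establishing the identity $\boldsymbol{\mathcal{Q}}_{\mathrm{\Rmnum{1}}} = \frac{n}{m}\boldsymbol{1}\boldsymbol{1}^{\top} + \boldsymbol{\mathfrak{Q}}_{\circ}$: the paper first invokes Proposition \ref{proposition: affine GPA, the coordinate transformation of datum shapes does not change optimal reference shape} (centering is a rigid transformation, so $\boldsymbol{\tilde{D}}_i$ may be replaced by $\boldsymbol{\tilde{\bar{D}}}_i$), observes that $\boldsymbol{\tilde{\bar{D}}}_i\boldsymbol{\tilde{\bar{D}}}_i^{\top}$ is block-diagonal because $\boldsymbol{\bar{D}}_i\boldsymbol{1}=\boldsymbol{0}$, and then verifies the per-shape splitting of the projector by a direct block computation. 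You instead argue at the level of subspaces: $\mathbf{Range}(\boldsymbol{\tilde{D}}_i^{\top}) = \mathbf{Range}(\boldsymbol{\bar{D}}_i^{\top}) \oplus \mathrm{span}(\boldsymbol{1})$ as an orthogonal direct sum, so the orthogonal projector decomposes additively. The two derivations rest on the same fact ($\boldsymbol{1}\perp\mathbf{Range}(\boldsymbol{\bar{D}}_i^{\top})$), but yours makes the geometric reason for the additivity explicit and avoids the matrix-inverse bookkeeping, at the cost of having to justify carefully that the sum is both direct and orthogonal and that ranges are preserved under centering — which you do. Your closing caveats (invertibility of the Gram matrices, strict positivity of the $d$-th eigenvalue of $\boldsymbol{\mathfrak{Q}}_{\circ}$ so that the top-$d$ selection is well defined and excludes $\boldsymbol{1}$) are genericity assumptions the paper leaves implicit; flagging them is a small improvement rather than a gap.
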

\begin{proof}
Let us denote the homogeneous form of the zero-centered datum shape as:
\begin{equation*}
\boldsymbol{\tilde{\bar{D}}}_i = \begin{bmatrix}
\boldsymbol{\bar{D}}_i \\ \boldsymbol{1}^{\top}
\end{bmatrix}
,\quad\mathrm{thus}\quad
\boldsymbol{\tilde{\bar{D}}}_i
\boldsymbol{\tilde{\bar{D}}}_i^{\top}
= 
\begin{bmatrix}
\boldsymbol{\bar{D}}_i \boldsymbol{\bar{D}}_i^{\top} & \boldsymbol{O} \\
\boldsymbol{O} & m
\end{bmatrix}
,
\end{equation*}
where we have used the fact that $\boldsymbol{\bar{D}}_i \boldsymbol{1} = \boldsymbol{0}$.
From Proposition \ref{proposition: affine GPA, the coordinate transformation of datum shapes does not change optimal reference shape}, we have
$
\boldsymbol{\mathcal{Q}}_{\mathrm{\Rmnum{1}}}
=
\sum_{i=1}^{n}
\boldsymbol{\tilde{D}}_i^{\top}
(\boldsymbol{\tilde{D}}_i
\boldsymbol{\tilde{D}}_i^{\top})^{-1}
\boldsymbol{\tilde{D}}_i
=
\sum_{i=1}^{n}
\boldsymbol{\tilde{\bar{D}}}_i^{\top}
\left( \boldsymbol{\tilde{\bar{D}}}_i
\boldsymbol{\tilde{\bar{D}}}_i^{\top}
\right)^{-1}  \boldsymbol{\tilde{\bar{D}}}_i
$.
By a straightforward calculation, it can be verified that:
\begin{equation*}
\boldsymbol{\tilde{\bar{D}}}_i^{\top}
\left( \boldsymbol{\tilde{\bar{D}}}_i
\boldsymbol{\tilde{\bar{D}}}_i^{\top}
\right)^{-1}  \boldsymbol{\tilde{\bar{D}}}_i
=
\frac{1}{m}
\boldsymbol{1}\boldsymbol{1}^{\top} + 
\boldsymbol{\bar{D}}_i^{\top}  (\boldsymbol{\bar{D}}_i\boldsymbol{\bar{D}}_i^{\top})^{-1}  \boldsymbol{\bar{D}}_i
,
\end{equation*}
thus we obtain $\boldsymbol{\mathcal{Q}}_{\mathrm{\Rmnum{1}}}
=
\frac{n}{m} \boldsymbol{1} \boldsymbol{1}^{\top} + 
\boldsymbol{\mathfrak{Q}}_{\circ}
$.
We notice $\boldsymbol{\mathfrak{Q}}_{\circ} \boldsymbol{1} = \boldsymbol{0}$ because of
$\boldsymbol{\bar{D}}_i \boldsymbol{1} = \boldsymbol{0}$, which means $\boldsymbol{1}$ is an eigenvector of $\boldsymbol{\mathfrak{Q}}_{\circ}$ with eigenvalue $0$.
Therefore $\boldsymbol{\mathcal{Q}}_{\mathrm{\Rmnum{1}}}$
and $\boldsymbol{\mathfrak{Q}}_{\circ}$ have the same eigenvectors by Lemma \ref{lemma: a result on eigenvalue decomposition of symmetric matrix},
while $\boldsymbol{1}$ corresponds to the largest eigenvalue $n$ in $\boldsymbol{\mathcal{Q}}_{\mathrm{\Rmnum{1}}}$
and in contrast to the smallest eigenvalue $0$ in $\boldsymbol{\mathfrak{Q}}_{\circ}$.
\end{proof}

\section{Generalized Procrustes Analysis with the Deformation Model}
\label{Generalized Procrustes Analysis with Warp Models}

We consider GPA with the deformation model as nonlinear warps.
In particular, we consider a class of generalized warps, termed LBWs,
whose transformation parameters are linear with respect to the nonlinear basis functions that lift source points to the higher-dimensional feature space.
In this section, we consider full shape registration and postpone the discussion of partial shape registration to Section \ref{section: Extensions to Partial Datum Shapes}.

\subsection{Linear Basis Warps}
\label{subsection: linear basis warps}

A warp is a generalized transformation, that maps a source point $\boldsymbol{p} \in \mathbb{R}^d$ $\left(d\in \left\{2,3\right\}\right)$ to its target point $\boldsymbol{p}' \in \mathbb{R}^d$.
The LBW is expressed as a linear combination of a set of basis functions.

\subsubsection{Formulation}

Let $\boldsymbol{\beta}(\boldsymbol{p})$ be a vector of basis functions:
\begin{equation*}
\boldsymbol{\beta}(\boldsymbol{p}) = 
\left[
\beta_1 (\boldsymbol{p}),\,
\beta_2 (\boldsymbol{p}),
\dots,
\beta_l (\boldsymbol{p})
\right]^{\top}
,
\end{equation*}
with each element $\beta_k (\boldsymbol{p}): \mathbb{R}^d \rightarrowtail \mathbb{R}$
$\left(k \in \left[ 1:l \right]\right)$ being a scalar basis function.
The vectorized basis function
$\boldsymbol{\beta}(\boldsymbol{p}): \mathbb{R}^d \rightarrowtail \mathbb{R}^l$
brings a $d$-dimensional point $\boldsymbol{p}$ to the $l$-dimensional feature space,
thus is also termed a feature mapping in the context of linear regression models \citep{bishop2006pattern}.
Given the basis function $\boldsymbol{\beta}(\boldsymbol{p})$,
we write a warp model,
$\mathcal{W} (\boldsymbol{p}, \boldsymbol{W}): \mathbb{R}^d \rightarrowtail \mathbb{R}^d$,
as:
\begin{equation}
\label{eq - warp model - operating on point}
\mathcal{W} (\boldsymbol{p}, \boldsymbol{W})
= \boldsymbol{W}^{\top} \boldsymbol{\beta}(\boldsymbol{p})
,
\end{equation}
with $\boldsymbol{W} \in \mathbb{R}^{l \times d}$ being the unknown weight matrix.
An example is provided in Appendix \ref{appendix: TPS Standard Form as a Linear Regression Model}, showing how to write the TPS warp in this form.

\subsubsection{Operating on Point Clouds}
\label{subsection: generalized warp models, operating on point cloud}

For a point-cloud of $m$ points in a matrix $\boldsymbol{D} = \left[
\boldsymbol{p}_1,\,
\boldsymbol{p}_2,
\dots,
\boldsymbol{p}_m
\right] \in \mathbb{R}^{d \times m}$.
We apply the warp $\mathcal{W} (\boldsymbol{p}, \boldsymbol{W})$ to each point in $\boldsymbol{D}$ to obtain its warped version.
Abusing notations, we write the result as:
\begin{equation}
\label{eq - warp model - operating on data}
\mathcal{W} (\boldsymbol{D}, \boldsymbol{W}) = \boldsymbol{W}^{\top} \boldsymbol{\mathcal{B}}(\boldsymbol{D})
,
\end{equation}
with $\boldsymbol{\mathcal{B}}(\boldsymbol{D}) \in \mathbb{R}^{l \times m}$ collecting the feature of each point in $\boldsymbol{D}$ as its columns:
\begin{equation*}
\boldsymbol{\mathcal{B}}(\boldsymbol{D}) =
\left[\boldsymbol{\beta}(\boldsymbol{p}_1),\,
\boldsymbol{\beta}(\boldsymbol{p}_2),
\dots,
\boldsymbol{\beta}(\boldsymbol{p}_m)
\right].
\end{equation*}

\subsubsection{Regularization}

The warp is often used with a regularization term to avoid over-fitting, for instance, if the dimension of the feature space is greater or equivalent to the number of points in the point-cloud.
In the context of deformations, such a term is formed from the partial derivatives of the warp, with different physical implications.
In particular, the second-order derivatives are used in the TPS warp:
\begin{equation*}
\mathcal{R} (\boldsymbol{W})
=
\int_{\mathbb{R}^d}\,
\left\|
\frac{\partial^2}{\partial \boldsymbol{p}^2}
\mathcal{W} (\boldsymbol{p}, \boldsymbol{W})
\right\|_F^2\, d \boldsymbol{p}
.
\end{equation*}
This is directly proportional to the bending energy.
The bending energy term is exactly zero if and only if the warp is affine \citep{bookstein1989principal}.
Other possibilities of regularizations include
the spring term used in elastic registration \citep{christensen2001consistent},
and the
viscosity term used in fluid registration
\citep{bro1996fast}.

For the TPS warp, the integral can be solved in closed-form, as a quadratic form of the transformation parameters:
\begin{equation}
\label{eq: standard form regularization term}
\mathcal{R} (\boldsymbol{W}) =
\lVert
\boldsymbol{Z} \boldsymbol{W}
\rVert_F^2.
\end{equation}
Here $\boldsymbol{Z}$ is given as the square root of the bending energy matrix, see Appendix \ref{appendix, TPS Regularization by Bending Energy Matrix}.
For other warps, we assume the regularization term can be fairly approximated by the quadratic form as well.

The regularization term is however not compulsory.
It is always possible to avoid over-fitting by limiting the dimension of the feature space, by choosing a smaller $l \ll m$
\citep{rueckert1999nonrigid}.
For completeness of the discussion, we consider the case with regularization.

\subsubsection{Examples of Linear Basis Warps}

The LBW generalizes over many deformation models,
\textit{e.g.}~the Free-Form Deformations (FFD) \citep{rueckert1999nonrigid, szeliski1997spline},
and
the Radial Basis Functions (RBF) \citep{bookstein1989principal,fornefett2001radial}.
Concretely we will use the Thin-Plate Spline (TPS) \citep{duchon1976interpolation, bookstein1989principal},
a theoretically principled RBF that minimizes the overall bending energy,
in the practical implementation of our theory.
An introduction of the TPS warp as an LBW is provided in Appendix \ref{Appendix: Thin-Plate Spline}.

The affine transformation is a special case of the LBW without regularization.
This can be shown from the homogeneous form in equation (\ref{eq: homogeneous representation of affine transformation}),
by setting:
\begin{equation*}
\boldsymbol{W}^{\top} = \begin{bmatrix}
\boldsymbol{A}, & \boldsymbol{t}
\end{bmatrix}
, \quad
\boldsymbol{\mathcal{B}} (\boldsymbol{D}) = 
\begin{bmatrix}
\boldsymbol{D} \\
\boldsymbol{1}^{\top}
\end{bmatrix}
.
\end{equation*}

\subsection{Generalized Procrustes Analysis with Linear Basis Warps}
\label{subsection: Generalized Procrustes Analysis with Linear Basis Warps}

We consider the case without missing datum points, and use the LBW in formulation (\ref{eq: GPA - model - original visibility matrix}).
We constrain the reference shape to be zero centered by $\boldsymbol{S}  \boldsymbol{1} = \boldsymbol{0}$.
The shape constraint $\boldsymbol{S} \boldsymbol{S}^{\top} = \boldsymbol{\Lambda}$ is used to enforce the rigidity of the solution,
and the reference covariance prior $\boldsymbol{\Lambda}$ is estimated as in
Section \ref{subsubsection: Reference Covariance Prior}.
We propose the following formulation for deformable GPA:
\begin{equation}
\label{eq: Procrustes with warp models - S 1 = 0, SS =L}
\mathrm{\Rmnum{2}:}
\begin{cases}
\argmin\limits_{\{\boldsymbol{W}_i\},\, \boldsymbol{S}}\quad
& \sum_{i=1}^n\,
\lVert
\boldsymbol{W}_i^{\top} \boldsymbol{\mathcal{B}}_i(\boldsymbol{D}_i) 
- 
\boldsymbol{S}
\rVert_{F}^2	
\\[5pt] &  +
\sum_{i=1}^n\,
\mu_i
\lVert
\boldsymbol{Z}_i \boldsymbol{W}_i
\rVert_F^2
\\[10pt]
\mathrm{s.t.}\quad
& 	\boldsymbol{S} \boldsymbol{S}^{\top} = \boldsymbol{\Lambda},
\quad
\boldsymbol{S}  \boldsymbol{1} = \boldsymbol{0}.
\end{cases}
\end{equation}
Although being termed deformable GPA, formulation (\ref{eq: Procrustes with warp models - S 1 = 0, SS =L}) includes the affine-GPA formulation (\ref{eq: affine-GPA, homogeneous formulation}) in homogeneous form as a special case.
The translation part cannot be identified directly from the LBW, thus needs to be estimated jointly inside the transformation parameters.

To proceed, we define the shorthand
$\boldsymbol{\mathcal{B}}_i
\defeq
\boldsymbol{\mathcal{B}}_i(\boldsymbol{D}_i)
$.
The problem is separable.
Given a reference shape $\boldsymbol{S}$, the estimate of $\boldsymbol{W}_i$ is given in closed-form by solving the linear least-squares problem whose solution is:
\begin{equation}
\label{eq: deformable GPA, weight paramters with respect to S}
	\boldsymbol{W}_i
	=
	\left(\boldsymbol{\mathcal{B}}_i
	\boldsymbol{\mathcal{B}}_i^{\top}
	+
	\mu_i \boldsymbol{Z}_i^{\top}  \boldsymbol{Z}_i\right)^{-1}
	\boldsymbol{\mathcal{B}}_i
	{\boldsymbol{S}}^{\top}
	.
\end{equation}
Substituting equation (\ref{eq: deformable GPA, weight paramters with respect to S}) into problem (\ref{eq: Procrustes with warp models - S 1 = 0, SS =L}),
we obtain an optimization problem with respect to $\boldsymbol{S}$ only:
\begin{equation}
\label{eq: deformable-GPA minization, in S only, S1 = 0}
\begin{aligned}
&
\argmin_{\boldsymbol{S}}\  
\mathbf{tr}\left(\boldsymbol{S} \boldsymbol{\mathcal{P}}_{\mathrm{\Rmnum{2}}} \boldsymbol{S}^{\top}\right)
\\
&
\mathrm{s.t.} \quad
\boldsymbol{S} \boldsymbol{S}^{\top} = \boldsymbol{\Lambda},
\quad
\boldsymbol{S}  \boldsymbol{1} = \boldsymbol{0}
,
\end{aligned}
\end{equation}
where matrix $\boldsymbol{\mathcal{P}}_{\mathrm{\Rmnum{2}}}$ is defined as:
\begin{equation*}
\boldsymbol{\mathcal{P}}_{\mathrm{\Rmnum{2}}}
=
\sum_{i=1}^{n} \left(\boldsymbol{I} - \boldsymbol{\mathcal{B}}_i^{\top}
\left(\boldsymbol{\mathcal{B}}_i
\boldsymbol{\mathcal{B}}_i^{\top}
+
\mu_i \boldsymbol{Z}_i^{\top}  \boldsymbol{Z}_i\right)^{-1}
\boldsymbol{\mathcal{B}}_i \right)
.
\end{equation*}
We define $\boldsymbol{Q}_i
=
\boldsymbol{\mathcal{B}}_i^{\top}
\left(\boldsymbol{\mathcal{B}}_i
\boldsymbol{\mathcal{B}}_i^{\top}
+
\mu_i \boldsymbol{Z}_i^{\top}  \boldsymbol{Z}_i\right)^{-1}
\boldsymbol{\mathcal{B}}_i$,
and:
\begin{equation*}
\boldsymbol{\mathcal{Q}}_{\mathrm{\Rmnum{2}}}
= \sum_{i=1}^{n} \boldsymbol{\mathcal{B}}_i^{\top}
\left(\boldsymbol{\mathcal{B}}_i
\boldsymbol{\mathcal{B}}_i^{\top}
+
\mu_i \boldsymbol{Z}_i^{\top}  \boldsymbol{Z}_i\right)^{-1}
\boldsymbol{\mathcal{B}}_i
= \sum_{i=1}^{n} \boldsymbol{Q}_i
.
\end{equation*}
Then
$\boldsymbol{\mathcal{P}}_{\mathrm{\Rmnum{2}}}
=
\sum_{i=1}^{n} \left(\boldsymbol{I} - \boldsymbol{Q}_i\right)
=
n \boldsymbol{I}
-
\sum_{i=1}^{n}
\boldsymbol{Q}_i
$,
and
$
\mathbf{tr}\left(\boldsymbol{S} \boldsymbol{\mathcal{P}}_{\mathrm{\Rmnum{2}}} \boldsymbol{S}^{\top}\right)
=
n \mathbf{tr}\left( \boldsymbol{\Lambda} \right)
-
\mathbf{tr}\left(\boldsymbol{S} \boldsymbol{\mathcal{Q}}_{\mathrm{\Rmnum{2}}} \boldsymbol{S}^{\top}\right)
$.
Problem (\ref{eq: deformable-GPA minization, in S only, S1 = 0})
can thus be equivalently written as a maximization problem:
\begin{equation}
\label{eq: standard form - maximization - deformable GPA - in S only with S1 = 0}
\begin{aligned}
&
\argmax_{\boldsymbol{S}}\  
\mathbf{tr}\left(\boldsymbol{S} \boldsymbol{\mathcal{Q}}_{\mathrm{\Rmnum{2}}} \boldsymbol{S}^{\top}\right)
\\
&
\mathrm{s.t.} \quad
\boldsymbol{S} \boldsymbol{S}^{\top} = \boldsymbol{\Lambda},
\quad
\boldsymbol{S}  \boldsymbol{1} = \boldsymbol{0}
.
\end{aligned}
\end{equation}

If $\boldsymbol{1}$ is an eigenvector of $\boldsymbol{\mathcal{Q}}_{\mathrm{\Rmnum{2}}}$,
or equivalently if $\boldsymbol{1}$ is an eigenvector of $\boldsymbol{\mathcal{P}}_{\mathrm{\Rmnum{2}}}$,
then Problem (\ref{eq: deformable-GPA minization, in S only, S1 = 0}) and Problem (\ref{eq: standard form - maximization - deformable GPA - in S only with S1 = 0}) can be solved globally by Proposition \ref{theorem: standard result: maximization in S, with S1=0}.
Once we obtain the optimal reference shape $\boldsymbol{S}^{\star}$, the optimal transformation parameters can be calculated by equation (\ref{eq: deformable GPA, weight paramters with respect to S}).

\subsection{Eigenvector Characterization and Globally Optimal Solution}
\label{section: identifying translations in transformation models}

Now we characterize a class of LBWs for which the resulting $\boldsymbol{\mathcal{P}}_{\mathrm{\Rmnum{2}}}$
and
$\boldsymbol{\mathcal{Q}}_{\mathrm{\Rmnum{2}}}$
have an eigenvector $\boldsymbol{1}$.
To this end, we prove that the following statements are equivalent.
The proofs have been moved to Appendix \ref{appendix: proof of Theorem 1 - four equivalent statements} to benefit easy reading.
\begin{theorem}
	\label{label: theorem: three equivalent conditions, p, q , cost}
	The following statements are equivalent:
	\begin{enumerate}
		\item[$(a)$] $\boldsymbol{\mathcal{P}}_{\mathrm{\Rmnum{2}}} \boldsymbol{1} = \boldsymbol{0}$.
		\item[$(b)$] $\boldsymbol{\mathcal{Q}}_{\mathrm{\Rmnum{2}}} \boldsymbol{1} = n \boldsymbol{1}$.
		\item[$(c)$]
		$\boldsymbol{Q}_i \boldsymbol{1} = \boldsymbol{1}$.
		\item[$(d)$]
		The cost function $\mathbf{tr}\left(\boldsymbol{S} \boldsymbol{\mathcal{P}}_{\mathrm{\Rmnum{2}}} \boldsymbol{S}^{\top}\right)$ is invariant to translations.
		\item[$(e)$]
		There exists $\boldsymbol{x}$ such that $\boldsymbol{\mathcal{B}}_i(\boldsymbol{D}_i)^{\top} \boldsymbol{x}
		= 
		\boldsymbol{1}$;
		moreover if $\mu_i > 0$,
		$\boldsymbol{x}$ must satisfy
		$\boldsymbol{Z}_i \boldsymbol{x} = \boldsymbol{0}$.
	\end{enumerate}	
\end{theorem}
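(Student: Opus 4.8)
The plan is to route every equivalence through two facts about the summands $\boldsymbol{Q}_i = \boldsymbol{\mathcal{B}}_i^{\top}(\boldsymbol{\mathcal{B}}_i\boldsymbol{\mathcal{B}}_i^{\top} + \mu_i\boldsymbol{Z}_i^{\top}\boldsymbol{Z}_i)^{-1}\boldsymbol{\mathcal{B}}_i$ that I would establish first. The first is the identity $\boldsymbol{\mathcal{P}}_{\mathrm{\Rmnum{2}}} + \boldsymbol{\mathcal{Q}}_{\mathrm{\Rmnum{2}}} = n\boldsymbol{I}$ already noted in the text, which makes $(a)\Leftrightarrow(b)$ immediate since $\boldsymbol{\mathcal{P}}_{\mathrm{\Rmnum{2}}}\boldsymbol{1} = n\boldsymbol{1} - \boldsymbol{\mathcal{Q}}_{\mathrm{\Rmnum{2}}}\boldsymbol{1}$, and makes $(c)\Rightarrow(b)$ a one-line summation. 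The second, and the crux of the whole argument, is that each $\boldsymbol{I} - \boldsymbol{Q}_i$ is positive semidefinite. I would prove this by recognizing its quadratic form as the optimal value of a regularized least-squares problem: substituting the minimizer $\boldsymbol{w}^{\star} = (\boldsymbol{\mathcal{B}}_i\boldsymbol{\mathcal{B}}_i^{\top}+\mu_i\boldsymbol{Z}_i^{\top}\boldsymbol{Z}_i)^{-1}\boldsymbol{\mathcal{B}}_i\boldsymbol{s}$ gives
\[
\min_{\boldsymbol{w}}\ \lVert \boldsymbol{\mathcal{B}}_i^{\top}\boldsymbol{w} - \boldsymbol{s}\rVert_2^2 + \mu_i\lVert \boldsymbol{Z}_i\boldsymbol{w}\rVert_2^2
=
\boldsymbol{s}^{\top}(\boldsymbol{I}-\boldsymbol{Q}_i)\boldsymbol{s}.
\]
Being a minimum of a sum of squares, the right side is nonnegative for every $\boldsymbol{s}$, hence $\boldsymbol{I}-\boldsymbol{Q}_i \succeq \boldsymbol{O}$.

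With this in hand, the disaggregation $(a)\Rightarrow(c)$ — which I expect to be the main obstacle, since it must turn a single aggregate equation into $n$ separate ones — becomes routine. From $(a)$ I obtain $\boldsymbol{1}^{\top}\boldsymbol{\mathcal{P}}_{\mathrm{\Rmnum{2}}}\boldsymbol{1} = \sum_{i=1}^n \boldsymbol{1}^{\top}(\boldsymbol{I}-\boldsymbol{Q}_i)\boldsymbol{1} = 0$; since each summand is nonnegative by the semidefinite property, each must vanish individually, so $\boldsymbol{1}^{\top}(\boldsymbol{I}-\boldsymbol{Q}_i)\boldsymbol{1} = 0$. A positive semidefinite matrix whose quadratic form vanishes on $\boldsymbol{1}$ must annihilate $\boldsymbol{1}$ (its square root kills $\boldsymbol{1}$, hence so does the matrix), giving $(\boldsymbol{I}-\boldsymbol{Q}_i)\boldsymbol{1} = \boldsymbol{0}$, i.e. $(c)$. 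Together with $(c)\Rightarrow(b)\Leftrightarrow(a)$, this closes the loop among $(a)$, $(b)$, and $(c)$.

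For $(c)\Leftrightarrow(e)$ I would reuse the same variational identity with $\boldsymbol{s}=\boldsymbol{1}$: statement $(c)$ is equivalent to $\boldsymbol{1}^{\top}(\boldsymbol{I}-\boldsymbol{Q}_i)\boldsymbol{1}=0$, i.e. to the minimal regularized residual being exactly zero. Since the objective is a sum of two squared norms, its minimum is zero precisely when some $\boldsymbol{x}$ drives both terms to zero at once — that is, $\boldsymbol{\mathcal{B}}_i(\boldsymbol{D}_i)^{\top}\boldsymbol{x}=\boldsymbol{1}$ and, when $\mu_i>0$, also $\boldsymbol{Z}_i\boldsymbol{x}=\boldsymbol{0}$ — which is exactly $(e)$.

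Finally, for $(a)\Leftrightarrow(d)$ I would substitute the translated shape $\boldsymbol{S}\mapsto\boldsymbol{S}+\boldsymbol{t}\boldsymbol{1}^{\top}$ into the cost and expand using the symmetry of $\boldsymbol{\mathcal{P}}_{\mathrm{\Rmnum{2}}}$, so that the change equals
\[
2\,\boldsymbol{t}^{\top}\boldsymbol{S}\boldsymbol{\mathcal{P}}_{\mathrm{\Rmnum{2}}}\boldsymbol{1} + \lVert\boldsymbol{t}\rVert_2^2\,\boldsymbol{1}^{\top}\boldsymbol{\mathcal{P}}_{\mathrm{\Rmnum{2}}}\boldsymbol{1}.
\]
If $(a)$ holds this is identically zero, giving $(d)$; conversely, requiring it to vanish for all $\boldsymbol{t}$ and all $\boldsymbol{S}$ forces the linear term $\boldsymbol{S}\boldsymbol{\mathcal{P}}_{\mathrm{\Rmnum{2}}}\boldsymbol{1}$ to be zero for every $\boldsymbol{S}$, hence $\boldsymbol{\mathcal{P}}_{\mathrm{\Rmnum{2}}}\boldsymbol{1}=\boldsymbol{0}$, which is $(a)$. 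This completes all the equivalences, with the positive semidefiniteness of $\boldsymbol{I}-\boldsymbol{Q}_i$ doing the essential work.
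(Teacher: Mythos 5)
Your proof is correct, and on the crux of the theorem --- the equivalence $(c)\iff(e)$ --- it takes a genuinely different route from the paper's. The paper first establishes $\boldsymbol{I}-\boldsymbol{Q}_i\succeq\boldsymbol{O}$ through a chain of Loewner inequalities, and then proves $(c)\iff(e)$ by applying the Woodbury identity to split $\boldsymbol{I}-\boldsymbol{Q}_i$ into the projection complement $\boldsymbol{I}-\boldsymbol{\bar{Q}}_i$ plus a regularization term $\mu_i\boldsymbol{\Delta}_i$, both positive semidefinite, disaggregating once more so that the first summand annihilating $\boldsymbol{1}$ yields $\boldsymbol{1}\in\mathbf{Range}\left(\boldsymbol{\mathcal{B}}_i^{\top}\right)$ and the second yields $\boldsymbol{Z}_i\boldsymbol{x}=\boldsymbol{0}$. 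Your variational identity $\boldsymbol{s}^{\top}\left(\boldsymbol{I}-\boldsymbol{Q}_i\right)\boldsymbol{s}=\min_{\boldsymbol{w}}\lVert\boldsymbol{\mathcal{B}}_i^{\top}\boldsymbol{w}-\boldsymbol{s}\rVert_2^2+\mu_i\lVert\boldsymbol{Z}_i\boldsymbol{w}\rVert_2^2$ replaces both steps at once: it gives positive semidefiniteness for free, and evaluated at $\boldsymbol{s}=\boldsymbol{1}$ it reads off $(e)$ directly as the condition that the attained minimum of a sum of two squares vanishes, which also makes transparent why the constraint $\boldsymbol{Z}_i\boldsymbol{x}=\boldsymbol{0}$ only bites when $\mu_i>0$. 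What the paper's Woodbury route buys in exchange is an explicit closed form for the witness, $\boldsymbol{x}=(\boldsymbol{\mathcal{B}}_i^{\top})^{\dagger}\boldsymbol{1}$, which it reuses later; your argument only asserts existence, though the minimizer formula supplies the same vector if needed. The remaining equivalences --- $(a)\iff(b)$ via $\boldsymbol{\mathcal{P}}_{\mathrm{\Rmnum{2}}}+\boldsymbol{\mathcal{Q}}_{\mathrm{\Rmnum{2}}}=n\boldsymbol{I}$, the disaggregation $(a)\Rightarrow(c)$ via vanishing nonnegative summands, and $(a)\iff(d)$ by expanding the translated cost --- follow the paper's line (you re-derive its two positive-semidefinite lemmas inline). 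The one point worth making explicit in a final write-up is that the minimum in your identity is attained, since the objective is strictly convex and coercive under the standing assumption that $\boldsymbol{\mathcal{B}}_i\boldsymbol{\mathcal{B}}_i^{\top}+\mu_i\boldsymbol{Z}_i^{\top}\boldsymbol{Z}_i$ is invertible; this is what licenses the step from ``the minimum equals zero'' to ``some $\boldsymbol{x}$ kills both terms simultaneously''.
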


Theorem \ref{label: theorem: three equivalent conditions, p, q , cost} relates various aspects of the LBW based GPA to the existence of an eigenvector $\boldsymbol{1}$ in $\boldsymbol{\mathcal{P}}_{\mathrm{\Rmnum{2}}}$ and $\boldsymbol{\mathcal{Q}}_{\mathrm{\Rmnum{2}}}$.
In particular,
case $(d)$ states that the reduced problem is invariant to translations,
and case $(e)$ stipulates the rules that the LBW as a mapping must follow.
While case $(e)$ in Theorem \ref{label: theorem: three equivalent conditions, p, q , cost}
is related to the datum shape $\boldsymbol{D}_i$,
we show in what follows that it can be satisfied if the LBW satisfies certain property, making the statement independent of the input datum shapes.

Now we show a sufficient condition to case $(e)$ in
Theorem \ref{label: theorem: three equivalent conditions, p, q , cost} which is
that the LBW must contain \textit{free-translations}.
We drop the subscript $i$ and use the notation $\boldsymbol{\mathcal{B}}\left(\cdot\right)$ to indicate that this is a property of the LBW thus is independent of the warp input.

\begin{definition}
	\label{definition: LBW contains free-translations}
	The LBW, given by $\{\boldsymbol{W}^{\top} \boldsymbol{\mathcal{B}}\left(\cdot\right),\, \boldsymbol{Z} \boldsymbol{W},\, \mu \}$ is said to contain \textit{free-translations} if:
	there exists $\boldsymbol{x}$ such that $\boldsymbol{\mathcal{B}}\left(\cdot\right)^{\top} \boldsymbol{x}
	= 
	\boldsymbol{1}$;
	moreover if $\mu > 0$,
	$\boldsymbol{x}$ must satisfy
	$\boldsymbol{Z} \boldsymbol{x} = \boldsymbol{0}$.
\end{definition}

We now explain the idea behind Definition \ref{definition: LBW contains free-translations}.
The LBW contains free-translations,
if we can recover the translation vector $\boldsymbol{t}$ explicitly by an invertible matrix $\boldsymbol{G}$ such that:
\begin{align*}
\boldsymbol{W}^{\top} \boldsymbol{\mathcal{B}}\left(\cdot\right)
= &
\boldsymbol{W}^{\top}
\boldsymbol{G}^{-1}
\left(\boldsymbol{\mathcal{B}}\left(\cdot\right)^{\top} \boldsymbol{G}^{\top}\right)^{\top}
\\[5pt] = &
\underbracket{
\left[
\, \cdots \, \boldsymbol{t} \, \cdots \,
\right]}_{\boldsymbol{W}^{\top}\boldsymbol{G}^{-1}}\,
{\underbracket{
\left[
\, \cdots \, \boldsymbol{1} \, \cdots \,
\right]
}_{\boldsymbol{\mathcal{B}}\left(\cdot\right)^{\top} \boldsymbol{G}^{\top}} }^{\top},
\end{align*}
where $\boldsymbol{W}^{\top}
\boldsymbol{G}^{-1}$ contains a column vector $\boldsymbol{t}$
and $\boldsymbol{\mathcal{B}}\left(\cdot\right)^{\top} \boldsymbol{G}^{\top}$ contains a column vector $\boldsymbol{1}$.
Therefore
$\boldsymbol{1} \in
\mathbf{Range}\left( \boldsymbol{\mathcal{B}}\left(\cdot\right)^{\top} \boldsymbol{G}^{\top} \right)
=
\mathbf{Range}\left( \boldsymbol{\mathcal{B}}\left(\cdot\right)^{\top}
\right)
$.
Using $\boldsymbol{G}$, the term $\boldsymbol{Z} \boldsymbol{W}$ can be decomposed as:
\begin{align*}
\boldsymbol{Z} \boldsymbol{W}
& =
\boldsymbol{Z} \boldsymbol{G}^{\top}
\left(\boldsymbol{W}^{\top}  \boldsymbol{G}^{-1}\right)^{\top}
\\[5pt] & =
\boldsymbol{Z} \boldsymbol{G}^{\top}\,
{\underbracket{\left[\, \cdots \, \boldsymbol{t} \, \cdots\, \right]}_{\boldsymbol{W}^{\top}\boldsymbol{G}^{-1}}}^{\top}.
\end{align*}
Without loss of generality, we assume that in ${\boldsymbol{\mathcal{B}}\left(\cdot\right)^{\top} \boldsymbol{G}^{\top}}$ the $k$-th column vector is $\boldsymbol{1}$,
identified by the standard basis vector $\boldsymbol{e}_k$, such that
$\boldsymbol{\mathcal{B}}\left(\cdot\right)^{\top} \boldsymbol{G}^{\top}
\boldsymbol{e}_k = \boldsymbol{1}$.
Then it can be easily verified that the translation $\boldsymbol{t}$ is constraint-free in $\boldsymbol{Z} \boldsymbol{W}$
if and only if
the $k$-th column of $\boldsymbol{Z} \boldsymbol{G}^{\top}$ is $\boldsymbol{0}$,
\textit{i.e.},
$\boldsymbol{Z} \boldsymbol{G}^{\top} \boldsymbol{e}_k = \boldsymbol{0}$.
Since $\boldsymbol{\mathcal{B}}\left(\cdot\right)^{\top}$ has full column rank, so $\boldsymbol{G}^{\top}\boldsymbol{e}_k$ is the unique $\boldsymbol{x}$ such that $\boldsymbol{\mathcal{B}}\left(\cdot\right)^{\top} \boldsymbol{x}
= 
\boldsymbol{1}$,
$\boldsymbol{Z} \boldsymbol{x} = \boldsymbol{0}$.

\begin{proposition}
	\label{theorem: P1 = 0 if and only if the LBW contains free-translations}	
	The statements in Theorem \ref{label: theorem: three equivalent conditions, p, q , cost} are satisfied if the LBW contains free-translations.
\end{proposition}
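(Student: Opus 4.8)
The plan is to leverage the equivalence already established in Theorem \ref{label: theorem: three equivalent conditions, p, q , cost}: since statements $(a)$ through $(e)$ are equivalent, it suffices to show that the free-translation property of Definition \ref{definition: LBW contains free-translations} implies any single one of them. Item $(e)$ is the natural target, because it is phrased in almost exactly the same language as the definition. Once $(e)$ is in hand, the equivalence immediately delivers $(a)$, namely $\boldsymbol{\mathcal{P}}_{\mathrm{\Rmnum{2}}} \boldsymbol{1} = \boldsymbol{0}$, together with the remaining items.

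First I would stress that the free-translation property is a statement about the warp model alone: it asserts the existence of a vector $\boldsymbol{x}$ for which $\boldsymbol{\mathcal{B}}(\cdot)^{\top} \boldsymbol{x} = \boldsymbol{1}$ holds irrespective of the point-cloud fed to the feature map, and, when $\mu > 0$, for which additionally $\boldsymbol{Z} \boldsymbol{x} = \boldsymbol{0}$. The discussion preceding the proposition already exhibits such an $\boldsymbol{x}$ explicitly as $\boldsymbol{x} = \boldsymbol{G}^{\top} \boldsymbol{e}_k$, where $\boldsymbol{G}$ is the invertible matrix that isolates the translation column and $\boldsymbol{e}_k$ selects the all-ones column of $\boldsymbol{\mathcal{B}}(\cdot)^{\top} \boldsymbol{G}^{\top}$; crucially, this $\boldsymbol{x}$ is assembled from $\boldsymbol{G}$ and the basis structure, not from any particular input. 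Next I would instantiate this model-level identity at each datum shape: since $\boldsymbol{\mathcal{B}}(\cdot)^{\top} \boldsymbol{x} = \boldsymbol{1}$ means $\boldsymbol{\beta}(\boldsymbol{p})^{\top} \boldsymbol{x} = 1$ for every point $\boldsymbol{p}$, evaluating at the columns of $\boldsymbol{D}_i$ yields $\boldsymbol{\mathcal{B}}_i(\boldsymbol{D}_i)^{\top} \boldsymbol{x} = \boldsymbol{1}$ with the very same $\boldsymbol{x}$, while the side-condition $\boldsymbol{Z}_i \boldsymbol{x} = \boldsymbol{0}$ (when $\mu_i > 0$) is inherited verbatim from $\boldsymbol{Z} \boldsymbol{x} = \boldsymbol{0}$. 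This is precisely item $(e)$ for each $i$.

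The argument is essentially a transfer from a model-level property to an input-specific one, so I do not expect a genuine obstacle. The one point requiring care is the bookkeeping between the generic feature map $\boldsymbol{\mathcal{B}}(\cdot)$ of the definition and the shape-dependent maps $\boldsymbol{\mathcal{B}}_i(\boldsymbol{D}_i)$ and regularizers $\boldsymbol{Z}_i$: each shape's LBW supplies its own free-translation vector $\boldsymbol{x}$, which is all that item $(e)$ requires, and the independence of $\boldsymbol{x}$ from the point-cloud is exactly what the $\boldsymbol{G}$-construction guarantees. Once that independence is noted, the proposition follows at once by invoking the equivalence in Theorem \ref{label: theorem: three equivalent conditions, p, q , cost}.
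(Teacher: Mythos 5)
Your proposal is correct and follows essentially the same route as the paper: the paper treats the proposition as an immediate consequence of the discussion preceding Definition \ref{definition: LBW contains free-translations}, which exhibits the vector $\boldsymbol{x}=\boldsymbol{G}^{\top}\boldsymbol{e}_k$ satisfying $\boldsymbol{\mathcal{B}}(\cdot)^{\top}\boldsymbol{x}=\boldsymbol{1}$ and $\boldsymbol{Z}\boldsymbol{x}=\boldsymbol{0}$ independently of the input, so that statement $(e)$ holds for every datum shape and the equivalence of Theorem \ref{label: theorem: three equivalent conditions, p, q , cost} gives the rest. Your added care about transferring the model-level identity to each $\boldsymbol{\mathcal{B}}_i(\boldsymbol{D}_i)$ and $\boldsymbol{Z}_i$ is exactly the right bookkeeping and matches the paper's intent.
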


We state that the affine transformation and the TPS warp satisfy Definition \ref{definition: LBW contains free-translations}.
The proof for the affine case is straightforward.
For the TPS warp, the algebraic proof is given in Appendix \ref{appendix: theorem equivalent conditions are satisified for the TPS warp}.
Intuitively, this can be explained because
for the TPS warp, the bending energy term only affects the nonlinear part of the warp, which means that the linear part of the TPS warp is constraint-free and so is its translational component.

\begin{proposition}
	\label{theorem: affine, and TPS satisfy the desirabed property to be solved globally}
	The affine transformation and the TPS warp satisfy the statement of Theorem \ref{label: theorem: three equivalent conditions, p, q , cost}.
\end{proposition}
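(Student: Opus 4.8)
The plan is to route everything through Proposition~\ref{theorem: P1 = 0 if and only if the LBW contains free-translations}: since that proposition already guarantees the equivalent statements of Theorem~\ref{label: theorem: three equivalent conditions, p, q , cost} whenever the LBW contains free-translations, it suffices to verify Definition~\ref{definition: LBW contains free-translations} separately for the affine model and for the TPS warp. Concretely, in each case I must exhibit a vector $\boldsymbol{x}$ with $\boldsymbol{\mathcal{B}}(\cdot)^{\top} \boldsymbol{x} = \boldsymbol{1}$, and, whenever the regularization weight $\mu > 0$, additionally check $\boldsymbol{Z} \boldsymbol{x} = \boldsymbol{0}$. This reduces the whole statement to two constructive checks.

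For the affine model the argument is immediate. Here the feature map is $\boldsymbol{\mathcal{B}}(\boldsymbol{D}) = \begin{bmatrix} \boldsymbol{D} \\ \boldsymbol{1}^{\top} \end{bmatrix} \in \mathbb{R}^{(d+1) \times m}$, so $\boldsymbol{\mathcal{B}}(\boldsymbol{D})^{\top}$ has $\boldsymbol{1}$ as its last column. Choosing $\boldsymbol{x} = \boldsymbol{e}_{d+1}$, the last standard basis vector of $\mathbb{R}^{d+1}$, gives $\boldsymbol{\mathcal{B}}(\boldsymbol{D})^{\top} \boldsymbol{x} = \boldsymbol{1}$. Because the affine transformation is used without regularization ($\mu = 0$), there is no further condition to verify, and so the affine model contains free-translations.

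For the TPS warp I would argue as follows. The TPS feature vector $\boldsymbol{\beta}(\boldsymbol{p})$ contains, among its entries, the affine block $\begin{bmatrix} \boldsymbol{p}^{\top} & 1 \end{bmatrix}^{\top}$; in particular one coordinate of the feature map is the constant function equal to $1$. Selecting the $\boldsymbol{x}$ that picks out that constant coordinate yields $\boldsymbol{\mathcal{B}}(\cdot)^{\top} \boldsymbol{x} = \boldsymbol{1}$, independently of the input point-cloud. The substantive step is then the regularization condition $\boldsymbol{Z} \boldsymbol{x} = \boldsymbol{0}$. Since the bending energy vanishes exactly on affine maps \citep{bookstein1989principal}, the square-root bending-energy matrix $\boldsymbol{Z}$ annihilates the entire affine part of the parameter space, and in particular the constant direction selected by $\boldsymbol{x}$; hence $\boldsymbol{Z} \boldsymbol{x} = \boldsymbol{0}$. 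The fully explicit verification, which requires the block structure of $\boldsymbol{\beta}(\cdot)$ and the closed form of $\boldsymbol{Z}$ arising from the bending-energy integral, is deferred to Appendix~\ref{appendix: theorem equivalent conditions are satisified for the TPS warp}.

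I expect the TPS case to be the main obstacle. The affine case is a one-line choice of a standard basis vector, but for TPS one must turn the informal statement ``the bending energy only penalizes the nonlinear part of the warp'' into a precise claim that the constant (and more generally the affine) directions lie in $\mathbf{Null}(\boldsymbol{Z})$. Making this rigorous is exactly where the concrete TPS construction from the appendices must be invoked, so the difficulty is bookkeeping over the explicit TPS basis rather than any conceptual subtlety.
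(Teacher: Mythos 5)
Your overall route is exactly the paper's: reduce to Definition~\ref{definition: LBW contains free-translations} via Proposition~\ref{theorem: P1 = 0 if and only if the LBW contains free-translations}, dispatch the affine case by taking $\boldsymbol{x} = \boldsymbol{e}_{d+1}$ (which is what the paper means by ``straightforward''), and defer the TPS verification to Appendix~\ref{appendix: theorem equivalent conditions are satisified for the TPS warp}. The affine half is correct as written.

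There is, however, a genuine flaw in your sketch of the TPS case. In the standard form actually used by the paper, the feature vector is $\boldsymbol{\beta}(\boldsymbol{p}) = \boldsymbol{\mathcal{E}}_{\lambda}^{\top}\bigl[\boldsymbol{\phi}_{\boldsymbol{p}}^{\top},\, \boldsymbol{\tilde{p}}^{\top}\bigr]^{\top} \in \mathbb{R}^{l}$: the reparameterization by $\boldsymbol{\mathcal{E}}_{\lambda}$ (introduced precisely to absorb the side constraints $\boldsymbol{\tilde{C}}\boldsymbol{w}=\boldsymbol{0}$) mixes the radial and affine parts, so $\boldsymbol{\beta}(\boldsymbol{p})$ does \emph{not} contain the block $[\boldsymbol{p}^{\top},1]^{\top}$ among its entries, and there is no coordinate that is identically $1$. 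Consequently the $\boldsymbol{x}$ you propose --- a standard basis vector ``picking out the constant coordinate'' --- does not exist, and your first condition $\boldsymbol{\mathcal{B}}(\cdot)^{\top}\boldsymbol{x}=\boldsymbol{1}$ is not established by the argument you give. The repair is the construction in the paper's appendix: since $\boldsymbol{\mathcal{B}}(\boldsymbol{D})^{\top} = [\boldsymbol{M}_{\boldsymbol{D}}^{\top},\,\boldsymbol{1}]\,\boldsymbol{\mathcal{E}}_{\lambda}$, it suffices to find $\boldsymbol{x}$ with $\boldsymbol{\mathcal{E}}_{\lambda}\boldsymbol{x} = [\boldsymbol{0}^{\top},1]^{\top}$, and the identity $\boldsymbol{\mathcal{E}}_{\lambda}\boldsymbol{\tilde{C}}^{\top} = [\boldsymbol{O}^{\top},\boldsymbol{I}]^{\top}$ shows that $\boldsymbol{x} = \boldsymbol{\tilde{C}}^{\top}[\boldsymbol{0}^{\top},1]^{\top}$ (the all-ones vector in $\mathbb{R}^{l}$) works, independently of the datum shape. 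Your second step is right in spirit --- the bending energy does annihilate the affine directions of the parameter space --- but it must be attached to this $\boldsymbol{x}$: one uses $\boldsymbol{Z}^{\top}\boldsymbol{Z} = \boldsymbol{\bar{\mathcal{E}}}_{\lambda}$ together with $\boldsymbol{\bar{\mathcal{E}}}_{\lambda}\boldsymbol{\tilde{C}}^{\top} = \boldsymbol{O}$ (a direct consequence of the closed form of $\boldsymbol{\bar{\mathcal{E}}}_{\lambda}$) to conclude $\boldsymbol{Z}\boldsymbol{x}=\boldsymbol{0}$ via Lemma~\ref{lemma: a useful lemma regarding positive semidefinite matrix}. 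So the difficulty is not mere bookkeeping over an explicit basis containing a constant entry; it is identifying the correct preimage of $\boldsymbol{1}$ under $\boldsymbol{\mathcal{B}}(\cdot)^{\top}$ in the reparameterized model.
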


The above results are summarized as follow:
\begin{summary}
	Formulation (\ref{eq: Procrustes with warp models - S 1 = 0, SS =L}) can be solved globally
	if the the LBW contains free-translations.
	The optimal reference shape $\boldsymbol{S}^{\star}$ is to scale by $\sqrt{\boldsymbol{\Lambda}}$ the $d$ top eigenvectors of $\boldsymbol{\mathcal{Q}}_{\mathrm{\Rmnum{2}}}$ (or equivalently the $d$ bottom eigenvectors of $\boldsymbol{\mathcal{P}}_{\mathrm{\Rmnum{2}}}$) excluding the eigenvector $\boldsymbol{1}$.
	The optimal transformation parameters are
	$
	\boldsymbol{W}_i
	=
	\left(\boldsymbol{\mathcal{B}}_i
	\boldsymbol{\mathcal{B}}_i^{\top}
	+
	\mu_i \boldsymbol{Z}_i^{\top}  \boldsymbol{Z}_i\right)^{-1}
	\boldsymbol{\mathcal{B}}_i
	{\boldsymbol{S}^{\star}}^{\top}
	$
	$\left(i \in \left[ 1:n \right]\right)$.
\end{summary}

\subsection{Reformulation using the Soft Constraint}
\label{Reformulation using soft constraint}

In Theorem \ref{theorem: general result: PCA with constraint XT u = 0},
we have proved the equivalence of formulations (\ref{eq: general results: argmax, PCA in S, with S1=0}) and (\ref{eq: property: proof - final result in X}).
Therefore, formulation (\ref{eq: standard form - maximization - deformable GPA - in S only with S1 = 0}) can be equivalently written as:
\begin{equation}
\label{eq: problem: maximization, Q - n 11, PCA form}
\begin{aligned}
&
\argmax_{\boldsymbol{S}}\  
\mathbf{tr}\left(\boldsymbol{S}
\left(
\boldsymbol{\mathcal{Q}}_{\mathrm{\Rmnum{2}}}
- n \boldsymbol{\bar{1}}\boldsymbol{\bar{1}}^{\top}
\right)
\boldsymbol{S}^{\top}\right)
\\
&
\mathrm{s.t.} \quad
\boldsymbol{S} \boldsymbol{S}^{\top} = \boldsymbol{\Lambda},
\end{aligned}
\end{equation}
where we have used
$\boldsymbol{\mathcal{Q}}_{\mathrm{\Rmnum{2}}} \boldsymbol{\bar{1}} = n \boldsymbol{\bar{1}}$,
with $\boldsymbol{\bar{1}}$ being normalized.

In problem (\ref{eq: problem: maximization, Q - n 11, PCA form}),
we can replace $n$ with any $n' \ge n$.
This is because $\left(
\boldsymbol{\mathcal{Q}}_{\mathrm{\Rmnum{2}}}
-
n'
\boldsymbol{\bar{1}}
\boldsymbol{\bar{1}}^{\top}
\right)
\boldsymbol{\bar{1}}
=
(n - n') \boldsymbol{\bar{1}}
$,
where the eigenvalue corresponding to the eigenvector $\boldsymbol{\bar{1}}$ becomes $(n - n') \le 0$,
while the rest of the eigenvalue of $\left(
\boldsymbol{\mathcal{Q}}_{\mathrm{\Rmnum{2}}}
- \nu \boldsymbol{\bar{1}}\boldsymbol{\bar{1}}^{\top}
\right)$ is nonnegative.
Thus the eigenvector $\boldsymbol{\bar{1}}$ is excluded in the solution for any $n' \ge n$.
The rest of the eigenvectors remain unchanged.
Expanding the cost of problem (\ref{eq: problem: maximization, Q - n 11, PCA form}),
we obtain:
\begin{equation}
\begin{aligned}
&
\argmax_{\boldsymbol{S}}\  
\mathbf{tr}\left(\boldsymbol{S}
\left(
\boldsymbol{\mathcal{Q}}_{\mathrm{\Rmnum{2}}}
\right)
\boldsymbol{S}^{\top}\right)
- n'
\lVert
\boldsymbol{S} \boldsymbol{\bar{1}}
\rVert_2^2
\\
&
\mathrm{s.t.} \quad
\boldsymbol{S} \boldsymbol{S}^{\top} = \boldsymbol{\Lambda},
\end{aligned}
\end{equation}
where $n' \ge n$.
Finally we substitute $\boldsymbol{\mathcal{P}}_{\mathrm{\Rmnum{2}}}
=
n \boldsymbol{I}
-
\boldsymbol{\mathcal{Q}}_{\mathrm{\Rmnum{2}}}
$
and
$\boldsymbol{\bar{1}} = \frac{1}{m}\boldsymbol{1}$,
then obtain a minimization problem:
\begin{equation}
\label{eq: deformable GPA in S only, minimization, with soft constraint}
\begin{aligned}
&
\argmin_{\boldsymbol{S}}\  
\mathbf{tr}\left(\boldsymbol{S}
\left(
\boldsymbol{\mathcal{P}}_{\mathrm{\Rmnum{2}}}
\right)
\boldsymbol{S}^{\top}\right)
+ \nu
\lVert
\boldsymbol{S} \boldsymbol{1}
\rVert_2^2
\\
&
\mathrm{s.t.} \quad
\boldsymbol{S} \boldsymbol{S}^{\top} = \boldsymbol{\Lambda},
\end{aligned}
\end{equation}
where $\nu \ge n/m$.
Note that problem (\ref{eq: deformable GPA in S only, minimization, with soft constraint})
is equivalent to problem (\ref{eq: deformable-GPA minization, in S only, S1 = 0}),
while the hard constraint $\boldsymbol{S}  \boldsymbol{1} = \boldsymbol{0}$ has been reformulated as a soft constraint in the form of a penalty term.

At last, formulation (\ref{eq: Procrustes with warp models - S 1 = 0, SS =L}) is equivalent to the following one with the soft constraint:
\begin{equation}
\label{eq: Procrustes with warp models - soft constraint}
\begin{aligned}
\argmin_{\{\boldsymbol{W}_i\},\, \boldsymbol{S}}\quad
& \sum_{i=1}^n\,
\lVert
\boldsymbol{W}_i^{\top} \boldsymbol{\mathcal{B}}_i(\boldsymbol{D}_i) 
- 
\boldsymbol{S}
\rVert_{F}^2	
\\ & +
\sum_{i=1}^n\,
\mu_i
\lVert
\boldsymbol{Z}_i \boldsymbol{W}_i
\rVert_F^2
+
\nu
\lVert
\boldsymbol{S} \boldsymbol{1}
\rVert_2^2
\\[5pt]
\mathrm{s.t.}
\quad
& 	\boldsymbol{S} \boldsymbol{S}^{\top} = \boldsymbol{\Lambda}.
\end{aligned}
\end{equation}
The equivalence can be easily shown
by noting that given $\boldsymbol{S}$, the term $\nu
\lVert
\boldsymbol{S} \boldsymbol{1}
\rVert_2^2$ becomes a constant,
thus the relation between the estimates of $\boldsymbol{W}_i$ and $\boldsymbol{S}$
remains the same.
Then with an analogous derivation to Section \ref{subsection: Generalized Procrustes Analysis with Linear Basis Warps},
after eliminating $\boldsymbol{W}_i$,
problem (\ref{eq: Procrustes with warp models - soft constraint}) is reduced to
problem (\ref{eq: deformable GPA in S only, minimization, with soft constraint}).

It is worth mentioning that
formulation (\ref{eq: Procrustes with warp models - soft constraint}) is equivalent to formulation (\ref{eq: Procrustes with warp models - S 1 = 0, SS =L})
for any LBW that satisfies Theorem \ref{label: theorem: three equivalent conditions, p, q , cost} (for example, the affine transformation and the TPS warp).
However formulation (\ref{eq: Procrustes with warp models - soft constraint})
is more tractable in terms of solution methods, since there is no need to take care of the hard constraint $\boldsymbol{S}  \boldsymbol{1} = \boldsymbol{0}$,
which can be difficult if Theorem \ref{label: theorem: three equivalent conditions, p, q , cost} is not satisfied (\textit{e.g.,} the translation part is constrained).
In any case, formulation (\ref{eq: Procrustes with warp models - soft constraint}) can be reduced to a Brockett cost function, which can be easily solved globally.

\section{Generalized Procrustes Analysis with Partial Shapes}
\label{section: Extensions to Partial Datum Shapes}

Now we have equipped with enough insights to discuss GPA with partial shapes.
We will follow the soft regularized method discussed in Section
\ref{Reformulation using soft constraint},
which generalizes over the cases where Theorem \ref{label: theorem: three equivalent conditions, p, q , cost} is not satisfied.
As the affine transformation is a special case of the LBW, we consider GPA with the LBW only.

\subsection{Estimating the Reference Covariance Prior}
\label{section: partial shapes Eliminating Translations and Estimating Lambda}

The reference shape $\boldsymbol{S}$ is a full shape.
In order to estimate the reference covariance prior $\boldsymbol{\Lambda}$,
we recover a full shape representation for each $\boldsymbol{D}_i$
$\left(i \in \left[ 1:n \right]\right)$.
Such a process is meant to be cheap,
thus we use the classical pairwise similarity GPA to compute the similarity transformation between datum shapes, and then complete the missing points by averaging their occurrence in other shapes.

For each partial shape $\boldsymbol{D}_i$,
we compute the pairwise similarity transformation between $\boldsymbol{D}_i$ and $\boldsymbol{D}_k$ $\left(k \in \left[ 1: n\right]\right)$ by:
\begin{multline*}
\hat{s}_{ik},\, \boldsymbol{\hat{R}}_{ik},\, \boldsymbol{\hat{t}}_{ik}
= \\
\argmin_{s_{ik},\,\boldsymbol{R}_{ik},\,\boldsymbol{t}_{ik}}\  \lVert
\left(
s_{ik} \boldsymbol{R}_{ik} \boldsymbol{D}_k
+ \boldsymbol{t}_{ik} \boldsymbol{1}^{\top} - 
\boldsymbol{D}_i
\right)
\boldsymbol{\Gamma}_i \boldsymbol{\Gamma}_k
\rVert_F^2
,
\end{multline*}
where $\boldsymbol{R}_{ik}$ is an orthonormal matrix, $s_{ik}$ a non-negative scalar, and $\boldsymbol{t}_{ik}$ a $d$-dimensional translation vector.
This classical Procrustes problem can be solved in closed-form
(see Appendix \ref{appendix: Procrustes problem on sod}).

Subsequently we complete the missing points in $\boldsymbol{D}_i$
using their occurrence in other shapes by:
\begin{equation}
\label{eq: full shape completion - mathematical modle}
\boldsymbol{\mathfrak{D}}_i
=\boldsymbol{D}_i
\boldsymbol{\Gamma}_i
+
\boldsymbol{\hat{D}}_i
\boldsymbol{\Gamma}_{+}^{-1}
\left(\boldsymbol{I} - \boldsymbol{\Gamma}_i\right),
\end{equation}
to obtain a full shape $\boldsymbol{\mathfrak{D}}_i$.
Here
$\boldsymbol{\hat{D}}_i = 
\sum_{k=1}^{n} \left( \hat{s}_{ik} \boldsymbol{\hat{R}}_{ik} \boldsymbol{D}_k
+ \boldsymbol{\hat{t}}_{ik} \boldsymbol{1}^{\top} \right) \boldsymbol{\Gamma}_k$
and
$\boldsymbol{\Gamma}_{+} = \sum_{k=1}^{n} 
\boldsymbol{\Gamma}_k$.
Then we estimate the reference covariance prior $\boldsymbol{ \Lambda }$ by the result in Section \ref{subsubsection: Reference Covariance Prior} based on the zero-centered full shapes $\boldsymbol{\bar{\mathfrak{D}}}_i
=
\boldsymbol{\mathfrak{D}}_i
-
\frac{1}{m} \boldsymbol{\mathfrak{D}}_i \boldsymbol{1} \boldsymbol{1}^{\top}
$,
which is detailed in Algorithm \ref{algorithm: pseudo codo of calculating reference covariance prior}.

\begin{algorithm*}[t]
	
	\DontPrintSemicolon
	
	\SetKwFunction{getdLeftMostSingularValueFunc}{$d$-LeftmostSingularValues}
	
	\SetKwFunction{getLeftMostSingularVectorFunc}{LeftmostSingularVector}
	
	\SetKwFunction{EstimateReferenceCovariancePriorFunc}{EstimateReferenceCovariancePrior}

\SetKwProg{FEstimateReferenceCovariancePrior}{function}{($\boldsymbol{\mathfrak{D}}_1,\boldsymbol{\mathfrak{D}}_2, \dots, \boldsymbol{\mathfrak{D}}_n$)}{end}

\FEstimateReferenceCovariancePrior{\EstimateReferenceCovariancePriorFunc}{
	\For{$i \in \left[ 1:n \right]$}{		
	$\boldsymbol{\bar{\mathfrak{D}}}_i
	=
	\boldsymbol{\mathfrak{D}}_i
	-
	\frac{1}{m} \boldsymbol{\mathfrak{D}}_i \boldsymbol{1} \boldsymbol{1}^{\top}
	$
	\tcc*{Zero-centered shapes}
	
	$\sqrt{\boldsymbol{\lambda}_i} \gets$
	\getdLeftMostSingularValueFunc{$\boldsymbol{\bar{\mathfrak{D}}}_i$}
	\tcc*[f]{Elements in non-ascending order}
	}

	$\boldsymbol{\Pi}
	\gets
	\begin{bmatrix}
	\frac{\sqrt{\boldsymbol{ \lambda }_1}}{\lVert 
		\sqrt{\boldsymbol{ \lambda }_1} \rVert_2}
	&
	\frac{\sqrt{\boldsymbol{ \lambda }_2}}{\lVert 
		\sqrt{\boldsymbol{ \lambda }_2} \rVert_2}
	&
	\cdots
	&
	\frac{\sqrt{\boldsymbol{ \lambda }_n}}{\lVert 
		\sqrt{\boldsymbol{ \lambda }_n} \rVert_2}
	\end{bmatrix}
	$\;

	$\boldsymbol{\theta}^{\star} \gets$
	\getLeftMostSingularVectorFunc{$\boldsymbol{\Pi}$}\;
	
	$s \gets \frac{1}{n} \sum_{i=1}^{n}  \lVert 
	\sqrt{\boldsymbol{ \lambda }_i}
	\rVert_2$
	\tcc*{Use average scale}
	
	$\sqrt{\boldsymbol{\lambda}} \gets s \boldsymbol{\theta}^{\star}$\;
	
	$\sqrt{\boldsymbol{\Lambda}} \gets \mathrm{diag} 
	( \sqrt{\boldsymbol{\lambda}} ) $
	\tcc*{Construct diagonal matrix}
	
	$\boldsymbol{\Lambda} \gets
	\sqrt{\boldsymbol{\Lambda}} \sqrt{\boldsymbol{\Lambda}}$\;
	
	\KwRet{$\boldsymbol{\Lambda}$}
}

	\caption{Estimate Reference Covariance Prior \label{algorithm: pseudo codo of calculating reference covariance prior}}
\end{algorithm*}

\subsection{Closed-Form Solution}
\label{subsection: Closed-Form Solution to Generalized Procrustes Analysis with Partial Shapes}

We extend the soft-regularized formulation (\ref{eq: Procrustes with warp models - soft constraint}) to partial shape GPA as:
\begin{equation}
\label{eq: prtial shape Procrustes with warp models}
\mathrm{\Rmnum{3}:}
\begin{cases}
\argmin\limits_{\{\boldsymbol{W}_i\},\, \boldsymbol{S}} \quad
& \sum_{i=1}^n\,
\Vert
\boldsymbol{W}_i^{\top} \boldsymbol{\mathcal{B}}_i(\boldsymbol{D}_i) \boldsymbol{\Gamma}_{i}
- 
\boldsymbol{S} \boldsymbol{\Gamma}_{i}
\Vert_{F}^2	
\\[5pt] & +
\sum_{i=1}^n\,
\mu_i
\lVert
\boldsymbol{Z}_i \boldsymbol{W}_i
\rVert_F^2
+
\nu
\lVert \boldsymbol{S} \boldsymbol{1} \rVert_2^2
\\[10pt]
\mathrm{s.t.}
\quad
& 	\boldsymbol{S} \boldsymbol{S}^{\top} = \boldsymbol{\Lambda}
.
\end{cases}
\end{equation}
This formulation includes formulation (\ref{eq: Procrustes with warp models - soft constraint}) as a special case,
thus is the ultimate form we will implement.

Formulation (\ref{eq: prtial shape Procrustes with warp models}) can be solved by firstly eliminating the transformation parameters,
and then solving an optimization problem in $\boldsymbol{S}$ by Lemma \ref{theorem: solution to standard form with P matrix in minimization}.
The key steps are sketched as follow.
We define the shorthand
$
\boldsymbol{\mathcal{B}}_i
\defeq
\boldsymbol{\mathcal{B}}_i(\boldsymbol{D}_i)
$.
Given $\boldsymbol{S}$, problem (\ref{eq: prtial shape Procrustes with warp models}) becomes linear least-squares in $\boldsymbol{W}_i$ whose solution is:
\begin{equation}
\label{eq: the optimal w_i in partial DefGPA formulation given S}
\boldsymbol{W}_i
=
\left(
\boldsymbol{\mathcal{B}}_i
\boldsymbol{\Gamma}_i
\boldsymbol{\mathcal{B}}_i^{\top}
+
\mu_i \boldsymbol{Z}_i^{\top}  \boldsymbol{Z}_i
\right)^{-1}
\boldsymbol{\mathcal{B}}_i \boldsymbol{\Gamma}_i
{\boldsymbol{S}}^{\top}
.
\end{equation}
By substituting equation (\ref{eq: the optimal w_i in partial DefGPA formulation given S}) into problem (\ref{eq: prtial shape Procrustes with warp models}),
we obtain:
\begin{equation}
\label{eq: partial shape GPA, in S only, minimization}
\begin{aligned}
&
\argmin_{\boldsymbol{S}}\  
\mathbf{tr}\left(\boldsymbol{S}
\left(
\boldsymbol{\mathcal{P}}_{\mathrm{\Rmnum{3}}}
+
\nu \boldsymbol{1} \boldsymbol{1}^{\top}
\right)
\boldsymbol{S}^{\top}\right)
\\ & 
\mathrm{s.t.} \quad
\boldsymbol{S} \boldsymbol{S}^{\top} = \boldsymbol{\Lambda},
\end{aligned}
\end{equation}
where
$\boldsymbol{\mathcal{P}}_{\mathrm{\Rmnum{3}}}$
is defined as:
\begin{multline}
\label{eq: Matrix P used for all cases}
\boldsymbol{\mathcal{P}}_{\mathrm{\Rmnum{3}}}
= 
\sum_{i=1}^{n} 
\bigg(
\boldsymbol{\Gamma}_i - \\ 
\boldsymbol{\Gamma}_i
\boldsymbol{\mathcal{B}}_i^{\top}
\left(
\boldsymbol{\mathcal{B}}_i
\boldsymbol{\Gamma}_i
\boldsymbol{\mathcal{B}}_i^{\top}
+
\mu_i \boldsymbol{Z}_i^{\top}  \boldsymbol{Z}_i
\right)^{-1}
\boldsymbol{\mathcal{B}}_i \boldsymbol{\Gamma}_i
\bigg).
\end{multline}

Problem (\ref{eq: partial shape GPA, in S only, minimization}) can be solved in closed-form by Lemma \ref{theorem: solution to standard form with P matrix in minimization}.
We summarize the above results as:
\begin{summary}
	Formulation (\ref{eq: prtial shape Procrustes with warp models}) can be solved globally.
	The optimal reference shape $\boldsymbol{S}^{\star}$ is obtained as scaling the $d$ bottom eigenvectors of $\left(
	\boldsymbol{\mathcal{P}}_{\mathrm{\Rmnum{3}}}
	+
	\nu \boldsymbol{1} \boldsymbol{1}^{\top} \right)$ by $\sqrt{\boldsymbol{\Lambda}}$.
	The optimal transformation parameters are given by
	$\boldsymbol{W}_i^{\star}
	=
	\left(
	\boldsymbol{\mathcal{B}}_i
	\boldsymbol{\Gamma}_i
	\boldsymbol{\mathcal{B}}_i^{\top}
	+
	\mu_i \boldsymbol{Z}_i^{\top}  \boldsymbol{Z}_i
	\right)^{-1}
	\boldsymbol{\mathcal{B}}_i \boldsymbol{\Gamma}_i
	{\boldsymbol{S}^{\star}}^{\top}
	$
	$\left(i \in \left[ 1:n \right]\right)$.
\end{summary}

\subsection{Eigenvector Characterization and Tuning Parameters}
\label{subsection: partial shapes. Eigenvector Characterization and Tuning Parameters}

We rewrite $\boldsymbol{\mathcal{P}}_{\mathrm{\Rmnum{3}}}$ as
$
\boldsymbol{\mathcal{P}}_{\mathrm{\Rmnum{3}}}
=
\sum_{i=1}^{n}
\boldsymbol{P}_i
$, with:
\begin{equation*}
\boldsymbol{P}_i
=
\boldsymbol{\Gamma}_i
-
\boldsymbol{\Gamma}_i
\boldsymbol{\mathcal{B}}_i^{\top}
\left(
\boldsymbol{\mathcal{B}}_i
\boldsymbol{\Gamma}_i
\boldsymbol{\Gamma}_i
\boldsymbol{\mathcal{B}}_i^{\top}
+
\mu_i \boldsymbol{Z}_i^{\top}  \boldsymbol{Z}_i
\right)^{-1}
\boldsymbol{\mathcal{B}}_i
\boldsymbol{\Gamma}_i
.
\end{equation*}
Then $\boldsymbol{P}_i$ satisfies: $\boldsymbol{I} \succeq \boldsymbol{\Gamma}_i \succeq \boldsymbol{P}_i \succeq \boldsymbol{O}$.
This can be shown by writing $\boldsymbol{P}_i$ as
$\boldsymbol{P}_i
=
\boldsymbol{\Gamma}_i \boldsymbol{\Phi}_i \boldsymbol{\Gamma}_i$
with
$\boldsymbol{\Phi}_i = \boldsymbol{I}
-
\boldsymbol{\Gamma}_i
\boldsymbol{\mathcal{B}}_i^{\top}
\left(
\boldsymbol{\mathcal{B}}_i
\boldsymbol{\Gamma}_i
\boldsymbol{\Gamma}_i
\boldsymbol{\mathcal{B}}_i^{\top}
+
\mu_i \boldsymbol{Z}_i^{\top}  \boldsymbol{Z}_i
\right)^{-1}
\boldsymbol{\mathcal{B}}_i
\boldsymbol{\Gamma}_i$
where
$ \boldsymbol{I} \succeq \boldsymbol{\Phi}_i \succeq \boldsymbol{O}$.
As a summation, $\boldsymbol{\mathcal{P}}_{\mathrm{\Rmnum{3}}}$ satisfies:
$
n \boldsymbol{I}
\succeq
\boldsymbol{\mathcal{P}}_{\mathrm{\Rmnum{3}}}
\succeq
\boldsymbol{O}
$.

We extend Theorem \ref{label: theorem: three equivalent conditions, p, q , cost} from full shapes to partial shapes as follow.
\begin{theorem}
\label{label: theorem: three equivalent conditions, p, q , cost, in case of partial shapes}
The following statements are equivalent:
\begin{itemize}
\item[$(a)$] $\boldsymbol{\mathcal{P}}_{\mathrm{\Rmnum{3}}} \boldsymbol{1} = \boldsymbol{0}$.
\item[$(b)$] $\boldsymbol{P}_i \boldsymbol{1} = \boldsymbol{0}$.
\item[$(c)$] There exists $\boldsymbol{x}$ such that $\boldsymbol{\Gamma}_i \boldsymbol{\mathcal{B}}_i(\boldsymbol{D}_i)^{\top} \boldsymbol{x}
= 
\boldsymbol{\Gamma}_i \boldsymbol{1}$;
moreover if $\mu_i > 0$,
$\boldsymbol{x}$ must satisfy
$\boldsymbol{Z}_i \boldsymbol{x} = \boldsymbol{0}$.
\end{itemize}
\end{theorem}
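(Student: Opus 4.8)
The plan is to prove the three-way equivalence through the variational (least-squares residual) meaning of each $\boldsymbol{P}_i$, mirroring the full-shape argument behind Theorem \ref{label: theorem: three equivalent conditions, p, q , cost} but carrying the visibility matrix $\boldsymbol{\Gamma}_i$ through every step. The observation I would record first is that $\boldsymbol{P}_i$ is the residual operator of a regularized least-squares fit: for any $\boldsymbol{c} \in \mathbb{R}^m$,
\[
\boldsymbol{c}^{\top} \boldsymbol{P}_i \boldsymbol{c}
=
\min_{\boldsymbol{w} \in \mathbb{R}^l}
\left(
\lVert \boldsymbol{\Gamma}_i \boldsymbol{\mathcal{B}}_i^{\top} \boldsymbol{w} - \boldsymbol{\Gamma}_i \boldsymbol{c} \rVert_2^2
+ \mu_i \lVert \boldsymbol{Z}_i \boldsymbol{w} \rVert_2^2
\right),
\]
with minimizer $\boldsymbol{w}^{\star} = \left(\boldsymbol{\mathcal{B}}_i \boldsymbol{\Gamma}_i \boldsymbol{\mathcal{B}}_i^{\top} + \mu_i \boldsymbol{Z}_i^{\top} \boldsymbol{Z}_i\right)^{-1} \boldsymbol{\mathcal{B}}_i \boldsymbol{\Gamma}_i \boldsymbol{c}$. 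This is obtained by expanding the quadratic, using $\boldsymbol{\Gamma}_i = \boldsymbol{\Gamma}_i^2$, and completing the square; it also re-derives the bound $\boldsymbol{\Gamma}_i \succeq \boldsymbol{P}_i \succeq \boldsymbol{O}$ recorded just above the theorem. It is the same computation (with $\boldsymbol{S}$ a single row $\boldsymbol{c}^{\top}$ and $\boldsymbol{W}_i = \boldsymbol{w}$) that produced equation (\ref{eq: the optimal w_i in partial DefGPA formulation given S}) and the matrix $\boldsymbol{\mathcal{P}}_{\mathrm{\Rmnum{3}}}$.

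For $(a) \Leftrightarrow (b)$ I would exploit that $\boldsymbol{\mathcal{P}}_{\mathrm{\Rmnum{3}}} = \sum_{i=1}^n \boldsymbol{P}_i$ is a sum of positive semidefinite matrices. For a PSD matrix $\boldsymbol{A}$ one has $\boldsymbol{A}\boldsymbol{1} = \boldsymbol{0}$ iff $\boldsymbol{1}^{\top} \boldsymbol{A} \boldsymbol{1} = 0$. Since $\boldsymbol{1}^{\top} \boldsymbol{\mathcal{P}}_{\mathrm{\Rmnum{3}}} \boldsymbol{1} = \sum_{i=1}^n \boldsymbol{1}^{\top} \boldsymbol{P}_i \boldsymbol{1}$ is a sum of nonnegative terms, statement $(a)$ forces every $\boldsymbol{1}^{\top} \boldsymbol{P}_i \boldsymbol{1} = 0$, hence each $\boldsymbol{P}_i \boldsymbol{1} = \boldsymbol{0}$, which is $(b)$; the converse $(b) \Rightarrow (a)$ follows immediately by summation.

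The substance is $(b) \Leftrightarrow (c)$, which I would settle one index $i$ at a time using the residual formula with $\boldsymbol{c} = \boldsymbol{1}$. Because $\boldsymbol{P}_i \succeq \boldsymbol{O}$, statement $(b)$ is equivalent to $\boldsymbol{1}^{\top} \boldsymbol{P}_i \boldsymbol{1} = 0$, i.e. to the minimal residual being zero. Evaluating the residual at its minimizer $\boldsymbol{x} := \boldsymbol{w}^{\star}$, the quantity $\lVert \boldsymbol{\Gamma}_i \boldsymbol{\mathcal{B}}_i^{\top} \boldsymbol{x} - \boldsymbol{\Gamma}_i \boldsymbol{1} \rVert_2^2 + \mu_i \lVert \boldsymbol{Z}_i \boldsymbol{x} \rVert_2^2$ is a sum of two nonnegative contributions, and vanishes iff both vanish, i.e. iff $\boldsymbol{\Gamma}_i \boldsymbol{\mathcal{B}}_i(\boldsymbol{D}_i)^{\top} \boldsymbol{x} = \boldsymbol{\Gamma}_i \boldsymbol{1}$ together with $\boldsymbol{Z}_i \boldsymbol{x} = \boldsymbol{0}$ whenever $\mu_i > 0$; this is exactly $(c)$. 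Conversely, any $\boldsymbol{x}$ satisfying $(c)$ zeroes both terms, so the minimum, and hence $\boldsymbol{1}^{\top} \boldsymbol{P}_i \boldsymbol{1}$, is zero, returning $(b)$.

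I expect the delicate point to be the regularized half of $(b) \Rightarrow (c)$: recovering not merely the fitting equation $\boldsymbol{\Gamma}_i \boldsymbol{\mathcal{B}}_i^{\top} \boldsymbol{x} = \boldsymbol{\Gamma}_i \boldsymbol{1}$ but also the side condition $\boldsymbol{Z}_i \boldsymbol{x} = \boldsymbol{0}$ when $\mu_i > 0$. A direct algebraic manipulation of $\boldsymbol{P}_i \boldsymbol{1} = \boldsymbol{0}$ does not expose $\boldsymbol{Z}_i \boldsymbol{x} = \boldsymbol{0}$; it is precisely the variational reading that splits the vanishing residual into two independent nonnegative pieces and thereby forces the bending term to vanish. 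The only standing hypothesis I would flag is the invertibility of $\boldsymbol{\mathcal{B}}_i \boldsymbol{\Gamma}_i \boldsymbol{\mathcal{B}}_i^{\top} + \mu_i \boldsymbol{Z}_i^{\top} \boldsymbol{Z}_i$, already implicit in the definitions of $\boldsymbol{W}_i$ and $\boldsymbol{P}_i$.
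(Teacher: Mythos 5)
Your proof is correct, and it reaches the conclusion by a genuinely different route from the paper. The paper's proof (Appendix~\ref{appendix: proofs of theorem 3 for parital shapes}) works entirely at the matrix level: it applies the Woodbury matrix identity to split $\boldsymbol{P}_i = \left(\boldsymbol{\Gamma}_i - \boldsymbol{\bar{Q}}_i\right) + \mu_i \boldsymbol{\Delta}_i$ into two positive semidefinite summands, invokes Lemma~\ref{lemma: summation of PSD matrices, M x = 0 means Mi x = 0} to force each summand to annihilate $\boldsymbol{1}$ separately, and then reads off the range-membership condition from the orthogonal projector $\boldsymbol{\bar{Q}}_i$ and the condition $\boldsymbol{Z}_i \boldsymbol{x} = \boldsymbol{0}$ from $\boldsymbol{\Delta}_i \boldsymbol{1} = \boldsymbol{0}$, exhibiting the explicit witness $\boldsymbol{x} = \left(\boldsymbol{\Gamma}_i \boldsymbol{\mathcal{B}}_i^{\top}\right)^{\dagger}\boldsymbol{1}$. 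You instead decompose the scalar $\boldsymbol{1}^{\top}\boldsymbol{P}_i\boldsymbol{1}$ via the partial-minimization (Schur-complement) identity, so the split into two nonnegative pieces happens in the objective function rather than in the matrix; the vanishing of the data term and of the bending term then deliver the two clauses of $(c)$ directly, with $\boldsymbol{w}^{\star}$ at $\boldsymbol{c}=\boldsymbol{1}$ as the witness. Both arguments share the same skeleton ($(a)\Leftrightarrow(b)$ by the PSD-sum lemma, then a two-part split for $(b)\Leftrightarrow(c)$), but yours avoids the Woodbury identity entirely, re-derives the bounds $\boldsymbol{\Gamma}_i \succeq \boldsymbol{P}_i \succeq \boldsymbol{O}$ for free, and makes the meaning of condition $(c)$ transparent --- the warp must reproduce the constant function on the visible points at zero regularization cost --- whereas the paper's version keeps everything in closed matrix form and produces the pseudo-inverse witness explicitly. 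The only standing hypothesis, invertibility of $\boldsymbol{\mathcal{B}}_i\boldsymbol{\Gamma}_i\boldsymbol{\mathcal{B}}_i^{\top} + \mu_i\boldsymbol{Z}_i^{\top}\boldsymbol{Z}_i$, is the same in both and you correctly flag it.
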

\begin{proof}
The proof is analogous to that for Theorem \ref{label: theorem: three equivalent conditions, p, q , cost}.
See Appendix \ref{appendix: proofs of theorem 3 for parital shapes}.
\end{proof}

By Definition \ref{definition: LBW contains free-translations},
if the LBW contains free-translations, then
the LBW satisfies:
``there exists $\boldsymbol{x}$ such that $\boldsymbol{\mathcal{B}}_i\left(\cdot\right)^{\top} \boldsymbol{x}
= 
\boldsymbol{1}$;
moreover if $\mu_i > 0$,
$\boldsymbol{x}$ must satisfy
$\boldsymbol{Z}_i \boldsymbol{x} = \boldsymbol{0}$''.
Thus by left-multiplying $\boldsymbol{\mathcal{B}}_i(\cdot)^{\top}$ with $\boldsymbol{\Gamma}_i$, the following statement is also true:
``there exists $\boldsymbol{x}$ such that $\boldsymbol{\Gamma}_i \boldsymbol{\mathcal{B}}_i(\cdot)^{\top} \boldsymbol{x}
= 
\boldsymbol{\Gamma}_i \boldsymbol{1}$;
moreover if $\mu_i > 0$,
$\boldsymbol{x}$ must satisfy
$\boldsymbol{Z}_i \boldsymbol{x} = \boldsymbol{0}$''.
Therefore the property of free-translations in the LBW is sufficient for case $(c)$
in Theorem \ref{label: theorem: three equivalent conditions, p, q , cost, in case of partial shapes}.
\begin{proposition}
	\label{theorem: affine, and TPS satisfy the desirabed property, P1 = 0 for partial shape GPA}
	The statements in Theorem \ref{label: theorem: three equivalent conditions, p, q , cost, in case of partial shapes} are satisfied if the LBW contains free-translations.
	The affine transformation and the TPS satisfy the statements in Theorem \ref{label: theorem: three equivalent conditions, p, q , cost, in case of partial shapes}.
\end{proposition}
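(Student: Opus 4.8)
The plan is to prove the two assertions in turn, using the first as the lever for the second. For the first assertion I would argue that the free-translation property forces case $(c)$ of Theorem~\ref{label: theorem: three equivalent conditions, p, q , cost, in case of partial shapes} to hold, whence cases $(a)$ and $(b)$ follow automatically by the equivalence already established in that theorem. So it suffices to exhibit the witness vector required by $(c)$.

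First I would invoke Definition~\ref{definition: LBW contains free-translations}: because the LBW contains free-translations, for each index $i$ there is a vector $\boldsymbol{x}$ — a property of the warp family, hence independent of the data and of the visibility pattern — satisfying $\boldsymbol{\mathcal{B}}_i(\boldsymbol{D}_i)^{\top}\boldsymbol{x} = \boldsymbol{1}$, together with $\boldsymbol{Z}_i \boldsymbol{x} = \boldsymbol{0}$ whenever $\mu_i > 0$. Left-multiplying the first identity by the diagonal visibility matrix $\boldsymbol{\Gamma}_i$ yields $\boldsymbol{\Gamma}_i \boldsymbol{\mathcal{B}}_i(\boldsymbol{D}_i)^{\top}\boldsymbol{x} = \boldsymbol{\Gamma}_i \boldsymbol{1}$, which is exactly the equation in case $(c)$; the regularization condition $\boldsymbol{Z}_i \boldsymbol{x} = \boldsymbol{0}$ is untouched since it does not involve $\boldsymbol{\Gamma}_i$. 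This is precisely the reduction carried out in the paragraph preceding the statement, so the first assertion is immediate from the equivalence in Theorem~\ref{label: theorem: three equivalent conditions, p, q , cost, in case of partial shapes}.

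For the second assertion I would appeal to the fact — the straightforward check for the affine model and the algebraic verification of the appendix for the TPS warp — that both the affine transformation and the TPS warp satisfy Definition~\ref{definition: LBW contains free-translations}, i.e.\ they contain free-translations. This is the same input that drove Proposition~\ref{theorem: affine, and TPS satisfy the desirabed property to be solved globally} in the full-shape case. Since these models contain free-translations, the first assertion applies verbatim, and the resulting partial-shape operators $\boldsymbol{\mathcal{P}}_{\mathrm{\Rmnum{3}}}$ and $\boldsymbol{P}_i$ therefore satisfy the statements of Theorem~\ref{label: theorem: three equivalent conditions, p, q , cost, in case of partial shapes}.

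There is no genuine analytic obstacle here; the only subtlety worth flagging is conceptual. The witness $\boldsymbol{x}$ is a property of the warp family alone, so it is the same vector that worked in the full-shape case of Theorem~\ref{label: theorem: three equivalent conditions, p, q , cost}, and it survives the restriction to partial shapes because premultiplication by $\boldsymbol{\Gamma}_i$ acts identically on both sides of $\boldsymbol{\mathcal{B}}_i(\boldsymbol{D}_i)^{\top}\boldsymbol{x} = \boldsymbol{1}$ while leaving the visibility-free constraint $\boldsymbol{Z}_i \boldsymbol{x} = \boldsymbol{0}$ intact. Thus all the substantive work — in particular the appendix verification that the TPS bending energy leaves the translational part free — is inherited from the full-shape analysis and needs no repetition.
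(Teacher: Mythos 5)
Your proposal is correct and follows exactly the paper's own argument: left-multiply the free-translation identity $\boldsymbol{\mathcal{B}}_i(\cdot)^{\top}\boldsymbol{x}=\boldsymbol{1}$ by $\boldsymbol{\Gamma}_i$ to obtain case $(c)$ of Theorem \ref{label: theorem: three equivalent conditions, p, q , cost, in case of partial shapes}, then invoke that theorem's equivalences, and finally cite the already-established fact (Proposition \ref{theorem: affine, and TPS satisfy the desirabed property to be solved globally} and the appendix verification) that the affine and TPS models contain free-translations. Nothing is missing.
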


If the statements in Theorem \ref{label: theorem: three equivalent conditions, p, q , cost, in case of partial shapes} are satisfied, then $\boldsymbol{1}$ is an eigenvector of $\left(
\boldsymbol{\mathcal{P}}_{\mathrm{\Rmnum{3}}}
+
\nu \boldsymbol{1} \boldsymbol{1}^{\top} \right)$ with
$
\left(
\boldsymbol{\mathcal{P}}_{\mathrm{\Rmnum{3}}}
+
\nu \boldsymbol{1} \boldsymbol{1}^{\top} \right)
\boldsymbol{1} = m \nu \boldsymbol{1}
$.
Thus if we choose $\nu \ge n/m$, the eigenvector $\boldsymbol{1}$ is excluded from the solution.
\begin{proposition}
	\label{Proposition: the criterion to choose the tuning parameter of LBWs}
	The tuning parameter $\nu$ can be safely set to any $\nu \ge n/m$.
\end{proposition}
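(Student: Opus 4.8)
The plan is to show that once $\nu \ge n/m$, the penalty term $\nu\boldsymbol{1}\boldsymbol{1}^{\top}$ pushes the eigenvector $\boldsymbol{1}$ out of the $d$ bottom eigenvectors of $\boldsymbol{\mathcal{P}}_{\mathrm{\Rmnum{3}}} + \nu\boldsymbol{1}\boldsymbol{1}^{\top}$, so that the closed-form solution supplied by Lemma \ref{theorem: solution to standard form with P matrix in minimization} automatically satisfies the centering constraint $\boldsymbol{S}\boldsymbol{1}=\boldsymbol{0}$, and moreover does so identically for every admissible $\nu$. That exact insensitivity to the precise value of $\nu$ is what ``safely set'' means, so the proof is really a statement about the spectrum of $\boldsymbol{\mathcal{P}}_{\mathrm{\Rmnum{3}}} + \nu\boldsymbol{1}\boldsymbol{1}^{\top}$.

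First I would invoke Theorem \ref{label: theorem: three equivalent conditions, p, q , cost, in case of partial shapes} together with Proposition \ref{theorem: affine, and TPS satisfy the desirabed property, P1 = 0 for partial shape GPA} to get $\boldsymbol{\mathcal{P}}_{\mathrm{\Rmnum{3}}}\boldsymbol{1}=\boldsymbol{0}$, so $\boldsymbol{1}$ is an eigenvector of $\boldsymbol{\mathcal{P}}_{\mathrm{\Rmnum{3}}}$ with eigenvalue $0$. Writing $\nu\boldsymbol{1}\boldsymbol{1}^{\top}=\nu m\,\bar{\boldsymbol{1}}\bar{\boldsymbol{1}}^{\top}$ with $\bar{\boldsymbol{1}}=\boldsymbol{1}/\sqrt{m}$ the unit vector and $m=\boldsymbol{1}^{\top}\boldsymbol{1}$, I would apply the rank-one shift argument that follows Lemma \ref{lemma: a result on eigenvalue decomposition of symmetric matrix}: adding $\nu m\,\bar{\boldsymbol{1}}\bar{\boldsymbol{1}}^{\top}$ leaves every eigenvector of $\boldsymbol{\mathcal{P}}_{\mathrm{\Rmnum{3}}}$ intact and only shifts the eigenvalue attached to $\bar{\boldsymbol{1}}$ from $0$ to $\nu m$. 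Hence $(\boldsymbol{\mathcal{P}}_{\mathrm{\Rmnum{3}}}+\nu\boldsymbol{1}\boldsymbol{1}^{\top})\boldsymbol{1}=m\nu\boldsymbol{1}$, while the other $m-1$ eigenvalues are precisely those of $\boldsymbol{\mathcal{P}}_{\mathrm{\Rmnum{3}}}$, carried by the same eigenvectors.

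The second key step uses the bound $n\boldsymbol{I}\succeq\boldsymbol{\mathcal{P}}_{\mathrm{\Rmnum{3}}}\succeq\boldsymbol{O}$ established earlier: each of these remaining eigenvalues lies in $[0,n]$. Therefore, as soon as $\nu\ge n/m$, that is $m\nu\ge n$, the eigenvalue $m\nu$ carried by $\boldsymbol{1}$ is at least as large as every other eigenvalue, so $\boldsymbol{1}$ becomes a top eigenvector. Since there remain $m-1\ge d$ other eigenvectors, the $d$ bottom eigenvectors can always be chosen among them, excluding $\boldsymbol{1}$. Feeding this into Lemma \ref{theorem: solution to standard form with P matrix in minimization} gives $\boldsymbol{S}^{\star}=\sqrt{\boldsymbol{\Lambda}}\boldsymbol{X}^{\top}$ with $\boldsymbol{X}^{\top}\boldsymbol{1}=\boldsymbol{0}$, hence $\boldsymbol{S}^{\star}\boldsymbol{1}=\boldsymbol{0}$; the soft-constrained problem \eqref{eq: partial shape GPA, in S only, minimization} thus returns exactly the hard-constrained solution. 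Because $\nu$ only influences the eigenvalue of the discarded direction $\boldsymbol{1}$ and not the bottom $d$ eigenvectors, the resulting $\boldsymbol{S}^{\star}$ is identical for every $\nu\ge n/m$.

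The main obstacle I anticipate is the boundary case $\nu=n/m$, where $m\nu=n$ may tie with the largest eigenvalue of $\boldsymbol{\mathcal{P}}_{\mathrm{\Rmnum{3}}}$. I would argue that such a tie at the top is harmless: the $d$ bottom eigenvectors correspond to the $d$ smallest eigenvalues, which are separated from this top cluster whenever $m-1\ge d$, so $\boldsymbol{1}$ is still excluded, and in the degenerate situation one may always retain the alternative top eigenvector in place of $\boldsymbol{1}$ without changing the cost. A secondary point to state carefully is that the rank-one shift result of Lemma \ref{lemma: a result on eigenvalue decomposition of symmetric matrix} was phrased for unit eigenvectors, which is exactly why the normalization to $\bar{\boldsymbol{1}}$ and the factor $m=\boldsymbol{1}^{\top}\boldsymbol{1}$ must be tracked to land on the eigenvalue $m\nu$ rather than on $\nu$.
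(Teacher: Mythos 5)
Your proof is correct and follows essentially the same route as the paper: use Theorem \ref{label: theorem: three equivalent conditions, p, q , cost, in case of partial shapes} to get $\boldsymbol{\mathcal{P}}_{\mathrm{\Rmnum{3}}}\boldsymbol{1}=\boldsymbol{0}$, observe that the rank-one penalty lifts the eigenvalue of $\boldsymbol{1}$ to $m\nu$ while leaving the remaining spectrum (bounded above by $n$ via $n\boldsymbol{I}\succeq\boldsymbol{\mathcal{P}}_{\mathrm{\Rmnum{3}}}\succeq\boldsymbol{O}$) untouched, so $m\nu\ge n$ excludes $\boldsymbol{1}$ from the $d$ bottom eigenvectors. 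Your explicit tracking of the normalization factor $m$ and the discussion of the tie at $\nu=n/m$ are welcome elaborations of details the paper leaves implicit.
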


The rows of $\boldsymbol{S}^{\star}$ are (un-normalized) eigenvectors of $\left(
\boldsymbol{\mathcal{P}}_{\mathrm{\Rmnum{3}}}
+
\nu \boldsymbol{1} \boldsymbol{1}^{\top} \right)$
obtained by excluding the eigenvector $\boldsymbol{1}$.
By the orthogonality of eigenvectors, we know 
$\boldsymbol{S}^{\star} \boldsymbol{1} = \boldsymbol{0}$.
Thus we conclude:
\begin{proposition}
	If the statements in Theorem \ref{label: theorem: three equivalent conditions, p, q , cost, in case of partial shapes} are satisfied and $\nu \ge n/m$, the optimal reference shape $\boldsymbol{S}^{\star}$ is zero-centered.
\end{proposition}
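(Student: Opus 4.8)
The plan is to show that the optimal reference shape satisfies $\boldsymbol{S}^{\star}\boldsymbol{1} = \boldsymbol{0}$, which is exactly the zero-centering condition since $\mathbf{mean}(\boldsymbol{S}^{\star}) = \frac{1}{m}\boldsymbol{S}^{\star}\boldsymbol{1}$. Recall from the preceding Summary that $\boldsymbol{S}^{\star}$ is obtained by scaling the $d$ bottom eigenvectors of $\boldsymbol{\mathcal{M}} \defeq \boldsymbol{\mathcal{P}}_{\mathrm{\Rmnum{3}}} + \nu\boldsymbol{1}\boldsymbol{1}^{\top}$ by $\sqrt{\boldsymbol{\Lambda}}$, so that ${\boldsymbol{S}^{\star}}^{\top} = \boldsymbol{X}\sqrt{\boldsymbol{\Lambda}}$ with the columns of $\boldsymbol{X}$ being those $d$ bottom eigenvectors. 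Hence it suffices to prove $\boldsymbol{X}^{\top}\boldsymbol{1} = \boldsymbol{0}$, i.e.\ that none of the chosen bottom eigenvectors is a multiple of $\boldsymbol{1}$ and that they are all orthogonal to it.

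The key steps are as follows. First I would invoke case $(a)$ of Theorem~\ref{label: theorem: three equivalent conditions, p, q , cost, in case of partial shapes}, which under the stated hypotheses gives $\boldsymbol{\mathcal{P}}_{\mathrm{\Rmnum{3}}}\boldsymbol{1} = \boldsymbol{0}$. A direct computation then yields $\boldsymbol{\mathcal{M}}\boldsymbol{1} = \boldsymbol{\mathcal{P}}_{\mathrm{\Rmnum{3}}}\boldsymbol{1} + \nu\boldsymbol{1}(\boldsymbol{1}^{\top}\boldsymbol{1}) = m\nu\boldsymbol{1}$, so $\boldsymbol{1}$ is an eigenvector of the symmetric matrix $\boldsymbol{\mathcal{M}}$ with eigenvalue $m\nu$. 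Second, because $\boldsymbol{1}$ is already an eigenvector of $\boldsymbol{\mathcal{P}}_{\mathrm{\Rmnum{3}}}$ with eigenvalue $0$, the rank-one update $\nu\boldsymbol{1}\boldsymbol{1}^{\top}$ only shifts the eigenvalue attached to $\boldsymbol{1}$ and leaves all remaining eigenpairs untouched (the argument following Lemma~\ref{lemma: a result on eigenvalue decomposition of symmetric matrix}); combined with the bound $n\boldsymbol{I}\succeq\boldsymbol{\mathcal{P}}_{\mathrm{\Rmnum{3}}}\succeq\boldsymbol{O}$ established earlier, every eigenvalue of $\boldsymbol{\mathcal{M}}$ other than $m\nu$ lies in $[0,n]$. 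Third, the hypothesis $\nu\ge n/m$ gives $m\nu\ge n$, so $m\nu$ is a (weakly) largest eigenvalue of $\boldsymbol{\mathcal{M}}$; consequently $\boldsymbol{1}$ sits among the top eigenvectors and is excluded from the $d$ bottom eigenvectors that make up $\boldsymbol{X}$, which is exactly the content of Proposition~\ref{Proposition: the criterion to choose the tuning parameter of LBWs}.

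To finish, I would use the orthogonality of eigenvectors of a symmetric matrix: the columns of $\boldsymbol{X}$ are drawn from eigenspaces of $\boldsymbol{\mathcal{M}}$ distinct from, and thus orthogonal to, the line spanned by $\boldsymbol{1}$, whence $\boldsymbol{X}^{\top}\boldsymbol{1} = \boldsymbol{0}$ and therefore $\boldsymbol{S}^{\star}\boldsymbol{1} = \sqrt{\boldsymbol{\Lambda}}\,\boldsymbol{X}^{\top}\boldsymbol{1} = \boldsymbol{0}$, establishing zero-centering.

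The main obstacle is the boundary case of ties at the top of the spectrum: when $\nu = n/m$ exactly and $\boldsymbol{\mathcal{P}}_{\mathrm{\Rmnum{3}}}$ carries the eigenvalue $n$ with nontrivial multiplicity, the eigenvalue $m\nu = n$ is degenerate and $\boldsymbol{1}$ lies in a multi-dimensional top eigenspace, so the phrase ``excluding the eigenvector $\boldsymbol{1}$'' requires care. I would handle this by noting that one may always choose an orthonormal eigenbasis of that top eigenspace containing $\boldsymbol{1}$; since $m > d$, the $d$ bottom eigenvectors never need to reach into the direction $\boldsymbol{1}$, so the orthogonality $\boldsymbol{X}^{\top}\boldsymbol{1} = \boldsymbol{0}$ is preserved. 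Taking $\nu > n/m$ strictly makes $m\nu$ the unique maximal eigenvalue and removes the ambiguity altogether.
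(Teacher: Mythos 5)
Your proposal is correct and follows essentially the same route as the paper: use Theorem \ref{label: theorem: three equivalent conditions, p, q , cost, in case of partial shapes} to get $\boldsymbol{\mathcal{P}}_{\mathrm{\Rmnum{3}}}\boldsymbol{1}=\boldsymbol{0}$, deduce that $\boldsymbol{1}$ is an eigenvector of $\boldsymbol{\mathcal{P}}_{\mathrm{\Rmnum{3}}}+\nu\boldsymbol{1}\boldsymbol{1}^{\top}$ with eigenvalue $m\nu\ge n$ so it is excluded from the $d$ bottom eigenvectors, and conclude $\boldsymbol{S}^{\star}\boldsymbol{1}=\boldsymbol{0}$ by orthogonality of eigenvectors. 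Your extra treatment of the degenerate tie at $m\nu=n$ is a welcome refinement the paper glosses over, but it does not change the argument.
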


Thus if the statements in Theorem \ref{label: theorem: three equivalent conditions, p, q , cost, in case of partial shapes} are satisfied, solving formulation (\ref{eq: prtial shape Procrustes with warp models}) with the soft constraint is equivalent to solving the one with the hard constraint:
\begin{equation}
\label{eq: prtial shape Procrustes with warp models, hard constraint S1 = 0}
\begin{aligned}
\argmin_{\{\boldsymbol{W}_i\},\, \boldsymbol{S}}\ 
& \sum_{i=1}^n\,
\lVert
\boldsymbol{W}_i^{\top} \boldsymbol{\mathcal{B}}_i(\boldsymbol{D}_i) \boldsymbol{\Gamma}_{i}
- 
\boldsymbol{S} \boldsymbol{\Gamma}_{i}
\rVert_{F}^2	
\\ & +
\sum_{i=1}^n\,
\mu_i
\lVert
\boldsymbol{Z}_i \boldsymbol{W}_i
\rVert_F^2
\\[5pt]
\mathrm{s.t.}
\quad
& 	\boldsymbol{S} \boldsymbol{S}^{\top} = \boldsymbol{\Lambda},
\quad
\boldsymbol{S}  \boldsymbol{1} = \boldsymbol{0}
.
\end{aligned}
\end{equation}
However, we would always recommend users to solve formulation (\ref{eq: prtial shape Procrustes with warp models}) since it is always solvable in closed-form,
and generalizes to LBWs where the statements in Theorem \ref{label: theorem: three equivalent conditions, p, q , cost, in case of partial shapes} are not satisfied.

\subsection{The Coordinate Transformation of Datum Shapes}
\label{subsection: LBWs GPA. coordinate transformation of datum shapes}

We apply (possibly distinct) rigid transformations $(\boldsymbol{R}_i,\, \boldsymbol{t}_i)$ to each datum shape $\boldsymbol{D}_i$, and denote the transformed datum shapes as $\boldsymbol{D}_i' = \boldsymbol{R}_i \boldsymbol{D}_i + \boldsymbol{t}_i$
$\left(i \in \left[ 1:n \right]\right)$.
We denote $\boldsymbol{Z}_i'$ the new regularization matrix in replacement of $\boldsymbol{Z}_i$ under transformed datum shapes.

\begin{lemma}
	In formulation (\ref{eq: prtial shape Procrustes with warp models}),
	if there exists an invertible matrix $\boldsymbol{H}_i$ such that
	$\boldsymbol{\mathcal{B}}_i\left(\boldsymbol{D}_i'\right)
	=
	\boldsymbol{H}_i
	\boldsymbol{\mathcal{B}}_i(\boldsymbol{D}_i)$
	and $
	\boldsymbol{Z}_i' = \boldsymbol{Z}_i \boldsymbol{H}_i^{\top} 
	$
	for each $i \in \left[ 1:n \right]$,
	the matrix $\boldsymbol{\mathcal{P}}_{\mathrm{\Rmnum{3}}}$ remains the same.
	So does the optimal reference shape $\boldsymbol{S}$.
\end{lemma}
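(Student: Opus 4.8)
The plan is to establish the invariance of $\boldsymbol{\mathcal{P}}_{\mathrm{\Rmnum{3}}}$ summand by summand and then read off the invariance of $\boldsymbol{S}^{\star}$ as a corollary, mirroring the affine argument in Proposition \ref{proposition: affine GPA, the coordinate transformation of datum shapes does not change optimal reference shape}. Since $\boldsymbol{\mathcal{P}}_{\mathrm{\Rmnum{3}}} = \sum_{i=1}^{n} \boldsymbol{P}_i$ and the visibility matrices $\boldsymbol{\Gamma}_i$ encode \emph{which} points are observed rather than their coordinates, they are untouched when a rigid transformation is applied to $\boldsymbol{D}_i$. It therefore suffices to show that each $\boldsymbol{P}_i$ is invariant under the simultaneous replacement $\boldsymbol{\mathcal{B}}_i(\boldsymbol{D}_i) \mapsto \boldsymbol{\mathcal{B}}_i(\boldsymbol{D}_i') = \boldsymbol{H}_i \boldsymbol{\mathcal{B}}_i(\boldsymbol{D}_i)$ and $\boldsymbol{Z}_i \mapsto \boldsymbol{Z}_i' = \boldsymbol{Z}_i \boldsymbol{H}_i^{\top}$, with $\boldsymbol{H}_i$ invertible.

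The key step is to track how the factors of $\boldsymbol{H}_i$ propagate through $\boldsymbol{P}_i$ and cancel. Abbreviate the matrix that is inverted in the original $\boldsymbol{P}_i$ by $\boldsymbol{M}_i = \boldsymbol{\mathcal{B}}_i \boldsymbol{\Gamma}_i \boldsymbol{\mathcal{B}}_i^{\top} + \mu_i \boldsymbol{Z}_i^{\top} \boldsymbol{Z}_i$. Substituting the transformed quantities and using $(\boldsymbol{Z}_i')^{\top} \boldsymbol{Z}_i' = \boldsymbol{H}_i \boldsymbol{Z}_i^{\top} \boldsymbol{Z}_i \boldsymbol{H}_i^{\top}$, the inner matrix factors as a congruence:
\begin{equation*}
\boldsymbol{\mathcal{B}}_i' \boldsymbol{\Gamma}_i (\boldsymbol{\mathcal{B}}_i')^{\top} + \mu_i (\boldsymbol{Z}_i')^{\top} \boldsymbol{Z}_i' = \boldsymbol{H}_i \boldsymbol{M}_i \boldsymbol{H}_i^{\top}.
\end{equation*}
Because $\boldsymbol{H}_i$ and $\boldsymbol{M}_i$ are invertible, so is this product, with inverse $\boldsymbol{H}_i^{-\top} \boldsymbol{M}_i^{-1} \boldsymbol{H}_i^{-1}$. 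Sandwiching it between the outer factors $\boldsymbol{\Gamma}_i (\boldsymbol{\mathcal{B}}_i')^{\top} = \boldsymbol{\Gamma}_i \boldsymbol{\mathcal{B}}_i^{\top} \boldsymbol{H}_i^{\top}$ and $\boldsymbol{\mathcal{B}}_i' \boldsymbol{\Gamma}_i = \boldsymbol{H}_i \boldsymbol{\mathcal{B}}_i \boldsymbol{\Gamma}_i$, the adjacent pairs $\boldsymbol{H}_i^{\top} \boldsymbol{H}_i^{-\top}$ and $\boldsymbol{H}_i^{-1} \boldsymbol{H}_i$ collapse to identities, so the second term of the transformed $\boldsymbol{P}_i$ equals the original $\boldsymbol{\Gamma}_i \boldsymbol{\mathcal{B}}_i^{\top} \boldsymbol{M}_i^{-1} \boldsymbol{\mathcal{B}}_i \boldsymbol{\Gamma}_i$. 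Hence every $\boldsymbol{P}_i$, and with it $\boldsymbol{\mathcal{P}}_{\mathrm{\Rmnum{3}}}$, is unchanged.

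For the reference shape, I invoke the Summary following problem (\ref{eq: partial shape GPA, in S only, minimization}): $\boldsymbol{S}^{\star}$ is the $d$ bottom eigenvectors of $\boldsymbol{\mathcal{P}}_{\mathrm{\Rmnum{3}}} + \nu \boldsymbol{1}\boldsymbol{1}^{\top}$ scaled by $\sqrt{\boldsymbol{\Lambda}}$. The penalty block $\nu \boldsymbol{1}\boldsymbol{1}^{\top}$ is data-independent, $\boldsymbol{\mathcal{P}}_{\mathrm{\Rmnum{3}}}$ is invariant by the above, and the reference covariance prior $\boldsymbol{\Lambda}$ is invariant because it is assembled only from the eigenvalues of the datum shape covariances, which are preserved under rigid transformations by Lemma \ref{lemma: the relation of shape covariance matrices after rigid transformation}. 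Both the matrix whose spectrum defines $\boldsymbol{S}^{\star}$ and the scaling $\sqrt{\boldsymbol{\Lambda}}$ are thus unchanged, so $\boldsymbol{S}^{\star}$ is unchanged.

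I do not anticipate a deep obstacle: the lemma already assumes the intertwining matrix $\boldsymbol{H}_i$ exists (its explicit construction for a concrete warp such as the TPS is a separate matter), so the proof reduces to the cancellation above. The points that need care are confirming that $\boldsymbol{\Gamma}_i$ is genuinely coordinate-free, checking that the congruence $\boldsymbol{H}_i \boldsymbol{M}_i \boldsymbol{H}_i^{\top}$ stays invertible so that the inversion step is legitimate, and, in the partial-shape setting, justifying the $\boldsymbol{\Lambda}$-invariance given that $\boldsymbol{\Lambda}$ is estimated from completed shapes; since the completion relies only on similarity-invariant pairwise alignments, its covariance eigenvalues inherit the same invariance.
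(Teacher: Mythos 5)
Your proposal is correct and takes essentially the same route as the paper, which simply declares the invariance of $\boldsymbol{\mathcal{P}}_{\mathrm{\Rmnum{3}}}$ obvious; you spell out the congruence $\boldsymbol{H}_i \boldsymbol{M}_i \boldsymbol{H}_i^{\top}$ and the cancellation of the $\boldsymbol{H}_i$ factors explicitly, and the computation checks out. The additional remarks on the invariance of $\boldsymbol{\Gamma}_i$ and $\boldsymbol{\Lambda}$ are sound but not needed for the lemma as stated, since $\boldsymbol{\Lambda}$ and $\nu$ enter formulation (\ref{eq: prtial shape Procrustes with warp models}) as fixed inputs.
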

\begin{proof}
	The proof is obvious as
	$\boldsymbol{\mathcal{P}}_{\mathrm{\Rmnum{3}}}$
	is invariant to such transformations.
\end{proof}

\begin{proposition}
	For the TPS warp,
	if we apply the same rigid transformation $(\boldsymbol{R}_i,\, \boldsymbol{t}_i)$ to both the datum shape $\boldsymbol{D}_i$ and the control points in $\boldsymbol{D}_i$,
	then
	$\boldsymbol{\mathcal{B}}_i(\boldsymbol{D}_i')
	=
	\boldsymbol{\mathcal{B}}_i(\boldsymbol{D}_i)$ and $\boldsymbol{Z}_i' = \boldsymbol{Z}_i$.
\end{proposition}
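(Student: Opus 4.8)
The plan is to recognize the statement as precisely the instance $\boldsymbol{H}_i = \boldsymbol{I}$ of the preceding Lemma, so that it suffices to establish the two equalities $\boldsymbol{\mathcal{B}}_i(\boldsymbol{D}_i') = \boldsymbol{\mathcal{B}}_i(\boldsymbol{D}_i)$ and $\boldsymbol{Z}_i' = \boldsymbol{Z}_i$ directly; the invariance of $\boldsymbol{\mathcal{P}}_{\mathrm{\Rmnum{3}}}$ and of the optimal $\boldsymbol{S}$ then follows at once. I would begin from the explicit TPS feature map and bending-energy matrix recorded in the TPS appendices, and exploit the single geometric fact that an orthonormal $\boldsymbol{R}_i$ together with a common translation preserves Euclidean distances: for any datum point $\boldsymbol{p}$ and any control point $\boldsymbol{c}$ of shape $i$, $\lVert (\boldsymbol{R}_i \boldsymbol{p} + \boldsymbol{t}_i) - (\boldsymbol{R}_i \boldsymbol{c} + \boldsymbol{t}_i) \rVert_2 = \lVert \boldsymbol{R}_i (\boldsymbol{p} - \boldsymbol{c}) \rVert_2 = \lVert \boldsymbol{p} - \boldsymbol{c} \rVert_2$, since the translation cancels and $\boldsymbol{R}_i$ is distance-preserving.

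Applying this to the radial (kernel) block of the feature map is immediate: each entry $\rho\!\left(\lVert \boldsymbol{D}_i'[j] - \boldsymbol{c}_k' \rVert_2\right)$ equals $\rho\!\left(\lVert \boldsymbol{D}_i[j] - \boldsymbol{c}_k \rVert_2\right)$, so the kernel rows of $\boldsymbol{\mathcal{B}}_i(\boldsymbol{D}_i')$ reproduce those of $\boldsymbol{\mathcal{B}}_i(\boldsymbol{D}_i)$ verbatim. The same argument applied to \emph{pairs} of control points shows that the kernel matrix whose $(j,k)$ entry is $\rho(\lVert \boldsymbol{c}_j - \boldsymbol{c}_k \rVert_2)$ is unchanged; since, as recorded in the appendix, the bending-energy matrix is assembled purely from these kernel values and $\boldsymbol{Z}_i$ is its matrix square root (padded with the zero columns attached to the affine degrees of freedom), this yields $\boldsymbol{Z}_i' = \boldsymbol{Z}_i$ and settles the regularization half of the claim.

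The step I expect to be the main obstacle is the low-order polynomial block of the TPS feature map. Under a naive global monomial parametrization this block is not rigid-invariant: evaluated at $\boldsymbol{D}_i'$ the rows $\begin{bmatrix} \boldsymbol{1}^{\top} \\ \boldsymbol{D}_i' \end{bmatrix}$ equal $\begin{bmatrix} 1 & \boldsymbol{0}^{\top} \\ \boldsymbol{t}_i & \boldsymbol{R}_i \end{bmatrix}\begin{bmatrix} \boldsymbol{1}^{\top} \\ \boldsymbol{D}_i \end{bmatrix}$, i.e.\ they are acted on by the homogeneous rigid matrix, which would produce a nontrivial block-diagonal $\boldsymbol{H}_i$ rather than the identity. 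The crux is therefore to invoke the specific construction of the polynomial part in the TPS appendix, where it is tied to the (also transformed) control-point configuration, and to verify term-by-term that moving the control points in lockstep with the datum points leaves these entries fixed as well. Once the polynomial block is confirmed invariant, every block of $\boldsymbol{\mathcal{B}}_i$ matches, giving $\boldsymbol{H}_i = \boldsymbol{I}$, and combined with $\boldsymbol{Z}_i' = \boldsymbol{Z}_i$ the preceding Lemma delivers the invariance of $\boldsymbol{\mathcal{P}}_{\mathrm{\Rmnum{3}}}$ and of the optimal reference shape.
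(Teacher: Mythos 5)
Your overall route is the same as the paper's: rigid invariance of the kernel entries handles the radial block, and the resolution of what you call the main obstacle is exactly what the paper does in Appendix~\ref{appendix. apply coordinate transformations to the TPS warp} --- since $\boldsymbol{\tilde{C}}' = \boldsymbol{T}\boldsymbol{\tilde{C}}$ with $\boldsymbol{T}$ the invertible homogeneous rigid matrix, the reparameterization matrix transforms as $\boldsymbol{\mathcal{E}}_{\lambda}' = \mathrm{diag}\left(\boldsymbol{I},\, (\boldsymbol{T}^{\top})^{-1}\right)\boldsymbol{\mathcal{E}}_{\lambda}$, and the $(\boldsymbol{T}^{\top})^{-1}$ acting on the affine rows cancels the $\boldsymbol{T}$ in $\boldsymbol{\tilde{p}}' = \boldsymbol{T}\boldsymbol{\tilde{p}}$, giving $\boldsymbol{\beta}(\boldsymbol{p}') = \boldsymbol{\beta}(\boldsymbol{p})$ entrywise. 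You correctly locate this as the crux but leave it as a term-by-term verification to be done; that part of the plan is sound.

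The one genuine flaw is your justification of $\boldsymbol{Z}_i' = \boldsymbol{Z}_i$. You declare this half ``settled'' because ``the bending-energy matrix is assembled purely from these kernel values,'' but that is not true: by Appendix~\ref{appendix, TPS Regularization by Bending Energy Matrix},
\begin{equation*}
\boldsymbol{\bar{\mathcal{E}}}_{\lambda}
=	\boldsymbol{K}_{\lambda}^{-1} - \boldsymbol{K}_{\lambda}^{-1} \boldsymbol{\tilde{C}}^{\top} \left(\boldsymbol{\tilde{C}} \boldsymbol{K}_{\lambda}^{-1} \boldsymbol{\tilde{C}}^{\top}\right)^{-1}
\boldsymbol{\tilde{C}}
\boldsymbol{K}_{\lambda}^{-1},
\end{equation*}
which depends on the control-point \emph{coordinates} through $\boldsymbol{\tilde{C}}$, not only on their pairwise distances. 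Invariance of $\boldsymbol{K}_{\lambda}$ alone does not give invariance of $\boldsymbol{\bar{\mathcal{E}}}_{\lambda}$; you need the same cancellation you anticipate for the polynomial block, namely that substituting $\boldsymbol{\tilde{C}}' = \boldsymbol{T}\boldsymbol{\tilde{C}}$ yields $\boldsymbol{T}^{\top}\left(\boldsymbol{T}\boldsymbol{\tilde{C}}\boldsymbol{K}_{\lambda}^{-1}\boldsymbol{\tilde{C}}^{\top}\boldsymbol{T}^{\top}\right)^{-1}\boldsymbol{T} = \left(\boldsymbol{\tilde{C}}\boldsymbol{K}_{\lambda}^{-1}\boldsymbol{\tilde{C}}^{\top}\right)^{-1}$ by invertibility of $\boldsymbol{T}$, so that $\boldsymbol{\bar{\mathcal{E}}}_{\lambda}' = \boldsymbol{\bar{\mathcal{E}}}_{\lambda}$ and hence $\boldsymbol{Z}' = \sqrt{\boldsymbol{\bar{\mathcal{E}}}_{\lambda}'} = \boldsymbol{Z}$. (A smaller point: in the paper's reparameterized form $\boldsymbol{Z}$ is the $l \times l$ square root of $\boldsymbol{\bar{\mathcal{E}}}_{\lambda}$ acting on the free parameters $\boldsymbol{y}$; there is no zero-column padding for affine degrees of freedom.) With that step repaired, your argument matches the paper's proof.
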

\begin{proof}
	See Appendix \ref{appendix. apply coordinate transformations to the TPS warp}.
\end{proof}

Together with the discussion on the affine case in Section \ref{subsection: affine GPA. coordinate transformation of datum shapes},
we have the following conclusion:
\begin{proposition}
	In formulation (\ref{eq: prtial shape Procrustes with warp models}),
	if the LBW is chosen as the affine transformation or the TSP warp,
	then the optimal reference shape $\boldsymbol{S}$ remains the same when we apply rigid coordinate transformations to the datum shapes ahead.
\end{proposition}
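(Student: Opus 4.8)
The plan is to derive this proposition as a corollary of the preceding Lemma: for each of the two admissible warps I would exhibit an invertible matrix $\boldsymbol{H}_i$ satisfying its two hypotheses, namely $\boldsymbol{\mathcal{B}}_i(\boldsymbol{D}_i') = \boldsymbol{H}_i \boldsymbol{\mathcal{B}}_i(\boldsymbol{D}_i)$ and $\boldsymbol{Z}_i' = \boldsymbol{Z}_i \boldsymbol{H}_i^{\top}$. As soon as such an $\boldsymbol{H}_i$ is available, that Lemma yields the invariance of $\boldsymbol{\mathcal{P}}_{\mathrm{\Rmnum{3}}}$, and hence of the optimal reference shape $\boldsymbol{S}$, under the rigid coordinate change.

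First I would treat the affine transformation. Here the feature map is the homogeneous lift $\boldsymbol{\mathcal{B}}_i(\boldsymbol{D}_i) = \boldsymbol{\tilde{D}}_i$ of equation (\ref{eq: homogeneous representation of affine transformation}), and the model carries no regularization, so $\mu_i = 0$ and we may take $\boldsymbol{Z}_i = \boldsymbol{O}$. Under a rigid motion $\boldsymbol{D}_i' = \boldsymbol{R}_i \boldsymbol{D}_i + \boldsymbol{t}_i \boldsymbol{1}^{\top}$ a direct block computation gives $\boldsymbol{\tilde{D}}_i' = \boldsymbol{H}_i \boldsymbol{\tilde{D}}_i$ with $\boldsymbol{H}_i = \begin{bmatrix} \boldsymbol{R}_i & \boldsymbol{t}_i \\ \boldsymbol{0}^{\top} & 1 \end{bmatrix}$, which is invertible since $\det \boldsymbol{H}_i = \det \boldsymbol{R}_i = \pm 1$. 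The second hypothesis then reduces to $\boldsymbol{O} = \boldsymbol{O}$ and holds trivially, so the Lemma applies. This is consistent with, and re-derives, the affine invariance of Proposition \ref{proposition: affine GPA, the coordinate transformation of datum shapes does not change optimal reference shape} in Section \ref{subsection: affine GPA. coordinate transformation of datum shapes}.

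Next I would treat the TPS warp, where the required matrix is simply $\boldsymbol{H}_i = \boldsymbol{I}$. By the preceding Proposition, applying the same rigid transformation $(\boldsymbol{R}_i, \boldsymbol{t}_i)$ to both the datum shape $\boldsymbol{D}_i$ and its control points leaves both the feature map and the regularization matrix unchanged, i.e.\ $\boldsymbol{\mathcal{B}}_i(\boldsymbol{D}_i') = \boldsymbol{\mathcal{B}}_i(\boldsymbol{D}_i)$ and $\boldsymbol{Z}_i' = \boldsymbol{Z}_i$. These are exactly the Lemma's hypotheses with $\boldsymbol{H}_i = \boldsymbol{I}$, so once more $\boldsymbol{\mathcal{P}}_{\mathrm{\Rmnum{3}}}$, and therefore $\boldsymbol{S}$, is unchanged.

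I expect the assembly itself to be routine; the genuine work lives in the two cited results, especially the invariance of the TPS feature map and bending-energy matrix under a joint rigid motion of shape and control points, established in the appendix. The one point demanding care is to keep straight how the coordinate change enters each model: for the affine warp it is carried by a nontrivial invertible lift $\boldsymbol{H}_i$ with the regularizer absent, whereas for the TPS the feature map is truly invariant ($\boldsymbol{H}_i = \boldsymbol{I}$) only because the control points are transformed together with the shape.
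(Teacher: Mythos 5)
Your proposal is correct and follows essentially the same route as the paper: the paper also obtains this proposition by combining the preceding Lemma (invariance of $\boldsymbol{\mathcal{P}}_{\mathrm{\Rmnum{3}}}$ under an invertible $\boldsymbol{H}_i$ with $\boldsymbol{Z}_i' = \boldsymbol{Z}_i \boldsymbol{H}_i^{\top}$) with the TPS invariance result of Appendix \ref{appendix. apply coordinate transformations to the TPS warp} and the affine discussion of Section \ref{subsection: affine GPA. coordinate transformation of datum shapes}. The only cosmetic difference is that for the affine case the paper argues via uniqueness of the orthogonal projector onto $\mathbf{Range}(\boldsymbol{\tilde{D}}_i^{\top})$, whereas you exhibit the homogeneous lift $\boldsymbol{H}_i$ explicitly, which is an equivalent and slightly more uniform presentation.
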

The above result indicates that we can parameterize the datum shapes in any coordinate frame, while the solution of formulation (\ref{eq: prtial shape Procrustes with warp models}) will give exactly the same optimal reference shape.
The optimal transformations of formulation (\ref{eq: prtial shape Procrustes with warp models}) will automatically accommodate the coordinate transformations.

\subsection{Reflection}
\label{subsection: reflection}

The reflection in the computed reference shape $\boldsymbol{S}^{\star}$ can be easily coped with by simply flipping the sign of one row in $\boldsymbol{S}^{\star}$.
Let
$\boldsymbol{S}^{\star} = 
\left[
\boldsymbol{s}_1,
\boldsymbol{s}_2,
\dots,
\boldsymbol{s}_d
\right]^{\top}$.
It is easy to verify that $\boldsymbol{S}^{\star}$ is still globally optimal if we
flip the signs of any $\boldsymbol{s}_k$ $\left( k \in \left[ 1:d \right] \right)$.
Assume there are no reflections between the datum shapes.
The reflection in $\boldsymbol{S}^{\star}$ can be detected by computing an orthogonal Procrustes between
$\boldsymbol{S}^{\star}$
and any one of $\boldsymbol{D}_i$ $\left(i \in \left[ 1:n \right]\right)$ by:
\begin{equation*}
\boldsymbol{\hat{R}}, \boldsymbol{\hat{t}}
=
\argmin_{\boldsymbol{R} \in \mathrm{O}\left(d\right),\, \boldsymbol{t}}\ 	\lVert 
\left(
\boldsymbol{R} \boldsymbol{D}_i + \boldsymbol{t} \boldsymbol{1}^{\top} - \boldsymbol{S}^{\star} 
\right) \boldsymbol{\Gamma}_{i}
\rVert_F^2
.
\end{equation*}
The optimal $\boldsymbol{t}$ is $\boldsymbol{\hat{t}} = - \frac{1}{\mathbf{nnz}(\boldsymbol{\Gamma}_{i})}
(
\boldsymbol{\hat{R}} \boldsymbol{D}_i - \boldsymbol{S}^{\star} 
) \boldsymbol{\Gamma}_{i} \boldsymbol{1}$.
Denote
$\boldsymbol{K}_i = 
\boldsymbol{\Gamma}_{i} - \frac{1}{\mathbf{nnz}(\boldsymbol{\Gamma}_{i})}
\boldsymbol{\Gamma}_{i}\boldsymbol{1}
\boldsymbol{1}^{\top}\boldsymbol{\Gamma}_{i}$.
Denote
$\boldsymbol{E} = \boldsymbol{D}_i \boldsymbol{K}_i 
\boldsymbol{K}_i^{\top}
{\boldsymbol{S}^{\star}}^{\top}$
and its SVD as
$
\boldsymbol{E} =
\boldsymbol{U} \boldsymbol{\Sigma} \boldsymbol{V}^{\top}
$.
Then the optimal $\boldsymbol{R}$ is
$\boldsymbol{\hat{R}} = \boldsymbol{V} \boldsymbol{U}^{\top}$.

If $\det (\boldsymbol{\hat{R}}) = 1$, there is no reflection.
If $\det (\boldsymbol{\hat{R}}) = -1$,
we let
$\boldsymbol{S}^{\star} = 
\left[
-\boldsymbol{s}_1,
\boldsymbol{s}_2,
\dots,
\boldsymbol{s}_d
\right]^{\top}$.
The correctness of such an approach is shown as follow.
The determinants satisfy
$\det \left(\boldsymbol{E}\right) = 
\det \left(\boldsymbol{U}\right)
\det \left( \boldsymbol{\Sigma} \right)
\det \left(\boldsymbol{V}^{\top}\right)
$
and
$\det (\boldsymbol{\hat{R}})
=
\det \left(\boldsymbol{U}\right) \det (\boldsymbol{V}^{\top})
$.
Because $\det \left( \boldsymbol{\Sigma} \right) > 0$,
we know that
$\det \left(\boldsymbol{E}\right)$ and $\det (\boldsymbol{\hat{R}})$ have the same sign.
By flipping the sign of one row in $\boldsymbol{S}^{\star}$,
we flip the sign of one column in $\boldsymbol{E}$,
which causes the flip of the sign of $\det \left(\boldsymbol{E}\right)$,
thus $\det (\boldsymbol{\hat{R}})$ as well.

\subsection{Pseudo-Code}
\label{subsection: Pseudo code}

Our algorithm is rather easy to implement.
We term the proposed deformable GPA framework as DefGPA.
The overall procedure is given as pseudo-code in Algorithm \ref{algorithm: pseudo codo of the deformable GPA}.
We release our Matlab implementation of DefGPA to foster future research in this direction.

\noindent\textbf{Code repository:}
\url{https://bitbucket.org/clermontferrand/deformableprocrustes/src/master/}

\begin{algorithm*}[th]
\DontPrintSemicolon

\SetKwBlock{KwInit}{Initialization}{end}

\SetKwFunction{chooseSmoothParamFun}{ChooseSmoothingParameter}

\SetKwFunction{initializeWarpModelFun}{InitializeWarpModel}

\SetKwFunction{PairwiseEuclideanProcrustesFun}{PairwiseSimilarityProcrustes}

\SetKwFunction{getdBottomEigenvectorsFun}{$d$-BottomEigenvectors}

\SetKwFunction{EstimateReferenceCovariancePriorFunc}{EstimateReferenceCovariancePrior}

\BlankLine	

	\KwData{$\{\boldsymbol{D}_i \in \mathbb{R}^{d \times m}, \boldsymbol{\Gamma}_i \in \mathbb{R}^{m \times m}\}$, $i \in \left[1:n\right]$
	\tcc*{Datum shapes and visibility indicators}}

	\KwOut{$\boldsymbol{S}^{\star}$ and $\{\boldsymbol{W}_i^{\star}\}$, $i \in \left[ 1:n \right]$
	\tcc*{Reference shape and transformation parameters}}

\BlankLine
	
\Begin{
	\For(\tcc*[f]{Users' choice}){$i \in \left[ 1:n \right]$}{
		$\boldsymbol{\mathcal{B}}_i(\cdot),\, \boldsymbol{Z}_i \gets $
		\initializeWarpModelFun
		\tcc*{Choose and initialize LBWs}
		
		$\mu_i \gets$ \chooseSmoothParamFun
		\tcc*{Choose LBWs' smoothing parameters}
		
		$\boldsymbol{\mathcal{B}}_i
		\gets
		\boldsymbol{\mathcal{B}}_i(\boldsymbol{D}_i)$  \tcc*[f]{Lift to feature space}
	}

	$\nu \gets m/n$
	\tcc*{By Proposition \ref{Proposition: the criterion to choose the tuning parameter of LBWs}}
	
	\For(\tcc*[f]{Shape completion from pairwise Procrustes}){$i \in \left[ 1:n \right]$}{
		\eIf(\tcc*[f]{A full shape}){$\mathbf{nnz}(\boldsymbol{\Gamma}_i) = n$}{$\boldsymbol{\mathfrak{D}}_i \gets \boldsymbol{D}_i$}
		(\tcc*[f]{A partial shape})
		{
		\For{$k \in \left[ 1:n \right]$}{
		$\hat{s}_{ik},\, \boldsymbol{\hat{R}}_{ik},\, \boldsymbol{\hat{t}}_{ik} \gets$
		\PairwiseEuclideanProcrustesFun{$\boldsymbol{D}_i$, $\boldsymbol{D}_k$}\;
		}
		$\boldsymbol{\mathfrak{D}}_i \gets$
		Equation (\ref{eq: full shape completion - mathematical modle})
		\tcc*[f]{Predict missing points and complete shape}
		}	
	}
	
	$\boldsymbol{\Lambda} \gets$
	\EstimateReferenceCovariancePriorFunc{$\boldsymbol{\mathfrak{D}}_1,\boldsymbol{\mathfrak{D}}_2, \dots, \boldsymbol{\mathfrak{D}}_n$}
	\tcc*[f]{Algorithm \ref{algorithm: pseudo codo of calculating reference covariance prior}}
	
	$\boldsymbol{\mathcal{P}} \gets$
	Equation (\ref{eq: Matrix P used for all cases})\;

	$\boldsymbol{X}^{\star} \gets$
	\getdBottomEigenvectorsFun{$\boldsymbol{\mathcal{P}} + \nu \boldsymbol{1} \boldsymbol{1}^{\top}$}
	\tcc*[f]{Solve for reference shape}

	$\boldsymbol{S}^{\star} \gets \sqrt{\boldsymbol{\Lambda}} {\boldsymbol{X}^{\star}}^{\top}$\;

	\For(\tcc*[f]{Solve for individual transformations}){$i \in \left[ 1:n \right]$}{
	$\boldsymbol{W}_i^{\star}
	\gets
	\left(
	\boldsymbol{\mathcal{B}}_i
	\boldsymbol{\Gamma}_i
	\boldsymbol{\mathcal{B}}_i^{\top}
	+
	\mu_i \boldsymbol{Z}_i^{\top}  \boldsymbol{Z}_i
	\right)^{-1}
	\boldsymbol{\mathcal{B}}_i \boldsymbol{\Gamma}_i
	{\boldsymbol{S}^{\star}}^{\top}
	$\;
}		
}
	\caption{DefGPA -- GPA with LBWs \label{algorithm: pseudo codo of the deformable GPA}}
\end{algorithm*}

\section{Experimental Results}
\label{section: experimental results}

We provide experimental results with respect to various deformable scenarios.
While our method adapts to general LBWs, we use the affine transformation and the TPS warp to show the results.
A brief introduction of the TPS warp is provided in Appendix \ref{Appendix: Thin-Plate Spline}.

\subsection{Experimental Setups}

The datasets used for evaluation are listed in Table \ref{table: List of datasets used for evaluation},
while samples are given in Figure~\ref{fig. samples of each dataset, with corresponding landmarks}.
In Table \ref{table: List of datasets used for evaluation},
``F'' standards for full shapes (without missing datum points), and ``P'' for partial shapes (with missing datum points).
These are public datasets coming from different papers and designed for different problems.
These datasets cover the case of structural deformations like facial expressions \citep{bartoli2013stratified}, deformable objects \citep{gallardo2017dense, bartoli2006towards}, and tissue deformations \citep{bilic2019liver}.
Both 2D and 3D cases are considered.
The Liver dataset is a mesh with 4004 corresponding vertices, while the others are 2D/3D images with corresponding landmarks.

\begin{table*}[t]
	\centering
	\begin{tabular*}{0.99\textwidth}{@{} @{\extracolsep{\fill}} lcccc r @{}}
		\toprule\midrule
		Dataset & Dim. & F/P & Landmarks & Shapes & Description \\
		\midrule
		Face & 2D & P & 43 - 68 & 10 & different facial expressions and camera perspectives \\
		Bag & 2D & F & 155 & 8 & deforming handbag \\
		Pillow & 2D & F & 69 & 10 & deforming pillow cover  \\
		LiTS & 3D & F & 54 & 8 & CT scans with fiducial landmarks \\
		Liver & 3D & F & 4004 & 10 & simulated deformations of a human 3D liver model \\
		ToyRug & 3D & F & 30 & 200 & 2D features and 3D points from a stereo rig \\
		\midrule\bottomrule
	\end{tabular*}
	\caption{List of datasets used for evaluation.}
	\label{table: List of datasets used for evaluation}
\end{table*}

\begin{figure*}[t]
	\centering
	\begin{subfigure}{0.18\textwidth}
		\includegraphics[width=1\textwidth]{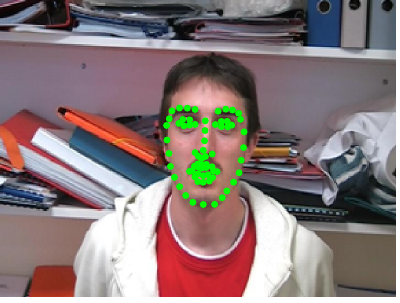}\caption{Face}
	\end{subfigure}
	\begin{subfigure}{0.18\textwidth}
		\includegraphics[width=1\textwidth]{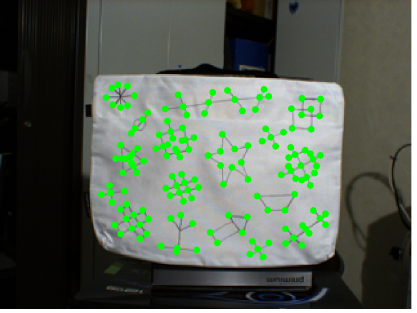}\caption{Bag}
	\end{subfigure}
	\begin{subfigure}{0.18\textwidth}
		\includegraphics[width=1\textwidth]{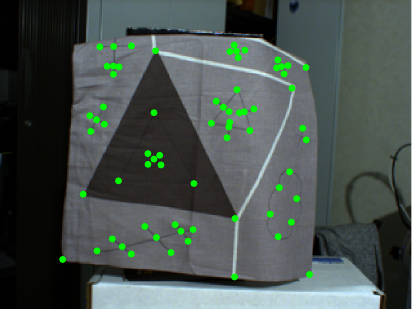}\caption{Pillow}
	\end{subfigure}
	\begin{subfigure}{0.19\textwidth}
		\includegraphics[width=1\textwidth]{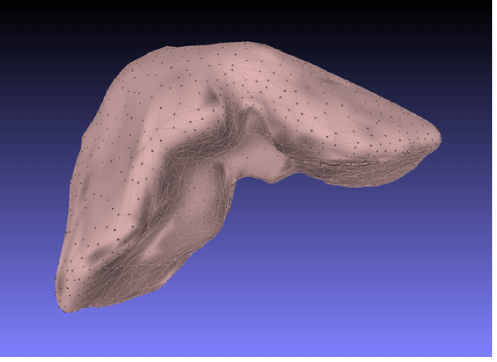}\caption{Liver}
	\end{subfigure}
	\\
	\begin{subfigure}{0.225\textwidth}
		\includegraphics[width=1\textwidth]{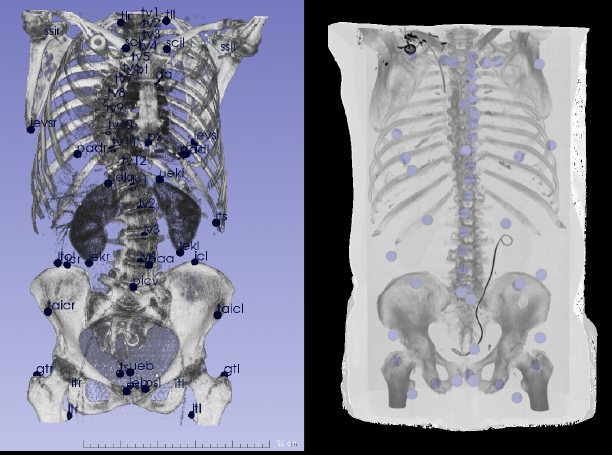}\caption{LiTS}
	\end{subfigure}
	\quad
	\begin{subfigure}{0.40\textwidth}
		\includegraphics[width=1\textwidth]{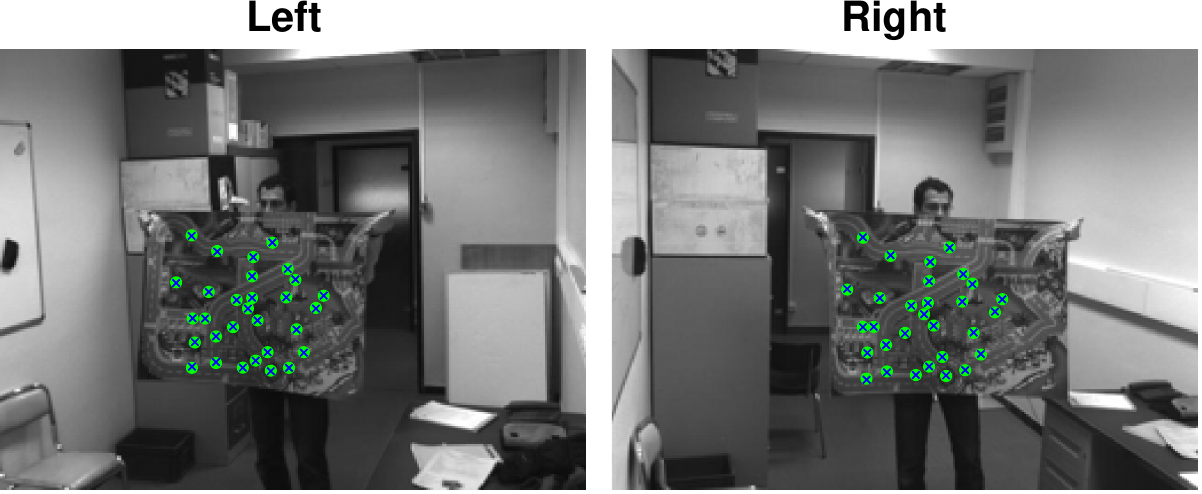}\caption{ToyRug}
	\end{subfigure}
	\caption{Sample image of each dataset.}
	\label{fig. samples of each dataset, with corresponding landmarks}
\end{figure*}

We use Euclidean GPA and Affine GPA with costs defined in the datum-space as benchmark algorithms.
These two methods are denoted as $\ast$EUC\_d and $\ast$AFF\_d,
where \_d means the methods minimize the datum-space cost.
The notion $\ast$ indicates they are benchmark methods.
For $\ast$EUC\_d and $\ast$AFF\_d,
we use the MATLAB implementation by \citep{bartoli2013stratified} based on closed-form initialization and Levenberg–Marquardt refinement.

Our methods are denoted by AFF\_r,
TPS\_r($3$), TPS\_r($5$), and TPS\_r($7$) respectively, where
\_r means the methods minimize the reference-space cost.
AFF\_r stands for GPA with the affine model, and TPS\_r($\cdot$) stands for GPA with the TPS warp.
We choose the control points of the TPS warp evenly along each principal axis of the datum shape.
We examine the cases of $3$, $5$ and $7$ points along each principal axis,
which results in $9$, $25$, $49$ overall control points for 2D datasets
and $27$, $125$, $343$ overall control points for the LiTS and Liver datasets.
The ToyRug dataset is almost flat, thus we assign two layers of control points along the first two principal axes which yields $18$, $50$, $98$ overall control points.

Our methods are implemented in MATLAB,
which constitutes a fair comparison against $\ast$EUC\_d and $\ast$AFF\_d.
The experiments are carried out by an
{Intel(R) Core(TM) i7-6700K CPU @ 4.00GHz $\times$ 8 CPU},
running Ubuntu 18.04.5 LTS.
The MATLAB version is R2020b.

\subsection{Evaluation Metrics}
\label{subsection: evaluation metrics used in the experiments}

\subsubsection{Landmark Residual}

We use RMSE\_r to denote the landmark residual defined in the reference-space,
and RMSE\_d the landmark residual in the datum-space.
These two metrics are defined as:
\begin{equation*}
\mathrm{RMSE\_r} = 
\sqrt{\frac{1}{\kappa} \sum_{i=1}^n\,
	\left\|
	\left(
	\mathcal{T}_i\left(\boldsymbol{D}_i\right)  - \boldsymbol{S}^{\star}
	\right)
	\boldsymbol{\Gamma}_{i}
	\right\|_{F}^2}
\end{equation*}
\begin{equation*}
\mathrm{RMSE\_d} = 
\sqrt{\frac{1}{\kappa} \sum_{i=1}^n\,
	\left\|
	\left(
	\boldsymbol{D}_i - \mathcal{T}_i^{-1}\left(\boldsymbol{S}^{\star}\right)
	\right)
	\boldsymbol{\Gamma}_{i}
	\right\|_{F}^2},
\end{equation*}
where $\kappa = \sum_{i=1}^{n} \mathbf{nnz} \left( \boldsymbol{\Gamma}_i \right)$.
If the transformation model is invertible,
it is easy to derive RMSE\_r and RMSE\_d from one another.
However, this is typically not the case for LBWs,
\textit{e.g.}~the TPS warp is not invertible.
We propose to use control points and their images as samples to fit an inverse TPS warp.
In specific,
let $\mathcal{T}\left(\cdot\right)$ be a TPS warp,
and $\boldsymbol{c}_i$ $\left( i \in \left[ 1:l \right] \right)$ be its $l$ control points.
Let the images of these $l$ control points under $\mathcal{T}\left(\cdot\right)$ be
$\boldsymbol{c}'_i = \mathcal{T} \left(\boldsymbol{c}_i\right)$
$\left( i \in \left[ 1:l \right] \right)$.
Then in essence $\mathcal{T}\left(\cdot\right)$ is a regression model obtained by fitting the datum pairs
$\left(\boldsymbol{c}_i, \boldsymbol{c}'_i\right)$
$\left( i \in \left[ 1:l \right] \right)$
with $\boldsymbol{c}_i$ being the input and $\boldsymbol{c}'_i$ being the output.
Therefore the inverse of $\mathcal{T}\left(\cdot\right)$, denoted by $\mathcal{T}^{-1}\left(\cdot\right)$, can be defined by fitting the pairs
$\left(\boldsymbol{c}'_i, \boldsymbol{c}_i\right)$
$\left( i \in \left[ 1:l \right] \right)$,
with $\boldsymbol{c}'_i$ being the input and $\boldsymbol{c}_i$ being the output.
For the TPS warp, this is realized by letting $\boldsymbol{c}'_i$ be the set of control points of $\mathcal{T}^{-1}\left(\cdot\right)$,
and computing the warp parameters using the relation $\mathcal{T}^{-1}\left(\boldsymbol{c}'_i\right) = \boldsymbol{c}_i$.

\subsubsection{Cross-Validation Error}

The warp models can overfit the data by using a small enough TPS smoothing parameter.
We quantify this behavior by the Cross-Validation Error (CVE) defined as:
\begin{equation*}
\mathrm{CVE} = 
\sqrt{\frac{1}{\kappa} \sum_{i=1}^n\,
	\left\|
	\left(
	\boldsymbol{\hat{S}}_{i} - \boldsymbol{S}^{\star}
	\right)
	\boldsymbol{\Gamma}_{i}
	\right\|_{F}^2}
,
\end{equation*}
where the \textit{predicted reference shape} $\boldsymbol{\hat{S}}_{i}$ is computed by the $G$-fold cross-validation as follow.
We group all $m$ points indexed as $1,2,\dots,m$ as $G$ mutually exclusive subsets as:
\begin{equation*}
\underbracket{\boldsymbol{p}_1,\cdots, \boldsymbol{p}_N}_{g_1},\,
\underbracket{\boldsymbol{p}_{N+1},\cdots,
	\boldsymbol{p}_{2N}}_{g_2}
\,\cdots\,
\underbracket{\boldsymbol{p}_{(G-1)N + 1}, \cdots, \boldsymbol{p}_{m}}_{g_G}.
\end{equation*}
Each subset has $N$ points except the subset $g_G$ which contains the remaining left.
We index the points $g_k$ in shape $\boldsymbol{D}_i$ as $\boldsymbol{D}_i \left( g_k \right)$,
and the points in shape $\boldsymbol{\hat{S}}_{i}$ as $\boldsymbol{\hat{S}}_{i} \left( g_k \right)$.
Now for each subset $g_k$ $\left(k = \left[ 1:G \right]\right)$,
we solve the GPA with all the points except those in $g_k$.
This solution is denoted as $\boldsymbol{S}_{g_k}^{\star}$ for the reference shape and $\mathcal{T}_{g_k : i}$ $\left(i \in \left[ 1: n\right]\right)$ for the transformations.
Then for each datum shape, we predict the positions of points in $\boldsymbol{D}_i \left( g_k \right)$ by the transformation $\mathcal{T}_{g_k : i} \left( \boldsymbol{D}_i \left( g_k \right) \right)$.
We correct the gauge of $\mathcal{T}_{g_k : i} \left( \boldsymbol{D}_i \left( g_k \right) \right)$ with the Euclidean Procrustes between $\boldsymbol{S}_{g_k}^{\star}$ and $\boldsymbol{S}^{\star}$ (by eliminating points $g_k$ in $\boldsymbol{S}^{\star}$). Denote such a solution to be $\boldsymbol{R}_{g_k}$ and $\boldsymbol{t}_{g_k}$.
Finally we set
$\boldsymbol{\hat{S}}_{i} \left( g_k \right)
=
\boldsymbol{R}_{g_k}
\mathcal{T}_{g_k : i} \left( \boldsymbol{D}_i \left( g_k \right) \right)
+
\boldsymbol{t}_{g_k} \boldsymbol{1}^{\top}
$.
Repeating the above procedure for all $G$ subsets, we obtain the predicted reference shape $\boldsymbol{\hat{S}}_{i}$.

The CVE resembles the RMSE, except that it handles overfitting.
For the Face, Bag, Pillow, LiTS and ToyRug dataset,
we set $N=1$.
For the Liver dataset, we set $N=40$ to cope with the larger dataset size and dimension.

\subsection{Thin-Plate Spline Smoothing Parameters}

\begin{figure*}[t]
	\centering
	\begin{subfigure}{.47\textwidth}
		\centering
		\includegraphics[width=1\textwidth]{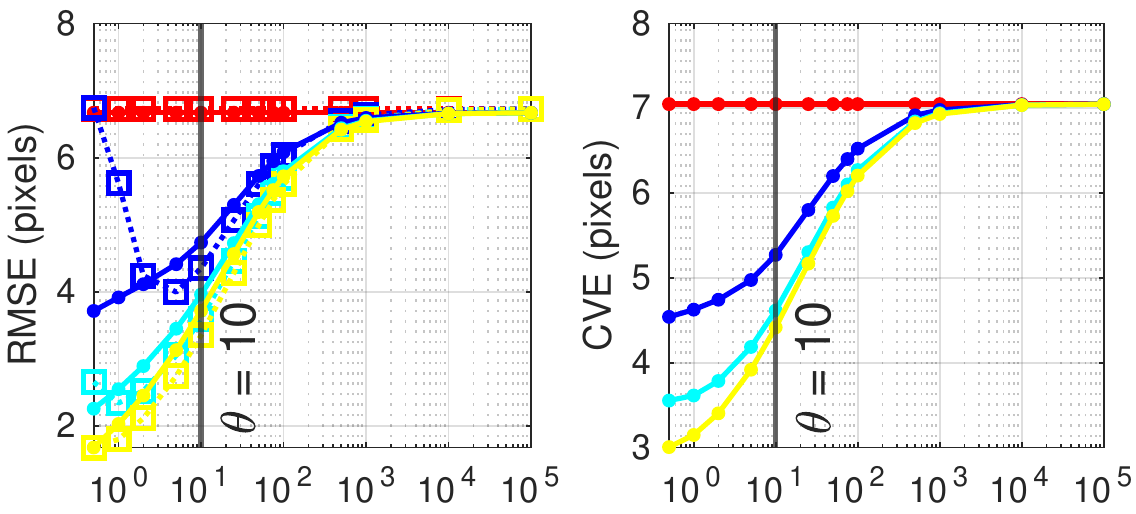}
		\caption{Face}
	\end{subfigure}
	\hfil
	\begin{subfigure}{.47\textwidth}
		\centering
		\includegraphics[width=1\textwidth]{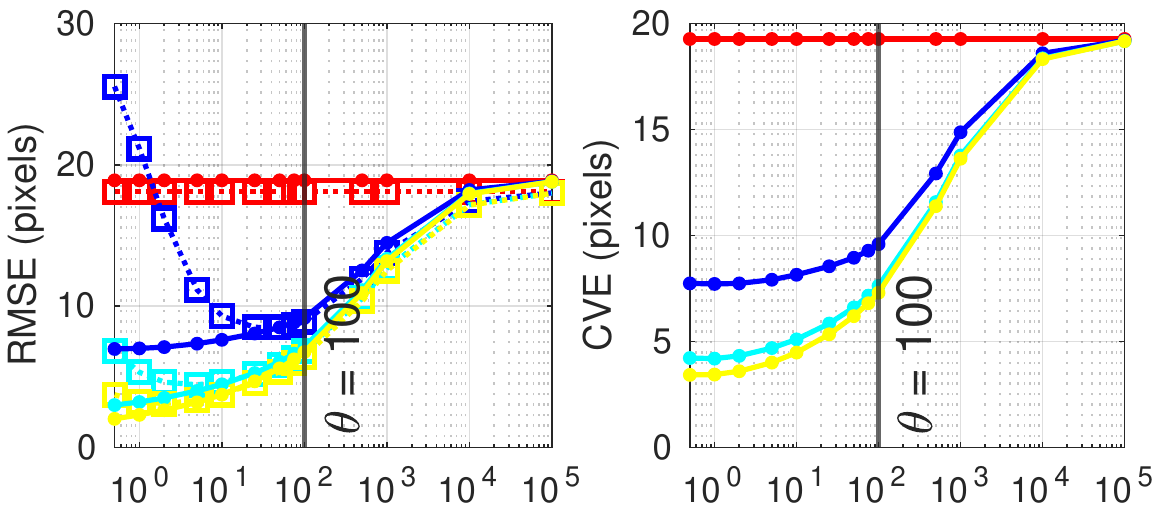}
		\caption{Bag}
	\end{subfigure}
	\begin{subfigure}{.47\textwidth}
		\centering
		\includegraphics[width=1\textwidth]{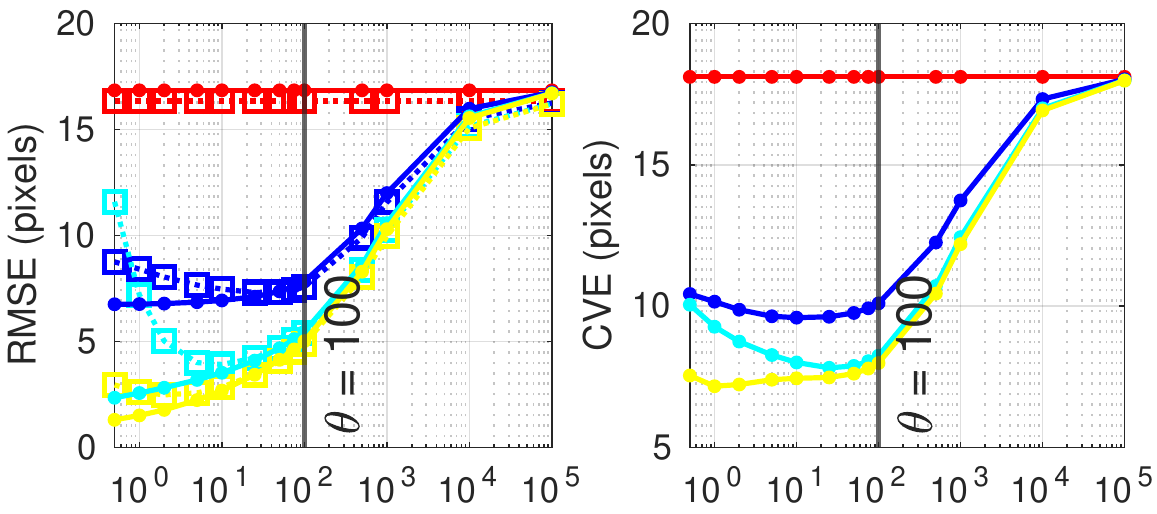}
		\caption{Pillow}
	\end{subfigure}
	\hfil
	\begin{subfigure}{.47\textwidth}
		\centering
		\includegraphics[width=1\textwidth]{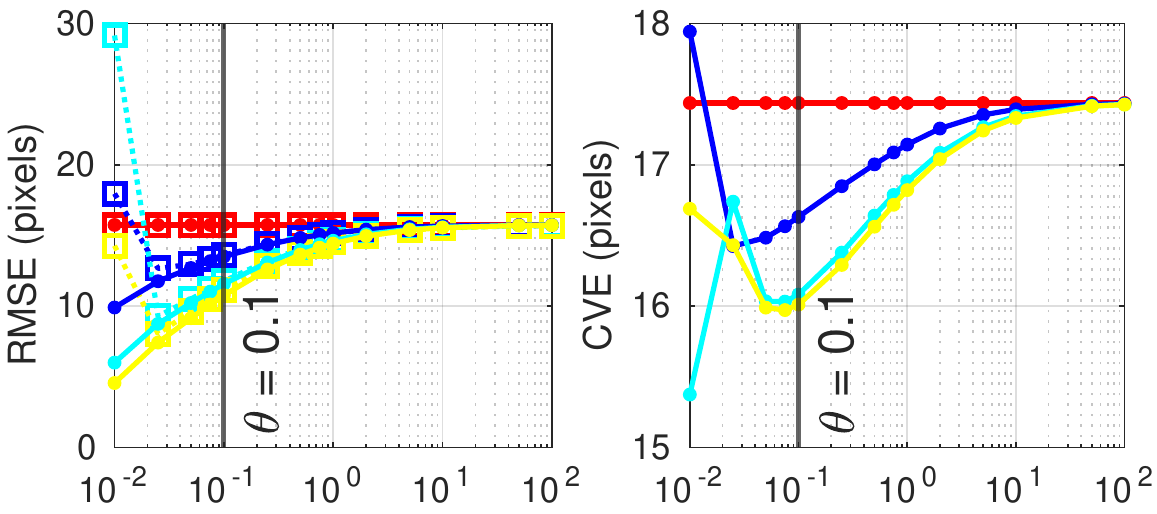}
		\caption{LiTS}
	\end{subfigure}
	\begin{subfigure}{.47\textwidth}
		\centering
		\includegraphics[width=1\textwidth]{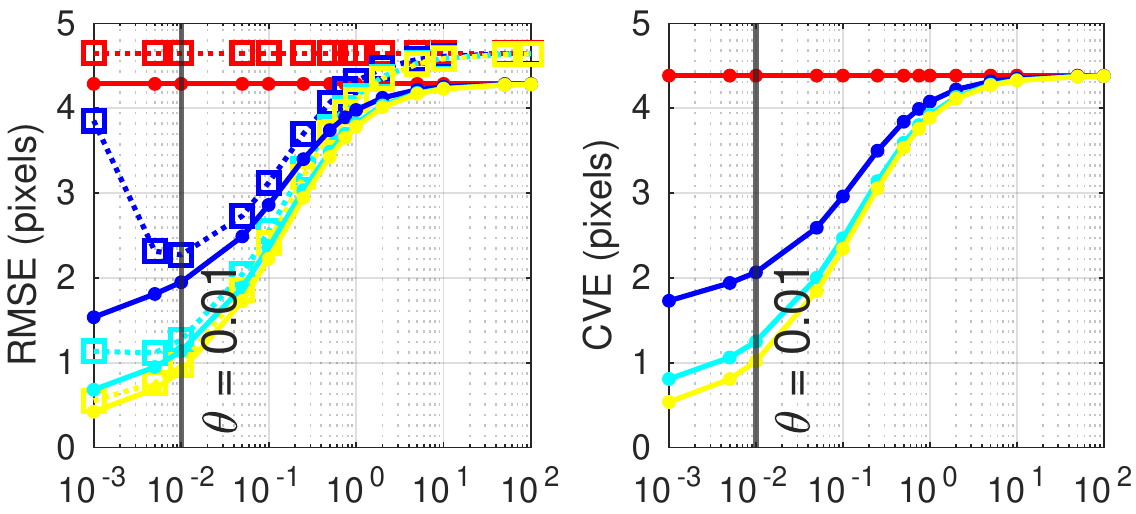}
		\caption{Liver}	
	\end{subfigure}
	\hfil
	\begin{subfigure}{.47\textwidth}
		\centering
		\includegraphics[width=1\textwidth]{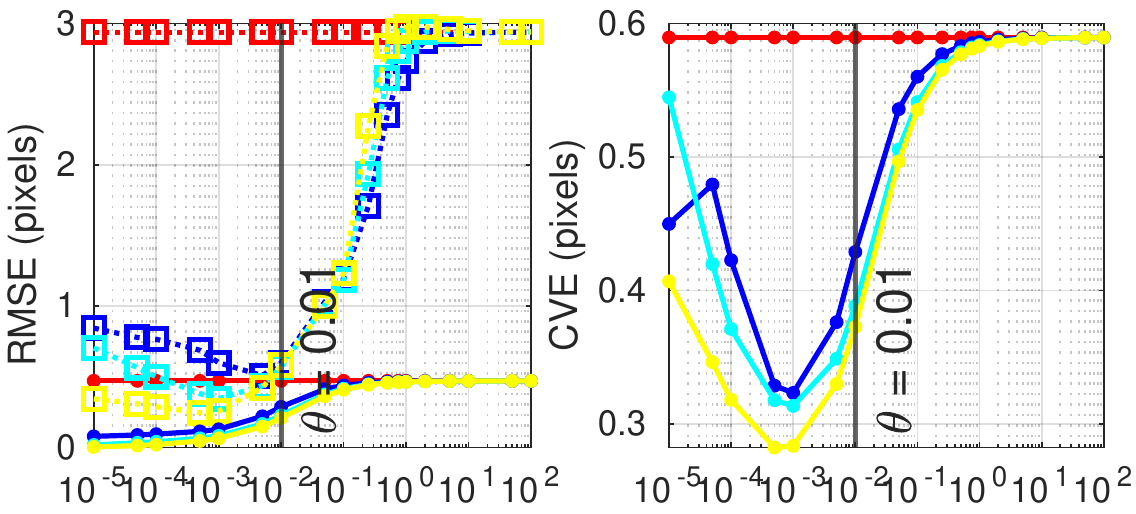}
		\caption{ToyRug}	
	\end{subfigure}
	\begin{subfigure}{.95\textwidth}
		\centering
		\includegraphics[width=1\textwidth]{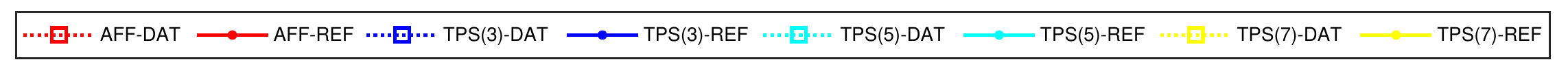}
	\end{subfigure}
	\caption{The choice of TPS smoothing parameters. The Root-Mean-Squared-Error (RMSE), Rigidity-Score (RS), and Cross-Validation-Error (CVE) are plotted with respect to different $\theta$ values (in horizontal axes).}
	\label{fig. The RMSE and RS metric with respect to different TPS smoothing parameters.}
\end{figure*}

We set the TPS smoothing parameter
$\mu_i$ as $\mu_i = \mathbf{nnz} \left( \boldsymbol{\Gamma}_i\right) \theta$,
$\left(i \in \left[ 1:n \right]\right)$,
and adjust $\theta$ within the range
$\left[1e-5, 1e+5\right]$.
We report the RMSE\_r, RMSE\_d, and CVE with respect to different $\theta$ in Figure~\ref{fig. The RMSE and RS metric with respect to different TPS smoothing parameters.}.
The $\theta$ used to generate our results are marked as vertical lines, annotated with the chosen value.

In Figure~\ref{fig. The RMSE and RS metric with respect to different TPS smoothing parameters.}, the RMSE\_r decreases monotonically as $\theta$ decreases, as a smaller $\theta$ implies more flexibility.
However, a sufficiently small $\theta$ may cause overfitting. The overfitting in DefGPA is mainly twofold: the bias to the data, and the bias to the optimization criterion (\textit{i.e.,} the cost function). When overfitting happens, the optimization tends to fit the training data to the cost function as much as possible, thus actually creating misfitting to new data and other criteria.
This suggests that we can use the fitness to new data and other criteria to detect overfitting.
The idea of using new data is realized by cross-validation given as the CVE metric,
and that of using other criteria by the datum-space cost (denoted by RMSE\_d).
In Figure~\ref{fig. The RMSE and RS metric with respect to different TPS smoothing parameters.},
the overfitting is reflected as the divergence of the RMSE\_r and RMSE\_d metric,
as well as the slope change of the CVE curve.
In general, CVE has a positive slope if RMSE\_r and RMSE\_d agree with one another, and a negative slope if RMSE\_r and RMSE\_d diverge.

For 2D datasets, we choose
$\theta = 10$ for the Face, and
$\theta = 100$ for the Bag and Pillow datasets.
For 3D datasets, we choose $\theta = 0.1$ for the LiTS,
and $\theta = 0.01$ for the Liver and ToyRug datasets.
Although the bending energy term has different constants for 2D and 3D cases, the choice of $\theta$ is roughly stable within each category.

In terms of overfitting, we focus on the agreement of the trend of the RMSE\_r and RMSE\_d curves: whether they both increase or decrease.
There is always a gap between the RMSE\_r and RMSE\_d metrics, which stands for the asymmetry between different cost functions.
We will examine this asymmetry in Section \ref{exp_section: asymmetry in cost functions}.

\subsection{Results Based on Landmarks}

\subsubsection{Statistics}

We report in Table \ref{The residual error and computational time of each method on benchmark datasets} the RMSE\_d, RMSE\_r, CVE and the computational time per GPA problem.
The RMSE\_d, RMSE\_r and CVE are evaluated in pixels, and the time in seconds.
\begin{table*}[h]
	\centering
	\begin{tabular*}{0.99\textwidth}{@{} @{\extracolsep{\fill}} ll cccccc @{}}
		\toprule\midrule
		Dataset &   &  $\ast$EUC\_d & $\ast$AFF\_d & AFF\_r & TPS\_r($3$) & TPS\_r($5$) & TPS\_r($7$)  \\ 
		\midrule 
		Face  & RMSE\_d & 7.11 & 6.52 & 6.73 & 4.35 & 3.60 & 3.35 \\ 
		& RMSE\_r & 7.11 & 6.64 & 6.67 & 4.74 & 3.96 & 3.72 \\ 
		& CVE & 7.26 & 7.07 & 7.05 & 5.27 & 4.62 & 4.42 \\ 
		& Time & 0.6326 & 0.2113 & 0.0731 & 0.0759 & 0.0149 & 0.0191 \\ 
		Bag  & RMSE\_d & 30.23 & 17.73 & 18.08 & 8.91 & 6.80 & 6.42 \\ 
		& RMSE\_r & 30.23 & 19.78 & 18.90 & 9.16 & 7.12 & 6.75 \\ 
		& CVE & 30.48 & 20.19 & 19.27 & 9.59 & 7.62 & 7.30 \\ 
		& Time & 0.3785 & 0.0535 & 0.0328 & 0.0303 & 0.0218 & 0.0275 \\ 
		Pillow  & RMSE\_d & 23.44 & 16.23 & 16.34 & 7.58 & 5.39 & 4.89 \\ 
		& RMSE\_r & 23.44 & 17.96 & 16.84 & 7.82 & 5.47 & 5.02 \\ 
		& CVE & 24.01 & 19.26 & 18.11 & 10.10 & 8.25 & 7.99 \\ 
		& Time & 0.1862 & 0.0339 & 0.0253 & 0.0211 & 0.0140 & 0.0176 \\ 
		LiTS  & RMSE\_d & 19.85 & 15.43 & 15.71 & 13.60 & 11.76 & 11.13 \\ 
		& RMSE\_r & 19.85 & 15.84 & 15.74 & 13.50 & 11.57 & 10.77 \\ 
		& CVE & 20.50 & 17.19 & 17.44 & 16.63 & 16.08 & 16.01 \\ 
		& Time & 0.2144 & 0.0267 & 0.0245 & 0.0234 & 0.0387 & 0.1284 \\ 
		Liver  & RMSE\_d & 4.94 & 4.60 & 4.65 & 2.27 & 1.27 & 0.97 \\ 
		& RMSE\_r & 4.94 & 4.68 & 4.29 & 1.95 & 1.14 & 0.90 \\ 
		& CVE & 5.01 & 4.78 & 4.38 & 2.06 & 1.25 & 1.02 \\ 
		& Time & 249.3995 & 41.8242 & 9.7071 & 10.1804 & 11.0086 & 13.3759 \\ 
		ToyRug  & RMSE\_d & 0.88 & 0.42 & 2.94 & 0.60 & 0.57 & 0.58 \\ 
		& RMSE\_r & 0.88 & 2.41 & 0.47 & 0.29 & 0.23 & 0.21 \\ 
		& CVE & 0.98 & 12.01 & 0.59 & 0.43 & 0.39 & 0.37 \\ 
		& Time & 2.2158 & 0.2969 & 0.0792 & 0.1393 & 0.2122 & 0.4351 \\ 
		\midrule\bottomrule 
	\end{tabular*}
	\caption{The statistics of landmark registration.}
	\label{The residual error and computational time of each method on benchmark datasets}	
\end{table*}

\subsubsection{Visualization}

For each case, we visualize the optimal reference shape $\boldsymbol{S}^{\star}$,
along with the set of predicted reference shapes $\boldsymbol{\hat{S}}_{i}$ computed by the leave-$N$-out cross-validation.
The details of how to compute each $\boldsymbol{\hat{S}}_{i}$
have been given in Section \ref{subsection: evaluation metrics used in the experiments}.
Recall that $\boldsymbol{\hat{S}}_{i}$ is sensitive to overfitting, thus is better to benchmark fitness than using the transformed datum shapes $\mathcal{T}_i(\boldsymbol{D}_i)$ directly.

For the 2D datasets, we visualize $\boldsymbol{\hat{S}}_{i}$ and $\boldsymbol{S}^{\star}$ directly in Figure~\ref{fig. landmark visualization of 2D datasets. The fitness of each registration method.}.
The predicted reference shapes $\boldsymbol{\hat{S}}_{i}$ are plotted in blue,
and the optimal reference shape $\boldsymbol{S}^{\star}$ on top in green.
For the 3D datasets, we visualize the points in $\boldsymbol{S}^{\star}$, and encode the CVE with the marker size and color in Fig. \ref{fig. landmark visualization of 3D cases. The fitness of each registration method.}.
The bigger the marker and the redder the color, the larger the associated CVE value.

The result in
Figure~\ref{fig. landmark visualization of 2D datasets. The fitness of each registration method.}
shows that the landmark fitness is consistently improved by using the affine-GPA ($\ast$AFF\_d and AFF\_r) or the TPS warp based GPA compared with the Euclidean-GPA ($\ast$EUC\_d).
Both $\ast$AFF\_d and AFF\_r give similar results.
There is a clear improvement of TPS\_r($3$) over the affine-GPA.
However, the improvement of TPS\_r($7$) over TPS\_r($5$) is marginal.

The result in
Fig. \ref{fig. landmark visualization of 3D cases. The fitness of each registration method.}
agrees with the result in Figure~\ref{fig. landmark visualization of 2D datasets. The fitness of each registration method.},
which improves sequentially by using the $\ast$EUC\_d, AFF\_r, TPS\_r($3$), TPS\_r($5$) and TPS\_r($7$) methods.
Both the datum-space method $\ast$AFF\_d and the reference-space method AFF\_r produce similar results on the LiTS and Liver datasets.
The $\ast$AFF\_d method gives rather poor fitness on the ToyRug dataset.
This is due to the asymmetry in the cost function which be examined in detail in Section \ref{exp_section: asymmetry in cost functions}.

\begin{figure*}[t]
	\centering
	\begin{tabular*}{0.99\textwidth}{@{} @{\extracolsep{\fill}} cccccc @{}}
		\toprule\midrule
		\textbf{\scriptsize \textsf{$\ast$EUC\_d}} & \textbf{\scriptsize \textsf{$\ast$AFF\_d}} & \textbf{\scriptsize \textsf{AFF\_r}} & \textbf{\scriptsize \textsf{TPS\_r($3$)}} & \textbf{\scriptsize \textsf{TPS\_r($5$)}} & \textbf{\scriptsize \textsf{TPS\_r($7$)}}  \\
		\midrule
		\includegraphics[width=0.14\textwidth]{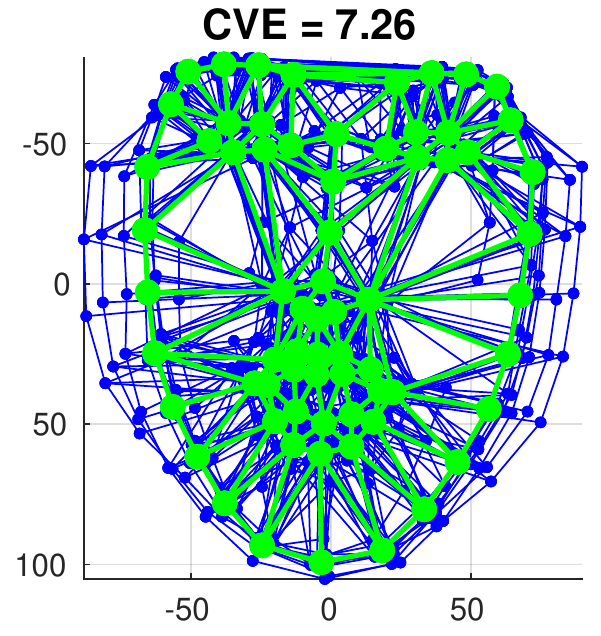} &
		\includegraphics[width=0.148\textwidth]{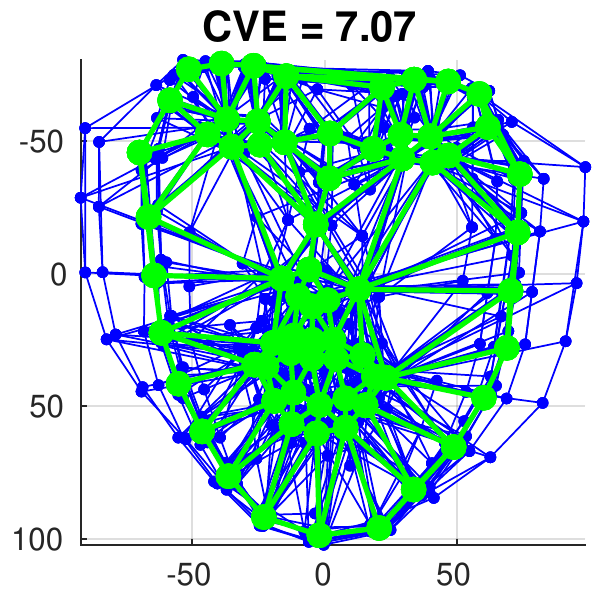} &
		\includegraphics[width=0.14\textwidth]{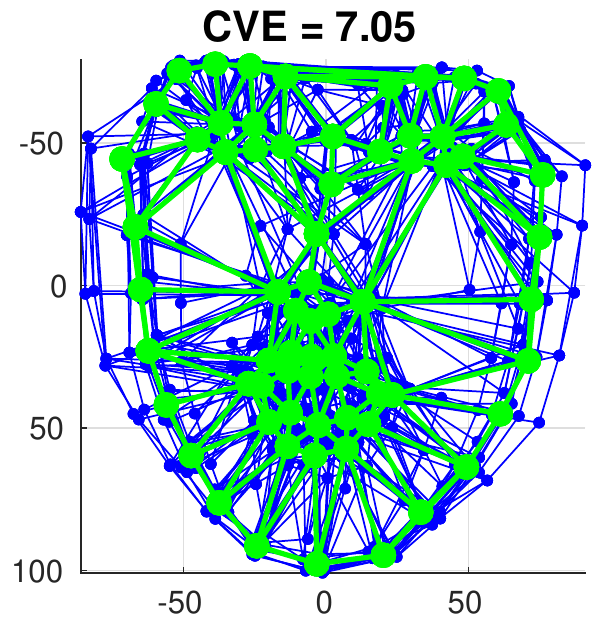} &
		\includegraphics[width=0.134\textwidth]{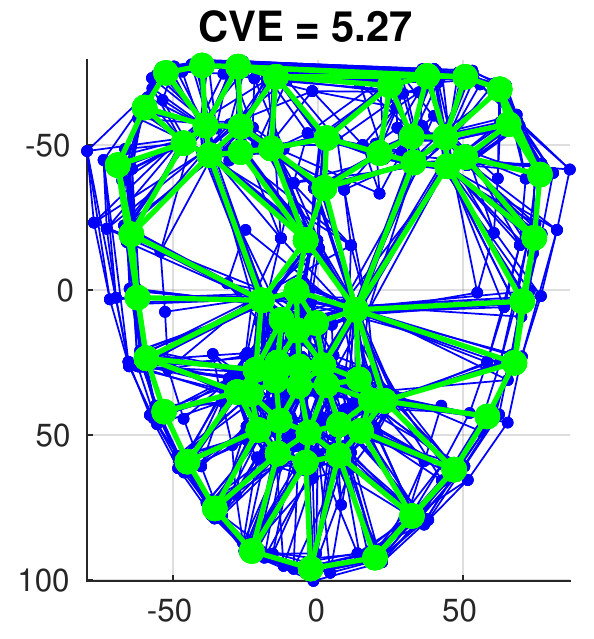} & 
		\includegraphics[width=0.134\textwidth]{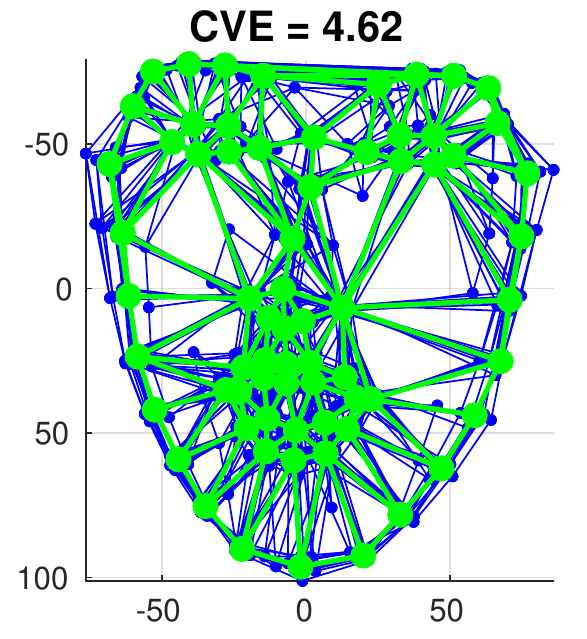} &
		\includegraphics[width=0.134\textwidth]{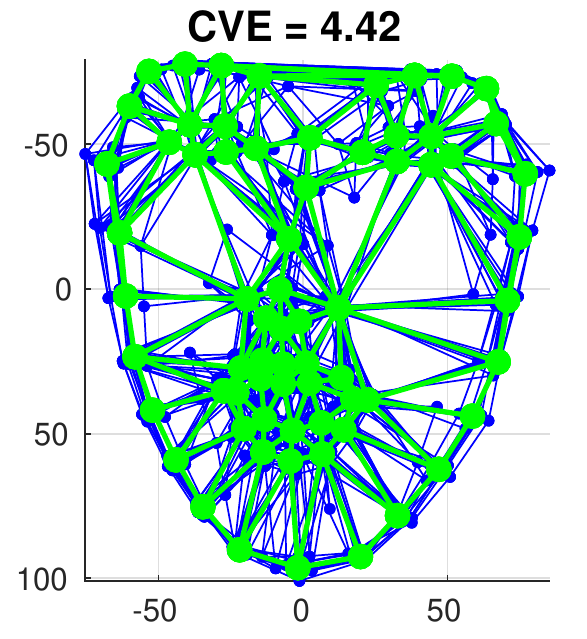} \\
		\includegraphics[width=0.14\textwidth]{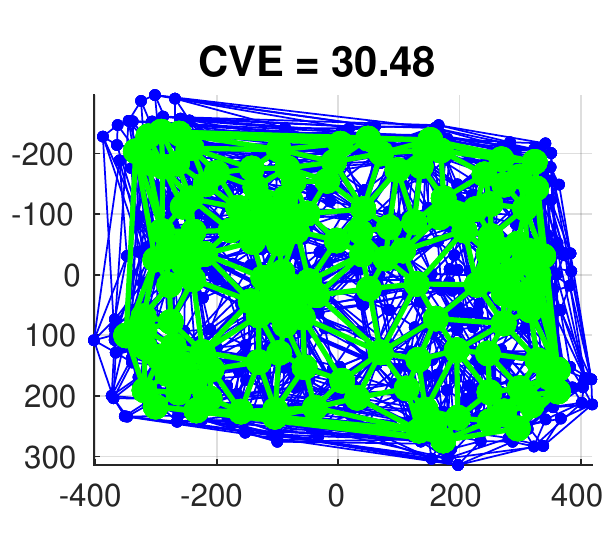}  &
		\includegraphics[width=0.134\textwidth]{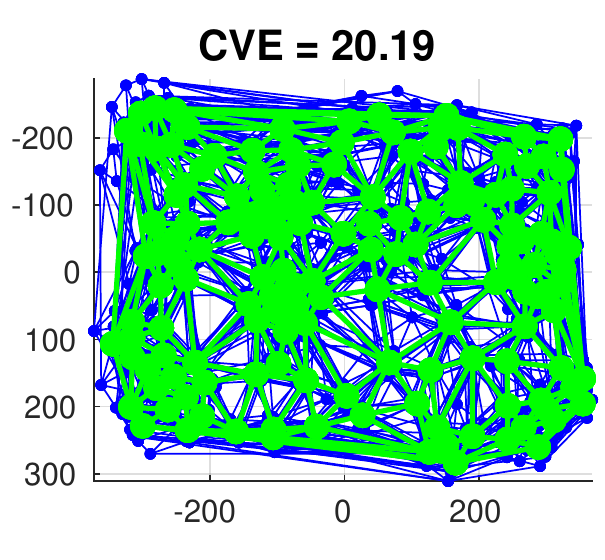} &
		\includegraphics[width=0.14\textwidth]{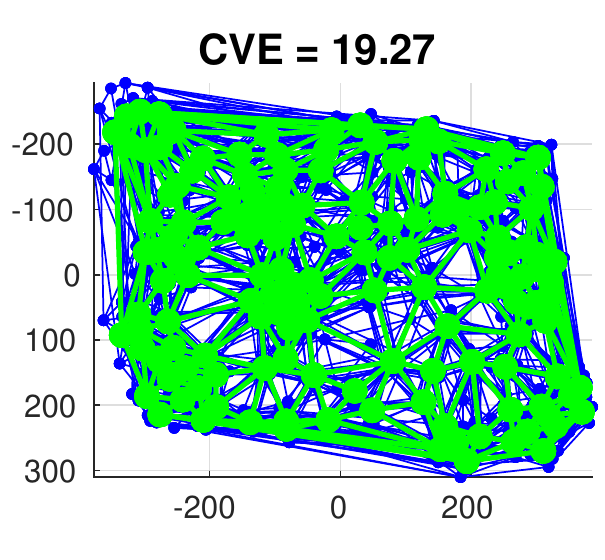}  &
		\includegraphics[width=0.14\textwidth]{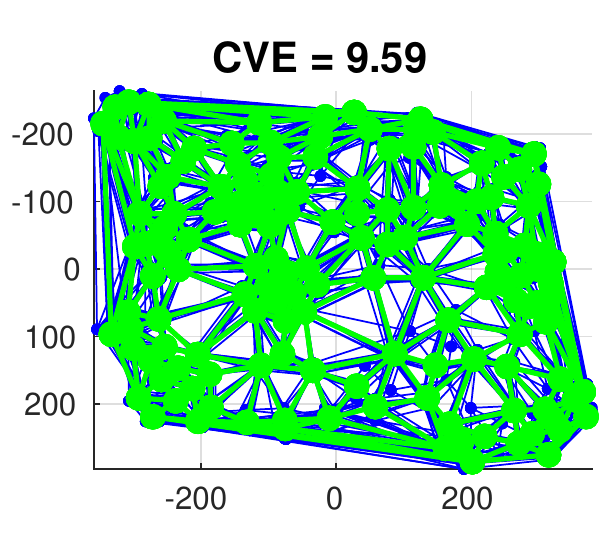} & 
		\includegraphics[width=0.14\textwidth]{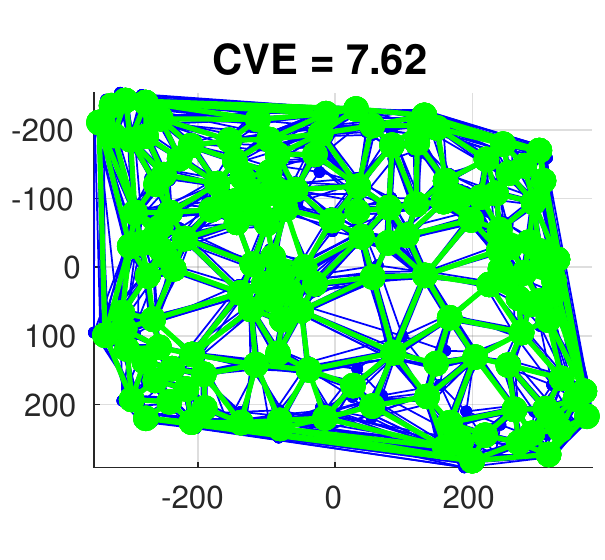}  &
		\includegraphics[width=0.14\textwidth]{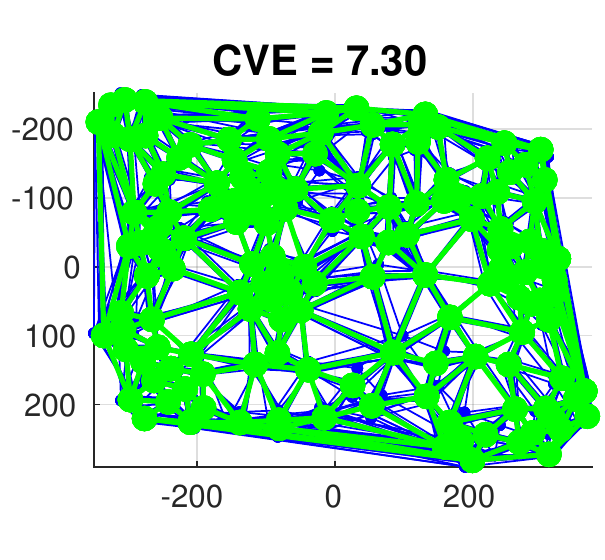}  \\
		\includegraphics[width=0.14\textwidth]{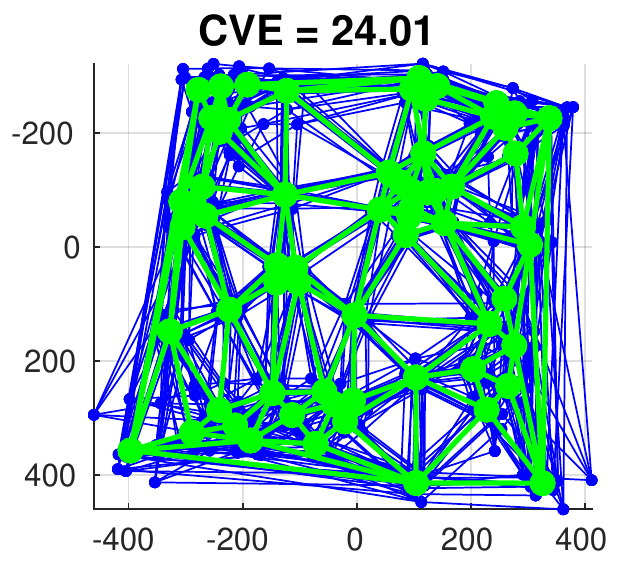}  &
		\includegraphics[width=0.143\textwidth]{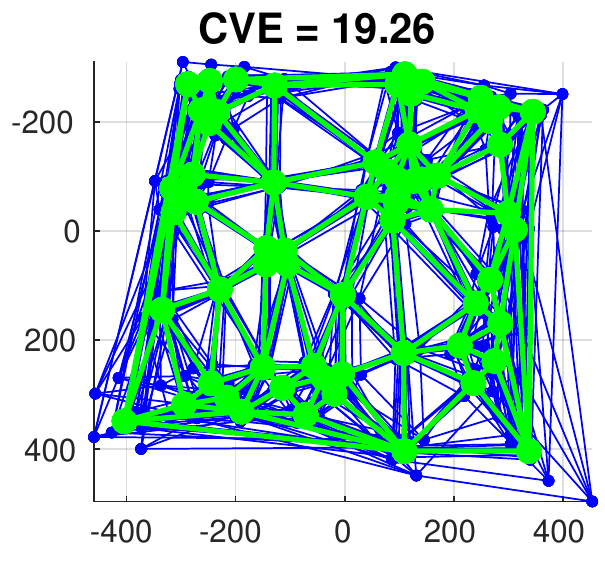} &
		\includegraphics[width=0.14\textwidth]{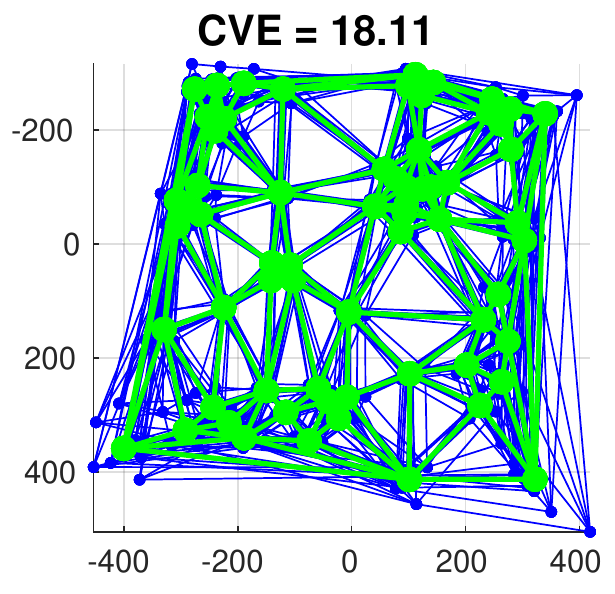}  &
		\includegraphics[width=0.134\textwidth]{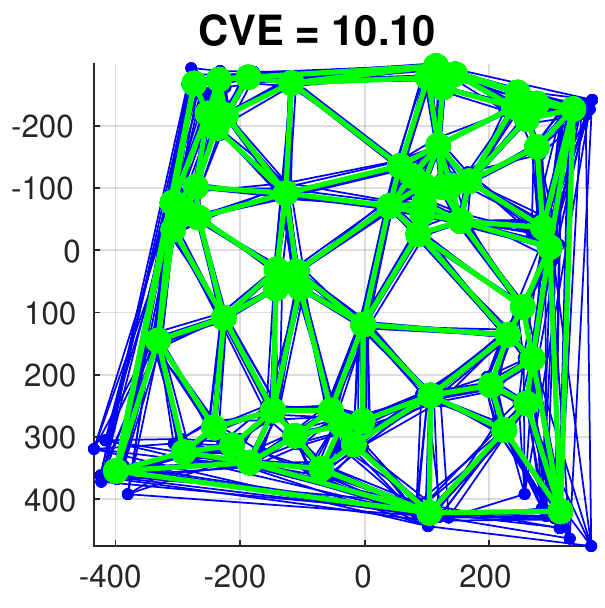} & 
		\includegraphics[width=0.13\textwidth]{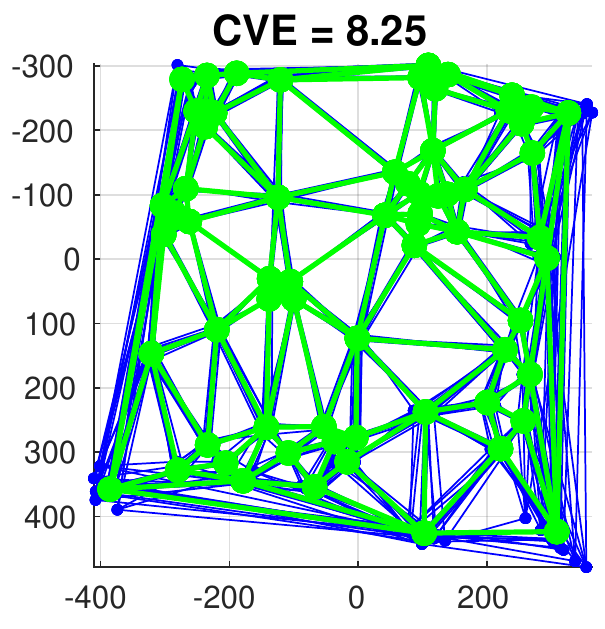}  &
		\includegraphics[width=0.13\textwidth]{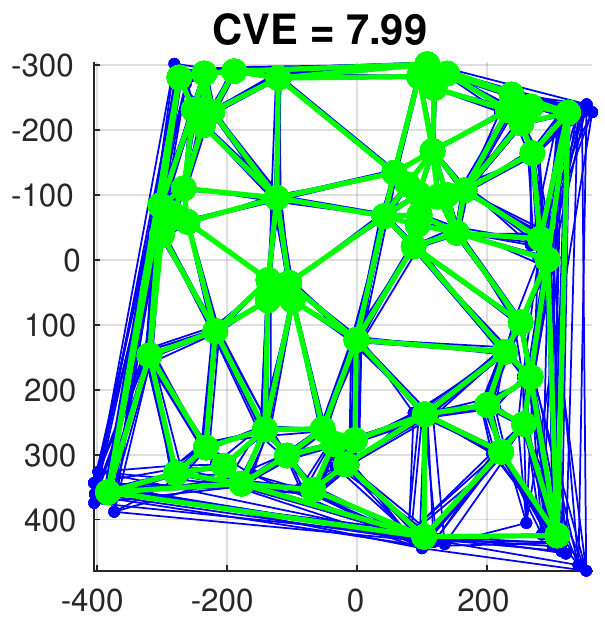}  \\
		\midrule\bottomrule
	\end{tabular*}
	\caption{The fitness of each methods visualized by the cross-validation on 2D datasets. The reference shape $\boldsymbol{S}^{\star}$ is plotted in green and the predicted reference shapes $\boldsymbol{\hat{S}}_{i}$ in blue.}
	\label{fig. landmark visualization of 2D datasets. The fitness of each registration method.}
\end{figure*}

\begin{figure*}[t]
	\centering
	\begin{tabular*}{0.99\textwidth}{@{} @{\extracolsep{\fill}} cccccc @{}}
		\toprule\midrule
		\textbf{\scriptsize \textsf{$\ast$EUC\_d}} & \textbf{\scriptsize \textsf{$\ast$AFF\_d}} & \textbf{\scriptsize \textsf{AFF\_r}} & \textbf{\scriptsize \textsf{TPS\_r($3$)}} & \textbf{\scriptsize \textsf{TPS\_r($5$)}} & \textbf{\scriptsize \textsf{TPS\_r($7$)}}  \\
		\midrule
		\includegraphics[width=0.145\textwidth]{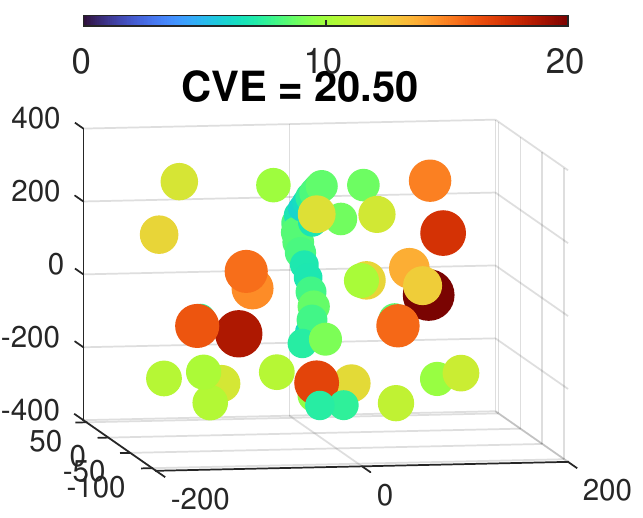}  &
		\includegraphics[width=0.145\textwidth]{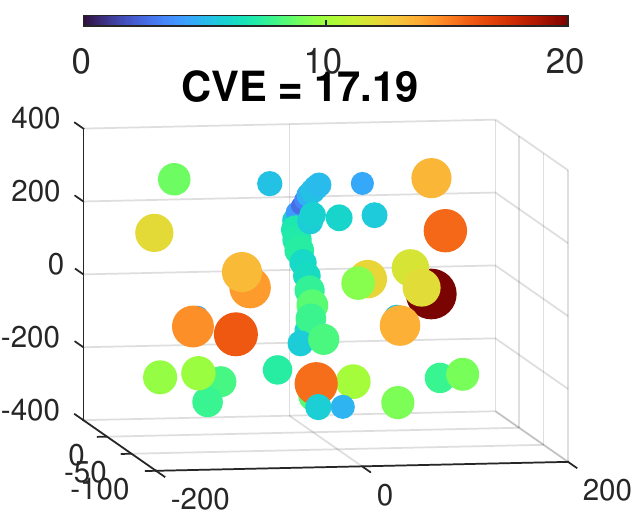} &
		\includegraphics[width=0.145\textwidth]{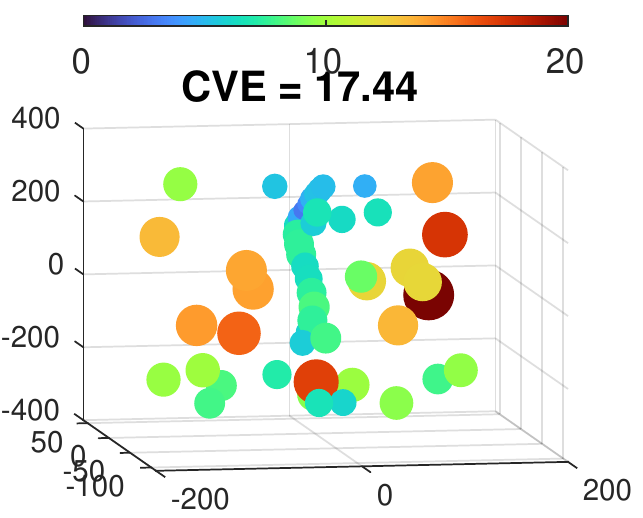}  &
		\includegraphics[width=0.145\textwidth]{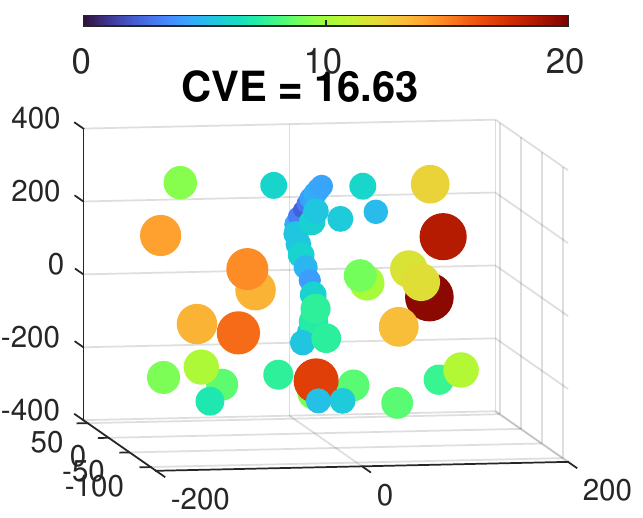} & 
		\includegraphics[width=0.145\textwidth]{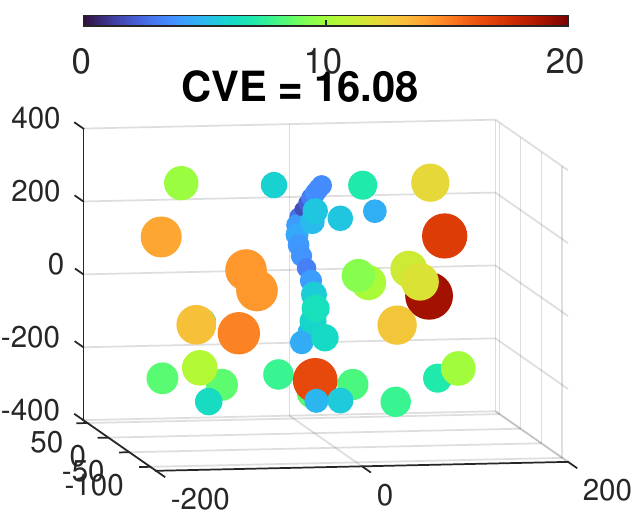}  &
		\includegraphics[width=0.145\textwidth]{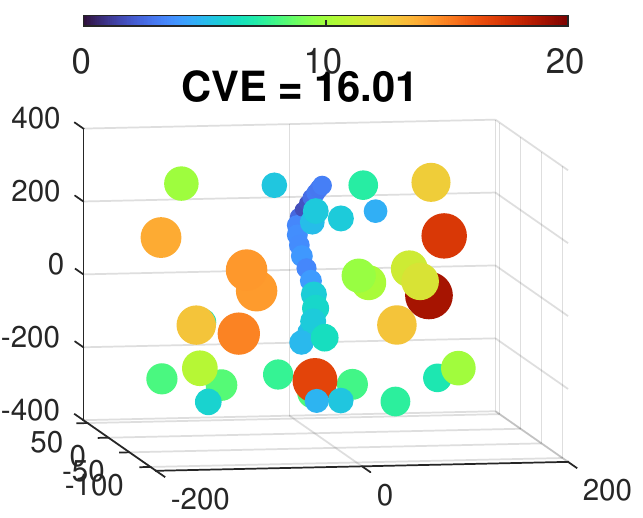}  \\
		\includegraphics[width=0.14\textwidth]{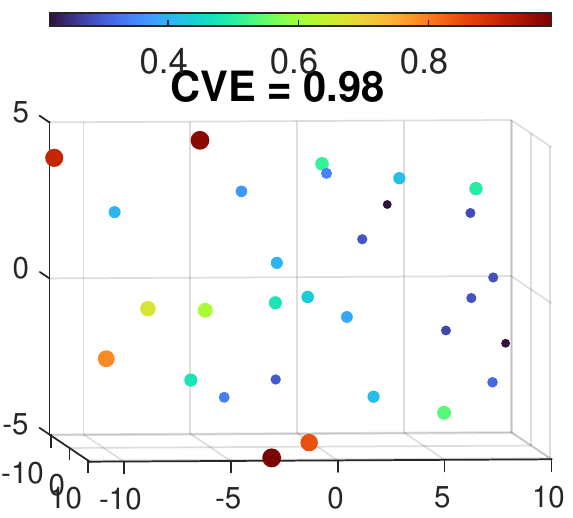}  &
		\includegraphics[width=0.14\textwidth]{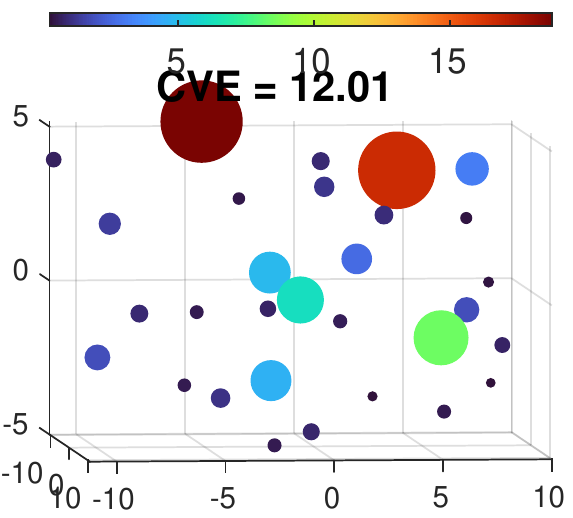} &
		\includegraphics[width=0.14\textwidth]{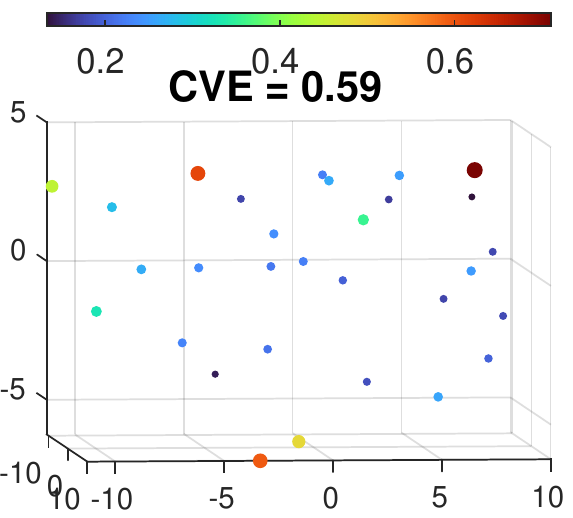}  &
		\includegraphics[width=0.14\textwidth]{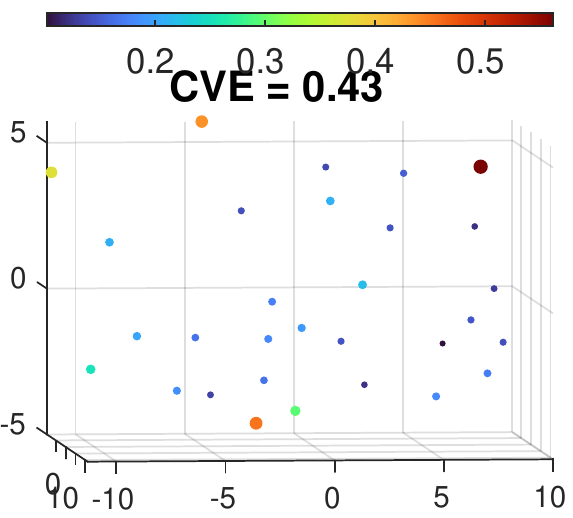} & 
		\includegraphics[width=0.14\textwidth]{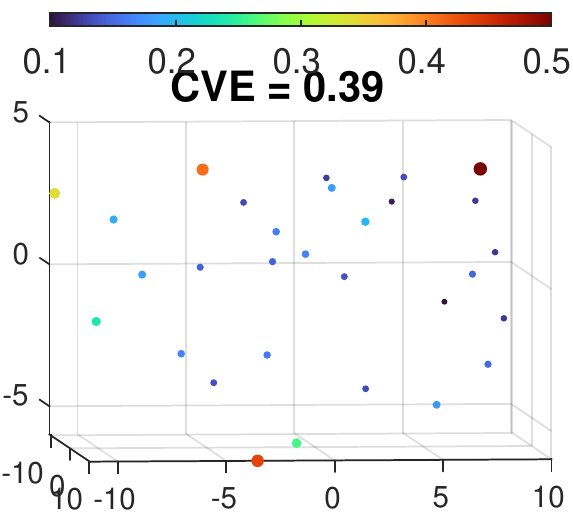}  &
		\includegraphics[width=0.14\textwidth]{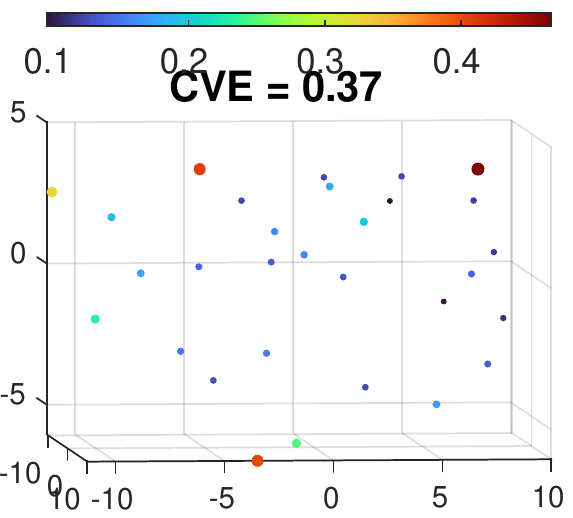}  \\
		\midrule\bottomrule
		\hfill
	\end{tabular*}
	\begin{tabular}{@{}  cccc @{}}
		\centering
		\begin{subfigure}{0.25\textwidth}
			\centering
			\includegraphics[width=1\textwidth]{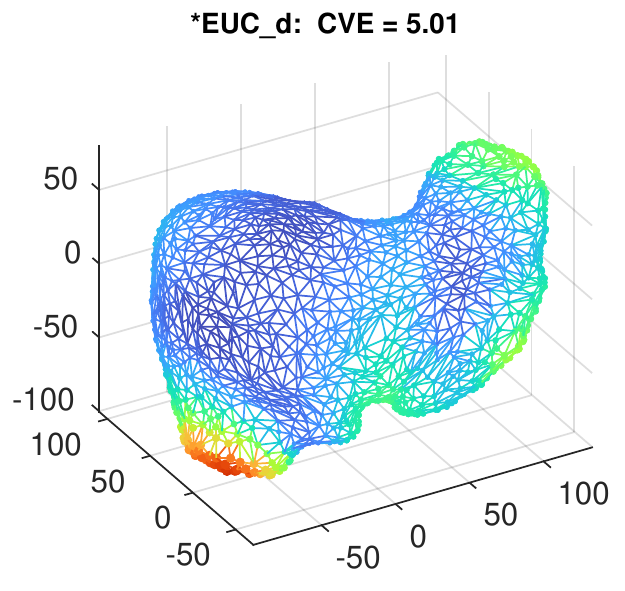}
		\end{subfigure}
		&
		\begin{subfigure}{0.25\textwidth}
			\centering
			\includegraphics[width=1\textwidth]{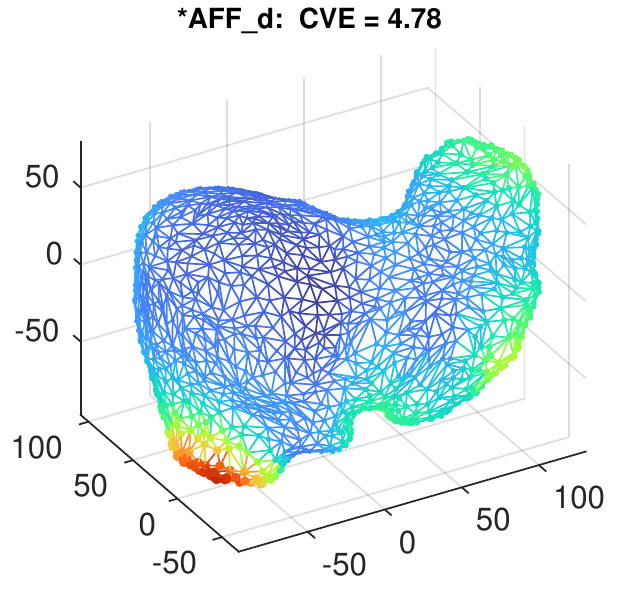}
		\end{subfigure}
		&
		\begin{subfigure}{0.25\textwidth}
			\centering
			\includegraphics[width=1\textwidth]{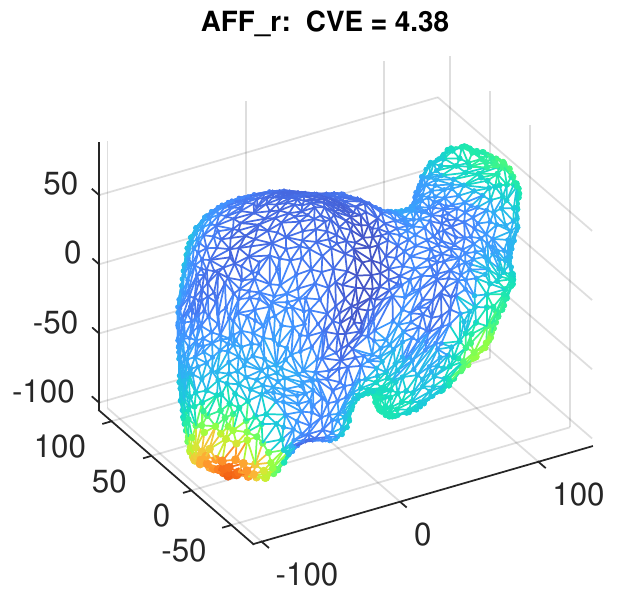}
		\end{subfigure}
		& 
		\multirow{2}{*}[40pt]{
			\includegraphics[width=0.035\textwidth]{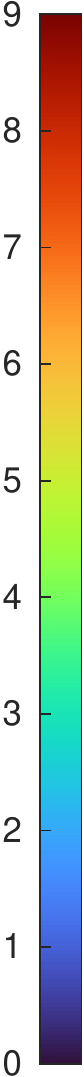}
		}
		\\
		\begin{subfigure}{0.25\textwidth}
			\centering
			\includegraphics[width=1\textwidth]{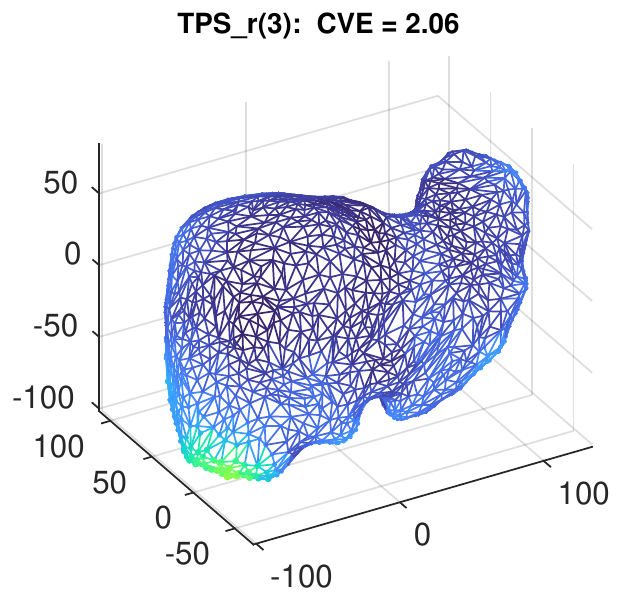}
		\end{subfigure}
		&
		\begin{subfigure}{0.25\textwidth}
			\centering
			\includegraphics[width=1\textwidth]{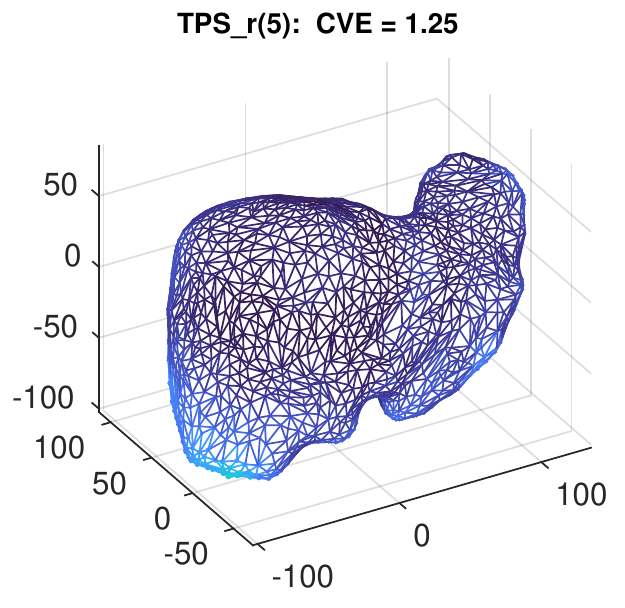}
		\end{subfigure}
		&
		\begin{subfigure}{0.25\textwidth}
			\centering
			\includegraphics[width=1\textwidth]{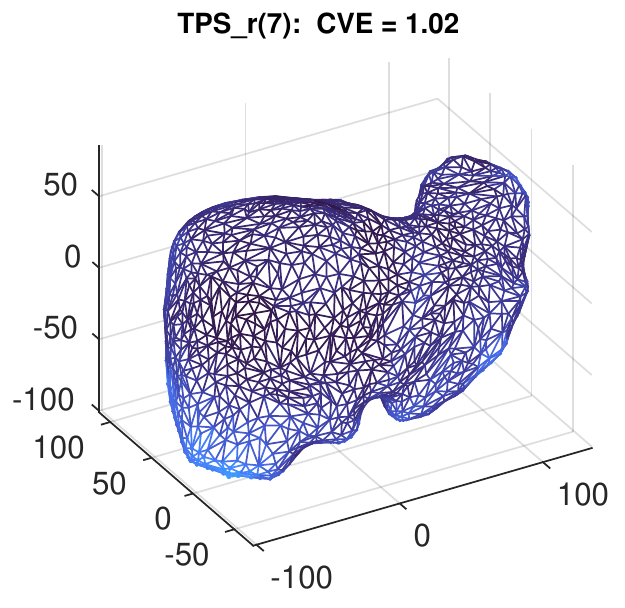}
		\end{subfigure}	
		&
		\\
	\end{tabular}
	\caption{The fitness of each methods visualized by the cross-validation on 3D datasets. We present the reference shape $\boldsymbol{S}^{\star}$ and encode the cross-validation error for each point with both the marker sizes and colors. The smaller the marker, the lower the error.}
	\label{fig. landmark visualization of 3D cases. The fitness of each registration method.}
\end{figure*}

\subsection{Results Based on Image Intensities}

For the 2D datasets, we visualize the image intensities under the computed warps.
The image pixel coordinates can be regarded as testing points (in addition to the landmarks) which are immune to the issue of overfitting.
We thus evaluate the fitness using the intensities in the transformed pixel coordinates.

Let $I_i$ be a sample image. For each pixel coordinates $[u,v]$, we apply the transformation $\mathcal{T}_i$ to obtain its target position $[u',v'] = \mathcal{T}_i \left( [u,v] \right)$. The \textit{transformed image} (or warped image) $I_i'$ is defined as
$I_i' (u',v') = I_i (u,v)$.
The warped image $I_i'$ resides in the coordinate frame of the reference shape, thus can be compared with other warped images.
Having computed $n$ warped sample images $I'_i$ $\left( i \in \left[1:n\right]\right)$,
we define the \textit{image intensity deviation} as an image
$\Delta I'(u,v) = \mathbf{std}\left\{I'_1(u,v), I'_2(u,v),\dots,I'_n(u,v)\right\}$.
In the case of RGB images, we compute the intensity deviation for all the three channels to obtain an RGB intensity deviation image.

We visualize the image intensity deviation for the Bag and Pillow datasets in Figure~\ref{fig. image intensity deviation, 2d images}.
The result for $\ast$AFF\_d is similar to AFF\_r, thus is not shown.
In Figure~\ref{fig. image intensity deviation, 2d images},
from left to right,
the methods $\ast$EUC\_d, AFF\_r, TPS\_r($3$), TPS\_r($5$) and TPS\_r($7$) consistently increase the fitness due to the increasing modeling capabilities (from rigid to affine and deformable models), shown as the shrinking of bright regions.

The image intensity deviation of the Face dataset is noisy due to perspective changes, so we visualize a typical warped sample image (with missing correspondences and perspective changes) and the corresponding transformed landmarks in Figure~\ref{fig. transformed images on Face dataset, 2d images}.
This result helps intuitively understand how each transformation model works: the rigid model preserves the distance; the affine model shears the image; the deformation models can deform the image nonlinearly (\textit{e.g.,}~the area around the nose and mouth in Figure~\ref{fig. transformed images on Face dataset, 2d images}).
The improved fitness using deformation models is obvious without saying.

\begin{figure*}[th]
	\centering
	\begin{subfigure}{.99\textwidth}
		\centering
		\includegraphics[width=0.18\textwidth]{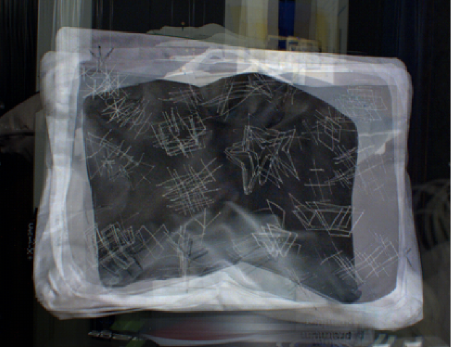}
		\includegraphics[width=0.18\textwidth]{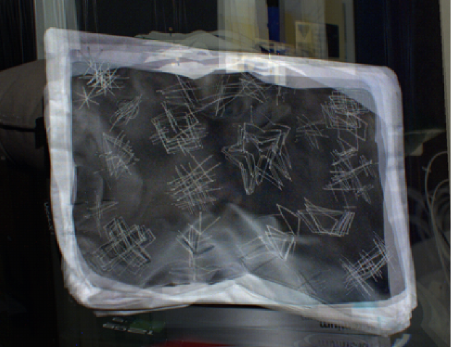}
		\includegraphics[width=0.18\textwidth]{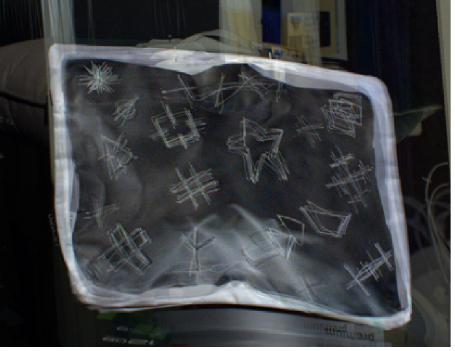}
		\includegraphics[width=0.18\textwidth]{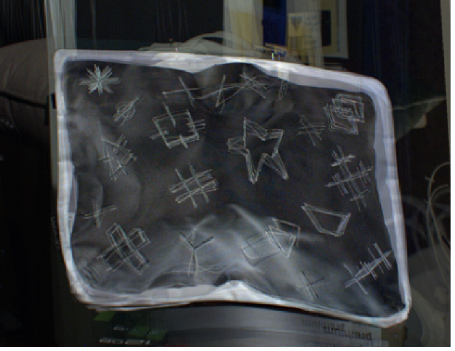}
		\includegraphics[width=0.18\textwidth]{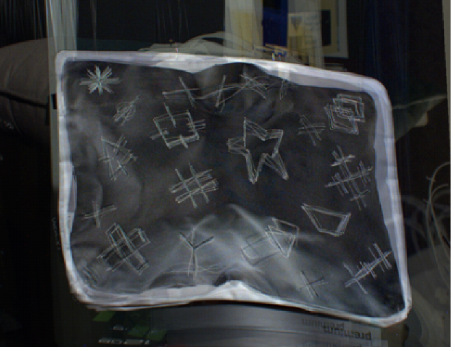}
		\caption{Bag}
	\end{subfigure}
	\begin{subfigure}{.99\textwidth}
		\centering
		\includegraphics[width=0.18\textwidth]{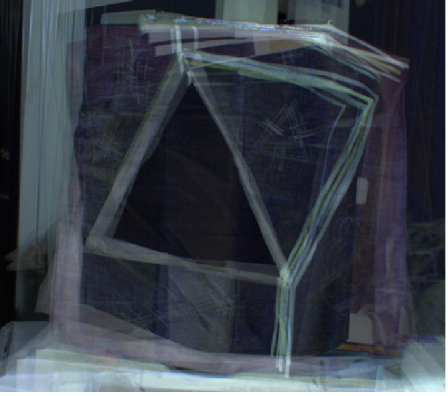}
		\includegraphics[width=0.18\textwidth]{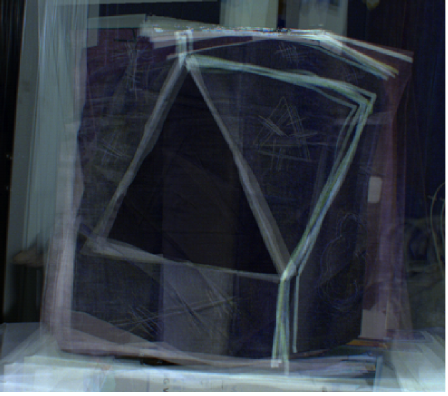}
		\includegraphics[width=0.18\textwidth]{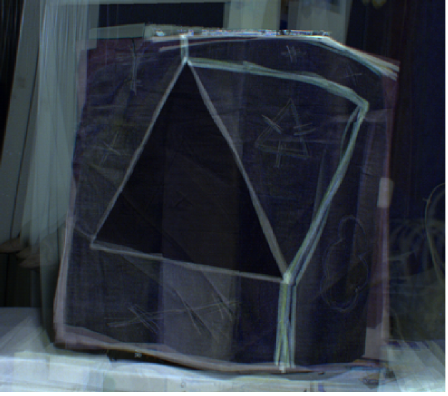}
		\includegraphics[width=0.18\textwidth]{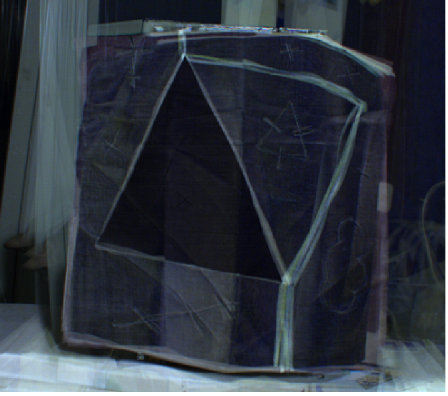}
		\includegraphics[width=0.18\textwidth]{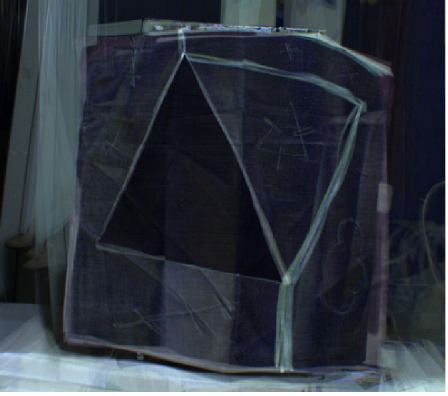}
		\caption{Pillow}
	\end{subfigure}
	\caption{The intensity deviation of the transformed datum images. From left to right are respectively the intensity deviation for $\ast$EUC\_d, AFF\_r, TPS\_r($3$), TPS\_r($5$) and TPS\_r($7$). The bright regions correspond to large intensity deviations, and the dark regions to small intensity deviations.}
	\label{fig. image intensity deviation, 2d images}
\end{figure*}

\begin{figure*}[th]
	\centering
	\begin{subfigure}{.99\textwidth}
		\centering
		\includegraphics[width=0.15\textwidth]{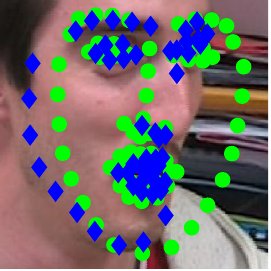}
		\includegraphics[width=0.15\textwidth]{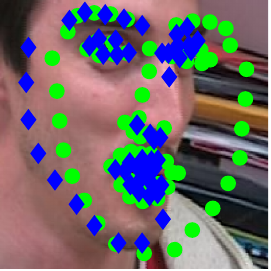}
		\includegraphics[width=0.15\textwidth]{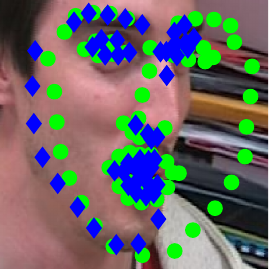}
		\includegraphics[width=0.15\textwidth]{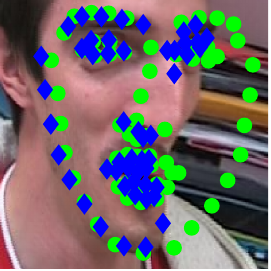}
		\includegraphics[width=0.15\textwidth]{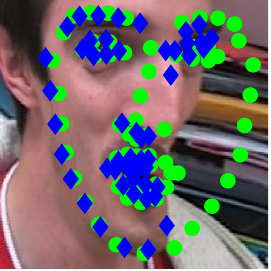}
		\includegraphics[width=0.15\textwidth]{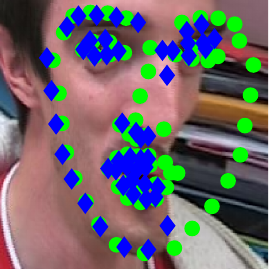}
	\end{subfigure}
	\caption{The transformed images using the computed warps. From left to right are respectively the warped images by $\ast$EUC\_d, $\ast$AFF\_d, AFF\_r, TPS\_r($3$), TPS\_r($5$) and TPS\_r($7$). The transformed landmarks are marked in blue, and the landmarks in the reference shape are marked in green.}
	\label{fig. transformed images on Face dataset, 2d images}
\end{figure*}

\subsection{The Asymmetry in Cost Functions}
\label{exp_section: asymmetry in cost functions}

\begin{figure*}[t]
	\centering
	\begin{subfigure}{.45\textwidth}
		\centering
		\includegraphics[width=0.99\textwidth]{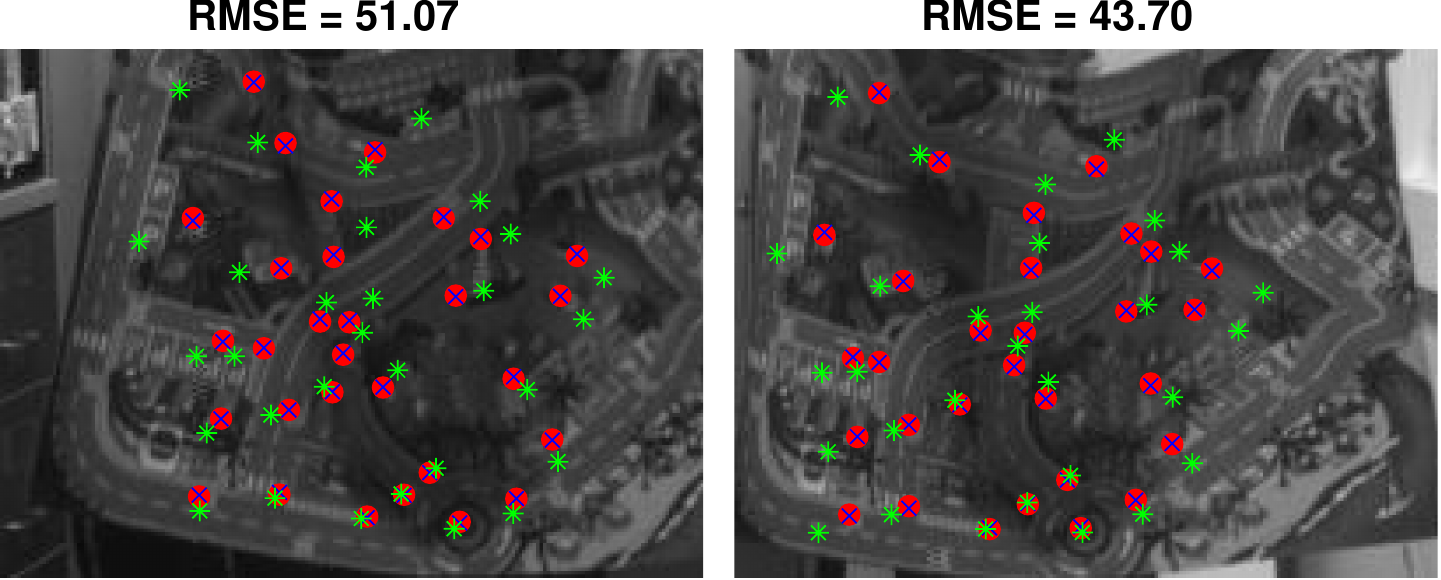}
		\caption{$\ast$EUC\_d}
	\end{subfigure}
	\hfil
	\begin{subfigure}{.45\textwidth}
		\centering
		\includegraphics[width=0.99\textwidth]{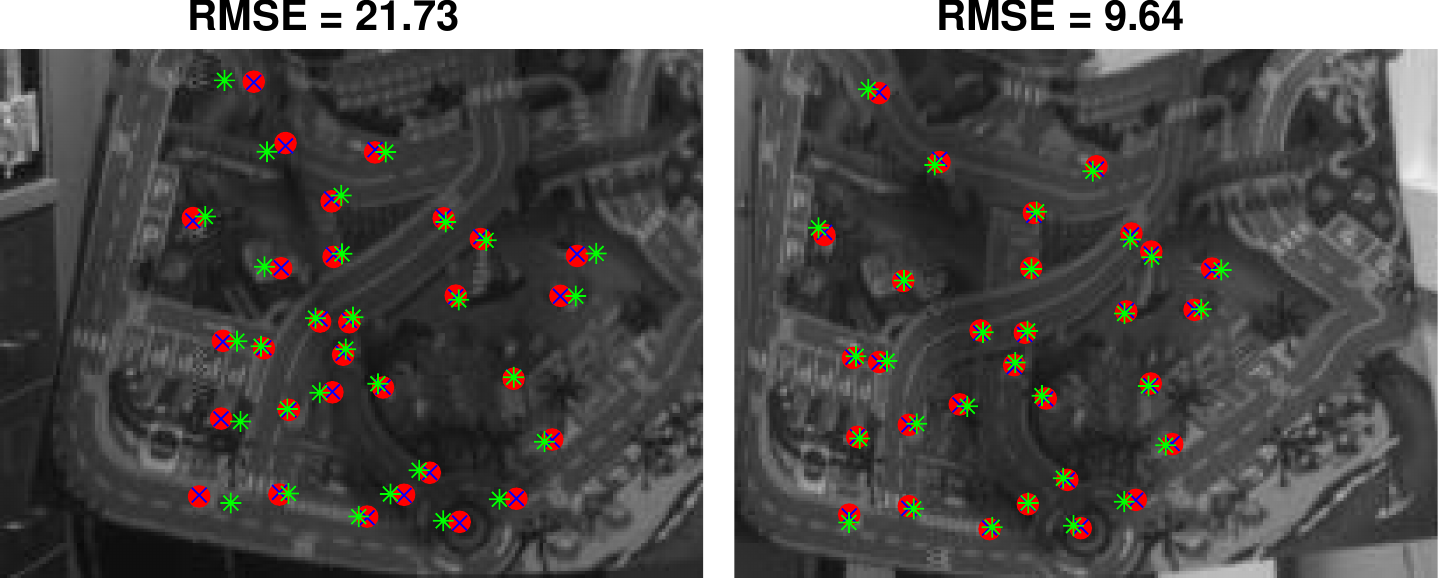}
		\caption{$\ast$AFF\_d}
	\end{subfigure}
	\begin{subfigure}{.45\textwidth}
		\centering
		\includegraphics[width=0.99\textwidth]{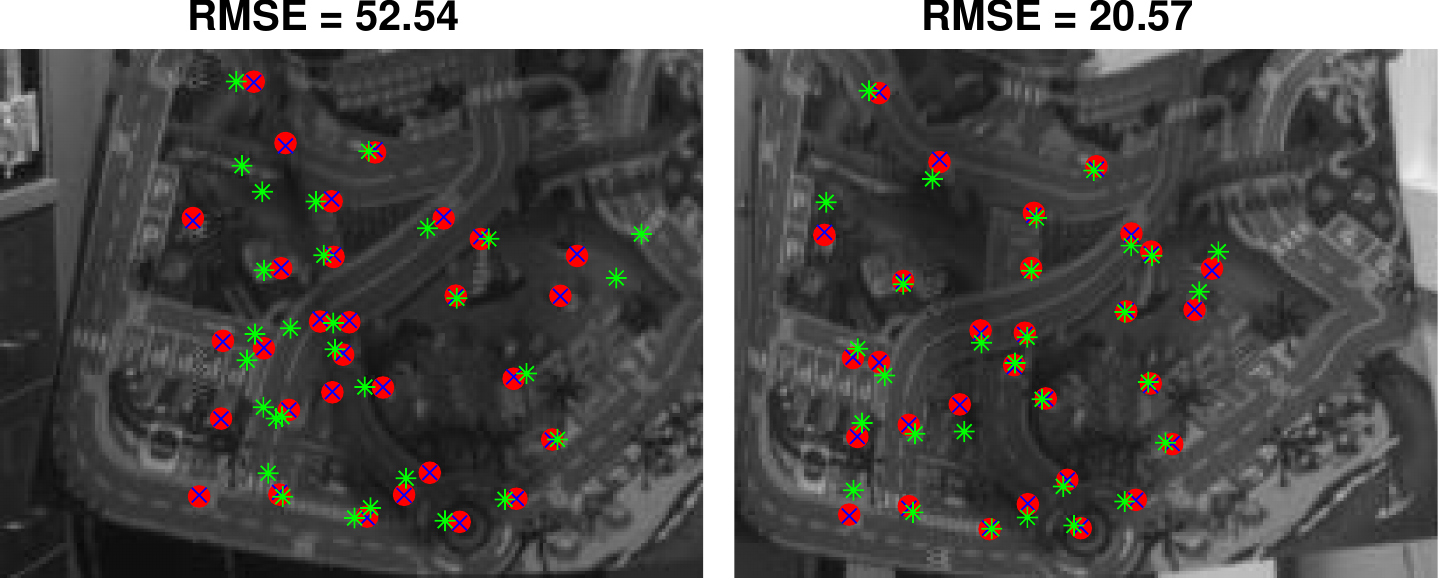}
		\caption{AFF\_r}
	\end{subfigure}
	\hfil
	\begin{subfigure}{.45\textwidth}
		\centering
		\includegraphics[width=0.99\textwidth]{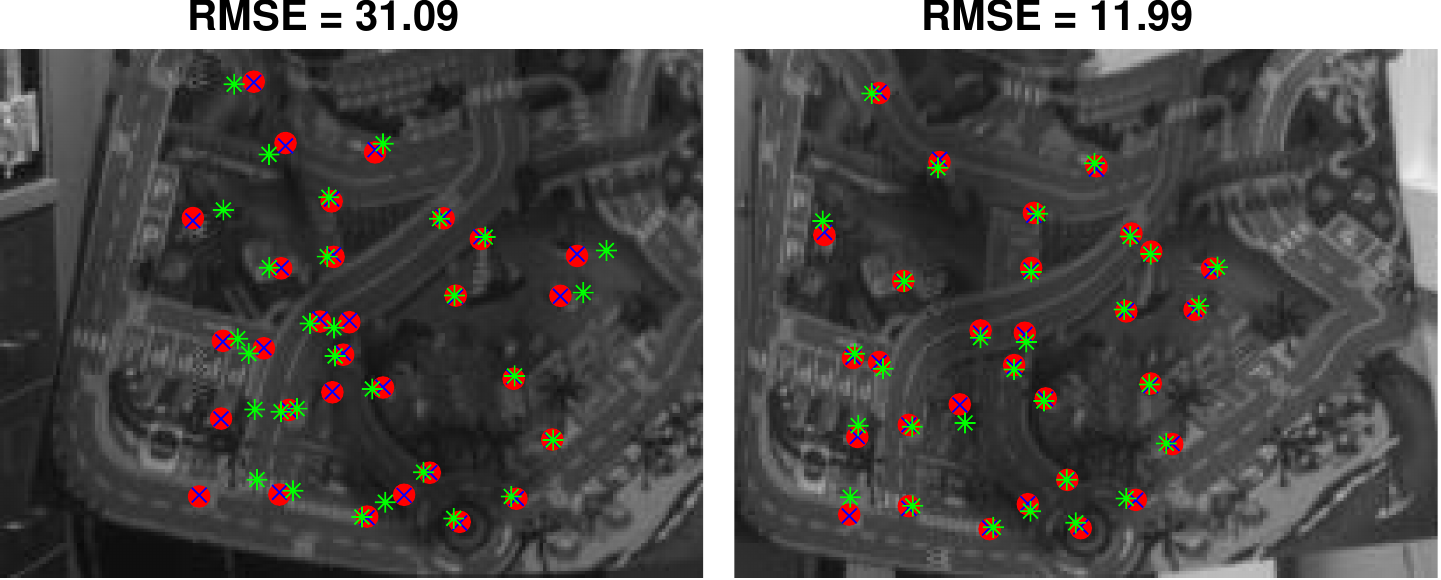}
		\caption{TPS\_r($3$)}
	\end{subfigure}
	\begin{subfigure}{.45\textwidth}
		\centering
		\includegraphics[width=0.99\textwidth]{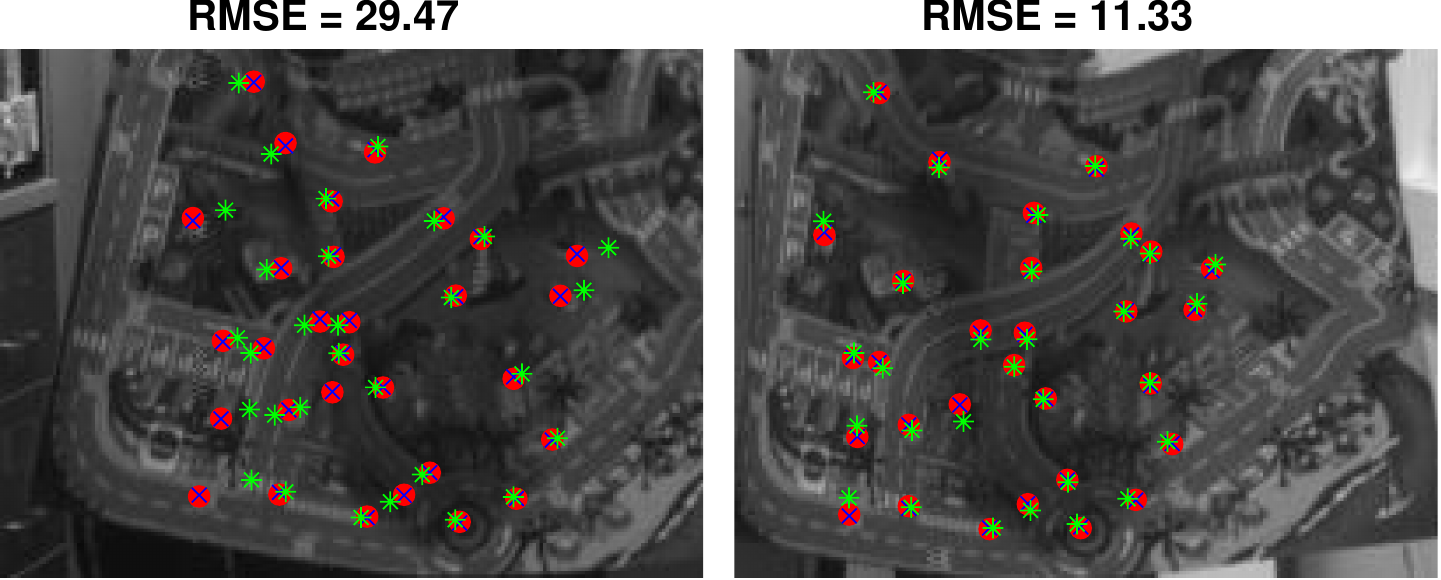}
		\caption{TPS\_r($5$)}
	\end{subfigure}
	\hfil
	\begin{subfigure}{.45\textwidth}
		\centering
		\includegraphics[width=0.99\textwidth]{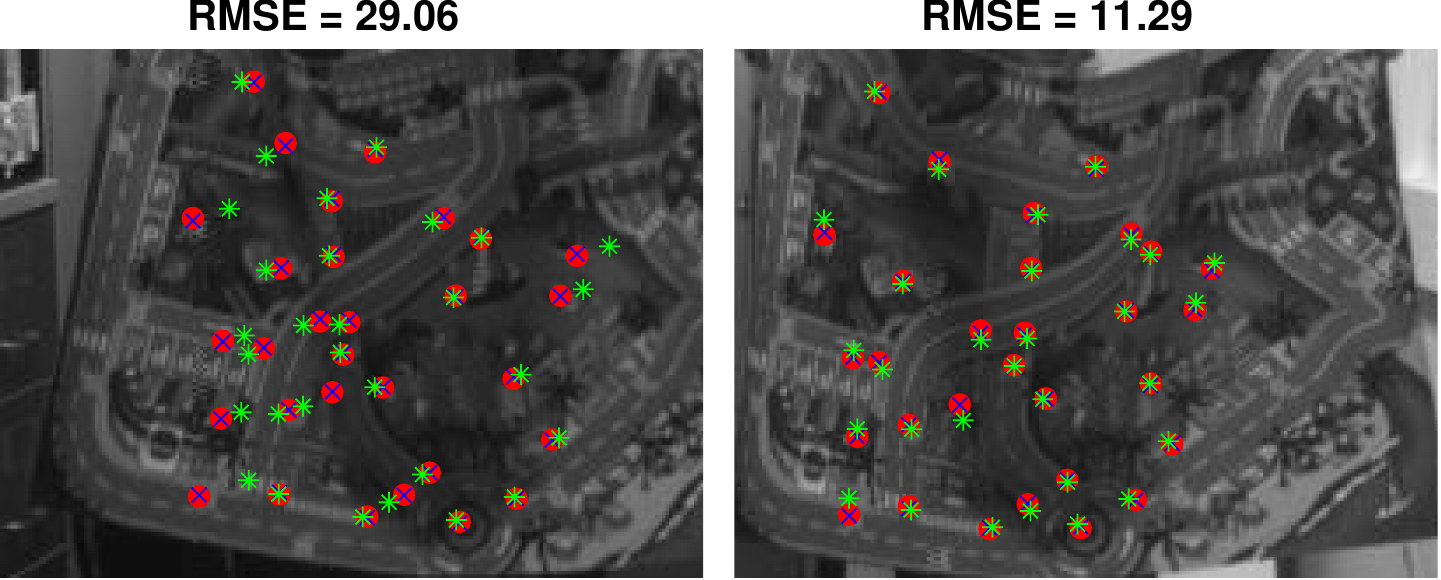}
		\caption{TPS\_r($7$)}
	\end{subfigure}
	\caption{The discrepancy between the generated image features and original image features on the ToyRug dataset.
		The generated image features are plotted in the green color, and the original image features in the red color.
		The blue x-markers are reprojected 2D image features from the triangulated 3D datum shapes.
		The RMSE statistics are measured in pixels.}
	\label{fig: the generated image features of each method on the ToyRug dataset}
\end{figure*}

For the ToyRug dataset, the RMSE\_r and RMSE\_d statistics in Table \ref{The residual error and computational time of each method on benchmark datasets} have shown a strong asymmetry between the reference-space cost and datum-space cost, in particular for the $\ast$AFF\_d and AFF\_r methods.
This asymmetry is further evidenced in
Figure~\ref{fig. landmark visualization of 3D cases. The fitness of each registration method.} as the $\ast$AFF\_d method yields one magnitude larger CVE error than the AFF\_r method.
Recall that the datum-space cost aims to find the generative model that best explains all the data; in contrast, the reference-space cost aims to find the reference shape that produces the best fitness to the data.
The above results suggest that while being closely related, minimizing the datum-space cost does not necessarily result in a good fitness to the reference shape.
Now we report on the other hand that minimizing the reference-space does not necessarily produce a good generative model.

The 3D datum shapes of the ToyRug dataset are obtained by triangulating the image features of a stereo camera rig with known intrinsic and extrinsic parameters.
To evaluate GPA methods as generative models,
we use the 3D reference shape estimate and transformation estimates to generate 3D datum shapes, and reproject the generated 3D datum shapes to 2D image features.
We report the discrepancies between the generated image features and the original image futures in Figure~\ref{fig: the generated image features of each method on the ToyRug dataset}, by visualizing the result of the datum shape that produces the largest error.

Contrary to the worst reference-space fitness,
the $\ast$AFF\_d method generates the best image features that closely match the original ones as shown in Figure~\ref{fig: the generated image features of each method on the ToyRug dataset},
because it minimizes the datum-space cost directly while being more flexible than the $\ast$EUC\_d method.
Due to nonrigid deformations,
the datum-space method $\ast$EUC\_d does not provide good result.
In parallel, because of the metric asymmetry,
poor performance is observed for the AFF\_r method which minimizes the reference-space cost.
However,
although not optimizing the datum-space cost directly,
the reference-space methods TPS\_r($3$), TPS\_r($5$) and TPS\_r($7$) are still capable to provide satisfactory results.
The quality of the generated features are consistently improved over the TPS\_r($3$), TPS\_r($5$) and TPS\_r($7$) methods.
In particular, the features in the bottom right corner generated by the method TPS\_r($7$) are better than those generated by the $\ast$AFF\_d method.

This brings us to the final conclusion. The datum-space cost and the reference-cost are two distinct but closely related metrics, while a good result based on one cost does not necessarily guarantee a good result on the other cost due to the metric asymmetry.
However it does not mean that the cost functions make a big difference for every dataset: as can be seen in  Table \ref{The residual error and computational time of each method on benchmark datasets}, the RMSE\_r and RMSE\_d metrics agree most of the time.
In adverse cases like the ToyRug dataset,
the deformation models like the TPS warp can achieve the best result on both sides,
which coins the importance of using DefGPA over classical GPA methods.

\section{Conclusion}
\label{section: conclusion}

To summarize, we have introduced the problem of GPA with deformations.
We have proposed a general problem statement applicable to LBWs, subsuming previous statements made for specific models.
We have derived a closed-form solution, applicable to the general case of partial shapes and with regularization.
We have extensively validated the proposed solution on various datasets with great care taken to select the regularization weights.
Our future work will include studying the consistency of estimates under certain statistical models, and extending our theory to the Nonrigid Structure-from-Motion (NRSfM) problem.

\begin{appendices}

\section{The Thin-Plate Spline}
\label{Appendix: Thin-Plate Spline}

\subsection{General Mapping Definition}

The Thin-Plate Spline (TPS) is a nonlinear mapping driven by a set of chosen control centers.
Let $\boldsymbol{c}_1, \boldsymbol{c}_2, \dots, \boldsymbol{c}_l$, with $\boldsymbol{c}_i \in \mathbb{R}^{d}$ $(d=2,3)$,
be the $l$ control centers of TPS.
We present the TPS model in 3D; the 2D case can be analogously derived by removing the $z$ coordinate and letting the corresponding TPS kernel function be $\phi \left(r\right) = r^2 \log\left(r^2\right)$ \citep{bookstein1997morphometric}.
Strictly speaking, the TPS only occurs in 2D, but is part of a general family of functions called the harmonic splines. For simplicity, we call the function TPS independently of the dimension.

We denote a point in 3D by $\boldsymbol{p}^{\top} = \left[x, y,z\right]$.
The TPS function $\tau (\boldsymbol{p})$ is a $\mathbb{R}^3 \rightarrowtail \mathbb{R}$ mapping:
\begin{equation*}
\tau (\boldsymbol{p}) = 
\sum_{k=1}^l w_k\,  \phi \left(\lVert \boldsymbol{p} - \boldsymbol{c}_k \rVert\right) 
+ a_1 x + a_2 y + a_3 z + a_4
\end{equation*}
where $\phi\left(\cdot\right)$ is the TPS kernel function.
In 3D, we have $\phi \left(r\right) = - \vert r \vert $.

We collect the parameters in a single vector $\boldsymbol{\eta}^{\top} = \left[\boldsymbol{w}^{\top}, \boldsymbol{a}^{\top}\right]$,
with the coefficient in the nonlinear part as
$\boldsymbol{w}^{\top} = \left[w_1, w_2, \cdots, w_l\right]$, and that in the linear part as $\boldsymbol{a}^{\top} = \left[a_1, a_2, a_3, a_4\right]$.
Let $\boldsymbol{\tilde{p}}^{\top} = \left[x, y, z, 1\right]$.
Collect the nonlinear components in a vector:
\begin{equation*}
\boldsymbol{\phi}_{\boldsymbol{p}}^{\top} = 
\left[
\phi \left(\lVert \boldsymbol{p} - \boldsymbol{c}_1 \rVert\right),\, \phi \left(\lVert \boldsymbol{p} - \boldsymbol{c}_2 \rVert\right),\, \cdots,\, 
\phi \left(\lVert \boldsymbol{p} - \boldsymbol{c}_l \rVert\right)
\right]
.
\end{equation*}
Then the TPS function compacts in the vector form as:
\begin{equation*}
\tau (\boldsymbol{p}) = 
\boldsymbol{\phi}_{\boldsymbol{p}}^{\top} \boldsymbol{w} + \boldsymbol{\tilde{p}}^{\top} \boldsymbol{a}
.
\end{equation*}
We define $\boldsymbol{\tilde{c}}_i^{\top} = \left[\boldsymbol{c}_i^{\top}, 1\right]$ $(i \in \left[ 1:l \right])$,
and
$\boldsymbol{\tilde{C}} = \left[
\boldsymbol{\tilde{c}}_1,   \boldsymbol{\tilde{c}}_2,
\dots,
\boldsymbol{\tilde{c}}_l\right]$.
The transformation parameters in the nonlinear part of the TPS model are constrained with respect to the control centers,
\textit{i.e.},
$
\boldsymbol{\tilde{C}} \boldsymbol{w} = \boldsymbol{0}
$.
To solve the parameters of TPS, 
the $l$ control centers
$\boldsymbol{c}_1, \boldsymbol{c}_2, \dots, \boldsymbol{c}_l$,
are mapped to $l$ scalar outputs $y_1, y_2, \dots, y_l$,
by
$\tau (\boldsymbol{c}_k) = y_k$ $(k \in \left[ 1:l \right])$.
We collect the outputs in a vector
$\boldsymbol{y}^{\top} = \left[y_1, y_2, \dots, y_l\right]$.

\subsection{Standard Form as a Linear Regression Model}
\label{appendix: TPS Standard Form as a Linear Regression Model}

While there are $l+4$ parameters in $\boldsymbol{\eta}$,
we will show in the following that the $4$ constraints in $
\boldsymbol{\tilde{C}} \boldsymbol{w} = \boldsymbol{0}
$ can be absorbed in a reparameterization of the TPS model, by using $l$ parameters only.

To proceed, we define a matrix:
\begin{equation*}
\boldsymbol{K}_{\lambda} = 
\begin{bmatrix}
\lambda & \phi_{12} & \dots & \phi_{1l}  \\
\phi_{21} & \lambda & \dots & \phi_{2l}  \\
\vdots & \vdots &    & \vdots \\
\phi_{l1}   & \phi_{l2}   & \dots & \lambda
\end{bmatrix}
,
\end{equation*}
where we use the shorthand
$
\phi_{ij}
=
\phi \left(\lVert \boldsymbol{c}_i - \boldsymbol{c}_j \rVert\right)
$.
The original diagonal elements of $\boldsymbol{K}_{\lambda}$ are all zeros, \textit{i.e.,}~$\phi_{11} = \phi_{22} = \cdots = \phi_{ll} = 0$.
$\lambda$ is an adjustable parameter acting as an internal smoothing parameter to improve the conditioning of the matrix.

The $l$ TPS mappings of control centers $\tau (\boldsymbol{c}_k) = y_k$ $(k \in \left[ 1:l \right])$, and the $4$ parameter constraints $
\boldsymbol{\tilde{C}} \boldsymbol{w} = \boldsymbol{0}
$
can be collected in the following matrix form:
\begin{equation*}
\begin{bmatrix}
\boldsymbol{K}_{\lambda}  & \boldsymbol{\tilde{C}}^{\top} \\
\boldsymbol{\tilde{C}} & \boldsymbol{O}
\end{bmatrix}
\begin{bmatrix}
\boldsymbol{w} \\ \boldsymbol{a}
\end{bmatrix}
=
\begin{bmatrix}
\boldsymbol{y} \\ \boldsymbol{0}
\end{bmatrix}
,
\end{equation*}
which admits a closed-form solution:
\begin{equation*}
\begin{bmatrix}
\boldsymbol{w} \\ \boldsymbol{a}
\end{bmatrix}
=
\boldsymbol{\mathcal{E}}_{\lambda} \boldsymbol{y}
,
\end{equation*}
with the matrix $\boldsymbol{\mathcal{E}}_{\lambda} \in \mathbb{R}^{\left(l+4\right) \times l}$ defined as:
\begin{equation*}
\boldsymbol{\mathcal{E}}_{\lambda} = 
\begin{bmatrix}
\boldsymbol{K}_{\lambda}^{-1} - \boldsymbol{K}_{\lambda}^{-1} \boldsymbol{\tilde{C}}^{\top} \left(\boldsymbol{\tilde{C}} \boldsymbol{K}_{\lambda}^{-1} \boldsymbol{\tilde{C}}^{\top}\right)^{-1}
\boldsymbol{\tilde{C}}
\boldsymbol{K}_{\lambda}^{-1} \\
\left(\boldsymbol{\tilde{C}} \boldsymbol{K}_{\lambda}^{-1} \boldsymbol{\tilde{C}}^{\top}\right)^{-1}
\boldsymbol{\tilde{C}}
\boldsymbol{K}_{\lambda}^{-1}
\end{bmatrix}
.
\end{equation*}

Therefore the TPS function writes:
\begin{align*}
\tau (\boldsymbol{p})  =
\begin{bmatrix}
\boldsymbol{\phi}_{\boldsymbol{p}} \\ \boldsymbol{\tilde{p}}
\end{bmatrix}^{\top}
\begin{bmatrix}
\boldsymbol{w} \\ \boldsymbol{a}
\end{bmatrix}
& =
\begin{bmatrix}
\boldsymbol{\phi}_{\boldsymbol{p}} \\ \boldsymbol{\tilde{p}}
\end{bmatrix}^{\top}
\boldsymbol{\mathcal{E}}_{\lambda} \boldsymbol{y}
\\[5pt] & =
\boldsymbol{y}^{\top}
\boldsymbol{\mathcal{E}}_{\lambda}^{\top}
\begin{bmatrix}
\boldsymbol{\phi}_{\boldsymbol{p}} \\ \boldsymbol{\tilde{p}}
\end{bmatrix}
.
\end{align*}
Let $\boldsymbol{\beta} (\boldsymbol{p}) = 	\boldsymbol{\mathcal{E}}_{\lambda}^{\top} \begin{bmatrix}
\boldsymbol{\phi}_{\boldsymbol{p}} \\ \boldsymbol{\tilde{p}}
\end{bmatrix}$.
Then
$
\tau (\boldsymbol{p}) =  \boldsymbol{y}^{\top}  \boldsymbol{\beta} (\boldsymbol{p})
$.

The 3D TPS warp model $\mathcal{W}\left(\boldsymbol{p}\right)\,:\, \mathbb{R}^3 \rightarrowtail \mathbb{R}^3$ is obtained by stacking $3$ TPS functions together, one for each dimension:
\begin{equation}
\label{eq: TPS warp model, W function definition}
	\begin{aligned}
	\mathcal{W} (\boldsymbol{p}) & = 
	\begin{bmatrix}
		\tau_{x}(\boldsymbol{p}) \\
		\tau_{y}(\boldsymbol{p}) \\
		\tau_{z}(\boldsymbol{p})
	\end{bmatrix}
	=
	\begin{bmatrix}
		\boldsymbol{y}_{x}^{\top}\, \boldsymbol{\beta} (\boldsymbol{p}) \\
		\boldsymbol{y}_{y}^{\top}\,  \boldsymbol{\beta} (\boldsymbol{p}) \\
		\boldsymbol{y}_{z}^{\top}\,  \boldsymbol{\beta} (\boldsymbol{p})
	\end{bmatrix}
	\\[5pt] & =
	\left[\boldsymbol{y}_{x}, \boldsymbol{y}_{y}, \boldsymbol{y}_{z} \right]^{\top}
	\boldsymbol{\beta} (\boldsymbol{p})
	.
	\end{aligned}
\end{equation}
This is the standard form of the general warp model defined in equation (\ref{eq - warp model - operating on point}),
with $\boldsymbol{W} = \left[\boldsymbol{y}_{x}, \boldsymbol{y}_{y}, \boldsymbol{y}_{z} \right] \in \mathbb{R}^{l \times 3}$ being a set of new transformation parameters that are free of constraints.

\subsection{Regularization by Bending Energy Matrix}
\label{appendix, TPS Regularization by Bending Energy Matrix}

Let 
$\boldsymbol{\bar{\mathcal{E}}}_{\lambda}
=	\boldsymbol{K}_{\lambda}^{-1} - \boldsymbol{K}_{\lambda}^{-1} \boldsymbol{\tilde{C}}^{\top} \left(\boldsymbol{\tilde{C}} \boldsymbol{K}_{\lambda}^{-1} \boldsymbol{\tilde{C}}^{\top}\right)^{-1}
\boldsymbol{\tilde{C}}
\boldsymbol{K}_{\lambda}^{-1}$.
The $l \times l$ matrix $\boldsymbol{\bar{\mathcal{E}}}_{\lambda}$ is the bending energy matrix of the TPS warp \citep{bookstein1989principal},
which satisfies
$ \boldsymbol{\bar{\mathcal{E}}}_{\lambda} \boldsymbol{K}_{\lambda} \boldsymbol{\bar{\mathcal{E}}}_{\lambda}
=
\boldsymbol{\bar{\mathcal{E}}}_{\lambda}
$.
The bending energy matrix $\boldsymbol{\bar{\mathcal{E}}}_{\lambda}$ is positive semidefinite,
with rank $l - 4$.
The bending energy of a single TPS function is proportional to $\boldsymbol{w}^{\top} \boldsymbol{K}_{\lambda} \boldsymbol{w}$,
with $\boldsymbol{w} = \boldsymbol{\bar{\mathcal{E}}}_{\lambda}
\boldsymbol{y}$,
which is given by:
\begin{equation*}
\int_{\mathbb{R}^d} 
\lVert
\frac{\partial^2}{\partial \boldsymbol{p}^2}
\tau (\boldsymbol{p})
\rVert_F^2 \, d \boldsymbol{p}
\propto
\boldsymbol{w}^{\top} \boldsymbol{K}_{\lambda} \boldsymbol{w}
=
\boldsymbol{y}^{\top}
\boldsymbol{\bar{\mathcal{E}}}_{\lambda}
\boldsymbol{y}
.
\end{equation*}

For the 3D TPS warp model, the overall bending energy is:
\begin{align*}
&\, \int_{\mathbb{R}^d} 
\left\|
\frac{\partial^2}{\partial \boldsymbol{p}^2}
\mathcal{W} (\boldsymbol{p}, \boldsymbol{W})
\right\|_F^2 \, d \boldsymbol{p}
\\[5pt] = &\,
\int_{\mathbb{R}^d} 
\bigg(
\left\|
\frac{\partial^2}{\partial \boldsymbol{p}^2}
\tau_x (\boldsymbol{p})
\right\|_F^2
+ 
\left\|
\frac{\partial^2}{\partial \boldsymbol{p}^2}
\tau_y (\boldsymbol{p})
\right\|_F^2
 \\ & + 
\left\|
\frac{\partial^2}{\partial \boldsymbol{p}^2}
\tau_z (\boldsymbol{p})
\right\|_F^2
\bigg) 
d \boldsymbol{p}
\\[5pt] = &\,
\boldsymbol{y}_x^{\top}
\boldsymbol{\bar{\mathcal{E}}}_{\lambda}
\boldsymbol{y}_x
+
\boldsymbol{y}_y^{\top}
\boldsymbol{\bar{\mathcal{E}}}_{\lambda}
\boldsymbol{y}_y
+
\boldsymbol{y}_z^{\top}
\boldsymbol{\bar{\mathcal{E}}}_{\lambda}
\boldsymbol{y}_z
\\[5pt] = &\,
\mathbf{tr}\left( \boldsymbol{W}^{\top}  \boldsymbol{\bar{\mathcal{E}}}_{\lambda} \boldsymbol{W} \right)
=
\lVert \sqrt{ \boldsymbol{\bar{\mathcal{E}}}_{\lambda}}
\boldsymbol{W}  \rVert_F^2
\end{align*}
where $\sqrt{ \boldsymbol{\bar{\mathcal{E}}}_{\lambda} }$ is the matrix square root of $\boldsymbol{\bar{\mathcal{E}}}_{\lambda}$.
Then the bending energy of the TPS warp writes into the standard form
$
\mathcal{R} (\boldsymbol{W}) = \lVert \sqrt{ \boldsymbol{\bar{\mathcal{E}}}_{\lambda} }
 \boldsymbol{W} \rVert_F^2
$,
given in equation~(\ref{eq: standard form regularization term}).

\subsection{The Thin-Plate Spline Satisfies $\boldsymbol{\mathcal{B}}\left(\cdot\right)^{\top} \boldsymbol{x} = \boldsymbol{1}$ and $\boldsymbol{Z} \boldsymbol{x} = \boldsymbol{0}$}
\label{appendix: theorem equivalent conditions are satisified for the TPS warp}

Let an arbitrary point cloud $\boldsymbol{D}$ be
$
\boldsymbol{D}
=
\left[\boldsymbol{p}_1,\,
\boldsymbol{p}_2,
\dots,
\boldsymbol{p}_m
\right]
$.
Then by equation~(\ref{eq: TPS warp model, W function definition})
and the point-cloud operation defined in Section \ref{subsection: generalized warp models, operating on point cloud},
we write:
\begin{align*}
 \boldsymbol{\mathcal{B}}(\boldsymbol{D})
 & = 
\left[\boldsymbol{\beta}(\boldsymbol{p}_1),\,
\boldsymbol{\beta}(\boldsymbol{p}_2),
\dots,
\boldsymbol{\beta}(\boldsymbol{p}_m)
\right]
\\[5pt] & =
\boldsymbol{\mathcal{E}}_{\lambda}^{\top} \begin{bmatrix}
\boldsymbol{\phi}_{\boldsymbol{p}_1}
& \boldsymbol{\phi}_{\boldsymbol{p}_2}
& \dots &
\boldsymbol{\phi}_{\boldsymbol{p}_m} \\
\boldsymbol{\tilde{p}}_1 &
\boldsymbol{\tilde{p}}_2 &
\dots &
\boldsymbol{\tilde{p}}_m \\
\end{bmatrix}
= 
\boldsymbol{\mathcal{E}}_{\lambda}^{\top} \begin{bmatrix}
\boldsymbol{M}_{\boldsymbol{D}} \\
\boldsymbol{1}^{\top}
\end{bmatrix}
.
\end{align*}
Hence $\boldsymbol{\mathcal{B}}(\boldsymbol{D})^{\top} $ has structure:
\begin{equation*}
	\boldsymbol{\mathcal{B}}(\boldsymbol{D})^{\top} 
	=
	\begin{bmatrix}
	\boldsymbol{M}_{\boldsymbol{D}}^{\top} &
	\boldsymbol{1}
	\end{bmatrix}
	\boldsymbol{\mathcal{E}}_{\lambda}
	.
\end{equation*}

\subsubsection{$\boldsymbol{\mathcal{B}}(\cdot)^{\top} \boldsymbol{x}
	= \boldsymbol{1}$}

It suffices to show such an $\boldsymbol{x}$ satisfies
$
\boldsymbol{\mathcal{E}}_{\lambda}
\boldsymbol{x}
=
\begin{bmatrix}
\boldsymbol{0} \\ 1
\end{bmatrix}
$.
Because:
\begin{equation*}
\boldsymbol{\mathcal{E}}_{\lambda}
\boldsymbol{\tilde{C}}^{\top}
=
\begin{bmatrix}
\boldsymbol{O} \\
\boldsymbol{I}
\end{bmatrix}
\Longrightarrow
\boldsymbol{\mathcal{E}}_{\lambda}
\boldsymbol{\tilde{C}}^{\top}
\begin{bmatrix}
\boldsymbol{0} \\ 1
\end{bmatrix}
=
\begin{bmatrix}
\boldsymbol{O} \\
\boldsymbol{I}
\end{bmatrix}
\begin{bmatrix}
\boldsymbol{0} \\ 1
\end{bmatrix}
=
\begin{bmatrix}
\boldsymbol{0} \\ 1
\end{bmatrix}
,
\end{equation*}
such an $\boldsymbol{x}$ indeed exists,
which is $\boldsymbol{x} = \boldsymbol{\tilde{C}}^{\top}
\begin{bmatrix}
\boldsymbol{0} \\ 1
\end{bmatrix}
$.

\subsubsection{$\boldsymbol{Z} \boldsymbol{x} = \boldsymbol{0}$}

	We need to show:
	\begin{equation*}
	\boldsymbol{Z} \boldsymbol{x} = \boldsymbol{0}
	\iff
	\lVert  \boldsymbol{Z} \boldsymbol{x} \rVert^2
	=
	\boldsymbol{x}^{\top} \boldsymbol{Z}^{\top}
	\boldsymbol{Z} \boldsymbol{x} = 0
	.
	\end{equation*}
The proof is immediate by noting that
$\boldsymbol{Z}^{\top}
\boldsymbol{Z}
=
\boldsymbol{\bar{\mathcal{E}}}_{\lambda}
$,
and
$
\boldsymbol{\bar{\mathcal{E}}}_{\lambda}
\boldsymbol{x}
=
\boldsymbol{\bar{\mathcal{E}}}_{\lambda}
\boldsymbol{\tilde{C}}^{\top}
\begin{bmatrix}
\boldsymbol{0} \\ 1
\end{bmatrix}
=
\boldsymbol{O}
\begin{bmatrix}
\boldsymbol{0} \\ 1
\end{bmatrix}
=
\boldsymbol{0}
$.

\subsection{The Thin-Plate Spline is Invariant to the Coordinate Transformation}
\label{appendix. apply coordinate transformations to the TPS warp}

Given a datum shape $\boldsymbol{D}$,
we apply the same rigid transformation $(\boldsymbol{R},\, \boldsymbol{t})$ to each datum point $\boldsymbol{p}_i$, and denote the transformed datum point as $\boldsymbol{p}_i' = \boldsymbol{R} \boldsymbol{p}_i + \boldsymbol{t}$.
Each control point $\boldsymbol{c}_i$ is transformed to
$
\boldsymbol{c}_i' = 
\boldsymbol{R} \boldsymbol{c}_i + \boldsymbol{t}
$.
The matrix $\boldsymbol{K}_{\lambda}$ is invariant by replacing $\boldsymbol{c}_i$ with $\boldsymbol{c}_i'$ as its elements are radial basis functions.
By replacing $\boldsymbol{\tilde{C}}$ by $\boldsymbol{\tilde{C}}' = \left[
\boldsymbol{\tilde{c}}_1',
\boldsymbol{\tilde{c}}_2',
\dots,
\boldsymbol{\tilde{c}}_l'\right]$,
the matrix $\boldsymbol{\mathcal{E}}_{\lambda}$ becomes:
\begin{equation*}
\boldsymbol{\mathcal{E}}_{\lambda}' =
\begin{bmatrix}
	\boldsymbol{I} & \boldsymbol{O} \\
	\boldsymbol{O} & (\boldsymbol{T}^{\top})^{-1}
\end{bmatrix}
\boldsymbol{\mathcal{E}}_{\lambda} 
, \  \mathrm{with} \ 
\boldsymbol{T} = \begin{bmatrix}
\boldsymbol{R} & \boldsymbol{t} \\
\boldsymbol{0}^{\top} & 1
\end{bmatrix}
.
\end{equation*}
$\boldsymbol{\phi}_{\boldsymbol{p}_i}$ is also invariant by simultaneously replacing $\boldsymbol{c}_i$ with $\boldsymbol{c}_i'$ and $\boldsymbol{p}_i$ with $\boldsymbol{p}_i'$.
Notice that
$\boldsymbol{\tilde{p}}_i' = \boldsymbol{T} \boldsymbol{\tilde{p}}_i$,
thus:
\begin{equation*}
\boldsymbol{\beta} \left(\boldsymbol{p}_i'\right) = 	{\boldsymbol{\mathcal{E}}_{\lambda}'}^{\top} \begin{bmatrix}
	\boldsymbol{\phi}_{\boldsymbol{p}_i'} \\ \boldsymbol{\tilde{p}}_i'
\end{bmatrix}
=
\boldsymbol{\mathcal{E}}_{\lambda}^{\top} \begin{bmatrix}
\boldsymbol{\phi}_{\boldsymbol{p}_i} \\ \boldsymbol{\tilde{p}}_i
\end{bmatrix}
= \boldsymbol{\beta} \left(\boldsymbol{p}_i\right)
.
\end{equation*}
This proves $\boldsymbol{\mathcal{B}}\left(\boldsymbol{D}'\right)
=
\boldsymbol{\mathcal{B}}\left(\boldsymbol{D}\right)$
where $\boldsymbol{D}' = \boldsymbol{R} \boldsymbol{D} + \boldsymbol{t} \boldsymbol{1}^{\top}$.
Denote $\boldsymbol{\bar{\mathcal{E}}}_{\lambda}'$ to be the top $l \times l$ sub-block of $\boldsymbol{\mathcal{E}}_{\lambda}'$.
It is obvious that
$\boldsymbol{\bar{\mathcal{E}}}_{\lambda}' = \boldsymbol{\bar{\mathcal{E}}}_{\lambda}$ thus the regularization matrix $\boldsymbol{Z}' = \sqrt{ \boldsymbol{\bar{\mathcal{E}}}_{\lambda}' }
= \sqrt{ \boldsymbol{\bar{\mathcal{E}}}_{\lambda} }
= \boldsymbol{Z}
$
remains unchanged.

\section{Lemmas and Proof of Theorem \ref{label: theorem: three equivalent conditions, p, q , cost}}
\label{appendix: proof of Theorem 1 - four equivalent statements}

\subsection{Lemmas on Positive Semidefinite Matrices}

\begin{lemma}
\label{lemma: summation of psd matrices is psd matrix}
For any $\boldsymbol{M} = \sum_{i=1}^{n} \boldsymbol{M}_i$,
if $\boldsymbol{M}_i \succeq \boldsymbol{O}$ for each $i \in \left[ 1:n \right]$,
then $\boldsymbol{M} \succeq \boldsymbol{O}$.
\end{lemma}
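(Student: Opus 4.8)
The plan is to use the quadratic-form characterization of positive semidefiniteness, which reduces the matrix statement to an elementary inequality about sums of nonnegative scalars. Recall that a symmetric matrix $\boldsymbol{A}$ satisfies $\boldsymbol{A} \succeq \boldsymbol{O}$ precisely when $\boldsymbol{x}^{\top} \boldsymbol{A} \boldsymbol{x} \ge 0$ for every vector $\boldsymbol{x}$. My first step would therefore be to fix an arbitrary vector $\boldsymbol{x}$ and evaluate the quadratic form of $\boldsymbol{M}$ against it.

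The second step is to exploit the linearity of the quadratic form in the matrix argument. Writing $\boldsymbol{M} = \sum_{i=1}^{n} \boldsymbol{M}_i$, one immediately has
\begin{equation*}
\boldsymbol{x}^{\top} \boldsymbol{M} \boldsymbol{x}
=
\boldsymbol{x}^{\top} \left( \sum_{i=1}^{n} \boldsymbol{M}_i \right) \boldsymbol{x}
=
\sum_{i=1}^{n} \boldsymbol{x}^{\top} \boldsymbol{M}_i \boldsymbol{x}
.
\end{equation*}
By hypothesis each summand $\boldsymbol{x}^{\top} \boldsymbol{M}_i \boldsymbol{x}$ is nonnegative, since $\boldsymbol{M}_i \succeq \boldsymbol{O}$. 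A finite sum of nonnegative reals is nonnegative, so $\boldsymbol{x}^{\top} \boldsymbol{M} \boldsymbol{x} \ge 0$. Because $\boldsymbol{x}$ was arbitrary, the characterization gives $\boldsymbol{M} \succeq \boldsymbol{O}$, which completes the argument.

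Honestly, there is no genuine obstacle here: the result is a direct consequence of the additivity of quadratic forms, and the only thing worth stating explicitly is that the symmetry of $\boldsymbol{M}$ is inherited from the symmetry of each $\boldsymbol{M}_i$ (a sum of symmetric matrices is symmetric), so that the quadratic-form test is indeed applicable. If one preferred to avoid even this minor remark, an alternative would be to invoke closedness of the positive semidefinite cone under addition as a known fact, but the quadratic-form computation above is self-contained and needs no external machinery.
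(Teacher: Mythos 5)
Your proof is correct and is the standard quadratic-form argument; the paper itself states this lemma without proof, treating it as a known fact, and your derivation $\boldsymbol{x}^{\top}\boldsymbol{M}\boldsymbol{x}=\sum_{i=1}^{n}\boldsymbol{x}^{\top}\boldsymbol{M}_i\boldsymbol{x}\ge 0$ is exactly the argument one would supply. The remark on symmetry being inherited under addition is a sensible completeness note and nothing more is needed.
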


\begin{lemma}
	\label{lemma: a useful lemma regarding positive semidefinite matrix}
	For any $\boldsymbol{M} \succeq \boldsymbol{O}$,
	we have
	$\boldsymbol{x}^{\top} \boldsymbol{M} \boldsymbol{x} = 0
	\iff 
	\boldsymbol{M} \boldsymbol{x} = \boldsymbol{0}$.
\end{lemma}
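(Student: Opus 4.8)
The plan is to prove the two implications separately, with the reverse direction ($\boldsymbol{M}\boldsymbol{x} = \boldsymbol{0} \Rightarrow \boldsymbol{x}^{\top}\boldsymbol{M}\boldsymbol{x} = 0$) being immediate and the forward direction ($\boldsymbol{x}^{\top}\boldsymbol{M}\boldsymbol{x} = 0 \Rightarrow \boldsymbol{M}\boldsymbol{x} = \boldsymbol{0}$) carrying all the content. For the easy direction I would simply observe that if $\boldsymbol{M}\boldsymbol{x} = \boldsymbol{0}$ then $\boldsymbol{x}^{\top}\boldsymbol{M}\boldsymbol{x} = \boldsymbol{x}^{\top}(\boldsymbol{M}\boldsymbol{x}) = 0$, which uses nothing about positive semidefiniteness.

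For the forward direction I would exploit the positive semidefiniteness by factoring $\boldsymbol{M}$ through its symmetric square root. Since $\boldsymbol{M} \succeq \boldsymbol{O}$, there exists a symmetric positive semidefinite matrix $\sqrt{\boldsymbol{M}}$ with $\boldsymbol{M} = \sqrt{\boldsymbol{M}}\sqrt{\boldsymbol{M}} = (\sqrt{\boldsymbol{M}})^{\top}\sqrt{\boldsymbol{M}}$. The key step is then to rewrite the quadratic form as a squared norm: $\boldsymbol{x}^{\top}\boldsymbol{M}\boldsymbol{x} = (\sqrt{\boldsymbol{M}}\boldsymbol{x})^{\top}(\sqrt{\boldsymbol{M}}\boldsymbol{x}) = \lVert \sqrt{\boldsymbol{M}}\boldsymbol{x} \rVert_2^2$. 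Assuming $\boldsymbol{x}^{\top}\boldsymbol{M}\boldsymbol{x} = 0$ forces $\lVert \sqrt{\boldsymbol{M}}\boldsymbol{x} \rVert_2^2 = 0$, and since the $\ell_2$ norm is definite this gives $\sqrt{\boldsymbol{M}}\boldsymbol{x} = \boldsymbol{0}$. Left-multiplying by $\sqrt{\boldsymbol{M}}$ then yields $\boldsymbol{M}\boldsymbol{x} = \sqrt{\boldsymbol{M}}(\sqrt{\boldsymbol{M}}\boldsymbol{x}) = \boldsymbol{0}$, which closes the argument.

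An equivalent route, avoiding the square root, would use the eigenvalue decomposition $\boldsymbol{M} = \boldsymbol{Q}\boldsymbol{\Lambda}\boldsymbol{Q}^{\top}$ with all eigenvalues $\alpha_j \ge 0$; setting $\boldsymbol{y} = \boldsymbol{Q}^{\top}\boldsymbol{x}$, the hypothesis becomes $\sum_j \alpha_j y_j^2 = 0$, a sum of nonnegative terms, so each $\alpha_j y_j = 0$ and hence $\boldsymbol{\Lambda}\boldsymbol{y} = \boldsymbol{0}$, giving $\boldsymbol{M}\boldsymbol{x} = \boldsymbol{Q}\boldsymbol{\Lambda}\boldsymbol{y} = \boldsymbol{0}$. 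The only substantive ingredient in either route is the existence of the symmetric square root (equivalently, the spectral decomposition with nonnegative eigenvalues), which is precisely the structural consequence of $\boldsymbol{M} \succeq \boldsymbol{O}$; once that is in hand, the result reduces to the definiteness of the Euclidean norm. I therefore do not anticipate any genuine obstacle, and would present the square-root version for brevity.
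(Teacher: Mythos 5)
Your proposal is correct and follows essentially the same route as the paper: the paper also writes $\boldsymbol{x}^{\top}\boldsymbol{M}\boldsymbol{x} = \lVert \sqrt{\boldsymbol{M}}\boldsymbol{x}\rVert_2^2$ via the square root $\boldsymbol{M} = \sqrt{\boldsymbol{M}}\sqrt{\boldsymbol{M}}$, deduces $\sqrt{\boldsymbol{M}}\boldsymbol{x} = \boldsymbol{0}$, and left-multiplies by $\sqrt{\boldsymbol{M}}$ to conclude, with the reverse implication noted as obvious. Your alternative eigendecomposition route is a valid variant but is not needed.
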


\begin{proof}
	Given $\boldsymbol{M} \succeq \boldsymbol{O}$, there exist a matrix $\sqrt{\boldsymbol{M}}$,
	such that $\boldsymbol{M} = \sqrt{\boldsymbol{M}} \sqrt{\boldsymbol{M}}$.
	Therefore:
	\begin{multline*}
	\boldsymbol{x}^{\top} \boldsymbol{M} \boldsymbol{x} 
	=
	\lVert \sqrt{\boldsymbol{M}} \boldsymbol{x} \rVert_2^2
	= 0 \\
	\iff
	\sqrt{\boldsymbol{M}} \boldsymbol{x} = \boldsymbol{0}
	\Longrightarrow
	\boldsymbol{M} \boldsymbol{x}
	=
	\sqrt{\boldsymbol{M}}
	\sqrt{\boldsymbol{M}} \boldsymbol{x} = \boldsymbol{0}
	.
	\end{multline*}
	This completes the proof of the sufficiency.
	The necessity is obvious.
\end{proof}

\begin{lemma}
\label{lemma: summation of PSD matrices, M x = 0 means Mi x = 0}
For any $\boldsymbol{M} = \sum_{i=1}^{n} \boldsymbol{M}_i \succeq \boldsymbol{O}$,
with each $\boldsymbol{M}_i \succeq \boldsymbol{O}$,
we have
$\boldsymbol{M} \boldsymbol{x} = \boldsymbol{0}
\iff
\boldsymbol{M}_i \boldsymbol{x} = \boldsymbol{0}
$
for each $i \in \left[ 1:n \right]$.
\end{lemma}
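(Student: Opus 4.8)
The plan is to prove the two implications separately, noting that the reverse direction requires no positive semidefiniteness while the forward direction is exactly where the hypotheses $\boldsymbol{M}_i \succeq \boldsymbol{O}$ are needed. For the reverse implication, I would simply observe that if $\boldsymbol{M}_i \boldsymbol{x} = \boldsymbol{0}$ for each $i \in \left[ 1:n \right]$, then by linearity $\boldsymbol{M} \boldsymbol{x} = \sum_{i=1}^{n} \boldsymbol{M}_i \boldsymbol{x} = \boldsymbol{0}$, which is immediate.

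The forward implication is where the argument has content. First I would left-multiply the vector identity $\boldsymbol{M} \boldsymbol{x} = \boldsymbol{0}$ by $\boldsymbol{x}^{\top}$ to obtain the scalar equation $\boldsymbol{x}^{\top} \boldsymbol{M} \boldsymbol{x} = 0$, and then expand the sum as $\sum_{i=1}^{n} \boldsymbol{x}^{\top} \boldsymbol{M}_i \boldsymbol{x} = 0$. Since each $\boldsymbol{M}_i \succeq \boldsymbol{O}$, every summand $\boldsymbol{x}^{\top} \boldsymbol{M}_i \boldsymbol{x}$ is nonnegative, so a vanishing finite sum of nonnegative scalars forces each term $\boldsymbol{x}^{\top} \boldsymbol{M}_i \boldsymbol{x} = 0$ individually. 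Finally I would invoke Lemma \ref{lemma: a useful lemma regarding positive semidefinite matrix}, applied to each $\boldsymbol{M}_i$, to upgrade the scalar condition $\boldsymbol{x}^{\top} \boldsymbol{M}_i \boldsymbol{x} = 0$ to the desired $\boldsymbol{M}_i \boldsymbol{x} = \boldsymbol{0}$.

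There is no genuine obstacle here, but the one point worth flagging is \emph{why} the proof must detour through the quadratic form: the step from $\boldsymbol{M} \boldsymbol{x} = \boldsymbol{0}$ to the individual $\boldsymbol{M}_i \boldsymbol{x} = \boldsymbol{0}$ cannot be carried out termwise at the vector level, since the vectors $\boldsymbol{M}_i \boldsymbol{x}$ could in principle cancel against one another. Positive semidefiniteness is precisely what rules out such cancellation, because it makes the scalars $\boldsymbol{x}^{\top} \boldsymbol{M}_i \boldsymbol{x}$ nonnegative so that no cancellation among them is possible. This is why the hypothesis $\boldsymbol{M}_i \succeq \boldsymbol{O}$ (rather than merely symmetry) is essential, and it is the only place in the argument where that hypothesis is actually used.
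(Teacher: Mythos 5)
Your proof is correct and follows essentially the same route as the paper: both reduce the vector conditions to the scalar conditions $\boldsymbol{x}^{\top}\boldsymbol{M}\boldsymbol{x}=0$ and $\boldsymbol{x}^{\top}\boldsymbol{M}_i\boldsymbol{x}=0$ via Lemma \ref{lemma: a useful lemma regarding positive semidefinite matrix}, and then exploit the fact that a vanishing sum of nonnegative terms forces each term to vanish. Your added remark on why the cancellation argument necessitates the detour through the quadratic form is a nice clarification but does not change the substance.
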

\begin{proof}
Because $\boldsymbol{M} \succeq \boldsymbol{O}$ and $\boldsymbol{M}_i \succeq \boldsymbol{O}$,
by Lemma \ref{lemma: a useful lemma regarding positive semidefinite matrix},
it suffices to show
$\boldsymbol{x}^{\top} \boldsymbol{M} \boldsymbol{x} = 0
\iff 
\boldsymbol{x}^{\top} \boldsymbol{M}_i\, \boldsymbol{x} = 0$,
which is true because
$
\boldsymbol{x}^{\top} \boldsymbol{M} \boldsymbol{x}
=
\sum_{i=1}^{n}
\boldsymbol{x}^{\top} \boldsymbol{M}_i\, \boldsymbol{x}
$
with each summand $\boldsymbol{x}^{\top} \boldsymbol{M}_i\, \boldsymbol{x} \ge 0$.
Therefore
$\boldsymbol{x}^{\top} \boldsymbol{M} \boldsymbol{x} = 0
\iff 
\boldsymbol{x}^{\top} \boldsymbol{M}_i\, \boldsymbol{x} = 0$
for each $i \in \left[ 1:n \right]$.
\end{proof}

\subsection{Proof of Theorem \ref{label: theorem: three equivalent conditions, p, q , cost}}

We assume $\mu_i$ being a tuning parameter $\mu_i \ge 0$.
Then the following chain of generalized equality holds:
\begin{align*}
\boldsymbol{I}
& \succeq
\boldsymbol{\mathcal{B}}_i^{\top}
\left(\boldsymbol{\mathcal{B}}_i
\boldsymbol{\mathcal{B}}_i^{\top}
\right)^{-1}
\boldsymbol{\mathcal{B}}_i
\\ & \succeq
\boldsymbol{\mathcal{B}}_i^{\top}
\left(\boldsymbol{\mathcal{B}}_i
\boldsymbol{\mathcal{B}}_i^{\top}
+
\mu_i \boldsymbol{Z}_i^{\top}  \boldsymbol{Z}_i\right)^{-1}
\boldsymbol{\mathcal{B}}_i
  \succeq
\boldsymbol{O}
.
\end{align*}
Therefore
$\boldsymbol{I} - \boldsymbol{Q}_i \succeq \boldsymbol{O}$,
and thus
$
\boldsymbol{\mathcal{P}}_{\mathrm{\Rmnum{2}}}
=
\sum_{i=1}^{n} \left(\boldsymbol{I} - \boldsymbol{Q}_i\right)
\succeq \boldsymbol{O}
$.

\begin{proof}
	We shall prove the result by showing:
	$(a)\iff(b)$,
	$(a)\iff(c)$,
	$(a)\iff(d)$,
	and
	$(c)\iff(e)$.

	{$(a)\iff(b)$.}
	Because
	$\boldsymbol{\mathcal{P}}_{\mathrm{\Rmnum{2}}}
	=
	n \boldsymbol{I}
	-
	\boldsymbol{\mathcal{Q}}_{\mathrm{\Rmnum{2}}}
	$,
	thus
	$\boldsymbol{\mathcal{P}}_{\mathrm{\Rmnum{2}}}
	\boldsymbol{1}
	=
	n \boldsymbol{1}
	-
	\boldsymbol{\mathcal{Q}}_{\mathrm{\Rmnum{2}}}
	\boldsymbol{1}
	$,
	which means
	$\boldsymbol{\mathcal{P}}_{\mathrm{\Rmnum{2}}} \boldsymbol{1} = \boldsymbol{0} \iff \boldsymbol{\mathcal{Q}}_{\mathrm{\Rmnum{2}}} \boldsymbol{1} = n \boldsymbol{1}$.

	$(a)\iff(c)$.
	Note that
	$\boldsymbol{\mathcal{P}}_{\mathrm{\Rmnum{2}}}
	=
	\sum_{i=1}^{n} \left(\boldsymbol{I} - \boldsymbol{Q}_i\right)
	\succeq \boldsymbol{O}$
	with
	$\boldsymbol{I} - \boldsymbol{Q}_i \succeq \boldsymbol{O}$.
	Therefore by Lemma \ref{lemma: summation of PSD matrices, M x = 0 means Mi x = 0}, we have
	$
	\boldsymbol{\mathcal{P}}_{\mathrm{\Rmnum{2}}}
	\boldsymbol{1} = \boldsymbol{0}
	\iff
	\left(\boldsymbol{I} - \boldsymbol{Q}_i\right) \boldsymbol{1} = \boldsymbol{0}
	\iff
	\boldsymbol{Q}_i \boldsymbol{1} = \boldsymbol{1}
	$.

	{$(a)\iff(d)$.}
	Applying a translation $\boldsymbol{t}$ to $\boldsymbol{S}$, we have:
	\begin{multline}
	\label{eq: the appendix proof: equivalent statements of P1=0}
	\mathbf{tr}\left(\boldsymbol{S} \boldsymbol{\mathcal{P}}_{\mathrm{\Rmnum{2}}} \boldsymbol{S}^{\top}\right) =
	\mathbf{tr}\left( \left(\boldsymbol{S} + \boldsymbol{t} \boldsymbol{1}^{\top}\right) \boldsymbol{\mathcal{P}}_{\mathrm{\Rmnum{2}}} \left(\boldsymbol{S} + \boldsymbol{t} \boldsymbol{1}^{\top}\right)^{\top} \right)
	\\
	\iff
	- 2 \mathbf{tr}\left(\boldsymbol{S} \boldsymbol{\mathcal{P}}_{\mathrm{\Rmnum{2}}} \boldsymbol{1} \boldsymbol{t}^{\top} \right)
	=
	\mathbf{tr}\left( \boldsymbol{t} \boldsymbol{1}^{\top} \boldsymbol{\mathcal{P}}_{\mathrm{\Rmnum{2}}} \boldsymbol{1} \boldsymbol{t}^{\top}\right)
	.
	\end{multline}
	
	Sufficiency:
	if $\boldsymbol{\mathcal{P}}_{\mathrm{\Rmnum{2}}} \boldsymbol{1} = \boldsymbol{0}$,
	then
	$- 2 \mathbf{tr}\left(\boldsymbol{S} \boldsymbol{\mathcal{P}}_{\mathrm{\Rmnum{2}}} \boldsymbol{1} \boldsymbol{t}^{\top} \right)
	=
	\mathbf{tr}\left( \boldsymbol{t} \boldsymbol{1}^{\top} \boldsymbol{\mathcal{P}}_{\mathrm{\Rmnum{2}}} \boldsymbol{1} \boldsymbol{t}^{\top}\right) = 0
	$.
	
	Necessity:
	note that $\boldsymbol{\mathcal{P}}_{\mathrm{\Rmnum{2}}}$ is positive semidefinite, so $\mathbf{tr}\left( \boldsymbol{t} \boldsymbol{1}^{\top} \boldsymbol{\mathcal{P}}_{\mathrm{\Rmnum{2}}} \boldsymbol{1} \boldsymbol{t}^{\top}\right) \ge 0$,
	which means $\mathbf{tr}\left(\boldsymbol{S} \boldsymbol{\mathcal{P}}_{\mathrm{\Rmnum{2}}} \boldsymbol{1} \boldsymbol{t}^{\top}\right) \le 0$.
	Since equation (\ref{eq: the appendix proof: equivalent statements of P1=0}) holds for any $\boldsymbol{t}$, flip the sign of $\boldsymbol{t}^{\top}$, we obtain $\mathbf{tr}\left(\boldsymbol{S} \boldsymbol{\mathcal{P}}_{\mathrm{\Rmnum{2}}} \boldsymbol{1} \boldsymbol{t}^{\top}\right) \ge 0$.
	Therefore:
	\begin{multline*}
	0 \le \mathbf{tr}\left(\boldsymbol{S} \boldsymbol{\mathcal{P}}_{\mathrm{\Rmnum{2}}} \boldsymbol{1} \boldsymbol{t}^{\top}\right) \le 0
	\\ \iff
	\mathbf{tr}\left(\boldsymbol{S} \boldsymbol{\mathcal{P}}_{\mathrm{\Rmnum{2}}} \boldsymbol{1} \boldsymbol{t}^{\top}\right) = 0
	\iff
	\mathbf{tr}\left( \boldsymbol{t} \boldsymbol{1}^{\top} \boldsymbol{\mathcal{P}}_{\mathrm{\Rmnum{2}}} \boldsymbol{1} \boldsymbol{t}^{\top}\right)  = 0
	.
	\end{multline*}
	Note that
	$\boldsymbol{1}^{\top} \boldsymbol{\mathcal{P}}_{\mathrm{\Rmnum{2}}} \boldsymbol{1}$ is a scalar, hence:
	\begin{align*}
	\mathbf{tr}\left( \boldsymbol{t} \boldsymbol{1}^{\top} \boldsymbol{\mathcal{P}}_{\mathrm{\Rmnum{2}}} \boldsymbol{1} \boldsymbol{t}^{\top}\right) 
	& = 
	\boldsymbol{1}^{\top} \boldsymbol{\mathcal{P}}_{\mathrm{\Rmnum{2}}} \boldsymbol{1}
	\,
	\mathbf{tr}\left( \boldsymbol{t}  \boldsymbol{t}^{\top}\right)
	\\ & =
	\boldsymbol{1}^{\top} \boldsymbol{\mathcal{P}}_{\mathrm{\Rmnum{2}}} \boldsymbol{1}
	\lVert \boldsymbol{t} \rVert_2^2
	=
	0,
	\end{align*}
	for any $\boldsymbol{t}$, if and only if
	$
	\boldsymbol{1}^{\top} \boldsymbol{\mathcal{P}}_{\mathrm{\Rmnum{2}}} \boldsymbol{1} = 0
	\iff
	\boldsymbol{\mathcal{P}}_{\mathrm{\Rmnum{2}}} \boldsymbol{1}
	=\boldsymbol{0}
	$ by Lemma \ref{lemma: a useful lemma regarding positive semidefinite matrix}.

	$(c)\iff(e)$.
	Let
	$
	\boldsymbol{\bar{Q}}_i
	= 
	\boldsymbol{\mathcal{B}}_i^{\top}
	\left(\boldsymbol{\mathcal{B}}_i
	\boldsymbol{\mathcal{B}}_i^{\top}
	\right)^{-1}
	\boldsymbol{\mathcal{B}}_i
	$,
	and $\left(\boldsymbol{\mathcal{B}}_i^{\top}\right)^{\dagger} = \left(\boldsymbol{\mathcal{B}}_i
	\boldsymbol{\mathcal{B}}_i^{\top}
	\right)^{-1}
	\boldsymbol{\mathcal{B}}_i
	$.
	Applying the Woodbury matrix identity,
	the matrix $\boldsymbol{Q}_i$ can be expanded as:
	\begin{equation*}
	\begin{aligned}
	& 
	\boldsymbol{Q}_i
	= 
	\boldsymbol{\mathcal{B}}_i^{\top}
	\left(\boldsymbol{\mathcal{B}}_i
	\boldsymbol{\mathcal{B}}_i^{\top}
	+
	\mu_i \boldsymbol{Z}_i^{\top}  \boldsymbol{Z}_i\right)^{-1}
	\boldsymbol{\mathcal{B}}_i
	 = 
	\boldsymbol{\bar{Q}}_i  \ - 
	\\ & 
	\mu_i
\scalebox{0.88}{$
	\underbrace{
		\left(\boldsymbol{Z}_i\left(\boldsymbol{\mathcal{B}}_i^{\top}\right)^{\dagger}\right)^{\top}
		\left(\boldsymbol{I} + \mu_i \boldsymbol{Z}_i \left(\boldsymbol{\mathcal{B}}_i
		\boldsymbol{\mathcal{B}}_i^{\top}\right)^{-1} \boldsymbol{Z}_i^{\top}\right)^{-1}
		\boldsymbol{Z}_i \left(\boldsymbol{\mathcal{B}}_i^{\top}\right)^{\dagger}
	}_{\boldsymbol{{\Delta}}_i}	
$}
.
	\end{aligned}
	\end{equation*}
	Then we write:
	\begin{equation*}
	\boldsymbol{I} - \boldsymbol{Q}_i
	=
	\left( \boldsymbol{I} - \boldsymbol{\bar{Q}}_i \right)
	+
	\mu_i  {\boldsymbol{{\Delta}}_i} \succeq \boldsymbol{O},
	\end{equation*}
	with $\boldsymbol{I} - \boldsymbol{\bar{Q}}_i \succeq \boldsymbol{O}$ and ${\boldsymbol{{\Delta}}_i} \succeq \boldsymbol{O}$.
	Noting that $\mu_i \ge 0$,
	by Lemma \ref{lemma: summation of PSD matrices, M x = 0 means Mi x = 0},
	we have
	$\left(\boldsymbol{I} - \boldsymbol{Q}_i\right) \boldsymbol{1} = \boldsymbol{0}$
	if and only if
	$\left(\boldsymbol{I} - \boldsymbol{\bar{Q}}_i\right)
	\boldsymbol{1}
	= \boldsymbol{0}$
	and
	$
	{\boldsymbol{{\Delta}}_i}
	\boldsymbol{1}
	= \boldsymbol{0}
	$.
	In other words,
$\boldsymbol{Q}_i \boldsymbol{1} = \boldsymbol{1}$
if and only if
$\boldsymbol{\bar{Q}}_i \boldsymbol{1} = \boldsymbol{1}$
and
${\boldsymbol{{\Delta}}_i} \boldsymbol{1} = \boldsymbol{0}$.

Note that $\boldsymbol{\bar{Q}}_i = \boldsymbol{\mathcal{B}}_i^{\top}
\left(\boldsymbol{\mathcal{B}}_i
\boldsymbol{\mathcal{B}}_i^{\top}
\right)^{-1}
\boldsymbol{\mathcal{B}}_i$ is the orthogonal projection into the range space of $\boldsymbol{\mathcal{B}}_i^{\top}$,
which states
$\boldsymbol{\bar{Q}}_i
\boldsymbol{1} = \boldsymbol{1}
\iff
\boldsymbol{1} \in \mathbf{Range}\left( \boldsymbol{\mathcal{B}}_i^{\top} \right)
\iff
\exists \boldsymbol{x}$
such that
$\boldsymbol{\mathcal{B}}_i^{\top} \boldsymbol{x} = \boldsymbol{1}$.
Therefore such an $\boldsymbol{x}$ is
$\boldsymbol{x} = \left(\boldsymbol{\mathcal{B}}_i^{\top}\right)^{\dagger}
\boldsymbol{1}$.
	
Since
$\boldsymbol{Z}_i \left(\boldsymbol{\mathcal{B}}_i
\boldsymbol{\mathcal{B}}_i^{\top}\right)^{-1} \boldsymbol{Z}_i^{\top} \succeq \boldsymbol{O}$, we know $\boldsymbol{I} + \mu_i \boldsymbol{Z}_i \left(\boldsymbol{\mathcal{B}}_i
\boldsymbol{\mathcal{B}}_i^{\top}\right)^{-1} \boldsymbol{Z}_i^{\top}$ is exactly positive definite.
Therefore
${\boldsymbol{{\Delta}}_i} \boldsymbol{1} = \boldsymbol{0} \iff
\boldsymbol{1}^{\top}
{\boldsymbol{{\Delta}}_i}
\boldsymbol{1}
= 0$
happens if and only if
$\boldsymbol{Z}_i \left(\boldsymbol{\mathcal{B}}_i^{\top}\right)^{\dagger}
\boldsymbol{1}
= \boldsymbol{0}$
which is
$\boldsymbol{Z}_i \boldsymbol{x} = \boldsymbol{0}$.
\end{proof}

\section{Proof of Theorem \ref{label: theorem: three equivalent conditions, p, q , cost, in case of partial shapes}}
\label{appendix: proofs of theorem 3 for parital shapes}

\begin{proof}

	$(b) \iff (c)$.
	We define:	
	$$\boldsymbol{\bar{Q}}_i = \boldsymbol{\Gamma}_i
	\boldsymbol{\mathcal{B}}_i^{\top}
	\left(
	\boldsymbol{\mathcal{B}}_i
	\boldsymbol{\Gamma}_i
	\boldsymbol{\Gamma}_i
	\boldsymbol{\mathcal{B}}_i^{\top}
	\right)^{-1}
	\boldsymbol{\mathcal{B}}_i
	\boldsymbol{\Gamma}_i .$$
	By the Woodbury matrix identity, we have:
	\begin{equation*}
	\begin{aligned}
	& \boldsymbol{\Gamma}_i
	\boldsymbol{\mathcal{B}}_i^{\top}
	\left(
	\boldsymbol{\mathcal{B}}_i
	\boldsymbol{\Gamma}_i
	\boldsymbol{\Gamma}_i
	\boldsymbol{\mathcal{B}}_i^{\top}
	+
	\mu_i \boldsymbol{Z}_i^{\top}  \boldsymbol{Z}_i
	\right)^{-1}
	\boldsymbol{\mathcal{B}}_i
	\boldsymbol{\Gamma}_i
	= 
	\boldsymbol{\bar{Q}}_i
	\  -  \\ & 
	\mu_i	
\scalebox{0.78}{$
	\underbrace{
		\left(\boldsymbol{Z}_i\left(\boldsymbol{\Gamma}_i\boldsymbol{\mathcal{B}}_i^{\top}\right)^{\dagger}\right)^{\top}
		\left(\boldsymbol{I} + \mu_i \boldsymbol{Z}_i \left(\boldsymbol{\mathcal{B}}_i \boldsymbol{\Gamma}_i \boldsymbol{\Gamma}_i
		\boldsymbol{\mathcal{B}}_i^{\top}\right)^{-1} \boldsymbol{Z}_i^{\top}\right)^{-1}
		\boldsymbol{Z}_i \left(\boldsymbol{\Gamma}_i\boldsymbol{\mathcal{B}}_i^{\top}\right)^{\dagger}
	}_{\boldsymbol{{\Delta}}_i}
$}
	.
	\end{aligned}
	\end{equation*}
	Then
	$\boldsymbol{P}_i$
	can be written as:
	$
	\boldsymbol{P}_i
	=
	\left( \boldsymbol{\Gamma}_i - \boldsymbol{\bar{Q}}_i \right)
	+
	\mu_i {\boldsymbol{{\Delta}}_i}
	$
	with $\boldsymbol{\Gamma}_i - \boldsymbol{\bar{Q}}_i \succeq \boldsymbol{O}$ and $\boldsymbol{{\Delta}}_i \succeq \boldsymbol{O}$.
	Thus $\boldsymbol{P}_i \succeq \boldsymbol{O}$.
	By Lemma \ref{lemma: summation of PSD matrices, M x = 0 means Mi x = 0},
	$\boldsymbol{P}_i \boldsymbol{1} = \boldsymbol{0}$ if and only if $\left( \boldsymbol{\Gamma}_i - \boldsymbol{\bar{Q}}_i \right) \boldsymbol{1} = \boldsymbol{0}$ and $\boldsymbol{{\Delta}}_i \boldsymbol{1} = \boldsymbol{0}$
	
	Note that $\boldsymbol{\bar{Q}}_i$ is the orthogonal projection to $\mathbf{Range}\left( \boldsymbol{\Gamma}_i \boldsymbol{\mathcal{B}}_i^{\top} \right)$. Thus we have:
	\begin{align*}
	\left( \boldsymbol{\Gamma}_i - \boldsymbol{\bar{Q}}_i \right) \boldsymbol{1} = \boldsymbol{0}
	& \iff
	\boldsymbol{\bar{Q}}_i \boldsymbol{\Gamma}_i \boldsymbol{1} = \boldsymbol{\Gamma}_i \boldsymbol{1}
	\\ & \iff
	\boldsymbol{\Gamma}_i \boldsymbol{1} \in \mathbf{Range}\left( \boldsymbol{\Gamma}_i \boldsymbol{\mathcal{B}}_i^{\top} \right)
	,
	\end{align*}
	if and only if there exists
	$\boldsymbol{x}$
	such that
	$\boldsymbol{\Gamma}_i \boldsymbol{\mathcal{B}}_i^{\top} \boldsymbol{x} =
	\boldsymbol{\Gamma}_i \boldsymbol{1}$.
	Such an $\boldsymbol{x}$ is
	$\boldsymbol{x} = \left(\boldsymbol{\Gamma}_i \boldsymbol{\mathcal{B}}_i^{\top}\right)^{\dagger}
	\boldsymbol{1}$.
	
	Note that $\boldsymbol{{\Delta}}_i \succeq \boldsymbol{O}$ and
	$\boldsymbol{I} + \mu_i \boldsymbol{Z}_i \left(\boldsymbol{\mathcal{B}}_i \boldsymbol{\Gamma}_i \boldsymbol{\Gamma}_i
	\boldsymbol{\mathcal{B}}_i^{\top}\right)^{-1} \boldsymbol{Z}_i^{\top} \succ \boldsymbol{O}$,
	thus we have:
	\begin{align*}
	\boldsymbol{{\Delta}}_i \boldsymbol{1} = \boldsymbol{0}
	& \iff
	\boldsymbol{1}^{\top}
	\boldsymbol{{\Delta}}_i \boldsymbol{1} = 0
	\\ & \iff
	\boldsymbol{Z}_i \boldsymbol{x} = 
	\boldsymbol{Z}_i \left(\boldsymbol{\Gamma}_i\boldsymbol{\mathcal{B}}_i^{\top}\right)^{\dagger} \boldsymbol{1}
	=
	\boldsymbol{0}
	.
	\end{align*}

	$(a) \iff (b)$. Since $\boldsymbol{P}_i \succeq \boldsymbol{O}$, the proof is immediate by Lemma \ref{lemma: summation of PSD matrices, M x = 0 means Mi x = 0}.

	This completes the proof.
\end{proof}

\section{Rigid Procrustes Analysis with Two Shapes}
\label{appendix: Procrustes problem on sod}

The pairwise Procrustes problem aims to solve for a similarity transformation
(a scale factor $s>0$, a rotation/reflection $\boldsymbol{R} \in \mathrm{O}(3)$,
and a translation $\boldsymbol{t} \in \mathbb{R}^3$),
that minimizes the registration error between the two point-clouds $\boldsymbol{D}_1$ and $\boldsymbol{D}_2$ with known correspondences:
\begin{equation*}
\argmin_{s > 0,\, \boldsymbol{R} \in \mathrm{O(3)}}\ 	\lVert s \boldsymbol{R} \boldsymbol{D}_1
+
\boldsymbol{t} \boldsymbol{1}^{\top}
-
\boldsymbol{D}_2 \rVert_F^2
.
\end{equation*}
This problem admits a closed-form solution
\citep{horn1987closed,horn1988closed, kanatani1994analysis}.

Let
$
\boldsymbol{\bar{D}}_1 = \boldsymbol{D}_1 -
\frac{1}{m}
\boldsymbol{D}_1  \boldsymbol{1} \boldsymbol{1}^{\top}
$,
and
$
\boldsymbol{\bar{D}}_2 = \boldsymbol{D}_2 -
\frac{1}{m}
\boldsymbol{D}_2  \boldsymbol{1} \boldsymbol{1}^{\top}
$.
Let the SVD of $\boldsymbol{\bar{D}}_1 \boldsymbol{\bar{D}}_2^{\top}$ be
$
\boldsymbol{\bar{D}}_1 \boldsymbol{\bar{D}}_2^{\top}
=
\boldsymbol{\bar{U}} \boldsymbol{\bar{\Sigma}} \boldsymbol{\bar{V}}^{\top}
$.
Then the optimal rotation is $\boldsymbol{R}^{\star} = \boldsymbol{\bar{V}} \boldsymbol{\bar{U}}^{\top}$
if $\boldsymbol{R}^{\star} \in \mathrm{O}(3)$.
If we require $\boldsymbol{R}^{\star} \in \mathrm{SO}(3)$, the optimal rotation is:
\begin{equation*}
\boldsymbol{R}^{\star}
=
\boldsymbol{\bar{V}}\,
\mathrm{diag}
\left(
1, 1, \det\left(\boldsymbol{\bar{V}} \boldsymbol{\bar{U}}^{\top}\right)
\right)
 \boldsymbol{\bar{U}}^{\top}
.
\end{equation*}
The optimal scale factor and the optimal translation are then given by:
\begin{align*}
s^{\star} & = \frac{\mathbf{tr}\left(
	\boldsymbol{R}^{\star} \boldsymbol{\bar{D}}_1  \boldsymbol{\bar{D}}_2^{\top}
	\right)}{\mathbf{tr}\left(
	\boldsymbol{\bar{D}}_1  \boldsymbol{\bar{D}}_1^{\top}
	\right)}
\\
\boldsymbol{t}^{\star}
 & =
\frac{1}{m}
\boldsymbol{D}_2  \boldsymbol{1}
-
\frac{1}{m} s^{\star}
\boldsymbol{R}^{\star}
\boldsymbol{D}_1  \boldsymbol{1}
.
\end{align*}

\end{appendices}

\section*{Acknowledgments}
This work was supported by ANR via the TOPACS project. We thank the authors of the public datasets which we could use in our experiments. We appreciate the valuable comments of the anonymous reviewers that have helped improving the quality of the manuscript.


\begin{thebibliography}{69}
\providecommand{\natexlab}[1]{#1}
\providecommand{\url}[1]{\texttt{#1}}
\expandafter\ifx\csname urlstyle\endcsname\relax
  \providecommand{\doi}[1]{doi: #1}\else
  \providecommand{\doi}{doi: \begingroup \urlstyle{rm}\Url}\fi

\bibitem[Absil et~al.(2009)Absil, Mahony, and Sepulchre]{absil2009optimization}
P.-A. Absil, R.~Mahony, and R.~Sepulchre.
\newblock \emph{Optimization algorithms on matrix manifolds}.
\newblock Princeton University Press, 2009.

\bibitem[Allen et~al.(2003)Allen, Curless, and Popovi{\'c}]{allen2003space}
B.~Allen, B.~Curless, and Z.~Popovi{\'c}.
\newblock The space of human body shapes: reconstruction and parameterization
  from range scans.
\newblock \emph{ACM transactions on graphics (TOG)}, 22\penalty0 (3):\penalty0
  587--594, 2003.

\bibitem[Anguelov et~al.(2005)Anguelov, Srinivasan, Koller, Thrun, Rodgers, and
  Davis]{anguelov2005scape}
D.~Anguelov, P.~Srinivasan, D.~Koller, S.~Thrun, J.~Rodgers, and J.~Davis.
\newblock {SCAPE}: shape completion and animation of people.
\newblock In \emph{ACM SIGGRAPH 2005 Papers}, pages 408--416. 2005.

\bibitem[Arun et~al.(1987)Arun, Huang, and Blostein]{arun1987least}
K.~S. Arun, T.~S. Huang, and S.~D. Blostein.
\newblock Least-squares fitting of two 3-d point sets.
\newblock \emph{IEEE Transactions on pattern analysis and machine
  intelligence}, \penalty0 (5):\penalty0 698--700, 1987.

\bibitem[Bai et~al.(2021)Bai, Vidal-Calleja, and Grisetti]{bai2021TRO}
F.~Bai, T.~Vidal-Calleja, and G.~Grisetti.
\newblock Sparse pose graph optimization in cycle space.
\newblock \emph{IEEE Transactions on Robotics}, 37\penalty0 (5):\penalty0
  1381--1400, 2021.
\newblock \doi{10.1109/TRO.2021.3050328}.

\bibitem[Bartoli(2006)]{bartoli2006towards}
A.~Bartoli.
\newblock Towards 3d motion estimation from deformable surfaces.
\newblock In \emph{Proceedings 2006 IEEE International Conference on Robotics
  and Automation, 2006. ICRA 2006.}, pages 3083--3088. IEEE, 2006.

\bibitem[Bartoli et~al.(2010)Bartoli, Perriollat, and
  Chambon]{bartoli2010generalized}
A.~Bartoli, M.~Perriollat, and S.~Chambon.
\newblock Generalized thin-plate spline warps.
\newblock \emph{International Journal of Computer Vision}, 88\penalty0
  (1):\penalty0 85--110, 2010.

\bibitem[Bartoli et~al.(2013)Bartoli, Pizarro, and Loog]{bartoli2013stratified}
A.~Bartoli, D.~Pizarro, and M.~Loog.
\newblock Stratified generalized procrustes analysis.
\newblock \emph{International journal of computer vision}, 101\penalty0
  (2):\penalty0 227--253, 2013.

\bibitem[Benjemaa and Schmitt(1998)]{benjemaa1998solution}
R.~Benjemaa and F.~Schmitt.
\newblock A solution for the registration of multiple 3d point sets using unit
  quaternions.
\newblock In \emph{European Conference on Computer Vision}, pages 34--50.
  Springer, 1998.

\bibitem[Bilic et~al.(2019)Bilic, Christ, Vorontsov, Chlebus, Chen, Dou, Fu,
  Han, Heng, Hesser, et~al.]{bilic2019liver}
P.~Bilic, P.~F. Christ, E.~Vorontsov, G.~Chlebus, H.~Chen, Q.~Dou, C.-W. Fu,
  X.~Han, P.-A. Heng, J.~Hesser, et~al.
\newblock The liver tumor segmentation benchmark (lits).
\newblock \emph{arXiv preprint arXiv:1901.04056}, 2019.

\bibitem[Birtea et~al.(2019)Birtea, Ca{\c{s}}u, and
  Com{\u{a}}nescu]{birtea2019first}
P.~Birtea, I.~Ca{\c{s}}u, and D.~Com{\u{a}}nescu.
\newblock First order optimality conditions and steepest descent algorithm on
  orthogonal stiefel manifolds.
\newblock \emph{Optimization Letters}, 13\penalty0 (8):\penalty0 1773--1791,
  2019.

\bibitem[Bishop(2006)]{bishop2006pattern}
C.~M. Bishop.
\newblock \emph{Pattern recognition and machine learning}.
\newblock springer, 2006.

\bibitem[Bookstein(1989)]{bookstein1989principal}
F.~L. Bookstein.
\newblock Principal warps: Thin-plate splines and the decomposition of
  deformations.
\newblock \emph{IEEE Transactions on pattern analysis and machine
  intelligence}, 11\penalty0 (6):\penalty0 567--585, 1989.

\bibitem[Bookstein(1997)]{bookstein1997morphometric}
F.~L. Bookstein.
\newblock \emph{Morphometric tools for landmark data: geometry and biology}.
\newblock Cambridge University Press, 1997.

\bibitem[Bouix et~al.(2005)Bouix, Pruessner, Collins, and
  Siddiqi]{bouix2005hippocampal}
S.~Bouix, J.~C. Pruessner, D.~L. Collins, and K.~Siddiqi.
\newblock Hippocampal shape analysis using medial surfaces.
\newblock \emph{Neuroimage}, 25\penalty0 (4):\penalty0 1077--1089, 2005.

\bibitem[Bro-Nielsen and Gramkow(1996)]{bro1996fast}
M.~Bro-Nielsen and C.~Gramkow.
\newblock Fast fluid registration of medical images.
\newblock In \emph{International Conference on Visualization in Biomedical
  Computing}, pages 265--276. Springer, 1996.

\bibitem[Brockett(1989)]{brockett1989least}
R.~W. Brockett.
\newblock Least squares matching problems.
\newblock \emph{Linear Algebra and its applications}, 122:\penalty0 761--777,
  1989.

\bibitem[Brown and Rusinkiewicz(2007)]{brown2007global}
B.~J. Brown and S.~Rusinkiewicz.
\newblock Global non-rigid alignment of 3-d scans.
\newblock In \emph{ACM SIGGRAPH 2007 papers}, pages 21--es. 2007.

\bibitem[Christensen and He(2001)]{christensen2001consistent}
G.~Christensen and J.~He.
\newblock Consistent nonlinear elastic image registration.
\newblock In \emph{Proceedings IEEE Workshop on Mathematical Methods in
  Biomedical Image Analysis (MMBIA 2001)}, pages 37--43. IEEE, 2001.

\bibitem[Cootes et~al.(1995)Cootes, Taylor, Cooper, and
  Graham]{cootes1995active}
T.~F. Cootes, C.~J. Taylor, D.~H. Cooper, and J.~Graham.
\newblock Active shape models-their training and application.
\newblock \emph{Computer vision and image understanding}, 61\penalty0
  (1):\penalty0 38--59, 1995.

\bibitem[Davis(2006)]{davis2006direct}
T.~A. Davis.
\newblock \emph{Direct methods for sparse linear systems}.
\newblock SIAM, 2006.

\bibitem[Dryden and Mardia(2016)]{dryden2016statistical}
I.~L. Dryden and K.~V. Mardia.
\newblock \emph{Statistical shape analysis: with applications in {R}}, volume
  995.
\newblock John Wiley \& Sons, 2016.

\bibitem[Duchon(1976)]{duchon1976interpolation}
J.~Duchon.
\newblock Interpolation des fonctions de deux variables suivant le principe de
  la flexion des plaques minces.
\newblock \emph{Revue fran{\c{c}}aise d'automatique, informatique, recherche
  op{\'e}rationnelle. Analyse num{\'e}rique}, 10\penalty0 (R3):\penalty0 5--12,
  1976.

\bibitem[Eggert et~al.(1997)Eggert, Lorusso, and Fisher]{eggert1997estimating}
D.~W. Eggert, A.~Lorusso, and R.~B. Fisher.
\newblock Estimating 3-d rigid body transformations: a comparison of four major
  algorithms.
\newblock \emph{Machine vision and applications}, 9\penalty0 (5-6):\penalty0
  272--290, 1997.

\bibitem[Fletcher et~al.(2004)Fletcher, Lu, Pizer, and
  Joshi]{fletcher2004principal}
P.~T. Fletcher, C.~Lu, S.~M. Pizer, and S.~Joshi.
\newblock Principal geodesic analysis for the study of nonlinear statistics of
  shape.
\newblock \emph{IEEE transactions on medical imaging}, 23\penalty0
  (8):\penalty0 995--1005, 2004.

\bibitem[Fornefett et~al.(2001)Fornefett, Rohr, and
  Stiehl]{fornefett2001radial}
M.~Fornefett, K.~Rohr, and H.~S. Stiehl.
\newblock Radial basis functions with compact support for elastic registration
  of medical images.
\newblock \emph{Image and vision computing}, 19\penalty0 (1-2):\penalty0
  87--96, 2001.

\bibitem[Freifeld and Black(2012)]{freifeld2012lie}
O.~Freifeld and M.~J. Black.
\newblock Lie bodies: A manifold representation of 3d human shape.
\newblock In \emph{European Conference on Computer Vision}, pages 1--14.
  Springer, 2012.

\bibitem[Gallardo et~al.(2017)Gallardo, Collins, and
  Bartoli]{gallardo2017dense}
M.~Gallardo, T.~Collins, and A.~Bartoli.
\newblock Dense non-rigid structure-from-motion and shading with unknown
  albedos.
\newblock In \emph{Proceedings of the IEEE international conference on computer
  vision}, pages 3884--3892, 2017.

\bibitem[Golub and Pereyra(2003)]{golub2003separable}
G.~Golub and V.~Pereyra.
\newblock Separable nonlinear least squares: the variable projection method and
  its applications.
\newblock \emph{Inverse problems}, 19\penalty0 (2):\penalty0 R1, 2003.

\bibitem[Goodall(1991)]{goodall1991procrustes}
C.~Goodall.
\newblock Procrustes methods in the statistical analysis of shape.
\newblock \emph{Journal of the Royal Statistical Society: Series B
  (Methodological)}, 53\penalty0 (2):\penalty0 285--321, 1991.

\bibitem[Gower(1975)]{gower1975generalized}
J.~C. Gower.
\newblock Generalized procrustes analysis.
\newblock \emph{Psychometrika}, 40\penalty0 (1):\penalty0 33--51, 1975.

\bibitem[Green(1952)]{green1952orthogonal}
B.~F. Green.
\newblock The orthogonal approximation of an oblique structure in factor
  analysis.
\newblock \emph{Psychometrika}, 17\penalty0 (4):\penalty0 429--440, 1952.

\bibitem[Hardy et~al.(1952)Hardy, Collection, Littlewood, P{\'o}lya, P{\'o}lya,
  and Littlewood]{hardy1952inequalities}
G.~Hardy, K.~M.~R. Collection, J.~Littlewood, G.~P{\'o}lya, G.~P{\'o}lya, and
  D.~Littlewood.
\newblock \emph{Inequalities}.
\newblock Cambridge Mathematical Library. Cambridge University Press, 1952.
\newblock ISBN 9780521358804.

\bibitem[Horn(1987)]{horn1987closed}
B.~K. Horn.
\newblock Closed-form solution of absolute orientation using unit quaternions.
\newblock \emph{Josa a}, 4\penalty0 (4):\penalty0 629--642, 1987.

\bibitem[Horn et~al.(1988)Horn, Hilden, and Negahdaripour]{horn1988closed}
B.~K. Horn, H.~M. Hilden, and S.~Negahdaripour.
\newblock Closed-form solution of absolute orientation using orthonormal
  matrices.
\newblock \emph{JOSA A}, 5\penalty0 (7):\penalty0 1127--1135, 1988.

\bibitem[Iserles et~al.(2000)Iserles, Munthe-Kaas, N{\o}rsett, and
  Zanna]{iserles2000lie}
A.~Iserles, H.~Z. Munthe-Kaas, S.~P. N{\o}rsett, and A.~Zanna.
\newblock Lie-group methods.
\newblock \emph{Acta numerica}, 9:\penalty0 215--365, 2000.

\bibitem[Jermyn et~al.(2012)Jermyn, Kurtek, Klassen, and
  Srivastava]{jermyn2012elastic}
I.~H. Jermyn, S.~Kurtek, E.~Klassen, and A.~Srivastava.
\newblock Elastic shape matching of parameterized surfaces using square root
  normal fields.
\newblock In \emph{European conference on computer vision}, pages 804--817.
  Springer, 2012.

\bibitem[Joshi et~al.(2007)Joshi, Klassen, Srivastava, and
  Jermyn]{joshi2007novel}
S.~H. Joshi, E.~Klassen, A.~Srivastava, and I.~Jermyn.
\newblock A novel representation for riemannian analysis of elastic curves in
  rn.
\newblock In \emph{2007 IEEE Conference on Computer Vision and Pattern
  Recognition}, pages 1--7. IEEE, 2007.

\bibitem[Kanatani(1994)]{kanatani1994analysis}
K.-i. Kanatani.
\newblock Analysis of 3-d rotation fitting.
\newblock \emph{IEEE Transactions on pattern analysis and machine
  intelligence}, 16\penalty0 (5):\penalty0 543--549, 1994.

\bibitem[Kendall(1984)]{kendall1984shape}
D.~G. Kendall.
\newblock Shape manifolds, procrustean metrics, and complex projective spaces.
\newblock \emph{Bulletin of the London mathematical society}, 16\penalty0
  (2):\penalty0 81--121, 1984.

\bibitem[Kendall et~al.(2009)Kendall, Barden, Carne, and Le]{kendall2009shape}
D.~G. Kendall, D.~Barden, T.~K. Carne, and H.~Le.
\newblock \emph{Shape and shape theory}, volume 500.
\newblock John Wiley \& Sons, 2009.

\bibitem[Kent(1994)]{kent1994complex}
J.~T. Kent.
\newblock The complex {B}ingham distribution and shape analysis.
\newblock \emph{Journal of the Royal Statistical Society: Series B
  (Methodological)}, 56\penalty0 (2):\penalty0 285--299, 1994.

\bibitem[Kilian et~al.(2007)Kilian, Mitra, and Pottmann]{kilian2007geometric}
M.~Kilian, N.~J. Mitra, and H.~Pottmann.
\newblock Geometric modeling in shape space.
\newblock In \emph{ACM SIGGRAPH 2007 papers}, pages 64--es. 2007.

\bibitem[Kim et~al.(2008)Kim, Chung, Kim, and Cho]{kim20083d}
Y.~J. Kim, S.-T. Chung, B.~Kim, and S.~Cho.
\newblock 3d face modeling based on 3{D} dense morphable face shape model.
\newblock \emph{International Journal of Computer Science and Engineering},
  2\penalty0 (3):\penalty0 107--113, 2008.

\bibitem[Krishnan et~al.(2005)Krishnan, Lee, Moore, Venkatasubramanian,
  et~al.]{krishnan2005global}
S.~Krishnan, P.~Y. Lee, J.~B. Moore, S.~Venkatasubramanian, et~al.
\newblock Global registration of multiple 3d point sets via
  optimization-on-a-manifold.
\newblock In \emph{Symposium on Geometry Processing}, pages 187--196, 2005.

\bibitem[Kristof and Wingersky(1971)]{kristof1971generalization}
W.~Kristof and B.~Wingersky.
\newblock A generalization of the orthogonal procrustes rotation procedure to
  more than two matrices.
\newblock In \emph{Proceedings of the Annual Convention of the American
  Psychological Association}. American Psychological Association, 1971.

\bibitem[Kurtek et~al.(2010)Kurtek, Klassen, Ding, and
  Srivastava]{kurtek2010novel}
S.~Kurtek, E.~Klassen, Z.~Ding, and A.~Srivastava.
\newblock A novel riemannian framework for shape analysis of 3d objects.
\newblock In \emph{2010 IEEE computer society conference on computer vision and
  pattern recognition}, pages 1625--1632. IEEE, 2010.

\bibitem[Kurtek et~al.(2011)Kurtek, Klassen, Gore, Ding, and
  Srivastava]{kurtek2011elastic}
S.~Kurtek, E.~Klassen, J.~C. Gore, Z.~Ding, and A.~Srivastava.
\newblock Elastic geodesic paths in shape space of parameterized surfaces.
\newblock \emph{IEEE transactions on pattern analysis and machine
  intelligence}, 34\penalty0 (9):\penalty0 1717--1730, 2011.

\bibitem[Laga(2018)]{laga2018survey}
H.~Laga.
\newblock A survey on non-rigid {3D} shape analysis, 2018.

\bibitem[Laga et~al.(2017)Laga, Xie, Jermyn, and Srivastava]{laga2017numerical}
H.~Laga, Q.~Xie, I.~H. Jermyn, and A.~Srivastava.
\newblock Numerical inversion of srnf maps for elastic shape analysis of
  genus-zero surfaces.
\newblock \emph{IEEE transactions on pattern analysis and machine
  intelligence}, 39\penalty0 (12):\penalty0 2451--2464, 2017.

\bibitem[Matei and Meer(1999)]{matei1999optimal}
B.~Matei and P.~Meer.
\newblock Optimal rigid motion estimation and performance evaluation with
  bootstrap.
\newblock In \emph{Proceedings. 1999 IEEE Computer Society Conference on
  Computer Vision and Pattern Recognition (Cat. No PR00149)}, volume~1, pages
  339--345. IEEE, 1999.

\bibitem[Meyer(2000)]{meyer2000matrix}
C.~D. Meyer.
\newblock \emph{Matrix analysis and applied linear algebra}, volume~71.
\newblock Siam, 2000.

\bibitem[Ohta and Kanatani(1998)]{ohta1998optimal}
N.~Ohta and K.~Kanatani.
\newblock Optimal estimation of three-dimensional rotation and reliability
  evaluation.
\newblock \emph{IEICE TRANSACTIONS on Information and Systems}, 81\penalty0
  (11):\penalty0 1247--1252, 1998.

\bibitem[Osher and Fedkiw(2003)]{osher2003level}
S.~Osher and R.~P. Fedkiw.
\newblock \emph{Level set methods and dynamic implicit surfaces}, volume 153.
\newblock Springer, 2003.

\bibitem[Rohlf and Slice(1990)]{rohlf1990extensions}
F.~J. Rohlf and D.~Slice.
\newblock Extensions of the {P}rocrustes method for the optimal superimposition
  of landmarks.
\newblock \emph{Systematic biology}, 39\penalty0 (1):\penalty0 40--59, 1990.

\bibitem[Rosen et~al.(2019)Rosen, Carlone, Bandeira, and Leonard]{rosen2019se}
D.~M. Rosen, L.~Carlone, A.~S. Bandeira, and J.~J. Leonard.
\newblock Se-sync: A certifiably correct algorithm for synchronization over the
  special euclidean group.
\newblock \emph{The International Journal of Robotics Research}, 38\penalty0
  (2-3):\penalty0 95--125, 2019.

\bibitem[Rueckert et~al.(1999)Rueckert, Sonoda, Hayes, Hill, Leach, and
  Hawkes]{rueckert1999nonrigid}
D.~Rueckert, L.~I. Sonoda, C.~Hayes, D.~L. Hill, M.~O. Leach, and D.~J. Hawkes.
\newblock Nonrigid registration using free-form deformations: application to
  breast mr images.
\newblock \emph{IEEE transactions on medical imaging}, 18\penalty0
  (8):\penalty0 712--721, 1999.

\bibitem[Sch{\"o}nemann(1966)]{schonemann1966generalized}
P.~H. Sch{\"o}nemann.
\newblock A generalized solution of the orthogonal procrustes problem.
\newblock \emph{Psychometrika}, 31\penalty0 (1):\penalty0 1--10, 1966.

\bibitem[Song et~al.(2020)Song, Bai, Zhao, Huang, and Xiong]{song2020efficient}
J.~Song, F.~Bai, L.~Zhao, S.~Huang, and R.~Xiong.
\newblock Efficient two step optimization for large embedded deformation graph
  based slam.
\newblock In \emph{2020 IEEE International Conference on Robotics and
  Automation (ICRA)}, pages 9419--9425. IEEE, 2020.

\bibitem[Sumner et~al.(2007)Sumner, Schmid, and Pauly]{sumner2007embedded}
R.~W. Sumner, J.~Schmid, and M.~Pauly.
\newblock Embedded deformation for shape manipulation.
\newblock In \emph{ACM SIGGRAPH 2007 papers}, pages 80--es. 2007.

\bibitem[Szeliski and Coughlan(1997)]{szeliski1997spline}
R.~Szeliski and J.~Coughlan.
\newblock Spline-based image registration.
\newblock \emph{International Journal of Computer Vision}, 22\penalty0
  (3):\penalty0 199--218, 1997.

\bibitem[Ten~Berge(1977)]{ten1977orthogonal}
J.~M. Ten~Berge.
\newblock Orthogonal procrustes rotation for two or more matrices.
\newblock \emph{Psychometrika}, 42\penalty0 (2):\penalty0 267--276, 1977.

\bibitem[Tomasi and Kanade(1992)]{tomasi1992shape}
C.~Tomasi and T.~Kanade.
\newblock Shape and motion from image streams under orthography: a
  factorization method.
\newblock \emph{International journal of computer vision}, 9\penalty0
  (2):\penalty0 137--154, 1992.

\bibitem[Umeyama(1991)]{umeyama1991least}
S.~Umeyama.
\newblock Least-squares estimation of transformation parameters between two
  point patterns.
\newblock \emph{IEEE Transactions on Pattern Analysis \& Machine Intelligence},
  \penalty0 (4):\penalty0 376--380, 1991.

\bibitem[Walker et~al.(1991)Walker, Shao, and Volz]{walker1991estimating}
M.~W. Walker, L.~Shao, and R.~A. Volz.
\newblock Estimating 3-d location parameters using dual number quaternions.
\newblock \emph{CVGIP: image understanding}, 54\penalty0 (3):\penalty0
  358--367, 1991.

\bibitem[Wen et~al.(2006)Wen, Wang, Xia, and Zhu]{wen2006least}
G.~Wen, Z.~Wang, S.~Xia, and D.~Zhu.
\newblock Least-squares fitting of multiple m-dimensional point sets.
\newblock \emph{The Visual Computer}, 22\penalty0 (6):\penalty0 387--398, 2006.

\bibitem[Williams and Bennamoun(2001)]{williams2001simultaneous}
J.~Williams and M.~Bennamoun.
\newblock Simultaneous registration of multiple corresponding point sets.
\newblock \emph{Computer Vision and Image Understanding}, 81\penalty0
  (1):\penalty0 117--142, 2001.

\bibitem[Younes(2012)]{younes2012spaces}
L.~Younes.
\newblock Spaces and manifolds of shapes in computer vision: {A}n overview.
\newblock \emph{Image and Vision Computing}, 30\penalty0 (6-7):\penalty0
  389--397, 2012.

\bibitem[Younes et~al.(2008)Younes, Michor, Shah, and
  Mumford]{younes2008metric}
L.~Younes, P.~W. Michor, J.~M. Shah, and D.~B. Mumford.
\newblock A metric on shape space with explicit geodesics.
\newblock \emph{Rendiconti Lincei-Matematica e Applicazioni}, 19\penalty0
  (1):\penalty0 25--57, 2008.

\end{thebibliography}

\end{document}